\newcommand{\indep}{\perp \!\!\! \perp}
\newcommand{\A}{\mathcal{A}}
\newcommand{\R}{\mathbb{R}}
\newcommand{\N}{\mathcal{N}}
\newcommand{\x}{\bm{x}}
\newcommand{\Bb}{\bm{b}}
\newcommand{\y}{\mathbf{y}}
\newcommand{\z}{\bm{z}}
\newcommand{\jointfamily}{\mathcal{P}^{T,M}_{\A,\B}}
\newcommand{\msmfamily}{\mathcal{M}^T(\Pi_{\A}^M,\mathcal{P}_{\B}^M)}
\newcommand{\B}{\mathcal{B}}
\newcommand{\s}{\bm{s}}
\newcommand{\mparam}{\bm{\theta}}	
\newcommand{\vparam}{\bm{\phi}}	
\def\1{\bm{1}}
\newlist{thmlist}{enumerate}{1}
\setlist[thmlist]{label=(\roman{thmlisti}),
                  ref=\thetheorem.(\roman{thmlisti}),
                  noitemsep}
\pgfplotsset{compat=1.18}
\newcommand{\tick}{\textcolor{red!70!black}{\checkmark}}
\newcommand{\SoftBlob}[4][]{%
  \draw[
    line width=1.5pt,
    draw=#2,
    fill=#3
  ] plot[smooth cycle, tension=0.85] coordinates {#4};
}
\newcommand{\SoftBlobPath}[1]{%
  plot[smooth cycle, tension=0.85] coordinates {#1}
}
\begin{document}

\title{On the Identifiability of Regime-Switching Models \\ with Multi-Lag Dependencies}

\author{\name Carles Balsells-Rodas \email cb221@imperial.ac.uk \\
       \addr Department of Computing, Imperial College London, London, UK
       \AND
       \name Toshiko Matsui \email t.matsui19@imperial.ac.uk \\
       \addr Department of Computing, Imperial College London, London, UK
       \AND
       \name Pedro A.M. Mediano \email p.mediano@imperial.ac.uk \\
       \addr Department of Computing, Imperial College London, London, UK
       \AND
       \name Yixin Wang \email yixinw@umich.edu \\
       \addr Department of Statistics, University of Michigan, Ann Arbor, MI 48109, USA
       \AND
       \name Yingzhen Li \email yingzhen.li@imperial.ac.uk \\
       \addr Department of Computing, Imperial College London, London, UK}

\editor{My editor}

\maketitle

\begin{abstract}
Identifiability is central to the interpretability of deep latent variable models, ensuring parameterisations are uniquely determined by the data-generating distribution. However, it remains underexplored for deep regime-switching time series. We develop a general theoretical framework for multi-lag Regime-Switching Models (RSMs), encompassing Markov Switching Models (MSMs) and Switching Dynamical Systems (SDSs). For MSMs, we formulate the model as a temporally structured finite mixture and prove identifiability of both the number of regimes and the multi-lag transitions in a nonlinear-Gaussian setting. For SDSs, we establish identifiability of the latent variables up to permutation and scaling via temporal structure, which in turn yields conditions for identifiability of regime-dependent latent causal graphs (up to regime/node permutations). Our results hold in a fully unsupervised setting through architectural and noise assumptions that are directly enforceable via neural network design. We complement the theory with a flexible variational estimator that satisfies the assumptions and validate the results on synthetic benchmarks. Across real-world datasets from neuroscience, finance, and climate, identifiability leads to more trustworthy interpretability analysis, which is crucial for scientific discovery.
\end{abstract}

\begin{keywords}
  Identifiability, Regime-Switching Models, Markov Switching Models, Switching Dynamical Systems, Time series, State-Space Models, Finite mixture models, Variational inference, Nonlinear ICA, Causal Representation Learning.
\end{keywords}

\section{Introduction}

Identifiability in deep latent variable models has been a long-standing challenge over the recent years. It formalises when parameters are uniquely determined by the data-generating distribution. In practice, exact parameter recovery is often impossible, and instead, one assumes equivalence classes which are determined by the target of interest. For example, Causal Representation Learning (CRL; \citep{scholkopf2021toward}), targets the recovery of the latent causal graph, typically up to permutation of nodes. In Independent Component Analysis (ICA; \citep{comon1994independent}), the equivalence class targets independent sources up to permutations and scaling, and often the functional form of the mixing function. 

General, nonlinear ICA is ill-posed without additional structure \citep{hyvarinen1999nonlinear}. Recent advances establish identifiability by combining deep generative models with auxiliary information \citep{khemakhem2020variational}, or finite mixture models with piecewise linear mixing \citep{kivva2022identifiability}. Other advances exploit temporal structure \citep{hyvarinen2017nonlinear,hyvarinen2019nonlinear,halva2020hidden,halva2021disentangling}, or latent causal processes \citep{yao2022temporally,yao2022learning,lippe2023causal,lippe2023biscuit}. 

This paper focuses on identifiability for \emph{Regime-Switching Models} (RSMs), including Markov Switching Models (MSMs) and Switching Dynamical Systems (SDSs), from the perspective of deep sequential generative models. RSMs describe time series whose behaviour depends on a discrete latent \emph{regime} (state, or switch) that changes over time. They form a subset of \emph{State-Space Models} (SSMs), a well-established class of sequence models \citep{lindgren1978markov,poritz1982linear, hamilton1989new}. With the increasing popularity of sequential latent variable models \citep{chung2015recurrent,Li2018DisentangledSA,babaeizadeh2018stochastic,Saxena2021ClockworkVA}, SSMs have been incorporated as latent dynamical priors \citep{johnson2016composing,linderman2016recurrent,linderman2017bayesian,fraccaro2017disentangled,dong2020collapsed,ansari2021deep,liu2023graph, geadah2024parsing}. However, these works mainly focus on designing flexible priors and stable training schemes \citep{dong2020collapsed,ansari2021deep,liu2023graph,zhao2023revisiting}. 

To address these gaps, we develop a unified theoretical framework that establishes identifiability for RSMs with multi-lag dependencies. First, we show that multi-lag MSMs can be framed as finite mixture models with temporally-structured components, and prove identifiability of the number of regimes and the multi-lag transition functions in a nonlinear Gaussian setting. The analysis extends classical finite-mixture identifiability \citep{yakowitz1968identifiability} to the multi-lag case, which is nontrivial due to lagged dependencies. Second, for SDSs, we prove identifiability of the latent variables up to permutation and scaling by extending \citet{kivva2022identifiability} to the temporal setting. This yields identifiability of the regime-dependent latent causal graphs up to permutations of regimes and nodes. Our results hold in a fully unsupervised setting by imposing restrictions on the latent noise distributions and the functional class of the generative model. In contrast, other works that handle regime-switching require auxiliary information \citep{yao2022temporally,yao2022learning,lippe2023biscuit} or stationarity \citep{hyvarinen2017nonlinear} in the temporal correlations \citep{halva2021disentangling}. Recently, similar results with flexible assumptions have been established \citep{song2023temporally, song2024causal,balsellsrodas2025causal}.

The rest of the paper is organised as follows.\footnote{This paper extends the conference version \citet{balsells-rodas2024on} significantly by: (i) providing a more intuitive view of Markov Switching Models as finite mixtures and extending identifiability to higher temporal dependencies (with substantial new proofs and intuitions in the main text), (ii) going beyond affine identifiability of the latent variables for SDSs to establish permutation/scaling identifiability and causal structure identifiability up to node/regime permutation, and (iii) adding new real-world applications.} Section \ref{sec:related_work} discusses related work and positions our contributions. Section \ref{sec:background} reviews finite mixtures and latent variable models. Section \ref{sec:theory}, presents our main results: (i) parametric MSMs with nonlinear Gaussian transitions (nonparametric analysis is deferred to Appendix \ref{app:proof_identifiability}), (ii) identifiability of SDS latents up to permutation and scaling, and (iii) identifiability of regime-dependent causal graphs up to regime/node permutation. Section \ref{sec:estimation} presents a flexible estimation approach which follows the assumptions for identifiability, and builds on \citet{dong2020collapsed,ansari2021deep}. Section \ref{sec:experiments_msm} validates our theoretical framework on synthetic data with baselines and ablations. Finally, Section \ref{sec:real_world}, presents results on brain activity, financial time series, and climate data, illustrating interpretability gains enabled by identifiability.

\paragraph{Why identifiability matters in practice.} Applications across neuroscience \citep{karniol-tambour2024modeling}, climate \citep{saggioro2020reconstructing}, human motion \citep{dong2020collapsed,liu2023graph}, and animal behaviour \citep{ansari2021deep} demonstrate the promise of deep regime-switching models. However, without identifiability, interpretations of the learned latents (e.g., regimes, causal structures) may be spurious, as distinct or even contradictory parametrisations can achieve the same likelihood, thus invalidating scientific discovery. More broadly, identifiability prevents arbitrary latent reparametrisations \citep{locatello2019challenging} and supports valid causal claims \citep{scholkopf2021toward}. Our theoretical framework provides flexible conditions under which learned regimes, latents, and graphs are uniquely determined up to acceptable equivalences, and therefore supporting trustworthy interpretations.
\section{Related Work}\label{sec:related_work}

MSMs (or autoregressive hidden Markov models) were first introduced for applications to speech analysis \citep{poritz1982linear,ephraim2005revisiting}. In econometrics, \citet{hamilton1989new} popularised MSMs for business cycles, with numerous extensions such as MS-GARCH \citep{haas2004new}. For overviews of asymptotic properties and additional applications, see \citet{fruhwirth2006finite}. 
SDSs extend MSMs with continuous latent states and regime-dependent dynamics. Classic developments include switching Kalman filters \citep{murphy1998switching} and variational learning with applications to respiration data \citep{ghahramani2000variational}. More recently, regime switching has been used as a prior within deep latent-variable models, including SVAEs \citep{johnson2016composing, zhao2023revisiting}, soft-switching Kalman Filters \citep{fraccaro2017disentangled}, and recurrent SDSs \citep{linderman2016recurrent,dong2020collapsed} with explicit duration models \citep{ansari2021deep} or graph-based interactions \citep{liu2023graph}. 

Finite-state HMMs enjoy strong identifiability under mild conditions \citep{allman2009identifiability,gassiat2016inference,gassiat2020identifiability}. However, these strategies do not translate directly to MSMs due to explicit autoregression in continuous variables. While results exist for certain discrete MSMs \citep{an2013identifiability}, a general nonparametric theory for multi-lag MSMs has been missing. Our MSM identifiability results address this gap by extending classical finite-mixture identifiability \citep{yakowitz1968identifiability} to the multi-lag setting.

Modern identifiability for deep generative models often builds on nonlinear ICA with auxiliary variables or temporal structure. iVAE proves identifiability with conditional priors from auxiliary information \citep{khemakhem2020variational}; \citet{kivva2022identifiability} remove this requirement by using mixture priors with piecewise linear mixing. Temporal information can also act as auxiliary information, yielding identifiability via contrastive learning \citep{hyvarinen2016unsupervised,hyvarinen2019nonlinear}. In CRL, temporal identifiability has been established under distribution shifts or latent temporal structure \citep{yao2022learning,yao2022temporally,lippe2023causal,lippe2023biscuit}.

Several works introduce latent regime structure to relax stationarity or the need for auxiliary information. HM-NICA replaces observed regimes with an identifiable stationary HMM prior \citep{gassiat2016inference}; IIA-HMM \citep{morioka2021independent} introduces a recurrent mixing with nonstationary innovations through a stationary HMM \citep{gassiat2016inference}. SNICA \citep{halva2021disentangling} considers linear SDSs with weak temporal dependencies (weak stationarity \citep{hyvarinen2017nonlinear}), but does not identify the number of switches or latent transitions. NCTRL \citep{song2023temporally} establishes results for SDSs. The strategy follows firs-order MSM results presented in \citet{balsells-rodas2024on}, with Gaussianity imposed at the observation level. CtrlNS \citep{song2024causal} achieves identifiability via mechanism sparsity \citep{lachapelle2022disentanglement} and direct observation-regime dependence, assuming a known number of regimes. Very recent works explicitly target regime-dependent causal discovery: CASTOR \citep{rahmani2025causal} assumes contemporaneous effects and SDCI \citep{balsellsrodas2025causal} allows flexible state dependencies; both rely on differing causal structures across regimes. Our results are complementary to these methodologies, as multi-lag MSMs can be viewed as a generalisation directly targeting regime-dependent transitions.

\paragraph{Positioning.} Compared with prior identifiability results for regime-switching and temporal CRL/ICA, our framework offers the following key advantages:
\begin{itemize}[leftmargin=1.5em]
  \setlength{\itemsep}{1pt}
  \setlength{\parskip}{0pt}
  \setlength{\parsep}{0pt}
    \item \textbf{Multi-Lag MSM theory.} To our knowledge, we provide the first general identifiability results for multi-lag MSMs. Our results also generalise beyond standard MSMs (Remark~\ref{remark:theory_scope}). 
    \item \textbf{Assumptions at the function level.} Related works on temporal CRL \citep{yao2022learning, yao2022temporally,song2023temporally,song2024causal} and ICA \citep{morioka2021independent,halva2021disentangling} often rely on distributional or tail assumptions that are difficult to verify when designing models. Instead, we directly impose functional constraints that can be enforced via the neural parametrisation, making model specification more controllable.
    \item \textbf{Unknown number of regimes.} Methods such as HMM-ICA \citep{halva2020hidden} and CtrlNS \citep{song2024causal} assume a known number of regimes. By extending classical finite-mixture theory \citep{yakowitz1968identifiability}, our results do not require knowing the number of regimes. This matters in practice, as we can determine them by model selection \citep{mclachlan2000finite}, which is not possible when the number of regimes is assumed known.
\end{itemize}
\section{Background}\label{sec:background}

\subsection{Finite Mixture Models}\label{sec:finite_mixture}

Our theoretical framework presented in Section \ref{sec:msm_theory} extends finite mixture model results from \citet{yakowitz1968identifiability}, which show identifiability through linear independence of the mixing components. Consider a distribution family with functions defined on $\z\in\R^m$,
\begin{equation}\label{eq:family_functions}
    \mathcal{F}_{\A} := \{F_{a}(\z) \mid a \in \A \},
\end{equation}
where $F_{a}(\z)$ is an $m$-dimensional CDF. The index set $\A$ is assumed to satisfy that $F_{a}(\z)$, as a function of $(\z,a)$, is measurable on $\R^m\times\A$. We introduce the notion of linear independence under finite mixtures of a family $\mathcal{F}_{\A}$.
\begin{definition}
\label{def:linear_independence_finite_mixture}
A family of functions $\mathcal{F}_{\A}$ (Eq.~\eqref{eq:family_functions}) is said to contain linearly independent functions under finite mixtures if for any $\A_0 \subset \A$ such that $|\A_0| < +\infty$, the functions in $\{F_a(\z) \mid a \in \A_0 \}$ are linearly independent.
\end{definition}
The above definition is a weaker requirement of linear independence on function families as it allows linear dependence from the linear combination of infinitely many other functions. Consider the following finite mixture distribution family:
\begin{equation}
    \mathcal{H}_{\A} := \left\{ H(\z) = \sum_{i=1}^K c_i F_{a_i}(\z) \mid c_i> 0,\  a_i \in \A,\  a_i \neq a_j, \forall i \neq j,\  \sum_{i=1}^K c_i = 1,\ K < +\infty \right\},
\end{equation}
which is defined from the \emph{convex hull} of $\mathcal{F}_\A$ ($\mathrm{conv}(\mathcal{F}_\A)$), or a linear combination of CDFs in $\mathcal{F}_{\A}$, with positive coefficients. Now we define the \emph{identifiable finite mixture family} following \citet{yakowitz1968identifiability}.
\begin{definition}
\label{def:identifiability_finite_mixture}
The finite mixture family $\mathcal{H}_{\A}$ is said to be identifiable up to permutations, when for any two finite mixtures $H(\z) = \sum_{i=1}^K c_i F_{a_i}(\z)$ and $\tilde{H}(\z) = \sum_{i=1}^{\tilde{K}} \tilde{c}_i F_{\tilde{a}_i}(\z)$, $H(\z) = \tilde{H}(\z)$ for all $\z \in \mathbb{R}^m$, if and only if $K = \tilde{K}$ and for each $1 \leq i \leq K$ there is some $1 \leq j \leq \tilde{K}$ such that $c_i = \tilde{c}_j$ and $F_{a_i}(\z) = F_{\tilde{a}_j}(\z)$ for all $\z \in \mathbb{R}^m$.
\end{definition}
Then, identifiability of finite mixture models is stated as follows.
\begin{proposition}\citep{yakowitz1968identifiability}
\label{prop:mixture_cdf_identifiability}
The finite mixture distribution family $\mathcal{H}_\A$ is identifiable up to permutations, if and only if functions in $\mathcal{F}_\A$ are linearly independent under the finite mixtures. 
\end{proposition}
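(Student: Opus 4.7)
The plan is to prove both directions by exploiting the algebraic fact that equality of two finite mixtures is equivalent to a signed linear combination of mixture components summing to zero. Most of the work is bookkeeping on index sets together with the CDF normalisation $\lim_{\z \to \infty} F_a(\z) = 1$ to control the sign structure.

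For sufficiency (linear independence $\Rightarrow$ identifiability), I would start from $H(\z) = \tilde{H}(\z)$ for all $\z \in \R^m$, rewritten as
\[
\sum_{i=1}^{K} c_i F_{a_i}(\z) - \sum_{j=1}^{\tilde{K}} \tilde{c}_j F_{\tilde{a}_j}(\z) = 0.
\]
Viewing this as a finite linear combination over $\A_0 = \{a_1,\ldots,a_K\} \cup \{\tilde{a}_1,\ldots,\tilde{a}_{\tilde{K}}\} \subset \A$, and grouping coefficients whenever some $a_i$ coincides with some $\tilde{a}_j$, Definition~\ref{def:linear_independence_finite_mixture} forces every combined coefficient to vanish. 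Positivity of the $c_i$ and $\tilde{c}_j$ then rules out indices appearing in only one of the two mixtures, so the two index sets must coincide as subsets of $\A$ and the matched coefficients agree, yielding $K = \tilde{K}$ and the component matching required by Definition~\ref{def:identifiability_finite_mixture}.

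For necessity I would argue by contrapositive: if $\mathcal{F}_\A$ is not linearly independent under finite mixtures, there exist distinct $a_1,\ldots,a_n \in \A$ and scalars $\alpha_1,\ldots,\alpha_n$, not all zero, with $\sum_i \alpha_i F_{a_i}(\z) = 0$. Splitting indices into $I^+ = \{i : \alpha_i > 0\}$ and $I^- = \{i : \alpha_i < 0\}$ (discarding zero coefficients) and rearranging gives
\[
\sum_{i \in I^+} \alpha_i F_{a_i}(\z) \;=\; \sum_{i \in I^-} (-\alpha_i)\, F_{a_i}(\z).
\]
Letting $\z \to \infty$ coordinatewise and using $F_{a_i}(\z) \to 1$, I obtain $\sum_{i \in I^+} \alpha_i = \sum_{i \in I^-} (-\alpha_i) =: S$. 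Since the $\alpha_i$ are not all zero, they cannot all share a sign (otherwise $S$ would be both zero and strictly positive), so $I^+$ and $I^-$ are both nonempty and $S > 0$. Dividing by $S$ yields two elements of $\mathcal{H}_\A$ with equal CDFs but disjoint index sets (the $a_i$ are distinct), contradicting Definition~\ref{def:identifiability_finite_mixture}.

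The main obstacle is the sign-splitting step in the necessity direction: one must use the CDF normalisation to certify that the total positive mass $S$ is strictly positive (ruling out the degenerate case in which all nonzero coefficients share a sign), so that the signed decomposition can be rescaled into genuine finite mixtures whose disjoint supports violate identifiability. The remaining arguments are routine linear-algebraic rearrangements.
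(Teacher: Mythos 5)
The paper states this proposition as a cited result from \citet{yakowitz1968identifiability} and does not reproduce a proof, so there is nothing internal to compare against; your argument is correct and is essentially the classical Yakowitz--Spragins proof (difference of the two mixtures plus linear independence for sufficiency; sign-splitting of a vanishing linear combination and the normalisation $F_a(\z)\to 1$ to rescale the two sides into genuine mixtures for necessity). The only point worth making explicit is that matching components by index rather than by function equality is harmless here, since linear independence of the indexed family already forces distinct indices in any finite subset to yield distinct CDFs.
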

For our purposes in this work, the above result can be generalised to PDFs defined on $\z\in\R^{m}$. Consider an equivalent family of PDFs:
\begin{equation}\label{eq:pdf_distrib_family}
    \tilde{\mathcal{F}}_\A = \left\{ f_a(\z) = \frac{\partial^m}{\partial z_1\dots\partial z_m} F_a(\z) \mid  a\in\A \right\}.
\end{equation} 
Then, under the assumption that all $F_a$ are absolutely continuous and differentiable almost everywhere, the family $\mathcal{F}_\A$ is linearly independent under finite mixtures if and only if the corresponding family $\tilde{\mathcal{F}}_\A$ is linearly independent under finite mixtures.
\begin{proposition}
\label{prop:mixture_pdf_identifiability} Consider the family $\tilde{\mathcal{F}}_\A$ in Eq.~\eqref{eq:pdf_distrib_family}.
Then the finite mixture distribution family is identifiable up to permutations if and only if functions in $\tilde{\mathcal{F}}_\A$ are linearly independent under finite mixtures.
\end{proposition}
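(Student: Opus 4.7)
The plan is to reduce this statement to Proposition~\ref{prop:mixture_cdf_identifiability} by establishing a bijective correspondence between the CDF family $\mathcal{F}_\A$ and the PDF family $\tilde{\mathcal{F}}_\A$ that preserves both the notion of linear independence under finite mixtures and the notion of identifiability. Concretely, under the absolute continuity and almost-everywhere differentiability assumption stated before the proposition, the operator $D = \partial^m / (\partial z_1 \cdots \partial z_m)$ sends each $F_a \in \mathcal{F}_\A$ to $f_a \in \tilde{\mathcal{F}}_\A$, and iterated integration from $-\infty$ coordinatewise is its (almost-everywhere) inverse, using the boundary conditions $F_a(\z) \to 0$ as any $z_j \to -\infty$ and $F_a(\z) \to 1$ as $\z \to +\infty$.

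Next, I would show the two-way implication between linear independence of $\mathcal{F}_\A$ and linear independence of $\tilde{\mathcal{F}}_\A$ under finite mixtures. For the forward direction, take any finite $\A_0 \subset \A$ and suppose $\sum_{i} c_i f_{a_i}(\z) = 0$ almost everywhere with distinct $a_i \in \A_0$. Applying the $m$-fold integration from $-\infty$ to $\z$ coordinatewise (valid by absolute continuity and Fubini) yields $\sum_i c_i F_{a_i}(\z) = C$ for every $\z$, and the boundary condition at $-\infty$ forces $C = 0$; linear independence of $\mathcal{F}_\A$ then gives $c_i = 0$ for all $i$. The reverse direction applies $D$ to any CDF relation to produce the analogous PDF relation almost everywhere. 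Exactly the same argument transfers equalities of finite mixtures: $\sum_i c_i f_{a_i}(\z) = \sum_j \tilde c_j f_{\tilde a_j}(\z)$ a.e.\ is equivalent to the corresponding CDF equality for all $\z$, so identifiability of one family is equivalent to identifiability of the other in the sense of Definition~\ref{def:identifiability_finite_mixture}.

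With these two equivalences, Proposition~\ref{prop:mixture_cdf_identifiability} applied to $\mathcal{F}_\A$ immediately yields the desired ``if and only if'' for $\tilde{\mathcal{F}}_\A$. The only delicate point I foresee is ensuring that the integration step is truly valid pointwise on $\R^m$ rather than only almost everywhere: this is where absolute continuity of each $F_a$ (assumed) is essential, since it guarantees that the recovered CDF from integrating $f_a$ coincides with $F_a$ everywhere, making the correspondence between the two function families genuinely bijective rather than merely a.e.\ bijective. Once this is handled, the remainder reduces to linearity of $D$ and of integration, and the proof is a short bookkeeping argument layered on top of the classical Yakowitz--Spragins result.
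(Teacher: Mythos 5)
Your proposal is correct and follows essentially the same route the paper takes: the paper does not spell out a separate proof of this proposition, but the text immediately preceding it asserts exactly the equivalence you establish (linear independence of $\mathcal{F}_\A$ under finite mixtures iff linear independence of $\tilde{\mathcal{F}}_\A$, via the differentiation/integration correspondence under absolute continuity), and then the result is read off from Proposition~\ref{prop:mixture_cdf_identifiability}. Your elaboration of the a.e.\ versus everywhere issue and the transfer of mixture equalities is the right bookkeeping and fills in what the paper leaves implicit.
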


\subsection{Identifiable Latent Variable Models}\label{sec:kivva_background}

In the non-temporal case, many works explore identifiability of latent variable models \citep{khemakhem2020variational,kivva2022identifiability}. Specifically, consider a generative model where its latent variables $\z\in\R^m$ are drawn from a Gaussian mixture prior with $K$ components ($K < +\infty$). Then $\z$ is transformed via a (noisy) piecewise linear mapping to obtain the observation $\x\in\R^n$, $n \geq m$:
\begin{equation}
    \x = f(\z) + \bm{\epsilon},\quad \bm{\epsilon}\sim \mathcal{N}(\bm{0},\bm{\Sigma}), \quad
    \z\sim p(\z) :=\sum_{k=1}^K p(s=k) \mathcal{N}\left(\z\mid\bm{\mu}_k,\bm{\Sigma}_k\right).
\end{equation}
\citet{kivva2022identifiability} established that if the transformation $f$ is \emph{weakly injective} (see definition below), the prior distribution $p(\z)$ is identifiable up to affine transformations\footnote{See Def. 2.2 in \citet{kivva2022identifiability}, or the equivalent adapted to SDSs (Def. \ref{def:identifiability_affine_transformations}).} from the observations. 
If we further assume $f$ is continuous and \emph{injective}, both prior distribution $p(\z)$ and nonlinear mapping $f$ are identifiable up to affine transformations. In Section \ref{sec:switching_dynamical_systems} we extend these results to establish identifiability for SDSs using similar proof strategies.
\paragraph{Notation.} For $(\x_1^*, \dots, \x_M^*)\in\R^{nM}$ and $M<+\infty$, we define $B(\x_1^*, \dots, \x_M^*,\delta) := \{(\x_1, \dots, \x_M)\in\R^{nM}: ||(\x_1, \dots, \x_M) - (\x_1^*, \dots, \x_M^*) || < \delta\}$, where $||\cdot||$ denotes the Euclidean norm on $\R^{nM}$. Furthermore, we extend function notation to sets, i.e., for a set $\mathcal{S}$, $f(\mathcal{S}) := \{f(\x):\x\in\mathcal{S}\}$ and $f^{-1}(\mathcal{S}) := \{\x: f(\x)\in\mathcal{S}\}$.
\begin{definition}
\label{def:weakly_injective} A mapping $f : \mathbb{R}^m \rightarrow \mathbb{R}^n$ is said to be weakly injective if:
\begin{enumerate}[leftmargin=1.75em]
    \item[(i)] there exists $\x_0\in\R^{n}$ and $\delta > 0$ s.t. $|f^{-1}(\{\x\})|=1$ for every $\x\in B(\x_0,\delta)\cap f(\R^m)$, and 
    \item [(ii)] $\{\x\in\R^n:|f^{-1}(\{\x\})|>1\}\subseteq f(\R^m)$ has measure zero with respect to the Lebesgue measure on $f(\R^m)$.
\end{enumerate} 
\end{definition}
\begin{definition}
\label{def:injective}
$f$ is injective if $|f^{-1}(\{\x\})|=1, \forall \x\in f(\R^m)$.
\end{definition}

\subsection{Switching Dynamical Systems}\label{sec:back_sds}

\begin{figure}
  \begin{minipage}{0.65\textwidth}
\includegraphics[width=\linewidth]{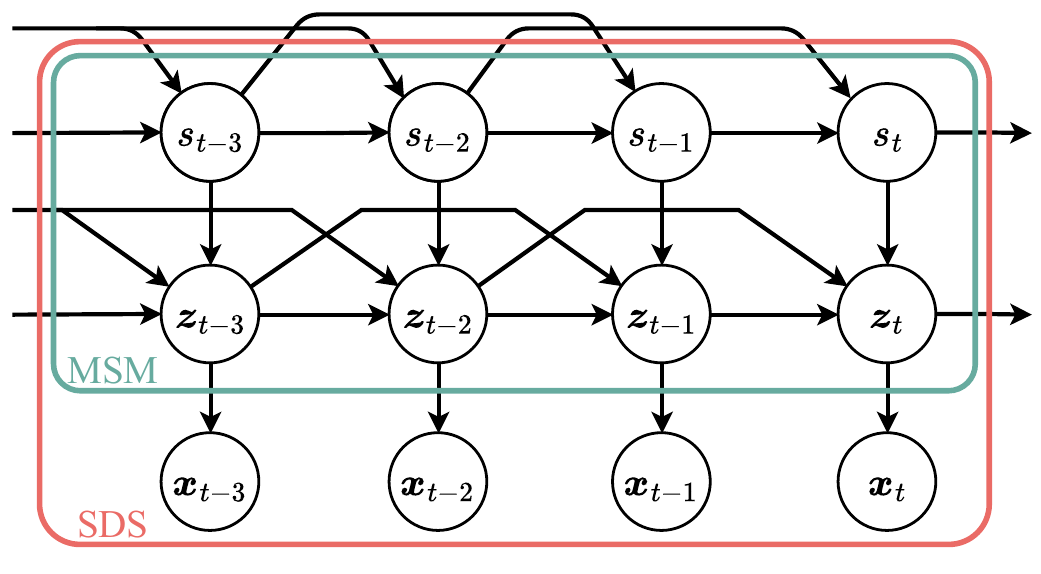}
  \end{minipage}
  \hfill
  \begin{minipage}{0.3\textwidth}
  \caption{The generative models considered in this work. MSM (green) treats $\z_t$ directly as observations, while SDS (red) transforms $\z_t$ into observed $\x_t$.}
    \label{fig:msm_snlds}
  \end{minipage}
\end{figure}

A Switching Dynamical System (SDS), with an example graphical model illustrated in Figure~\ref{fig:msm_snlds}, is a sequential latent variable model with its dynamics governed by both discrete and continuous latent states, $s_t\in\{1,\dots,K\},\ \z_t\in\R^m$, respectively. At each time step $t$, the discrete state $s_t$ determines the regime of the dynamical prior that the current continuous latent variable $\z_t$ should follow, and the observation $\x_t$ is generated from $\z_t$ using a (noisy) nonlinear transformation. This gives the following probabilistic model for the states and the observation variables:
\begin{equation}\label{eq:sds}
p_{\mparam}(\x_{1:T},\z_{1:T},\s_{1:T}) = p_{\mparam}(\s_{1:T}) p_{\mparam}(\z_{1:T} \mid \s_{1:T}) \prod_{t=1}^T  p_{\mparam}(\x_{t} \mid \z_{t}),
\end{equation}
where we use $\mparam$ to denote the model parameters. As $p_{\mparam}(\x_{1:T})$ is intractable, recent works \citep{dong2020collapsed,ansari2021deep} have developed inference techniques based on variational inference \citep{kingma2013autoencoding}. These works consider an additional dependency on the switch $s_t$ from $\x_t$ for more expressive segmentations, which is not considered in our theoretical analysis for simplicity.

The latent dynamic prior $p_{\mparam}(\z_{1:T})$ considers a Markov Switching Model (MSM) which has also been referred to as autoregressive HMM \citep{hamilton1989new,ephraim2005revisiting}. This type of prior uses the latent ``switch'' $s_t$ to condition the distribution of $\z_t$ at each time-step, and the conditional dynamic model of $\z_{1:T}$ given $\s_{1:T}$ follows an autoregressive process, where the likelihood can be computed by marginalising $\s_{1:T}$. For simplicity, we consider $\s_{1:M}:= s_M$ and $p_{\mparam}(\z_{1}, \dots, \z_M \mid s_M)$ as the distribution of the initial $M$ observations, and the MSM with lag $M$ factorises as follows:
\begin{multline}\label{eq:msm-likelihood}
    p_{\mparam}(\z_{1:T}) = \sum_{\s_{M:T}} p_{\mparam}(\s_{M:T}) p_{\mparam}(\z_{1:T} \mid \s_{M:T}), \\ p_{\mparam}(\z_{1:T} \mid \s_{M:T}) = p_{\mparam}(\z_{1}, \dots, \z_M \mid s_M)\prod_{t=M+1}^T p_{\mparam}(\z_t \mid \z_{t-1}, \dots, \z_{t-M}, s_t).
\end{multline}
Note that the structure of the discrete latent state prior $p_{\mparam}(\s_{M:T})$ is not specified, and the identifiability results presented in the next section do not require further assumptions herein. In experiments, we use a first-order Markov process for $p_{\mparam}(\s_{M:T})$, described by a transition matrix $Q\in\R^{K\times K}$ such that $p_{\mparam}(s_t=j\mid s_{t-1}=i) = Q_{ij}$, and an initial distribution $p_{\mparam}(s_M=i)=\pi_i$.
\section{Identifiability Analysis}\label{sec:theory}

In this section, we establish the identifiability of the SDS (Eq.~\eqref{eq:sds}) presented in Section \ref{sec:back_sds} under suitable assumptions. We start by extending finite mixture model identifiability results introduced in Section \ref{sec:finite_mixture} to Markov switching models with finite lag $M$. Then, we leverage ideas from \citet{kivva2022identifiability} which use finite mixture prior for non-temporal generative models; importantly, this theory relies on the use of identifiable finite mixture priors (up to mixture component permutations). 

\paragraph{Identifiability:} Formally, identifiability establishes an injective correspondence between probability distributions $p_{\mparam}(\x)$ and model parameters $\mparam$:
\begin{equation}
    p_{\mparam}(\x) = p_{\tilde{\mparam}}(\x) \iff \mparam = \tilde{\mparam}.
\end{equation}
Given that our models are parametrised by deep neural networks, explicit identifiability of parameters is not possible due to network symmetries (i.e., different parameters map to the same function). Instead, in this paper we refer to identifiability in terms of their functional form, and model mixture components.

\subsection{Identifiable High-Order Markov Switching Models}\label{sec:msm_theory}

\paragraph{Notation.}
We use the following notation for repeated Cartesian products of sets and pointwise products of function families:
\begin{equation}
    \times^N \mathcal{A} := \underbrace{\mathcal{A} \times \dots \times \mathcal{A}}_{N \text{ times}}, \quad 
    \otimes^N \mathcal{P} := \underbrace{\mathcal{P} \otimes \dots \otimes \mathcal{P}}_{N \text{ times}},
\end{equation}
where $\mathcal{A}$ is an index set and $\mathcal{P}$ is a family of functions. The functions in $\mathcal{P}$ may share overlapping variables, resulting in functions defined in the joint domain. For example, consider two families 
\begin{equation}
    \mathcal{P}_{1,\A_1} = \{ p_{1,a_1}(\z_1, \z_2) : a_1 \in \A_1 \}, \quad
\mathcal{P}_{2,\A_2} = \{ p_{2,a_2}(\z_2, \z_3) : a_2 \in \A_2 \},
\end{equation}
with $(\z_1, \z_2), (\z_2, \z_3)\in\R^{2m}$. Then, their pointwise product is defined as:
\begin{equation}
    \mathcal{P}_{1,\A_1} \otimes \mathcal{P}_{2,\A_2} := \Big\{ p_{1,a_1}(\z_1, \z_2)p_{2,a_2}(\z_2, \z_3) \mid (a_1, a_2)\in \A_1\times \A_2 \Big\},
\end{equation}
where functions are defined in $(\z_1, \z_2,\z_3)\in\R^{3m}$. We also write $[K]:=\{1,\dots,K\}$ for index sets from $1$ to $K$.
\paragraph{Finite mixture model view of MSMs.} For simplicity, we drop the subscript $\mparam$.
The MSM $p(\z_{1:T})$ has an equivalent formulation as a finite mixture model, where from Eq.~\eqref{eq:msm-likelihood}, the joint distribution given a sequence of switches $p(\z_{1:T}\mid\s_{M:T})$ is defined from the initial and transition distributions with lag $M$: $p(\z_1, \dots, \z_M \mid s_M)$ and $p(\z_t \mid\z_{t-1}, \dots, \z_{t-M}, s_t)$ for $t>M$ respectively.
Equivalently as in \citet{yakowitz1968identifiability}, we first define function families corresponding to initial and transition distributions. 
\begin{multline}
\Pi_{\A}^M:= \Big\{p_a(\z_{1}, \dots, \z_M) \mid a\in \A \Big\},\quad
\mathcal{P}_{\B}^M := \Big\{ p_b(\z_t \mid \z_{t-1}, \dots,\z_{t-M}) \mid b \in \B \Big\},
\end{multline}
where $\A$ and $\B$ are index sets satisfying mild measure theoretic conditions (similarly as in Section \ref{sec:finite_mixture}). To exemplify, we can consider these families to be infinitely many parametrised functions where each $a\in\A$ corresponds to a different parameter configuration $\mparam_a$. We use different calligraphic symbols (e.g., $\A$, $\B$) because they index functions with different structures and output domains. For example, if the initial distribution family is a Gaussian family, the set $\A$ indexes elements in a finite-dimensional parameter space (means and covariances). In contrast, the set $\B$ used in the transition family indexes elements in a function space, the output of which defines the parameters of a conditional distribution. We provide details of parametrisations below.

We assume a general setting and consider the MSM initial distribution has $K_0$ regimes, where the transition distribution has $K$ regimes. Therefore, the MSM discrete regimes satisfy $s_t \in [K]$ with $K < +\infty$, and $M < t \leq T$, with an initial regime $s_M\in[K_0], \ K_0<+\infty$. Given the above formulation, a Markov switching model considers $K_0$ initial distributions $\Pi_{\A_0}^M$ and $K$ transition distributions $\mathcal{P}_{\B_0}^M$, where $|\A_0| = K_0$,  $|\B_0| = K$, and $ \A_0 \subset \A$, $\B_0 \subset \B$ respectively. Therefore, under these function families we have for each $k\in[K_0]$: $p(\z_{1:M}\mid s_M=k) =p_a(\z_{1:M})$, for some $a\in\A_0$; and for each $k\in[K]$: $p(\z_t\mid \z_{t-1}, \dots, \z_{t-M}, s_t=k)=p_{b_t}(\z_t\mid \z_{t-1:t-M})$ for some $b_t\in\B_0$ with $M < t \leq T$. In other words, the families $\Pi_{\A}^M$ and $\mathcal{P}_{\B}^M$ contain infinitely many functions, and each of the distributions in the MSM aligns with one of such functions. 

We can view the joint distribution $p(\z_{1:T})$ as a mixture of paths, where each path is defined by a realisation of $\s_{M:T}\in [K_0]\times[K]^{T-M}$. Therefore, one can define a bijective \emph{path indexing function} $\varphi: [C]\rightarrow [K_0]\times[K]^{T-M}$, with a total number of mixture paths $C = K_0K^{T-M}$, such that each $i \in [C]$ uniquely retrieves a set of states $\s_{M:T} = \varphi(i)$. Then we can use $c_i=p(\s_{M:T}=\varphi(i))$ to represent the joint probability of the states $\s_{M:T}=\varphi(i)$ under $p$. Then for any $M,T$ such that $M < T < +\infty$, we can construct the family of $M$-order MSMs in finite mixture model sense: 
\begin{multline}\label{eq:msm_finite_mixture}
    \msmfamily := \bigg\{ \sum_{i=1}^{C} c_i p_{a^i}(\z_{1:M})\prod_{t=M+1}^T p_{b^i_t}(\z_t \mid \z_{t-1},\dots,\z_{t-M}) \mid c_i > 0, \\ p_{a^i}\in\Pi^{M}_{\A_0},\quad p_{b^i_t}\in\mathcal{P}^{M}_{\B_0},\quad M< t\leq T,\quad  a^i\in \A_0\subset\A ,\quad b^i_t \in \B_0\subset\B,\quad K_0,K < +\infty, \\  |\A_0|=K_0,\quad |\B_0|=K  , \quad (a^i,b^i_{M+1:T}) \neq (a^j,b^j_{M+1:T}), \forall i\neq j,\quad \sum_{i=1}^C c_i = 1\bigg\}.
\end{multline}
Since Eq.~(\ref{eq:msm_finite_mixture}) requires $(a^i,b^i_{M+1:T}) \neq (a^j,b^j_{M+1:T})$ for any $i\neq j$, this builds an injective mapping $\phi : [C]\rightarrow \A_0\times(\times^{T-M}\B_0)$, such that $\phi(i) = (a^i,b^i_{M+1}, \dots, b^i_{T})$. Combined with the path indexing function, we can use $\phi \circ \varphi^{-1}$ to uniquely map a set of states $\s_{M:T}$ to the indices. 

The above notation illustrates how the MSM extends finite mixture models to temporal settings representing $p(\z_{1:T})$ as a finite mixture of $K_0K^{T-M}$ trajectories, each composed of an initial 
distribution from $\Pi_{\A_0}^M$ and conditional distributions from $\mathcal{P}^M_{\B_0}$. However, this notion differs fundamentally from the i.i.d. mixture model setting in \citet{yakowitz1968identifiability}. To formalise this difference, we define the following \emph{trajectory family}:
\begin{multline}
\mathcal{P}_{\A,\B}^{T,M}:=\Pi_{\A}^M\otimes(\otimes^{T-M} \mathcal{P}_{\B}^M) = \Bigg\{p_{a}(\z_{1:M})\prod_{t=M+1}^T p_{b_t}(\z_t \mid \z_{t-1},\dots,\z_{t-M}) \mid  \\ a \in\A, \ b_t \in \B, \ M < t\leq T\Bigg\}.
\end{multline}
In Section \ref{sec:finite_mixture}, $\mathcal{H}_{\A}$ is defined as the \emph{convex hull} $\mathrm{conv}(\mathcal{F}_\A)$. In contrast, the MSM family defined in Eq.~(\ref{eq:msm_finite_mixture}) forms a strict subset of $\mathrm{conv}(\mathcal{P}_{\A,\B}^{T,M})$, since the mixture components are restricted to finite subsets $\A_0\subset \A$ and $\B_0 \subset \B$. On the other hand, mixtures over $\mathcal{P}_{\A,\B}^{T,M}$ allow arbitrary selections from $\B$ at each time step, yielding a more expressive model family. This generalisation is crucial for our theoretical analysis below, where we prove identifiability of MSM by establishing linear independence under finite mixtures of $\mathcal{P}_{\A,\B}^{T,M}$.

\paragraph{Identifiable Markov switching models.}

Having established the finite mixture model view of Markov switching models, we will use this notation of MSM in the rest of Section \ref{sec:msm_theory}, as we will use finite mixture modelling techniques to establish its identifiability. 
In detail, we first define the identification of $\mathcal{M}^T(\Pi_{\A}^M,\mathcal{P}_{\B}^M)$ as follows. 
\begin{definition}\label{def:identifiability}
We say the family of Markov switching models with order $M$ and length $T$, where $M,T\in\mathbb{Z}^+$ and $M< T$, is \emph{identifiable up to permutations}, if for any $p,\tilde{p}\in \mathcal{M}^T(\Pi_{\A}^M, \mathcal{P}_{\B}^M)$ where  
\begin{itemize}[leftmargin=0cm]
  \setlength{\itemsep}{1pt}
  \setlength{\parskip}{0pt}
  \setlength{\parsep}{0pt}
    \item[] $\quad p(\z_{1:T}) = \sum_{i=1}^{C} c_i p_{a^i}(\z_{1:M}) \prod_{t=M+1}^T p_{b^i_t}(\z_t \mid \z_{t-1},\dots,\z_{t-M}), \quad C=K_0K^{T-M}; \text{ and} $
    \item[] $\quad\tilde{p}(\z_{1:T}) = \sum_{i=1}^{\tilde{C}} \tilde{c}_i p_{\tilde{a}^i}(\z_{1:M}) \prod_{t=M+1}^T p_{\tilde{b}^i_t}(\z_t \mid \z_{t-1},\dots,\z_{t-M}), \quad \tilde{C}=\tilde{K}_0\tilde{K}^{T-M},$
\end{itemize}
with $p(\z_{1:T})=\tilde{p}(\z_{1:T}) \ \forall \z_{1:T}\in\R^{mT}$, we have $C=\tilde{C},\ K_0=\tilde{K}_0,\ K=\tilde{K}$,
and for each $1\leq i \leq C$ there is some $1 \leq j \leq \tilde{C}$ such that:
\begin{enumerate}
    \item $c_i=\tilde{c}_j$;
    \item if $b^i_{t_1}=b^i_{t_2}$ for $M <t_1,t_2 \leq T$,\  $t_1\neq t_2$, then $\tilde{b}^j_{t_1}=\tilde{b}^j_{t_2}$;
    \item $p_{a^i}(\z_{1:M})=p_{\tilde{a}^j}(\z_{1:M}), \  \forall\z_{1:M} \in \R^{mM}$;
    \item $p_{b^i_t}(\z_t \mid \z_{t-1},\dots,\z_{t-M})=p_{\tilde{b}^j_t}(\z_t \mid \z_{t-1},\dots,\z_{t-M})$,\ $\forall(\z_t,\dots, \z_{t-M}) \in \R^{m(M+1)}$.
\end{enumerate}
\end{definition}
The second requirement eliminates the permutation equivalence of e.g., $\s_{1:4} = (1, 2, 3, 2)$; $\tilde{\s}_{1:4} = (3, 1, 2, 3)$ which would be valid in the i.i.d. finite mixture case.

For the purpose of building deep generative models, we seek to define identifiable parametric families and defer the study of the nonparametric case in Appendix \ref{app:proof_identifiability}. In particular we use a nonlinear Gaussian transition family as follows:
\begin{multline}\label{eq:gaussian_transition_family}
    \mathcal{G}_{\B}^M = \Big\{p_b(\z_t\mid \z_{t-1}, \dots, \z_{t-M}) =  \mathcal{N}(\z_t\mid \bm{m}(\z_{t-1}, \dots, \z_{t-M}, b), \\ \bm{\Sigma}(\z_{t-1}, \dots, \z_{t-M}, b) \mid b\in\B\Big\},
\end{multline}
where $\bm{m}(\z_{t-1}, \dots, \z_{t-M}, b): \R^{mM} \rightarrow \R^m$ and $\bm{\Sigma}(\z_{t-1}, \dots, \z_{t-M},b): \R^{mM} \rightarrow \R^{m\times m}$ are nonlinear w.r.t.~$(\z_{t-1}, \dots, \z_{t-M})$ and denote the mean and covariance matrix, respectively. For each $p_b$ to be a density, we assume that $\bm\Sigma(\cdot,b)\succ 0$ for all $b\in\B$. We further require the following \emph{unique indexing} assumption:
\begin{multline}
\label{eq:unique_indexing_conditional_gaussian}
    \forall b \neq b' \in \B, \ \exists \mathcal{Z}\subset\mathbb{R}^{mM}, \mu(\mathcal{Z})>0,\  s.t. \  \forall (\z_{t-1},\dots,\z_{t-M}) \in \mathcal{Z},\  \bm{m}(\z_{t-1},\dots,\z_{t-M}, b) \\
    \neq \bm{m}(\z_{t-1},\dots,\z_{t-M}, b')\  \text{or}\  \bm{\Sigma}(\z_{t-1},\dots,\z_{t-M}, b) \neq \bm{\Sigma}(\z_{t-1},\dots,\z_{t-M}, b'),
\end{multline}
In other words, for such $(\z_{t-1}, \dots, \z_{t-M})$, $p_b(\z_t \mid \z_{t-1},\dots,\z_{t-M})$ and $p_{b'}(\z_t \mid \z_{t-1},\dots,\z_{t-M})$ are two different Gaussian distributions in at least a nonzero measure set. We also introduce a family of \emph{initial distributions} with unique indexing:
\begin{equation}
\label{eq:gaussian_initial_family}
    \mathcal{I}_{\A}^M := \Big\{p_a(\z_{1:M}) = \mathcal{N}(\z_{1:M}\mid \bm{\mu}(a), \bm{\Sigma}_1(a)) \ \mid \ a \in \A \Big\},
\end{equation}
\begin{equation}
\label{eq:unique_indexing_marginal_gaussian}
    \forall a \neq a' \in \A \Leftrightarrow \bm{\mu}(a) \neq \bm{\mu}(a') \text{ or } \bm{\Sigma}_1(a) \neq \bm{\Sigma}_1(a'),
\end{equation}
where, similarly, $\bm\Sigma_1(a)\succ 0$ for all $a\in\A$. The above Gaussian families paired with unique indexing assumptions satisfy conditions that favour identifiability of MSMs.
\begin{theorem}[Identifiability of parametric MSMs]\label{thm:identifiability_main}
Consider the Markov switching model family with order $M$ defined in Equation~(\ref{eq:msm_finite_mixture}) under nonlinear Gaussian families $\mathcal{M}^{T,M}_{NL}:=\mathcal{M}^T(\mathcal{I}_{\A}^M, \mathcal{G}_{\B}^M)$ with $\mathcal{G}^M_{\B}$, $\mathcal{I}^M_{\A}$ defined by Eqs.~(\ref{eq:gaussian_transition_family}),~(\ref{eq:gaussian_initial_family})  respectively.
. Then, the MSM is identifiable in terms of Def. \ref{def:identifiability} under the following assumptions:
\begin{enumerate}
    \item[(m1)] Unique indexing for $\mathcal{G}_{\B}^M$ and $\mathcal{I}_{\A}^M$: Eqs.~(\ref{eq:unique_indexing_conditional_gaussian}) and~(\ref{eq:unique_indexing_marginal_gaussian}) hold;
    \item[(m2)] The mean and covariance in $\mathcal{G}_{\B}^M$, $\bm{m}(\cdot,b): \R^{mM} \rightarrow \R^m$ and $\bm{\Sigma}(\cdot,b): \R^{mM} \rightarrow \R^{m\times m}$, are analytic functions, for any $b\in \B$.
\end{enumerate}

\end{theorem}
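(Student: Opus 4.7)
The plan is to reduce identifiability of $\mathcal{M}^{T,M}_{NL}$ in the sense of Def.~\ref{def:identifiability} to linear independence (under finite mixtures) of the trajectory family $\mathcal{P}^{T,M}_{\A,\B} = \mathcal{I}^M_{\A} \otimes (\otimes^{T-M} \mathcal{G}^M_{\B})$, mirroring the route used for Proposition~\ref{prop:mixture_pdf_identifiability}. Because the MSM family sits inside $\mathrm{conv}(\mathcal{P}^{T,M}_{\A,\B})$ and each trajectory index $(a^i, b^i_{M+1:T})$ is unique within a given mixture by construction, linear independence of any finite collection of trajectory densities indexed by distinct tuples forces a one-to-one matching between the components of $p$ and $\tilde p$, and, together with (m1), yields all four clauses of Def.~\ref{def:identifiability}.

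The key building block is the classical fact that any finite collection of Gaussian densities with pairwise distinct mean/covariance parameters is linearly independent. For the initial family $\mathcal{I}^M_{\A}$ this follows immediately from (m1). For conditional Gaussians in $\mathcal{G}^M_{\B}$, (m1) only guarantees that two distinct $b \neq b'$ yield different $(\bm{m}, \bm{\Sigma})$ on some positive-measure set $\mathcal{Z}_{b,b'}$. I would upgrade this to ``almost everywhere'' using (m2): $\bm{m}(\cdot,b) - \bm{m}(\cdot,b')$ and $\bm{\Sigma}(\cdot,b) - \bm{\Sigma}(\cdot,b')$ are analytic, so vanishing on a set of positive Lebesgue measure would force them to vanish on all of $\R^{mM}$, contradicting (m1). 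Hence the complement of $\mathcal{Z}_{b,b'}$ is null, and for any finite $\B_0 \subset \B$ the co-null intersection of these sets is a set of conditionings on which the members of $\{p_b(\z_t \mid \z_{t-1},\dots,\z_{t-M}) \mid b \in \B_0\}$ are pairwise distinct Gaussians in $\z_t$ and therefore linearly independent.

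The main argument is an induction on $T$. For the base case $T = M$, the claim reduces to linear independence of $\{p_a(\z_{1:M})\}_{a \in \A_0}$, which is immediate. For the inductive step, suppose $\sum_i c_i p_{a^i}(\z_{1:M}) \prod_{t=M+1}^T p_{b^i_t}(\z_t \mid \z_{t-1},\dots,\z_{t-M}) \equiv 0$ with distinct tuples. I group paths by the value of the last switch $b^i_T$; for any $\z_{1:T-1}$ whose tail $(\z_{T-1},\dots,\z_{T-M})$ lies in the co-null set above, the identity becomes a linear combination in $\z_T$ of pairwise distinct conditional Gaussians, so each coefficient
\[
\sum_{i \,:\, b^i_T = b} c_i\, p_{a^i}(\z_{1:M}) \prod_{t=M+1}^{T-1} p_{b^i_t}(\z_t \mid \z_{t-1},\dots,\z_{t-M})
\]
must vanish. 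Each such coefficient is analytic in $\z_{1:T-1}$ (products of Gaussians with analytic parameters), so vanishing on a co-null set extends to all of $\R^{m(T-1)}$; within each group the remaining tuples $(a^i, b^i_{M+1:T-1})$ are still pairwise distinct, and the inductive hypothesis forces $c_i = 0$.

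Once linear independence is established, Proposition~\ref{prop:mixture_pdf_identifiability} applied to $\mathcal{P}^{T,M}_{\A,\B}$ as a PDF family matches trajectories one-to-one between $p$ and $\tilde p$, giving Conditions 1, 3, 4 of Def.~\ref{def:identifiability}. The equalities $K_0 = \tilde K_0$, $K = \tilde K$, and Condition 2 then follow because (m1) converts equality of densities into equality of labels, so the finite label sets $\A_0, \B_0$ and the per-path label sequences are recovered up to relabelling of switches. The main obstacle I expect is the measure-theoretic bookkeeping inside the induction: grouping by $b^i_T$ produces identities that initially hold only on subsets of $\R^{mT}$, and analyticity is the essential lever that converts each such local statement into the global identity required to invoke the inductive hypothesis.
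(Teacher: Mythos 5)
Your proposal is correct, but it takes a genuinely different and substantially more elementary route than the paper. The paper proves linear independence of the trajectory family at the \emph{nonparametric} level (Theorem~\ref{thm:linear_independence_joint_non_parametric}), via the LIPO-M lemma and a nested induction that groups variables into blocks of size $M$, strengthens (a4)--(a6) to (b4)--(b6), and only at the end verifies that the Gaussian families satisfy those nonparametric hypotheses. You instead run a single induction on $T$ that peels off the last factor: group paths by $b^i_T$, use analyticity of $\bm{m},\bm{\Sigma}$ plus unique indexing to get a co-null set of conditionings on which the conditional Gaussians indexed by distinct $b$ are pairwise distinct (hence linearly independent in $\z_T$ by Yakowitz--Spragins), conclude each group coefficient vanishes there, and then use the fact that each coefficient is itself a real-analytic function of $\z_{1:T-1}$ to globalize the vanishing and invoke the inductive hypothesis. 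This last globalization step is exactly what is unavailable in the nonparametric setting and is the reason the paper needs the continuity/$\epsilon$-ball machinery of LIPO-M; in the parametric Gaussian case your shortcut is legitimate, since the conditional densities are jointly analytic in $(\z_t,\z_{t-1:t-M})$ when $\bm{\Sigma}(\cdot,b)\succ 0$ is analytic. What your approach gives up is generality: it does not yield the nonparametric Theorem~\ref{thm:identifiability_msm} (the green region of Figure~\ref{fig:sketch_theorems}), nor the stronger form of linear independence on $\R^{(T-r)m}\times\mathcal{Z}_r$ for arbitrary nonzero-measure $\mathcal{Z}_r$ that the paper reuses in the SDS identifiability proofs. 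Two minor points to tidy up: the base case should be stated at $T=M+1$ (the trajectory family requires $T>M$), and the derivation of $K=\tilde K$, $K_0=\tilde K_0$ and Condition 2 of Definition~\ref{def:identifiability} deserves the explicit marginalization-and-relabelling argument of the paper's Theorem~\ref{thm:identifiability_msm} rather than a one-line appeal to (m1), though the substance is the same.
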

\emph{Proof sketch:} See Appendix \ref{app:proof_identifiability} for the full proof. The strategy can be summarised in the following steps. We defer additional intuitions and details to the end of this section, after a series of remarks.
\begin{enumerate}[leftmargin=1.75em]
    \item  Under the finite mixture model view, it suffices to show that the trajectory family, $\mathcal{P}_{\A,\B}^{T,M}$, is linearly independent under finite mixtures \citep{yakowitz1968identifiability}. Given that $\msmfamily$ is a subset of $\mathrm{conv}(\mathcal{P}_{\A,\B}^{T,M})$, identifiability of $\mathrm{conv}(\mathcal{P}_{\A,\B}^{T,M})$ implies the identifiability of $\msmfamily$.
    \item To show linear independence under finite mixtures of $\mathcal{P}_{\A,\B}^{T,M}$, we begin with $T = 2M$ and construct a family of joint densities defined by nonparametric components $\mathcal{P}_{\B}^M$ and $\Pi_{\A}^M$:
\begin{equation*}
    \left\{p_{a}(\z_{1:M})\prod_{t=M+1}^{2M} p_{b_{t}}(\z_{t} \mid \z_{t-1},\dots,\z_{t-M}) \mid a\in\A,\ b_{t}\in\B,\ M <t \leq 2M \right\}.
\end{equation*}
We show in Lemma~\ref{lemma:absolutely_magical_lemma} that this family is linearly independent under finite mixtures.

    \item For $T>2M$, we proceed by induction (Theorem \ref{thm:linear_independence_joint_non_parametric}). Under the induction hypothesis, we reuse the result from Lemma~\ref{lemma:absolutely_magical_lemma} to extend linear independence to longer trajectories.
    \item We show that the Gaussian families $\mathcal{I}^M_{\A}$ and $\mathcal{G}^M_{\B}$ under assumptions (m1-m2) satisfy the \emph{nonparametric} case defined for $\Pi_{\A}^M$ and $\mathcal{P}_{\B}^M$ respectively.
\end{enumerate}
\begin{figure}
  \begin{minipage}{0.35\textwidth}
\resizebox{\linewidth}{!}{\begin{tikzpicture}[x=1pt,y=1pt,font=\LARGE]

\def\outer{
  (85,110) (175,40) (305,50) (420,125)
  (410,255) (300,310) (190,290) (105,215)
}

\def\conv{
  (115,140) (215,100) (335,105) (405,175)
  (375,275) (260,300) (155,245)
}

\def\msm{
  (170,170) (255,155) (345,188) (340,265)
  (268,290) (188,240)
}

\def\nl{
  (205,220) (280,205) (320,210)
  (335,230) (325,255) (290,265) (245,270)
}

\begin{scope}
  \clip \SoftBlobPath{\outer};
  \fill[white] (0,0) rectangle (480,340);
  \fill[
    pattern={Lines[angle=45,distance=8pt,line width=0.6pt]},
    pattern color=black!12
  ] (0,0) rectangle (480,340);
\end{scope}

\SoftBlob{blue!75!black}{blue!10}{\conv}
\SoftBlob{green!55!black}{green!12}{\msm}
\SoftBlob{red!80!black}{red!10}{\nl}

\draw[line width=1.5pt] \SoftBlobPath{\outer};

\node[align=center] at (240,70)
  {General Nonparametric\\Switching Models};

\node[text=blue!75!black] at (235,135)
  {$\mathrm{conv}\!\left(\mathcal{P}_{\mathcal{A},\mathcal{B}}^{T,M}\right)$};
\node[text=blue!75!black] at (240,115)
  {Prop.~\ref{prop:mixture_pdf_identifiability} + Theorem~\ref{thm:linear_independence_joint_non_parametric}};

\node[text=green!55!black] at (230,190)
  {$\mathcal{M}^{T}\!\left(\Pi_{\mathcal A}^{M},\mathcal{P}_{\mathcal B}^{M}\right)$};
\node[text=green!55!black] at (240,170)
  {Theorem~\ref{thm:identifiability_msm}};

\node[text=red!80!black] at (260,245)
  {$\mathcal{M}_{NL}^{T,M}$};
\node[text=red!80!black] at (257,220)
  {Theorem~\ref{thm:identifiability_main}};

\end{tikzpicture}}
  \end{minipage}
  \quad
  \begin{minipage}{0.6\textwidth}
    \caption{
       The red region shows our main result for parametric MSMs with Gaussian transitions (Sec.~\ref{sec:msm_theory}).
The green region indicates our more general identifiability result for nonparametric MSMs (App.~\ref{app:proof_identifiability}).
The blue region highlights that our proof technique extends to any switching model whose family lies within $\mathrm{conv}(\jointfamily)$.
    } 
    \label{fig:sketch_theorems}
  \end{minipage}
\end{figure}
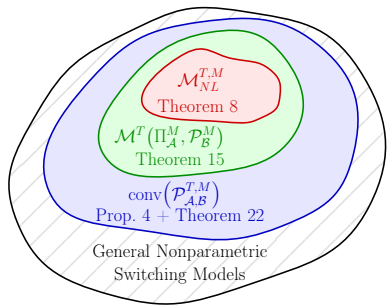
\begin{remark}\label{remark:theory_scope}
Our strategy establishes identifiability by proving linear independence of the trajectory family $\jointfamily$ under finite mixtures (Theorem~\ref{thm:linear_independence_joint_non_parametric}), which, together with Proposition~\ref{prop:mixture_pdf_identifiability}, ensures identifiability in terms of Def.~\ref{def:identifiability_finite_mixture} for any switching model whose family lies in $\mathrm{conv}(\jointfamily)$. Figure~\ref{fig:sketch_theorems} illustrates the resulting hierarchy: 
\begin{itemize}[leftmargin=1.5em]
\setlength{\itemsep}{1pt}
  \setlength{\parskip}{0pt}
  \setlength{\parsep}{0pt}
    \item \textbf{Red region:} Main result (Thm.~\ref{thm:identifiability_main}), 
    covering parametric MSMs with Gaussian transitions.
    \item \textbf{Green region:} Generalisation to nonparametric MSMs 
    (Thm.~\ref{thm:identifiability_msm}), which also includes non-Gaussian transition families.
    \item \textbf{Blue region:} Full theoretical scope of our approach, which includes 
    any switching model whose family lies within $\mathrm{conv}(\jointfamily)$, such as MSMs with time-varying components $K(t)$.
\end{itemize}
\end{remark}

Intuitively, ``switches'' can be considered as discontinuities in the dynamic model and are fully controlled by the discrete states. Assumption (m2) means $\bm{m}(\cdot, b)$ and $\bm{m}(\cdot, b')$ with $b \neq b'$ can only intersect at a set of points with zero Lebesgue measure (similarly for $\bm{\Sigma}(\cdot, b)$), and this smoothness property contributes to the identifiability for the continuous-state transitions $p_{b}(\z_t \mid \z_{t-1}, \dots, \z_{t-M}),\ b\in\mathcal{B}$. The following implications of 
Theorem~\ref{thm:identifiability_main}, when combined, show the profound expressivity of identifiable MSMs:
\begin{itemize}[leftmargin=1.5em]
  \setlength{\itemsep}{1pt}
  \setlength{\parskip}{0pt}
  \setlength{\parsep}{0pt}
    \item Assumption (m2) enables parametrising $p_{b}(\z_t \mid \z_{t-1}, \dots \z_{t-M})$ with e.g., polynomials and neural networks (NNs) with analytic activations (e.g., Softplus). As mentioned, for NNs identifiability applies to the functional form only, since NN weights do not uniquely index NN functions. 
    \item The identifiability result holds independently of the choice of $p(\s_{M:T})$, allowing e.g., nonstationarity and higher-order Markov dependencies. 
\end{itemize} 
Our experiments use a stationary first-order Markov prior for $p(\s_{M:T})$. Its transition parameters are identifiable up to permutations following \citet[Cor.~C.1]{balsells-rodas2024on}. We leave other cases to future work. 

\paragraph{Nonparametric identifiability.}  
The key to our proof technique is the establishment of step (2) in \emph{Proof sketch}, which is proved through a new theoretical result: \emph{Linear Independence on Product functions with M consecutive Overlapping variables} (LIPO-M; Lemma \ref{lemma:absolutely_magical_lemma}), and it can be of independent interest. Unlike Hidden Markov Models (HMMs), in MSMs, $\z_{t}$ and $\z_{t+1}$ are \emph{not} conditionally independent given the discrete states (assuming $M=1$ in this example). Therefore, seminal HMM identifiability results \citep{allman2009identifiability,gassiat2016inference}, which exploit conditional independence of $\z_t$ and $\z_{t+1}$ in HMMs and leverage Kruskal's ``3-way arrays'' technique \citep{kruskal1977three}, do not apply to nonparametric MSMs, rendering our development of new theory necessary.

\paragraph{Linear independence (case $M=1$).} To understand how linear independence under finite mixtures on products with $M$ consecutive variables is achieved, we start with $M=1$ (LIPO-1), presented in \citet{balsells-rodas2024on}, which we provide in Appendix \ref{app:preliminaries_lin_indep} (Lemma \ref{lemma:linear_independence_two_nonlinear_gaussians}) for completeness. Consider the first-order trajectory family on step (2), i.e., $T=2$:
\begin{equation}\label{eq:joint_first_order}
    \Pi_{\A}^1 \otimes \mathcal{P}_{\B}^1 = \bigg\{ p_a(\textcolor{purple}{\z_{1}})p_{b_2}(\textcolor{teal}{\z_{2}}\mid\textcolor{purple}{\z_{1}}) \mid (a,b_2)\in \A\times \B\bigg\},
\end{equation}
where the overlapping variable $\textcolor{purple}{\z_{1}}$ challenges linear independence under finite mixtures. Given linear independence under finite mixtures on the family $\Pi^1_{\mathcal{A}}$ (a3 -- Lemma \ref{lemma:linear_independence_two_nonlinear_gaussians}), the overlapping variable $\textcolor{purple}{\z_{1}}$ complicates linear independence under finite mixtures of $\mathcal{P}_{\mathcal{B}}^1$ when multiplied with $\Pi_{\A}^1$. This issue is resolved by enforcing linear independence under finite mixtures on specific nonzero measure subsets for $\mathcal{P}_{\mathcal{B}}^1$ (a4 -- Lemma \ref{lemma:linear_independence_two_nonlinear_gaussians}), linear dependence implying repeating functions (a5-- Lemma \ref{lemma:linear_independence_two_nonlinear_gaussians}); and continuity for $\textcolor{purple}{\z_{1}}$ (a6-- Lemma \ref{lemma:linear_independence_two_nonlinear_gaussians}). Figure \ref{fig:intuition_lemma_b3} illustrates the intuition behind this result, where linear independence under finite mixtures of the product family can be achieved as long as linear dependence only happens in zero measure sets. For the case $M=1$ the above needs to hold in at least a nonzero measure set of the overlapping variable space.

\begin{figure}
\centering
  \begin{minipage}{0.5\textwidth}
    \centering
    \includegraphics[width=\linewidth]{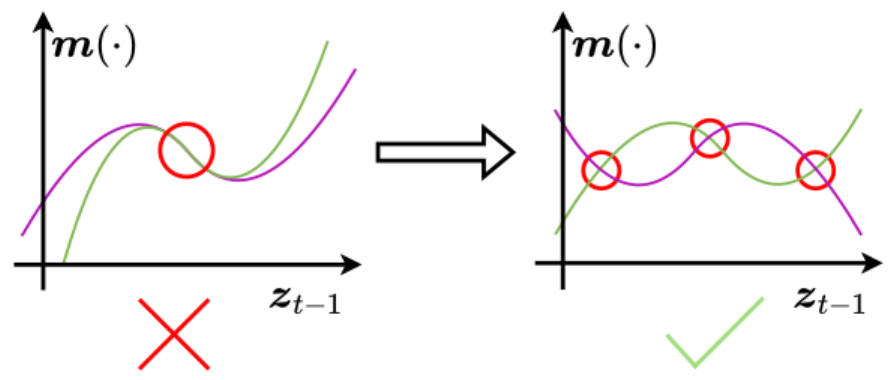}
  \end{minipage}
  \hfill
  \begin{minipage}{0.40\textwidth}
    \caption{Intuition behind Lemma \ref{lemma:linear_independence_two_nonlinear_gaussians}, where linear independence holds if, for any pair of functions (shown in green and purple),  the intersection in the conditioned variable ($\z_{t-1}$) domain is zero-measured.}
    \label{fig:intuition_lemma_b3}
  \end{minipage}
\end{figure}

\paragraph{Linear independence (general case).} Our strategy to generalise to any lag $M < T$ focuses on extending the previous result to cover multiple overlapping variables. 
The challenge is that when the autoregressive order $M$ increases, the number of overlapping variables between consecutive distributions also increases. 
For $M=1$, each new factor only shares a single variable with the previous one, so overlaps remain ``local'' and can be controlled directly. 
For $M>1$, however, a single variable may overlap with several different factors at once. This complicates the proof of linear independence under finite mixtures, as we no longer have a single overlap as in Eq. (\ref{eq:joint_first_order}). To illustrate, for $M=3$, the trajectory family at $T=6$ has the following structure:
\begin{equation}\label{eq:joint_product_3}
    \Pi^3_{\A} \otimes (\otimes^3 \mathcal{P}^3_{\B}) = \Big\{p_a(\textcolor{purple}{\z_{1}},\textcolor{teal}{\z_2},\textcolor{orange}{\z_3})p_{b_4}(\textcolor{violet}{\z_{4}}\mid\textcolor{purple}{\z_{1}},\textcolor{teal}{\z_2},\textcolor{orange}{\z_3}) p_{b_5}(\textcolor{olive}{\z_{5}}\mid\textcolor{violet}{\z_{4}},\textcolor{orange}{\z_3},\textcolor{teal}{\z_2})p_{b_6}(\z_6\mid\textcolor{olive}{\z_{5}},\textcolor{violet}{\z_{4}},\textcolor{orange}{\z_3})\Big\},
\end{equation}
for every $a\in\A$ and $b_4,b_5,b_6\in\B$, where we indicate overlapping variables with different colours. 
Compared to $M=1$ in Eq.~(\ref{eq:joint_first_order}), when multiplying the initial and transition distribution, $p_a(\textcolor{purple}{\z_{1}},\textcolor{teal}{\z_2},\textcolor{orange}{\z_3})p_{b_4}(\textcolor{violet}{\z_{4}}\mid\textcolor{purple}{\z_{1}},\textcolor{teal}{\z_2},\textcolor{orange}{\z_3})$, the overlap $(\textcolor{purple}{\z_{1}},\textcolor{teal}{\z_2},\textcolor{orange}{\z_3})$ has the same structure as in the $M=1$ case, and linear independence for $T=4$ can be established by LIPO-1. However at $T=5$, the next factor $p_{b_5}(\textcolor{olive}{\z_{5}}\mid\textcolor{violet}{\z_{4}},\textcolor{orange}{\z_3},\textcolor{teal}{\z_2})$ introduces an overlap $(\textcolor{violet}{\z_{4}},\textcolor{orange}{\z_3},\textcolor{teal}{\z_2})$ that is no longer confined in a single pair of consecutive terms, as $(\textcolor{orange}{\z_3},\textcolor{teal}{\z_2})$ are present in $p_a(\textcolor{purple}{\z_{1}},\textcolor{teal}{\z_2},\textcolor{orange}{\z_3})$. This branching overlap violates assumption (a3) of LIPO-1, as the product family does not directly reduce to the single-overlap structure in Eq.~\eqref{eq:joint_first_order}. Similar arguments apply for $T=6$, where $\textcolor{orange}{\z_3}$ overlaps with all the functions.

To ensure linear independence under finite mixtures of the trajectory family for any lag $M$, we group variables to force overlaps only between consecutive distributions (similar to Eq.~\eqref{eq:joint_first_order}). For instance, the example with $M=3$ becomes:
\begin{equation}
    \Pi^3_{\A} \otimes (\otimes^3 \mathcal{P}^3_{\B}) = \Big\{p_a(\textcolor{purple}{\z_{1}, \z_2, \z_3}) 
    p_{b_{4:6}}(\textcolor{teal}{\z_{4}, \z_5, \z_6}\mid\textcolor{purple}{\z_{1}, \z_2, \z_3}) \mid a\in\A, b_{4:6}\in(\times^{3}\B)\Big\}.
\end{equation}
As in the case $M=1$ linear independence under finite mixtures holds provided that the product $(\otimes^3 \mathcal{P}^3_{\B})$ satisfies assumptions (a4-a6) in LIPO-1. Therefore, the strategy is to impose these assumptions on $\mathcal{P}^M_\B$, and explore whether they extend to the product families $(\otimes^r \mathcal{P}^M_\B)$, for all $1<r\leq M$. In Appendix \ref{app:extending_assumptions} we prove that such an extension requires strengthening the nonparametric assumptions from (a4-a6) to (b4-b6) (Lemmas~\ref{lemma:extension_b4} and~\ref{lemma:extension_b5}). Notably, assumption (b4) requires linear independence under finite mixtures of the family $\mathcal{P}^{M}_{\B}$ to hold on every subset of a full-measure subset of $\times^M\R^{m}$; whereas as mentioned for $M=1$, only a nonzero measure subset $\mathcal{Y}\subset\R^m$ suffices. Despite being stronger, these assumptions connect to nonlinear Gaussians via (d3) in Theorem~\ref{thm:linear_independence_nonlinear_gaussian}. They also connect to (m2), as it ensures that distinct functions differ almost everywhere.

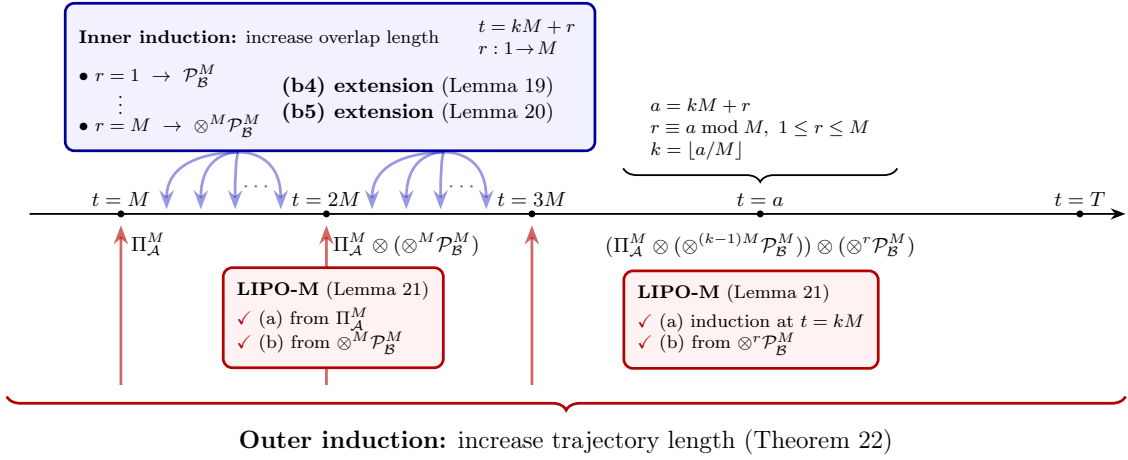
\begin{figure}
    \centering
    \resizebox{\linewidth}{!}{
\tikzset{
  >=Stealth,
  every picture/.style={line width=0.8pt, font=\small},
  tick/.style={circle, fill=black, inner sep=0pt, minimum size=3pt},
  tlabel/.style={font=\footnotesize, inner sep=1pt},
  mathlab/.style={font=\footnotesize, inner sep=2pt},
  box/.style={draw, rounded corners=4pt, inner sep=6pt},
  dim/.style={decorate, decoration={brace, amplitude=6pt}},
  dim*mirror/.style={decorate, decoration={brace, amplitude=6pt, mirror}},
  note/.style={font=\footnotesize, inner sep=1pt, text opacity=1},
  faint/.style={opacity=.75},
}
\tikzset{
  innerbox/.style={box, draw=blue!60!black, very thick, fill=blue!5},
  lipobox/.style={box, draw=red!70!black, very thick, fill=red!5},
  outerbrace/.style={decorate, decoration={brace, amplitude=6pt, mirror}, draw=red!70!black, very thick},
  check/.style={text=red!70!black, font=\scriptsize},
}

\begin{tikzpicture}[x=1pt,y=1pt]
\draw (40,90) -- (520,90) [->];

\foreach \x/\lab in {80/$t=M$, 170/$t=2M$, 260/$t=3M$, 360/$t=a$, 500/$t=T$}{
  \node[tick] at (\x,90) {};
  \node[tlabel, above=2pt] at (\x,90) {\lab};
}

\foreach \x in {80, 170, 260}{
  \draw[->, lipobox, opacity=0.6] (\x,15) -- (\x,85);
}

\node[mathlab, below=6pt] at (92,90) {$\Pi_{\mathcal A}^M$};
\node (prodTwoM) [mathlab, below=6pt, align=center] at (205,90)
  {$\Pi_{\mathcal A}^M\otimes(\otimes^{M}\mathcal P_{\mathcal B}^M)$};
\draw[outerbrace, font=\normalsize] (30,10) -- node[below=10pt] {\textbf{Outer induction:} increase trajectory length (Theorem \ref{thm:linear_independence_joint_non_parametric})} (520,10);

\node[innerbox, anchor=west,font=\scriptsize] (inner) at (55,150) {%
  \begin{minipage}{220pt}
  \raggedright
  \begin{tabular}{@{}l@{\hspace{2em}}l@{}}
    \multirow{2}{*}{\textbf{Inner induction:} increase overlap length}  & 
    $t = kM + r$ \\
    & $r:1\!\to\!M$\\
  \end{tabular}\\[1pt]
    $\bullet\ r=1\ \rightarrow\ \mathcal P_{\mathcal B}^M$\\[-2pt]
    $\qquad \vdots$\\[-2pt]
    $\bullet\ r=M\ \rightarrow\ \otimes^{M}\mathcal P_{\mathcal B}^M$
  \end{minipage}
};
\node[note,faint,align=left] at (210,140) {\textbf{(b4) extension} (Lemma~\ref{lemma:extension_b4})\\
\textbf{(b5) extension} (Lemma~\ref{lemma:extension_b5})};

\coordinate (innersrc) at ($(inner.south)+ (50pt, 0pt)$);

\foreach \xt in {190,205,220}{
  \draw[->, blue!70!black, very thick, opacity=.4]
    (innersrc) .. controls +(0pt,0pt) and +(0,26pt) .. (\xt,92);
}

\node[blue!70!black, opacity=.6] at (230,102) {$\cdots$};

\draw[->, blue!70!black, very thick, opacity=.4]
  (innersrc) .. controls +(0pt,0pt) and +(0,26pt) .. (240,92);

\coordinate (innersrs2) at ($(inner.south)+ (-40pt, 0pt)$);

\foreach \xt in {100,115,130}{
  \draw[->, blue!70!black, very thick, opacity=.4]
    (innersrs2) .. controls +(0pt,0pt) and +(0,26pt) .. (\xt,92);
}

\node[blue!70!black, opacity=.6] at (140,102) {$\cdots$};

\draw[->, blue!70!black, very thick, opacity=.4]
  (innersrs2) .. controls +(0pt,0pt) and +(0,26pt) .. (150,92);

\node[lipobox, align=left, anchor=north, font=\scriptsize] (lipom) 
  at ($(prodTwoM.south)+(-32,-2pt)$) {%
  \textbf{LIPO-M} (Lemma~\ref{lemma:absolutely_magical_lemma})\\[3pt]
  \tick\ (a) from $\Pi_{\mathcal A}^M$\\
  \tick\ (b) from $\otimes^{M}\mathcal P_{\mathcal B}^M$};
\node (prodA) [mathlab, below=6pt, align=center] at (360,90)
  {$(\Pi_{\mathcal A}^M\otimes(\otimes^{(k-1)M}\mathcal P_{\mathcal B}^M)) \otimes (\otimes^{r}\mathcal P_{\mathcal B}^M)$};

\draw[dim*mirror, font=\scriptsize] (300,110) -- node[above=0pt,align=left] {$a = kM + r$\\
$r \equiv a \bmod M,\ 1\le r\le M$\\
$k=\lfloor a/M \rfloor$} (420,110);

\node[lipobox, align=left, anchor=north west,font=\scriptsize] (lipomA)
  at ($(prodA.south west)+(10pt,-3pt)$) {%
  \textbf{LIPO-M} (Lemma~\ref{lemma:absolutely_magical_lemma})\\[3pt]
  \tick\ (a) induction at $t=kM$\\
  \tick\ (b) from $\otimes^{r}\mathcal P_{\mathcal B}^M$};

\end{tikzpicture}}
    \caption{Linear independence under finite mixtures of $\jointfamily$ is proven 
by a nested induction strategy. The inner induction (blue box) constructs the 
product family $(\otimes^r\mathcal{P}^M_\B,\ 1 \leq r \leq M)$, while the outer 
induction (red boxes) increases the length of the trajectory family.}
    \label{fig:linear_independence_induction_sketch}
\end{figure}
\paragraph{Linear independence via nested induction.} To prove LIPO-M and extend the assumptions to product families (Lemmas~\ref{lemma:extension_b4} and~\ref{lemma:extension_b5}), we use induction, with the base case being a single product of consecutive variables ($\mathcal{P}^M_\B\otimes\mathcal{P}^M_\B$). Therefore, the full proof of linear independence under finite mixtures of the trajectory family requires nested induction (see Theorem \ref{thm:linear_independence_joint_non_parametric} in Appendix~\ref{app:lin_indep_joint}). Figure \ref{fig:linear_independence_induction_sketch} illustrates this idea: 
\begin{itemize}
    \item The inner induction constructs the product family $(\otimes^r \mathcal{P}^M_\B)$ with $1\leq r \leq M$ by increasing the number of consecutive transition distributions $r$.
    \item The outer induction increases the length of the trajectory family $\mathcal{P}^{M,t}_{\A,\B}$, with $M<t\leq T$, ensuring that overlaps are addressed as we introduce additional time steps. 
\end{itemize}
A key technical point in Theorem~\ref{thm:linear_independence_joint_non_parametric} is that the induction hypothesis applies only when the trajectory length $t$ is a multiple of $M$. Thus we distinguish two cases:
\begin{itemize}
    \item If $t$ is a multiple of $M$, we apply the induction hypothesis at $t-M$ and then use LIPO-M to combine the trajectory family at time $t-M$ with $M$ consecutive products from $\mathcal{P}_\B^M$.
    \item If $t$ is not a multiple of $M$, compute $r\equiv t\bmod M$. We apply the induction hypothesis at $t-r$ (a multiple of $M$), and then use LIPO-M to combine the trajectory family at time $t-r$ with the $r$ additional products $(\otimes^r \mathcal{P}^M_\B)$.
\end{itemize}
The same logic applies to the base cases $M < t \leq 2M$, where LIPO-M is applied by combining the initial family $\Pi_\A^M$ with $t-M$ consecutive products from $\mathcal{P}_\B^M$.

\subsection{Identifiable Switching Dynamical Systems}\label{sec:switching_dynamical_systems}
Based on identifiable MSMs, our analysis of switching dynamical systems extends 
the setup from \citet{kivva2022identifiability} (Section \ref{sec:kivva_background}) to the temporal case. Assume the prior dynamic model $p(\z_{1:T})$ belongs to the nonlinear Gaussian MSM family $\mathcal{M}^{T,M}_{NL}$ specified by Thm.~\ref{thm:identifiability_main}. At each time step $t$, the latent $\z_t\in\R^m$ generates observed data $\x_t\in\R^n$ via a piecewise linear transformation $f$. The generation process of such SDS model can be expressed as follows:
\begin{equation}\label{eq:generative_model_sns}
\x_t = f(\z_t) + \bm{\varepsilon}_t, \quad \bm{\varepsilon}_t \sim p_{\bm{\varepsilon}}, \quad \z_{1:T}\sim p(\z_{1:T})\in \mathcal{M}^{T,M}_{NL}.
\end{equation}
%
Note that we can also write the decoding process as $\x_{1:T} = F(\z_{1:T}) + \bm{\varepsilon}_{1:T}$, where $F$ is a factored transformation composed of $f$ as defined below. Importantly, this notion allows $F$ to inherit e.g., piecewise linearity and weakly-injectivity properties from $f$.  
\begin{definition}
    We say that a function $F:\R^{mT}\rightarrow\R^{nT}$ is factored if it is composed of $f:\R^m\rightarrow\R^n$, such that for any $\z_{1:T}\in\R^{mT}$, $F(\z_{1:T}) = \left(f(\z_1), \dots, f(\z_T) \right)^\top$.
\end{definition}

The identifiability analysis of the SDS model (Eq.~(\ref{eq:generative_model_sns})) follows two steps (see Figure \ref{fig:factored_trans}). 
(i) Following \citet{kivva2022identifiability}, we establish the identifiability for a prior $p(\z_{1:T}) \in\mathcal{M}^{T,M
}_{NL}$ from $(F_{\#}p)(\x_{1:T})$ in the noiseless case. Here $F_{\#}p$ denotes the pushforward measure of $p$ by $F$. (ii) When the noise $\bm{\varepsilon}_{1:T}$ distribution is known, $F_{\#}p$ is identifiable from $\left(\left(F_{\#}p\right)*p_{\bm{\varepsilon}_{1:T}}\right)(\x_{1:T})$ using convolution tricks from \citet{khemakhem2020variational}\footnote{In particular, identifiability in the noisy case holds under the assumption that the zero set of the characteristic function of $p_{\bm{\varepsilon}}$ is zero measured. We use $p_{\bm{\varepsilon}_{1:T}}$ to denote the distribution of $\bm{\varepsilon}_{1:T}$.}.
In detail, we first define the notion of identifiability for $p\in \mathcal{M}^{T}(\Pi_{\A}^M,\mathcal{P}_{\B}^M)$ given noiseless observations.

\begin{definition}
\label{def:identifiability_affine_transformations}
Given a family of factored transformations $\mathbb{F}$, for $F\in\mathbb{F}$ the prior $p \in  \mathcal{M}^T(\Pi_{\A}^M, \mathcal{P}_{\B}^M)$ is said to be identifiable up to affine transformations, when for any $\tilde{F}\in\mathbb{F}$ and $\tilde{p}\in\mathcal{M}^T(\Pi_{\A}^M, \mathcal{P}_{\B}^M)$ s.t. $F_{\#}p = \tilde{F}_{\#}\tilde{p}$, there exists an invertible factored affine mapping $H: \R^{mT} \rightarrow \R^{mT}$ composed of $h:\R^{m} \rightarrow \R^{m}$, where $p = H_{\#}\tilde{p}$. Equivalently, identifiability up to permutation, scaling and translation refers to having $h(\x)=PD\x + \Bb, \x\in\R^{m}$, where P and D are permutation and diagonal matrices respectively.
\end{definition}

\begin{figure}
  \centering
\includegraphics[width=\linewidth]{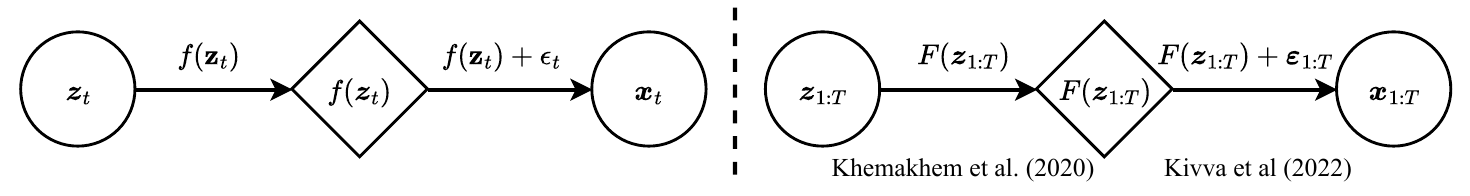}
\vspace{-0.65em}
  \caption{We assume $\z_t$ is transformed via $f$ with noise $\bm{\varepsilon}_t$ at each time step $t$ independently. We view this as a transformation on $\z_{1:T}$ via a factored $F$ with noise $\bm{\varepsilon}_{1:T}$.}
  \label{fig:factored_trans}
  \vspace{-.75em}
\end{figure}
We state the following identifiability result for nonlinear SDSs and defer the proof in Appendix \ref{app:sds}, which is mainly based on \citet{kivva2022identifiability}. 
\begin{theorem}[Identifiability of parametric SDSs]\label{thm:identifiability_sds}
    Assume the observations $\bm{x}_{1:T}$ are generated from a latent variable model following Eq. (\ref{eq:generative_model_sns}), whose prior $p(\z_{1:T})$ belongs to $\mathcal{M}_{NL}^{T,M} = \mathcal{M}^T(\mathcal{I}_{\A}^M,\mathcal{G}_{\B}^M)$ and satisfies (m1 - m2). Assume:
    \begin{thmlist}
        \item[(f1)] $f$ is weakly injective (Def. \ref{def:weakly_injective}) and piecewise linear;
    \end{thmlist}
    and the following assumptions on the prior distribution $p(\z_{1:T})$.
    \begin{thmlist}
        \item[(s1)] At time $t$, the continuous latent variables $\z_t$ are mutually independent given $\z_{t-1}$ and $s_t$. For any $i\neq j$, $z^{(i)}_t \indep z^{(j)}_t \mid \z_{t-1}, s_t$;
        \item[(s2)] There exists $k_1,k_2\in[K]$ and a nonzero measure set $\mathcal{Z}\subset \R^{mM}$ such that the elements in $\{\bm{\Sigma}(\z_{t-1},\dots,\z_{t-M}, k_1)_{ii} / \bm{\Sigma}(\z_{t-1}, \dots, \z_{t-M}, k_2)_{ii} \mid i \in [m]\}$ are distinct for any $(\z_{t-1},\dots,\z_{t-M})\in \mathcal{Z}$.
    \end{thmlist}
    The prior distribution $p(\z_{1:T})$ is identifiable under the above assumptions as follows.
    \begin{thmlist}[leftmargin=1.75em]
        \item[(i)] If (f1) is satisfied, the prior $p(\z_{1:T})$ is identifiable up to affine transformations (Def.~\ref{def:identifiability_affine_transformations}).\label{thm:identifiability_sds:i}
        \item[(ii)] If (f1), (s1), and (s2) are satisfied, the prior $p(\z_{1:T})$ is identifiable up to permutation, scaling, and translation.\label{thm:identifiability_sds:ii}
    \end{thmlist}
\end{theorem}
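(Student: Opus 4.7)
The plan is to reduce the problem to the setting of \citet{kivva2022identifiability} in two stages: first peel off the additive noise, and then apply Kivva's identifiability to a suitable Gaussian-mixture marginal of the MSM prior. I first invoke the standard characteristic-function argument of \citet{khemakhem2020variational}: since the zero set of the characteristic function of $p_{\bm{\varepsilon}}$ has zero Lebesgue measure, the observational equality of the two generative models implies the noiseless equality $F_{\#}p = \tilde{F}_{\#}\tilde{p}$ as probability measures on $\R^{nT}$. The factored structure of $F$ inherits both piecewise linearity and weak injectivity from $f$: if $f^{-1}(\{\x_0\})$ is a singleton on some $B(\x_0,\delta)\cap f(\R^m)$, then the product neighborhood $B(\x_0,\delta)^T$ carries the analogous property for $F$. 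Hence each time-restriction $F_{1:k}:\R^{mk}\to\R^{nk}$ is piecewise linear and weakly injective in the sense of Def.~\ref{def:weakly_injective}.

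For part~\ref{thm:identifiability_sds:i}, I focus on the marginal of the initial block $\z_{1:M}$, whose prior $p(\z_{1:M})$ is a finite Gaussian mixture with $K_0$ components drawn from $\mathcal{I}_{\A}^M$. Applying Kivva's affine identifiability result to this Gaussian-mixture prior transported by the weakly injective piecewise linear $F_{1:M}$ yields an invertible affine $H_0:\R^{mM}\to\R^{mM}$ such that $p(\z_{1:M}) = (H_0)_{\#}\tilde{p}(\z_{1:M})$. Combining the block-wise factored action of $F$ with Thm.~\ref{thm:identifiability_main} applied to the transformed prior then forces $H_0$ to be block-diagonal with a single common block, i.e., $H_0(\z_{1:M}) = (h(\z_1),\dots,h(\z_M))$ for one invertible affine $h:\R^m\to\R^m$. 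The nonlinear Gaussian transition family $\mathcal{G}_{\B}^M$ propagates this same $h$ consistently across time by MSM identifiability, extending the affine equivalence to the full trajectory and establishing Def.~\ref{def:identifiability_affine_transformations}.

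For part~\ref{thm:identifiability_sds:ii}, write $h(\z) = A\z + \Bb$ with $A\in\R^{m\times m}$ invertible. Assumption (s1) renders every conditional covariance $\bm{\Sigma}(\z_{t-1:t-M},s_t)$ diagonal, so in the tilded coordinates the covariance $A^{-1}\bm{\Sigma}(\cdot,s_t)A^{-\top}$ must also be diagonal by (s1) applied to $\tilde{p}$. Hence $A$ simultaneously diagonalizes $\{\bm{\Sigma}(\cdot,k)\}_{k\in[K]}$. By (s2), there exist regimes $k_1,k_2$ and a nonzero measure subset $\mathcal{Z}$ on which the ratios $\bm{\Sigma}(\cdot,k_1)_{ii}/\bm{\Sigma}(\cdot,k_2)_{ii}$ are pairwise distinct; a standard simultaneous-diagonalization argument with distinct generalized eigenvalues then forces $A = PD$ for a permutation $P$ and an invertible diagonal $D$, yielding $h(\z) = PD\z + \Bb$. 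The hard part will be the block-diagonalization step in the previous paragraph: showing that the Kivva-affine $H_0$ on $\R^{mM}$ must decompose as $M$ copies of a single $h$ requires carefully combining the factored action of $F$ with the MSM identifiability of Thm.~\ref{thm:identifiability_main} to rule out spurious cross-time coordinate mixings that would otherwise be compatible with the marginal observational equality.
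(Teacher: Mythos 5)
Your overall architecture (deconvolve the noise via the characteristic-function argument, then work with the noiseless pushforwards, then handle part (ii) by simultaneous diagonalization of the regime covariances under (s1)--(s2)) matches the paper's, and your part (ii) sketch is essentially the content of the result the paper imports from \citet{kivva2022identifiability} (their Theorem E.1). However, the core of part (i) has a genuine gap, and you have correctly located it yourself: the claim that the affine map must be factored, i.e.\ decompose as $M$ (and ultimately $T$) copies of a single $h:\R^m\to\R^m$, is asserted but not proved, and the route you propose for it does not obviously close. First, Kivva's GMM identifiability only applies to the initial block $\z_{1:M}$, because for any $t>M$ the marginal $p(\z_{1:t})$ is a mixture of products of a Gaussian with \emph{nonlinear}-Gaussian conditionals, not a Gaussian mixture on $\R^{mt}$; so you cannot bootstrap the affine equivalence to longer blocks by reapplying Kivva. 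Second, invoking Theorem~\ref{thm:identifiability_main} to ``propagate $h$ across time'' is circular at this stage: that theorem compares two MSMs that induce the \emph{same} distribution on $\R^{mT}$ (or are related by a known factored affine map), which is exactly what you are trying to establish. Until you have $p = H_{\#}\tilde p$ for a factored $H$ on the full trajectory, MSM identifiability gives you nothing about the transitions.

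The paper closes this gap by a different construction that never produces an ``arbitrary'' affine map needing to be block-diagonalized afterwards. It proves a preimage-counting lemma (Lemma~\ref{lemma:extension_msm_preimage}) showing that at a generic point of the \emph{full} factored map $F$ the cardinality of $F^{-1}$ can be read off from the analytic continuation of the local density of $F_{\#}p$ — this uses that the multi-lag MSM density is analytic and closed under \emph{factored} affine transformations. This yields a common neighbourhood on which both $F$ and $\tilde F$ are invertible (Corollary~\ref{cor:invertibility}); on that product neighbourhood $\tilde F^{-1}\circ F$ is a composition of factored piecewise-affine maps, hence locally a factored affine map, and analytic continuation of the MSM densities extends this to a global factored affine $H$ on $\R^{mT}$ (Theorem~\ref{thm:identifiability-affine-msm}). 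If you want to salvage your route, you would need to supply an independent argument that rules out cross-time mixing in $H_0$ and an induction over $t$ that handles the non-GMM conditional marginals — at which point you would essentially be reconstructing the paper's analytic-continuation machinery.
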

The above result allows identifiable SDSs to employ e.g., Softplus networks for $p_{\mparam}(\z_t\mid\z_{t-1}, \dots, \z_{t-M},s_t)$ with history-dependent or piecewise stationary noise, and certain (Leaky) ReLU networks for $f$ (see Appendix H of \citet{kivva2022identifiability}). Again for neural networks, identifiability refers only to their functional form.

\paragraph{Relaxing piecewise linearity.}
While our approach relaxes the injectivity assumption on the emission function, we still restrict $f$ to be piecewise linear. Other works, such as \citet{yao2022temporally} assume injective functions, where their results could be adapted to our framework. Specifically, \citet{yao2022temporally} establish identifiability under the following conditions: (1) independence of latent variables (s1 in Theorem~\ref{thm:identifiability_sds}); (2) the existence of a demixing function; and (3) access to $K=2m+1$ components satisfying certain linear independence conditions on vectors formed with derivatives of $\log p(\z_{t}\mid\z_{t-1},\dots,\z_{t-M},k),\ k\in[K]$. However, this strategy either assumes access to the true component label $k$, or requires imposing assumptions directly on the observed conditional distributions $p(\x_{t}\mid\x_{t-1},\dots,\x_{t-M},k),\ k\in[K]$. 
For instance, \citet{song2023temporally} introduces assumptions (m1-m2) directly on the observed distribution, which implicitly constrain both the latent dynamics and the emission function. This lack of separation between latent structure and observation model makes these assumptions difficult to verify or control in practice (e.g., if observed and latent distributions are Gaussian, then $f$ must be affine). Moreover, without additional structural constraints, these strategies cannot fully exploit our identifiability results for MSMs, which are derived at the level of the latent dynamics. To illustrate, consider the joint probability distribution, an equivalent SDS distribution family would be defined as follows:
\begin{multline}
    \Bigg\{\sum^C_i c_i (F_\# p_i)(\x_{1:T}) \mid\ F\in\mathbb{F}, \ \sum_i^C c_i=1,\ c_i>0,\ C=K_0K^{T-M} ,\ K_0,K<+\infty, \\ \  p_i\in\mathcal{P}_{\A_0,\B_0}^{T,M}\subset \mathcal{P}_{\A,\B}^{T,M},\ \A_0\subset \A, \ \B_0\subset\B,\ |\A_0|= K_0,\ |\B_0|=K\Bigg\},
\end{multline}
where each element consists of a factored transformation $F$ applied to a mixture component $p_i$ from an MSM family $\mathcal{M}_{NL}^{T,M}$. Identifiability here requires further analysis, since two different elements in the above family with $F\neq \tilde{F}$ could correspond to different MSMs with different components, $(K_0,K)\neq (\tilde{K}_0,\tilde{K})$, related via inverse CDF mappings: $p = \left(F^{-1}\circ \tilde{F}\right)_\#\tilde{p}$.

\subsection{Regime-dependent Causal Representation Learning}\label{sec:causal_discovery}

In regime-dependent time series \citep{saggioro2020reconstructing}, the SEM is time-dependent. However, it is \emph{causally stationary} \citep{assaad2022survey} when conditioned on discrete latent variables  $s_t\in[K]$ at each time step $t$. In the context of LVMs, the SEM operates on the latent space; and therefore, our framework is connected to causal representation learning \citep{scholkopf2021toward}. When considering SDSs, the corresponding latent causal structure becomes a \emph{regime-dependent causal graph} which we denote as $\mathbf{G}_{1:K}:=\{\mathbf{G}_k\}_{k=1}^K \in  \mathbb{G}^K$. It can be defined as a collection of causal graphs such that for each $k\in[K]$, $\mathbf{G}_{k}\in\mathbb{G}$ encodes the latent SEM at time $t$ if $s_t=k$. To establish connections with the previous MSMs, we assume an \emph{additive noise model} (ANM; \citep{hoyer2008nonlinear}) with absence of instantaneous effects. Therefore, the corresponding regime-dependent SEM in latent space for variable $j\in[m]$ is expressed as follows:
\begin{equation}
    z^{(j)}_t = f^{(j)}_{s_t}\left(\{z^{(i)}_{t-\tau} \mid z^{(i)}_{t-\tau} \in \mathbf{Pa}^{(j)}_{s_t}(\tau), 1\leq \tau \leq M \}\right) + \bm{\varepsilon}^{(j)}_{t}, \quad \bm{\varepsilon}^{(j)}_{t} \sim p_{\bm{\varepsilon}^{(j)}},
\end{equation}
where $\mathbf{Pa}^{(j)}_{s_t}(\tau)$ denotes the parents of variable $j$ with lag $\tau$, associated to $\mathbf{G}_k$ for $s_t=k$.
\begin{definition}\label{def:regime_dependent_graph}
     We say that the regime-dependent graph of a SDS with order $M$ is identifiable up to permutations if for any $p,\tilde{p}\in \mathcal{M}^T(\Pi_{\A}^M, \mathcal{P}_{\B}^M)$, and $F,\tilde{F}\in\mathbb{F}$, with respective regime-dependent graphs $\mathbf{G}_{1:K}\in (\times^{K} \mathbb{G})$ and $\tilde{\mathbf{G}}_{1:\tilde{K}}\in (\times^{\tilde{K}} \mathbb{G})$, such that $(F_{\#}p)(\x_{1:T})=(\tilde{F}_{\#}\tilde{p})(\x_{1:T})$; $K=\tilde{K}$ and there exists a node permutation $\pi\in S_m$ (Def.~\ref{def:graph_node_permutation}) and a graph permutation $\sigma\in S_K$  such that $\mathbf{G}(k) = \pi(\tilde{\mathbf{G}}(\sigma(k)))$ for $k\in [K]$.
\end{definition}
The above node permutation (Def.~\ref{def:graph_node_permutation}) permutes the nodes of each graph in the set $\mathbf{G}_{1:K}$. Our definition aims to recover the inherent graph structure up to permutations of both continuous and discrete latent variables. We establish latent structure identifiability for regime-dependent time series as follows.
\begin{corollary}[Latent structure identifiability]\label{cor:causality_identifiability}
Consider the Markov switching model family with order $M$ defined in Equation (\ref{eq:msm_finite_mixture}) under nonlinear Gaussian families $\mathcal{M}_{NL}^{T,M}$. Assume (m1) from Theorem \ref{thm:identifiability_main} is satisfied;
\begin{enumerate}[leftmargin=1.75cm]
    \item[(m2*)] Heterogeneous noise: the transition distribution mean in  $\bm{m}(\cdot,b): \R^{mM} \rightarrow \R^m$ is analytic, and $\bm{\Sigma}(\cdot,b):\R^{mM} \rightarrow \R^{m\times m}$ is diagonal, analytic, and satisfies (s2);
    \item[(m3)] Minimality: if $z^{(i)}\notin \mathbf{Pa}^{(j)}_{k}(\tau), 1 \leq \tau \leq M$, then the $i$-th dimension of $\bm{m}(\z_{t-1},\dots, \\\z_{t-M}, k)$ is constant with respect to $z^{(j)}_{t-\tau}$.
\end{enumerate}
Then, the regime-dependent graph is identifiable up to permutations (Definition~\ref{def:regime_dependent_graph}).
\end{corollary}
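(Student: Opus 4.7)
The plan is to chain the two identifiability results already established---Thm.~\ref{thm:identifiability_sds:ii} for the latents and Thm.~\ref{thm:identifiability_main} for the MSM components---and then exploit minimality (m3) to pass from identified transition means to identified parent sets. Suppose $(F_{\#}p)(\x_{1:T})=(\tilde F_{\#}\tilde p)(\x_{1:T})$ with $p,\tilde p\in \mathcal{M}_{NL}^{T,M}$ and $F,\tilde F\in\mathbb{F}$, and with associated regime-dependent graphs $\mathbf{G}_{1:K}$ and $\tilde{\mathbf{G}}_{1:\tilde K}$. Assumption (m2*) makes $\bm\Sigma(\cdot,k)$ diagonal, which immediately yields the conditional-independence condition (s1); (s2) is embedded in (m2*); and (f1) is part of the SDS setup via $\mathbb{F}$. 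Hence Thm.~\ref{thm:identifiability_sds:ii} applies and delivers an invertible factored affine map $H$ composed of $h(\z)=PD\z+\Bb$, with $P$ a permutation matrix, $D$ a diagonal matrix and $\Bb\in\R^m$, such that $p=H_{\#}\tilde p$.

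Next, I would verify that $H_{\#}\tilde p$ still belongs to $\mathcal{M}_{NL}^{T,M}$. An affine change of variables maps $\tilde{\bm m}(\cdot,\sigma(k))$ and $\tilde{\bm\Sigma}(\cdot,\sigma(k))$ to $PD\,\tilde{\bm m}(D^{-1}P^{\top}(\cdot-\Bb))+\Bb$ and $PD\,\tilde{\bm\Sigma}(D^{-1}P^{\top}(\cdot-\Bb))DP^{\top}$ respectively; the first stays analytic, while the second stays diagonal and analytic because conjugation of a diagonal matrix by a permutation is again diagonal. The unique-indexing condition (m1) is preserved by the bijection $h$. Applying Thm.~\ref{thm:identifiability_main} to $p$ and $H_{\#}\tilde p$ then yields $K=\tilde K$ and a regime permutation $\sigma\in S_K$ such that for every $k\in[K]$ and almost every conditioning argument,
\begin{equation*}
\bm m(\z_{t-1:t-M},k) \;=\; PD\,\tilde{\bm m}\!\left(D^{-1}P^{\top}(\z_{t-1}-\Bb),\ldots,D^{-1}P^{\top}(\z_{t-M}-\Bb),\sigma(k)\right)+\Bb .
\end{equation*}

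Let $\pi$ denote the node permutation encoded by $P$. By minimality (m3) applied to $p$, $z^{(i)}_{t-\tau}\in\mathbf{Pa}_k^{(j)}(\tau)$ iff the $j$-th coordinate of $\bm m(\cdot,k)$ depends non-trivially on $z^{(i)}_{t-\tau}$. The identity above, together with the coordinate-wise action of the diagonal $D$ and the pure-permutation action of $P$, shows that this is equivalent to the $\pi^{-1}(j)$-th coordinate of $\tilde{\bm m}(\cdot,\sigma(k))$ depending non-trivially on $\tilde z^{(\pi^{-1}(i))}_{t-\tau}$, which by minimality applied to $\tilde p$ translates to $\tilde z^{(\pi^{-1}(i))}_{t-\tau}\in\tilde{\mathbf{Pa}}_{\sigma(k)}^{(\pi^{-1}(j))}(\tau)$. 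Collecting this over $(i,j,\tau)$ gives $\mathbf{G}_k=\pi(\tilde{\mathbf{G}}_{\sigma(k)})$ for every $k\in[K]$, matching Definition~\ref{def:regime_dependent_graph}.

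The main obstacle I anticipate is the last translation, from pointwise equality of means to edge-by-edge equivalence of graphs. Minimality (m3) is exactly what makes this step go through: it guarantees that a vanishing partial derivative is synonymous with the absence of an edge, preventing the analytic mean from silently absorbing a ``parent'' into a constant direction. A supporting technical point is that the single global affine map $H$ from Step~1 forces one common node permutation $\pi$ across all regimes, ruling out regime-specific relabellings of latent coordinates. A minor but important subtlety is that the identity above holds only almost everywhere; analyticity of $\bm m$ and $\tilde{\bm m}$ (ensured by (m2*)) upgrades this to a statement about non-vanishing partial derivatives outside a measure-zero set, which suffices since analytic non-vanishing derivatives cannot vanish on a positive-measure set.
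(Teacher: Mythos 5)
Your proposal is correct and follows essentially the same route as the paper's proof: invoke Theorem~\ref{thm:identifiability_sds:ii} to obtain the factored permutation-scaling map, use closure of the MSM family under affine transformations together with Theorem~\ref{thm:identifiability_main} to match regimes and transition means via $\bm m(\cdot,k)=PD\,\tilde{\bm m}(\cdot,\sigma(k))+\Bb$, then differentiate and apply minimality (m3) to read off $A_{k,\tau}=P\tilde A_{\sigma(k),\tau}P^{\top}$ as a node permutation. Your added remarks (explicitly checking $H_{\#}\tilde p\in\mathcal{M}_{NL}^{T,M}$, the almost-everywhere-to-everywhere upgrade via analyticity, and the single global $H$ forcing one common $\pi$ across regimes) are consistent with, and slightly more careful than, the paper's write-up.
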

See Appendix \ref{app:causality_proof} for the proof. It suffices to show that the regime dependent structure is recovered up to permutations, given the identifiable transition derivatives from Theorems \ref{thm:identifiability_main} and \ref{thm:identifiability_sds}. Under assumptions (m1-m3), we note that other structures in traditional causal time series analysis, such as the \emph{full time graph} \citep{peters2017elements}, are not identifiable in the context of high-order MSMs; as we have no access to the discrete latents $\s_{M:T}$, but rather their distribution up to permutations.

\section{Estimation}\label{sec:estimation}

We refer to our full approach, which combines the theoretical framework with the inference scheme below, as \emph{identifiable Switching Dynamical System} (iSDS). For real-world applications (Section \ref{sec:real_world}), we also consider MSMs. Together with our assumptions, we denote the corresponding method as the \emph{identifiable Markov Switching Model} (iMSM). 

We adopt the collapsed variational inference setup of \citet{dong2020collapsed}. Parameters are learned by maximising the evidence lower bound (ELBO) \citep{kingma2013autoencoding}. The approximate posterior over latent variables $\{\z_{1:T}, \s_{M:T}\}$ uses a factorisation which incorporates the exact posterior of the discrete latent states conditioned on the continuous latent variables:
\begin{equation}
q_{\vparam,\mparam}(\z_{1:T},\s_{M:T}\mid\x_{1:T}) = \prod_{t=1}^{T} q_{\vparam}(\z_{t}\mid\x_{t}) p_{\mparam}(\s_{M:T}\mid\z_{1:T}).
\end{equation}
The variational posterior over $\z_{1:T}$ factorises across time. This mirrors our injectivity assumptions on the latent variable transformation, and simplifies the estimation. Prior work \citep{ansari2021deep,dong2020collapsed} constructs $q_{\vparam,\mparam}(\z_{1:T}\mid\x_{1:T})$ via a bidirectional RNN on $\x_{1:T}$, followed by a forward RNN. With the exact posterior $p_{\mparam}(\s_{M:T}\mid\z_{1:T})$, we can marginalise $\s_{M:T}$ from the ELBO (see Appendix~\ref{app:estimation_details} for details):
\begin{equation}\label{eq:elbo}
    p_{\mparam}(\x_{1:T}) \geq \mathcal{L}_{\text{ELBO}}:= \mathbb{E}_{q_{\vparam}(\z_{1:T}\mid\x_{1:T})} \Big[\log p_{\mparam}(\x_{1:T}\mid\z_{1:T})\Big]
    - KL\Big(q_{\vparam}(\z_{1:T}\mid\x_{1:T}) || p_{\mparam}(\z_{1:T})\Big).
\end{equation}
We estimate expectations with Monte Carlo samples $\tilde{\z}_{1:T} \sim q_{\vparam}(\z_{1:T}\mid\x_{1:T})$ and apply the reparametrisation trick for backpropagation \citep{kingma2013autoencoding}, enabling stochastic gradient ascent. The gradient results as follows:
\begin{equation}
\nabla\mathcal{L}_{\text{ELBO}} \approx \nabla \log p_{\mparam}(\x_{1:T}\mid\tilde{\z}_{1:T}) - \nabla \log q_{\vparam}(\tilde{\z}_{1:T}\mid \x_{1:T}) + \nabla \log p_{\mparam}(\tilde{\z}_{1:T}).
\end{equation}
Following \citet{dong2020collapsed}, the prior gradient admits an efficient decomposition, similar to standard HMM inference \citep{bishop2006pattern}:
\begin{align}
    \nabla \log p_{\mparam}(\tilde{\z}_{1:T}) &= \mathbb{E}_{p_{\mparam}(\s_{M:T}\mid\tilde{\z}_{1:T})}\Big[ \nabla \log p_{\mparam}(\tilde{\z}_{1:T},\s_{M:T})  \Big] \\
    &= \sum_{t=M+1}^T \sum_{k=1}^K \gamma_{t,k}(\tilde{\z}_{1:T}) \nabla \log p_{\mparam}(\tilde{\z}_t\mid \tilde{\z}_{t-1}, \dots, \tilde{\z}_{t-M}, s_t=k) \\ 
    & + \sum_{k=1}^K \gamma_{M,k}(\tilde{\z}_{1:T}) \nabla \log \pi_k  +  \sum_{t=M+2}^{T}\sum_{k=1}^K\sum_{k'=1}^K \xi_{t,k,k'}(\tilde{\z}_{1:T}) \nabla \log Q_{k',k},
    \label{eq:gradients_z_posterior}
\end{align}
where $\gamma_{t,k}(\tilde{\z}_{1:T})=p_{\mparam}(s_t=k\mid\tilde{\z}_{1:T}) $ and $\xi_{t,k,k'}(\tilde{\z}_{1:T})= p_{\mparam}(s_t=k, s_{t-1}=k'\mid\tilde{\z}_{1:T})$ are computed with forward-backward update rules under a first-order Markov chain over $\s_{M:T}$. See Appendix~\ref{app:estimation_details} for details.
A recent alternative \citep{ansari2021deep} computes $p(\z_{1:T})$ using a forward posterior only, but this requires backpropagation through the entire sequence, considerably increasing computational cost. 

\paragraph{Initialisation and iMSM.}

 \citet{dong2020collapsed} claim that the above approach is prone to state collapse and introduce additional losses with annealing schedules; \citet{ansari2021deep} mitigate state collapse via temperature annealing on discrete logits together with a forward posterior. Such schedules are data-dependent and require careful hyperparameter tuning. In our experience, a strong initialisation of $p_\theta(\z_{1:T})$ is the key to stabilise training and mitigate state collapse. We first project $\x_{1:T}$ into a lower-dimensional space via linear Principal Component Analysis (PCA) and fit a standard MSM to the projected time series following Eq.~\eqref{eq:gradients_z_posterior}. The learned $\pi,\ Q$, and regime-specific parameters are used to initialise $p_{\mparam}(\z_{1:T})$ Similar initialisation strategies have proven effective in prior work \citep{linderman2016recurrent},
 which serve as a foundation for improved training stability in discrete latent variable models. The iMSM follows the same likelihood-based optimisation, using the gradient decomposition in Eq.~\eqref{eq:gradients_z_posterior}.

\paragraph{Sparsity regularisation.} To recover the underlying causal structure, we promote sparsity in the regime-specific mean transitions $\bm{m}(\cdot, k)$ via an element-wise $\ell_1$ penalty on their Jacobians $J_{\bm{m}_k}$ for $k\in[K]$. The regularised objective is
\begin{equation}\label{eq:loss_function_elbo}
\mathcal{L}_{\text{REG}} := \mathcal{L}_{\text{ELBO}} + \eta \sum_{k=1}^K \sum_{i,j}^m \left|\big(J_{\bm{m}_k}\big)_{ij}\right|,
\end{equation}
where $m$ denotes  the latent dimension. While maximising the ELBO is a practical estimation approach, it does not guarantee statistical consistency in general; see e.g., \citet{khemakhem2020variational} for discussion. Consistency analysis is left to future work.
\section{Synthetic Experiments}\label{sec:experiments_msm}

In this section, we present experiments on synthetic data, where access to the ground-truth structure enables us to empirically verify the theoretical results. Specifically, we (i) compare the ability of iSDS to recover identifiable latent variables against recent baselines; (ii) investigate the sample and sequence length efficiency; and (iii) test whether the assumptions in Theorem \ref{thm:identifiability_main} are necessary for identifiability in both MSMs and SDSs.

\paragraph{Data Generation.} We generate data following \citet{balsells-rodas2024on} mostly with additional scenarios for the noise distribution. The regimes follow a first-order Markov chain, with transition matrix $Q$ that assigns $0.9$ probability to remain in the same regime, and $0.1$ probability to transition to the next one. Continuous latent variables $\z_{1:M}$ are initialised from a $K$-component Gaussian Mixture Model with uniform categorical prior. For $t>M$, they evolve according to a Gaussian transition model with $K$ regimes and regime-specific parameters. For each $k\in[K]$, the mean function $\bm{m}(\cdot,k)$ is given by a two-layer MLP with cosine activations, where each layer is a locally linear map \citep{zheng2018dags}, constrained by a randomly generated causal graph $\mathbf{G}_k$. Covariance matrices $\bm{\Sigma}(\cdot,k)$ are diagonal, and we consider three cases:
\begin{itemize}[leftmargin=1.5em]
  \setlength{\itemsep}{5pt}
  \setlength{\parskip}{0pt}
  \setlength{\parsep}{0pt}
    \item Constant noise: fixed, $\bm{\Sigma}(\cdot,k)=\bm{\Sigma}$.
    \item Heterogeneous noise: regime-dependent, $\bm{\Sigma}(\cdot,k)=\bm{\Sigma}(k)$.
    \item History-dependent noise: diagonal entries proportional to the scaled absolute value of the mean.
\end{itemize}
The latter two satisfy assumption (s2), i.e., distinct diagonal ratios across at least one pair of regimes. Observations are generated by passing latent variables through a two-layer MLP with Leaky ReLU activations. Unless otherwise stated, we generate $N=10000$ samples for training, and $1000$ samples for evaluation, both with length $T=100$. 

\paragraph{Metrics.} To evaluate the learned solutions against ground truth, we use the following metrics (details in Appendix \ref{app:metrics}):
\begin{itemize}[leftmargin=1.5em]
  \setlength{\itemsep}{5pt}
  \setlength{\parskip}{0pt}
  \setlength{\parsep}{0pt}
    \item \textbf{Regime estimation:} the component with highest posterior probability is selected, and the $F_1$ score is computed after regime permutation alignment.
    \item \textbf{Continuous latent variable alignment:} we report both weak and strong mean correlation coefficients (MCC). The strong MCC, also defined in \citet{khemakhem2020variational}, is a standard measure in ICA and evaluates alignment up to scaling and permutation. In contrast, weak MCC \citep{kivva2022identifiability} allows affine transformations.
    \item \textbf{Causal structure recovery:} the $F_1$ score between ground-truth and estimated causal graphs. This is obtained by thresholding the Jacobian after regime and continuous latent variable permutation alignment.
    \item \textbf{Transition functions:} the averaged $L_2$ distance and $R^2$ score between ground-truth and estimated means $\bm{m}(\cdot,k)$ after regime permutation and latent variable affine alignment.
\end{itemize}

\paragraph{Baseline comparison.} We compare iSDS against five representative baselines in identifiable representation learning:
\begin{itemize}[leftmargin=1.5em]
  \setlength{\itemsep}{5pt}
  \setlength{\parskip}{0pt}
  \setlength{\parsep}{0pt}
    \item \textbf{iVAE*} \citep{khemakhem2020variational} where ground-truth discrete regimes are provided as auxiliary information (a stronger baseline). 
    \item \textbf{SNICA} \citep{halva2021disentangling}, based on linear switching dynamical systems, where each latent variable follows its own SDS.
    \item \textbf{TDLR} \citep{yao2022temporally}, which assumes non-Gaussianity of temporal latent variables but no switches.
    \item \textbf{NCTRL} \citep{song2023temporally}, based on SDSs, but assuming Gaussianity on the observations, and certain non-Gaussianity on the latent variables.
    \item \textbf{CtrlNS} \citep{song2024causal}, which assumes the number of switches is known and directly depends on the observations.
\end{itemize}

To compare iSDS with other baselines, we explore different dataset settings, A to F, all with $n=10$ observed dimensions and the following characteristics:
\begin{itemize}[leftmargin=2em]
  \setlength{\itemsep}{5pt}
    \setlength{\parskip}{0pt}
  \setlength{\parsep}{0pt}
    \item[A)] $m=3$, $K=3$, $M=1$, constant noise.
    \item[B)] $m=3$, $K=3$, $M=1$, heterogeneous noise.
    \item[C)] $m=3$, $K=3$, $M=1$, history-dependent noise.
    \item[D)] $m=5$, $K=3$, $M=2$, history-dependent noise.
    \item[E)] $m=5$, $K=5$, $M=2$, heterogeneous noise.
    \item[F)] $m=5$, $K=5$, $M=5$, heterogeneous noise.
\end{itemize}

\begin{figure}
    \centering
    \includegraphics[width=\linewidth]{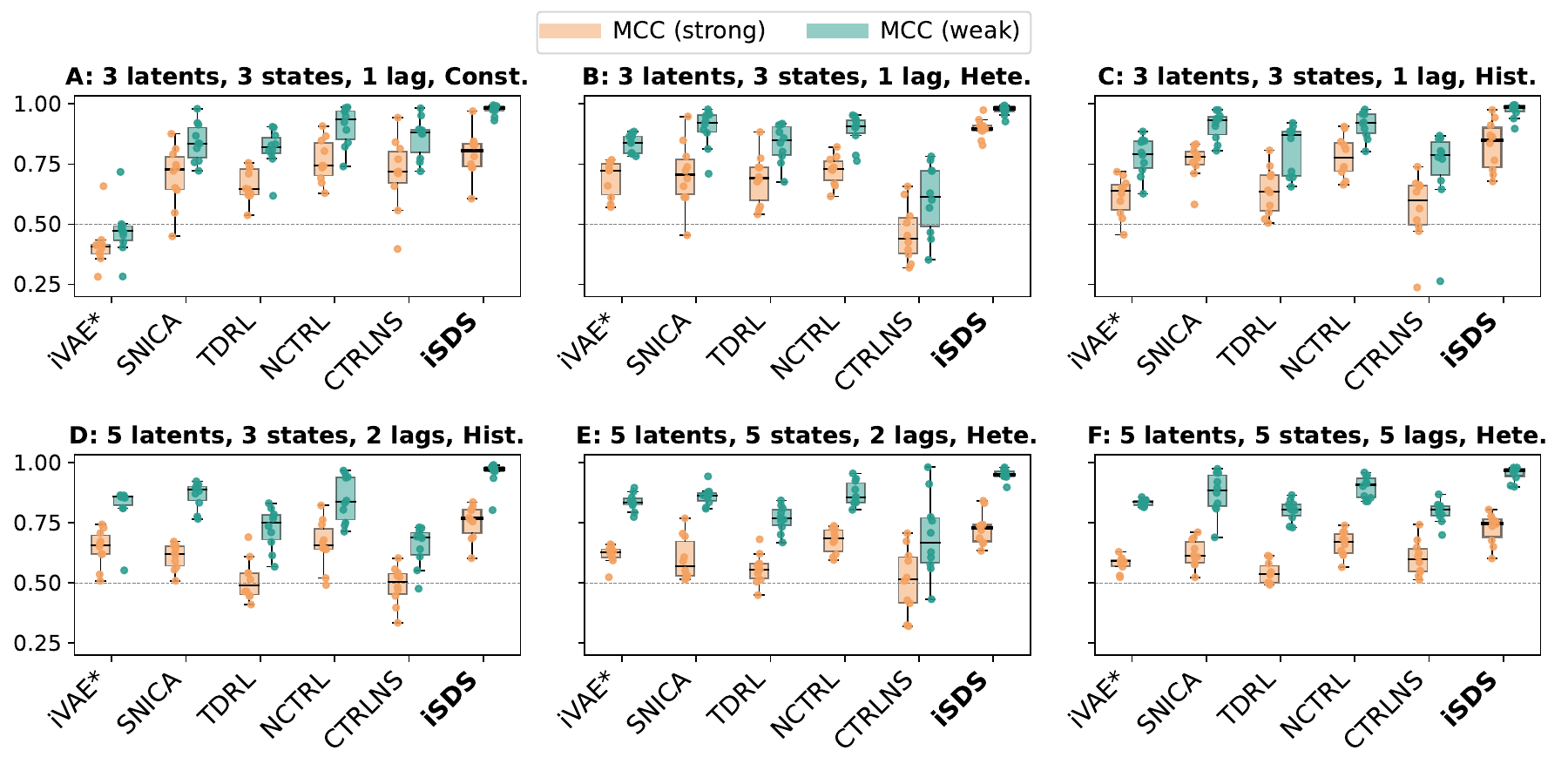}
    \caption{Weak and strong MCC scores across dataset settings A–F (10 data seeds each).}
    \label{fig:synth_exps_baseline_comp}
\end{figure}

Figure~\ref{fig:synth_exps_baseline_comp} shows the weak and strong MCC scores for each configuration (A-F) across 10 dataset seeds. We generally observe that iSDS outperforms the baselines in terms of weak MCC, achieving near-perfect affine alignment in all settings. This is followed by SNICA and NCTRL, as expected since both are SDS-based approaches.
Regarding strong MCC, iSDS again outperforms all baselines, although performance drops for settings A, and D-F. In A-C the only difference lies in noise distribution. Lower strong MCC in A is expected, since identifiability up to scaling and permutation are not guaranteed under constant covariance matrices. In D-F, the lower MCC scores can be attributed to increased complexity from higher lags, dimensions, regimes, and nonstationary noise. 
\begin{figure}
    \centering
    \begin{subfigure}{.64\linewidth}
    \centering
    \includegraphics[width=\linewidth]{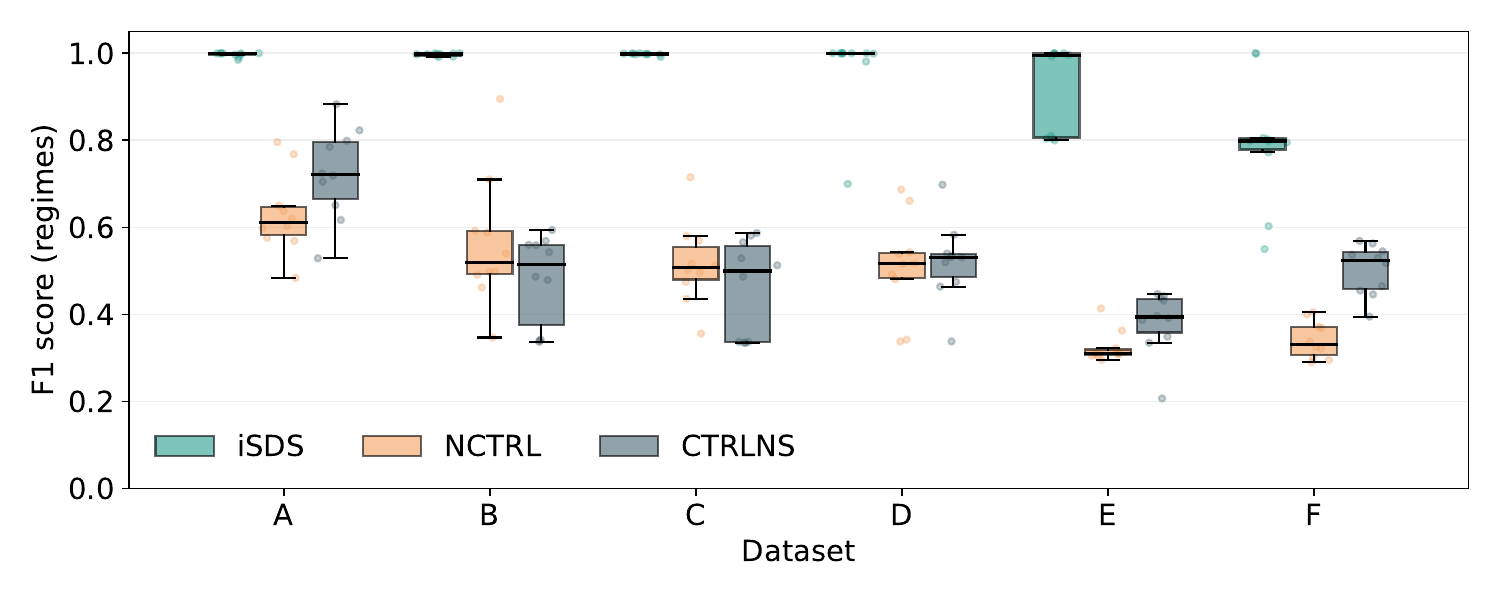}
    \caption{}
    \label{fig:synth_exps_baseline_state}
    \end{subfigure}
    \begin{subfigure}{.32\linewidth}
    \centering
    \includegraphics[width=\linewidth]{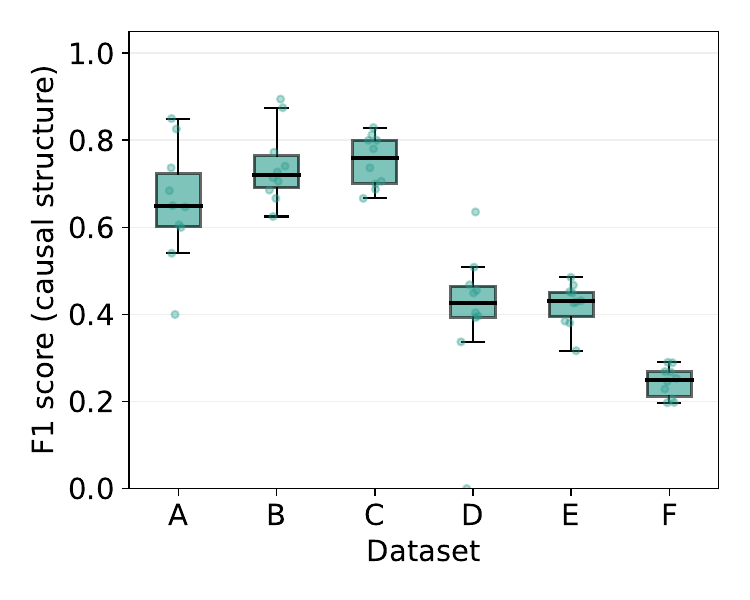}
\caption{}
    \label{fig:synth_exps_causal_str}
    \end{subfigure}

    \caption{(a) Regime estimation and (b) Causal structure recovery across dataset settings A-F (10 data seeds each).}
\end{figure}
\begin{figure}
    \centering
    \begin{subfigure}{.49\linewidth}
    \centering
    \includegraphics[width=\linewidth]{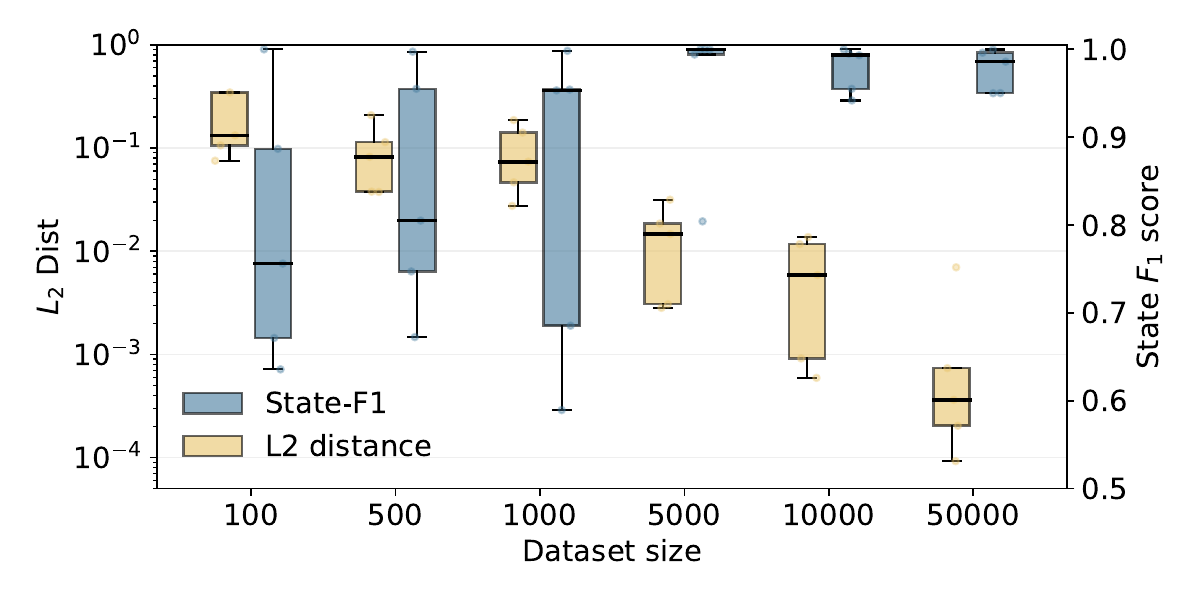}
    \caption{}
    \label{fig:synth_exps_data_size}
    \end{subfigure}
    \hfill
    \begin{subfigure}{.49\linewidth}
    \centering
    \includegraphics[width=\linewidth]{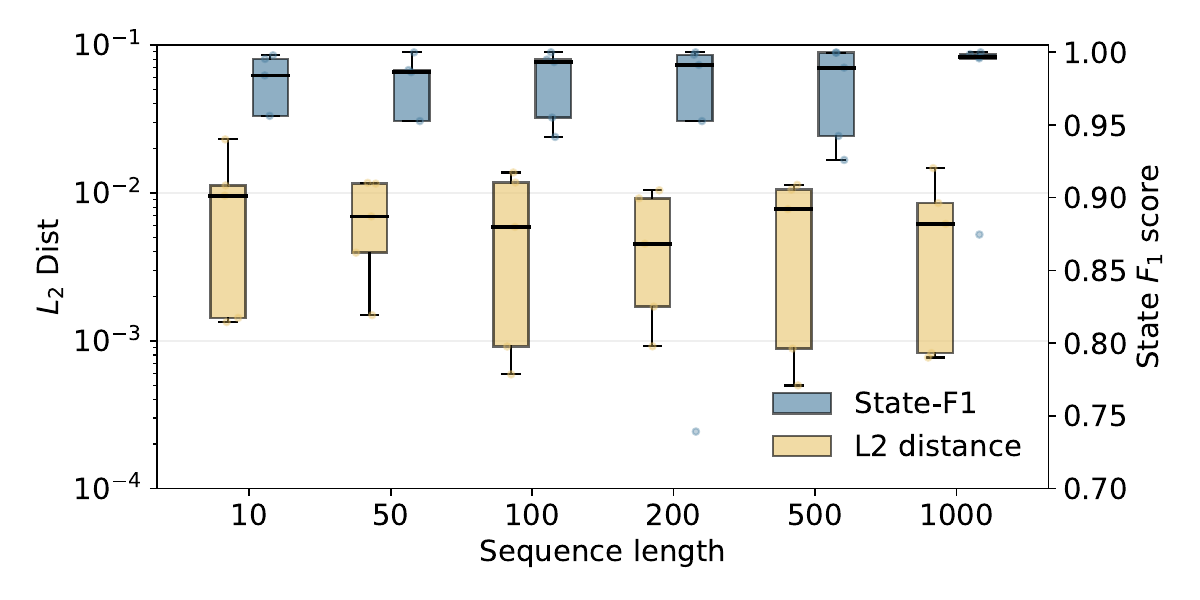}
    \caption{}
    \label{fig:synth_exps_seq_len}
    \end{subfigure}
    \caption{Data efficiency analysis of iSDS by varying (a) dataset size, or (b) sequence length.}
\end{figure}
For regime estimation, Figure~\ref{fig:synth_exps_baseline_state} compares iSDS with NCTRL and CtrlNS as they extract regimes in a comparable way. iSDS consistently achieves superior $F_1$ scores, where performance decreases with more regimes (E and F). NCTRL performs significantly worse, despite being based on SDSs. This can be explained because NCTRL estimates regimes directly from observations under Gaussianity assumptions, which is a model mismatch as the generated observations are generally non-Gaussian. Finally, Figure \ref{fig:synth_exps_causal_str} reports causal structure recovery. In A-C, the $F_1$ score improves with increasing nonstationarity of the noise, consistent with our theoretical results. In contrast, settings D-F show reduced performance in line with the previously reported MCC scores.

\begin{figure}
    \centering
    \begin{minipage}{.49\textwidth}
\includegraphics[width=\linewidth]{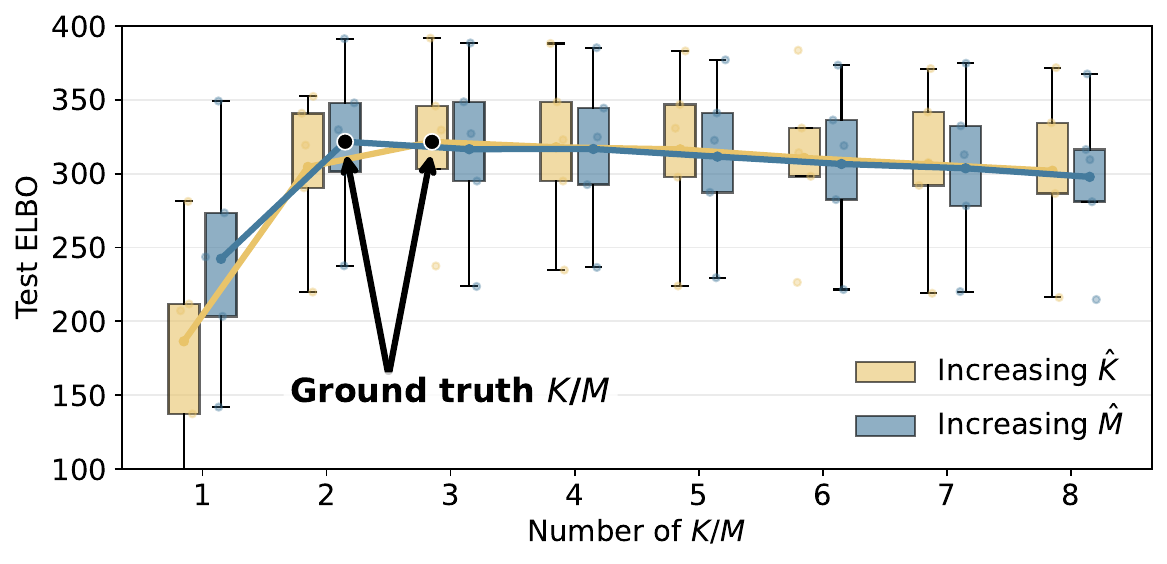}
\caption{Model selection results with iSDS for 5 dataset seeds. Test ELBO for models with increasing number of switches ($\hat{K}$; in yellow), and lags ($\hat{M}$; in blue). Black points indicate ground-truth number of switches and lags ($K=3$ and $M=2$, respectively).}
\label{fig:model_selection}     
\end{minipage}\hfill
    \begin{minipage}{.49\textwidth}
    \vspace{-.35cm}
    \includegraphics[width=\linewidth]{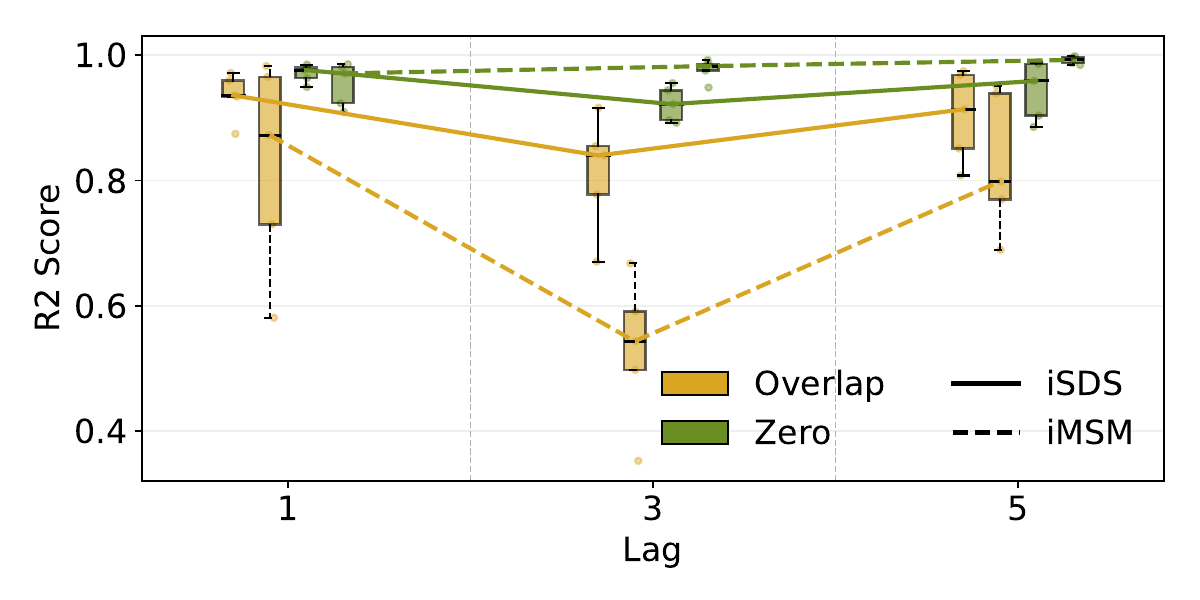}
    \caption{Ablation on intersection assumptions with iMSM and iSDS for 5 dataset seeds. Average $R^2$ of mean functions with zero measure intersections (Zero), and nonzero measure intersections (Overlap) across lags. }
    \label{fig:synth_exps_ablation}     
    \end{minipage}

\end{figure}

\paragraph{Data efficiency analysis.}
To better understand the sample and sequence efficiency of iSDS, we generate data with $n=m=5$ latent and observed dimensions, $M=2$ lags, $K=3$ regimes, and constant noise. We evaluate the $F_1$ score of the estimated regimes, and the average $L_2$ distance of mean functions $\bm{m}(\cdot)$ after affine alignment. Results are shown in Figures~\ref{fig:synth_exps_data_size} and~\ref{fig:synth_exps_seq_len}, where we vary dataset size and sequence length, respectively. In Figure~\ref{fig:synth_exps_data_size}, the sequence length is fixed to $T=100$, and in Figure~\ref{fig:synth_exps_seq_len}, the number of samples is fixed to $N=10000$. 

We observe that larger sample sizes substantially improve performance in terms of both regime recovery and mean function estimation. However, increasing the sequence length has little effect for a sufficient number of samples. The results suggest that, when training iSDS, collecting more sequences is generally more beneficial than increasing their length.

\paragraph{Selecting $K$ and $M$.} Our theoretical framework establishes that for both iMSM and iSDS, the number of regimes $K$ is identifiable. Assuming consistency holds for iSDS, our results facilitate learning the ground truth $K$ by model selection \citep{mclachlan2000finite}. Similar arguments apply for selecting $M$, where for a smaller lag the model becomes misspecified, and for larger $M$, extra lags may be unused. We show this experimentally, and leave more thorough theoretical analyses for future work. To select $K$ and $M$ we use the elbow method \citep{thorndike1953belongs}, which is a standard heuristic for finite mixture models. For iSDS, we use the ELBO on a held-out dataset, rather than the log-likelihood which could be possible for iMSM. Figure \ref{fig:model_selection} showcases a model selection example for iSDS with $M=2$ lags and $K=3$ where we increase the estimated regimes $\hat{K}$ while maintaining the ground-truth lag $M$ (yellow line), increase the estimated lag $\hat{M}$ while maintaining the ground-truth number of regimes (blue line). In both cases, we observe that the highest test ELBO is reached at the ground-truth $M$ and $K$, and the performance plateaus for higher choices of $\hat{M}$ and $\hat{K}$. 

\paragraph{Ablations.} Section \ref{sec:msm_theory} establishes identifiability for parametric MSMs with analytic functions and unique indexing. For $M=1$, a sufficient condition is the existence of an open set where regime-specific mean (and covariance) functions do not intersect. However, for general $M$, we require this set to have full measure. These are sufficient, but not proven necessary, conditions. We assess necessity experimentally by designing two settings for both iMSM and iSDS: (i) \textsc{Zero}, where the ground-truth mean functions intersect at most on a zero measure set; and (ii) \textsc{Overlap}, where functions coincide on a set with nonzero measure but differ elsewhere. We consider $n=m=5$, $K=3$, constant noise, and lags $M\in\{1, 3, 5\}$.

Figure~\ref{fig:synth_exps_ablation} reports the average $R^2$ of the mean functions $\bm{m}(\cdot)$ after regime permutation alignment (with an additional affine alignment for iSDS). Using $R^2$ rather than $L_2$ facilitates comparisons across lags with different input/output scales. As expected, \textsc{Zero} outperforms \textsc{Overlap}. However, \textsc{Overlap} shows strong fits with high $R^2$ scores, far from indicating model failure. For iSDS, the affine alignment can cause the $R^2$ scores to be higher. However, for iMSM we observe strong fits for $M=5$. This suggests that the stronger full measure non-intersection is sufficient but not necessary in practice, as otherwise we would obtain much lower $R^2$ scores (e.g., $R^2<0.5$). Therefore, the decrease in performance in \textsc{Overlap} can be attributed to estimation challenges from the overlap region, where tracking the regimes is more difficult, rather than to a fundamental lack of identifiability.

\section{Applications to Real-World Data}\label{sec:real_world}

Having established the validity of our theoretical results under controlled conditions, we now turn to real-world data, where ground-truth structure is generally unavailable. In this section, we therefore focus on assessing the applicability of the theory developed in Section~\ref{sec:theory} in a fully unsupervised setting. We study datasets from diverse scientific domains and investigate whether assumptions on smoothness and nonstationarity (i.e., assumptions (m2) and (s2)) facilitate learning structured latent representations. Importantly, our identifiability-based framework establishes that learned representations are one-to-one with respect to observations, up to acceptable equivalences (e.g., scaling, permutation). This uniqueness property allows a more trustworthy interpretation of the recovered structures, allowing us to relate them to established domain knowledge. Overall, our aim in this section is to demonstrate how the proposed iMSM and iSDS can support scientific discovery in real-world data.

\subsection{Brain Activity Data}\label{sec:neuroscience}

We first illustrate iMSM on electrocorticography (ECoG) recordings from the NeuroTycho database\footnote{\url{http://neurotycho.org/}}, originally presented by \citet{yanagawa2013}. Signals from 128 electrodes were recorded in a macaque monkey under two conditions: normal wakefulness (\textsc{Awake}) and propofol-induced loss of consciousness (\textsc{Anaesthetised}). Following prior work \citep{mediano2023spectrally}, we expect the awake condition to exhibit more complex dynamics, reflected in more frequent transitions between latent regimes. 

\begin{figure}
    \centering
    \begin{subfigure}{.32\linewidth}
        \centering
        \includegraphics[width=\linewidth]{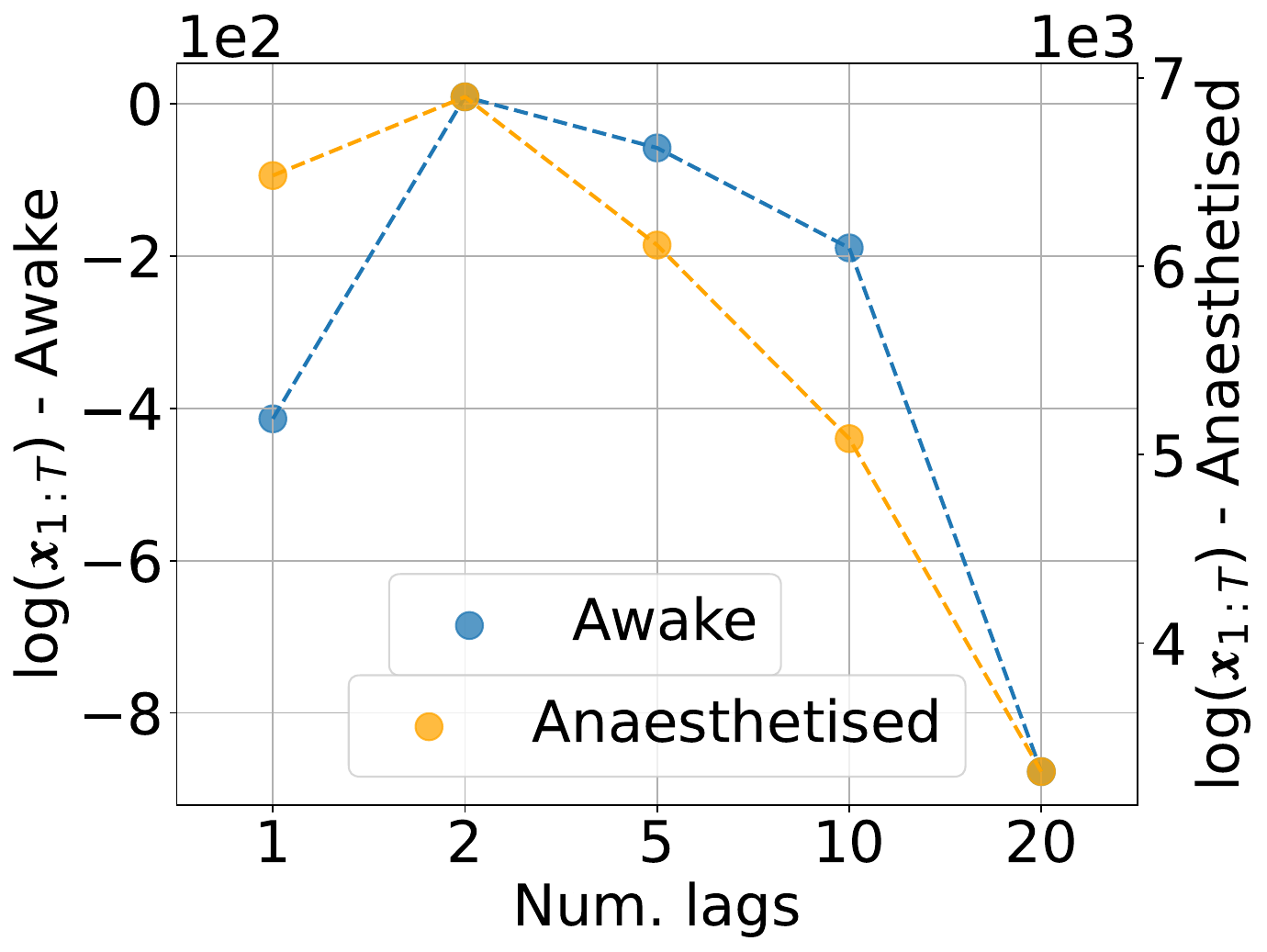}
        \vspace{-1.5em}
        \caption{}
        \label{fig:neuro_loglikelihood}
    \end{subfigure}
        \begin{subfigure}{.65\linewidth}
        \centering
        \includegraphics[width=\linewidth]{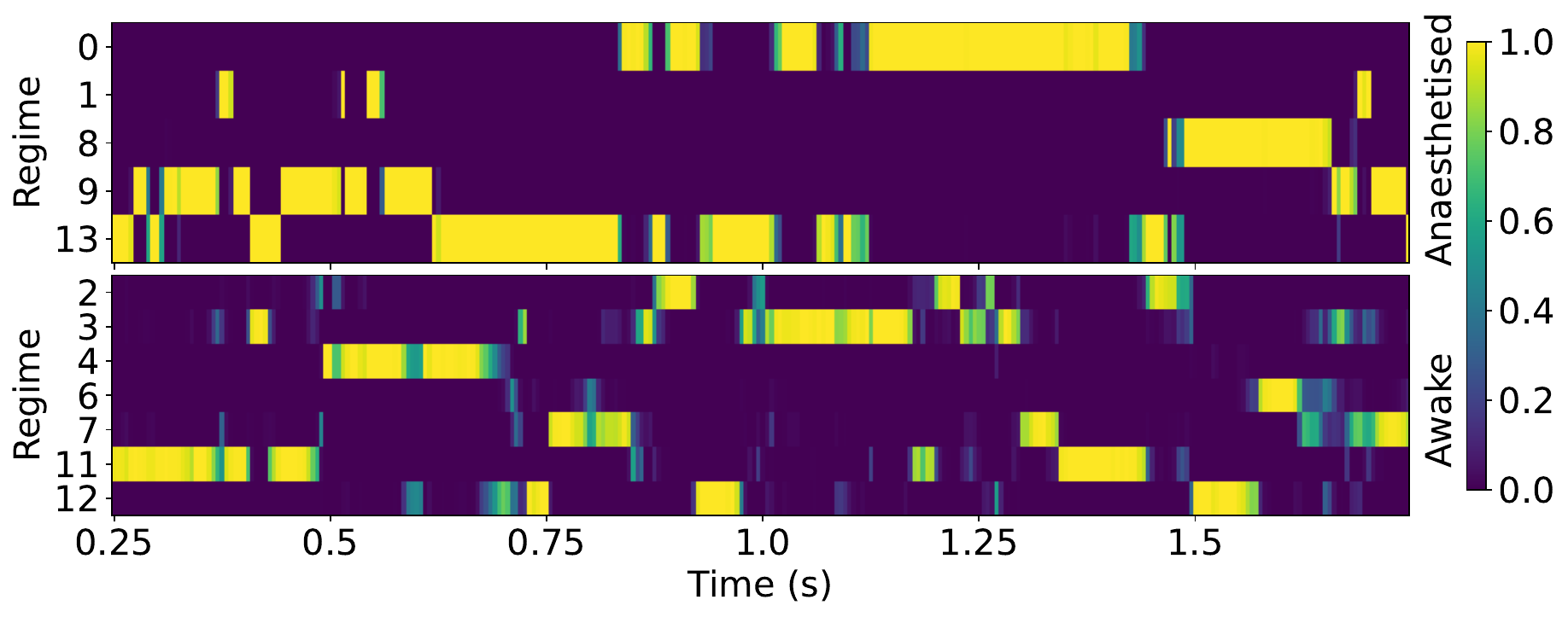}
        \vspace{-1.5em}
        \caption{}
        \label{fig:example_neuro}
    \end{subfigure}
    \vspace{-.5em}
    \caption{(a) Test log-likelihood of ECoG data using different lags and $K=15$ regimes. (b) Example posteriors over regimes for \textsc{Awake} and \textsc{Anaesthetised} conditions. Posteriors are smoothed with a length-$3$ uniform kernel to suppress frame-level artifacts.}
    \label{fig:neuroscience_exp}
    \vspace{-1em}
\end{figure}

The raw ECoG is sampled at 1kHz and comprises \textsc{Awake} and \textsc{Anaesthetised} segments, each lasting 929.8 and 650.65 seconds respectively. We select 21 electrodes approximately covering the visual cortex of the brain. We first apply a second-order Butterworth notch filter at 50Hz to eliminate line noise. Then, we downsample the data to 200Hz, standardise each channel independently, and chunk each sequence into epochs of 2 seconds ($T=400$). This yields $464$ \textsc{Awake} and $325$ \textsc{Anaesthetised} segments, with $50$ from each condition held out for testing.

In both data settings, we set $K=15$ and parametrise transition functions with cosine activations (model selection details in Appendix~\ref{app:neuroscience}). Figure \ref{fig:neuro_loglikelihood} reports test log-likelihoods for iMSM as a function of lag order $M$. As expected, \textsc{Awake} is consistently harder to model (lower log-likelihoods), indicating more complex dynamics. iMSM with $M=2$ achieves the best performance in both settings. Notably, the \textsc{Anaesthetised} condition remains competitive at $M=1$ but degrades rapidly for $M>2$, consistent with simpler dynamics. In contrast, \textsc{Awake} benefits significantly from higher lag orders.

Figure~\ref{fig:example_neuro} illustrates posterior regime occupancy for 2\,s epochs in each condition (smoothed by a length-$3$ uniform kernel). We only show regimes with more than $1\%$ occupancy in the epoch. \textsc{Anaesthetised} shows prolonged stays within a smaller subset of regimes, whereas \textsc{Awake} occupies a larger fraction of regimes with more frequent transitions. Table~\ref{tab:table_neuro} quantifies these differences: relative to \textsc{Anaesthetised}, \textsc{Awake} uses more regimes overall and per sample, exhibits higher transition frequency (Hz), and has shorter regime durations on average.

Crucially, iMSM provides regime identifiability guarantees to ensure the differences in regime usage reflect condition-specific structure supported by the data. This highlights the importance of considering iMSM or iSDS for neuroscience time series such as ECoG, complementing existing methodologies \citep{mediano2023spectrally}.

\begin{table}
    \centering
    \caption{Regime statistics on test visual-cortex ECoG data with $M=2$ and $K=15$. ``Total usage'' is the fraction of regimes occupied at least once across the test set. ``Usage per sample'' is the mean fraction of regimes occupied within a single 2\,s epoch. ``Max\ duration'' is the mean of the per-epoch maximum dwell time.}
    \vspace{-.75em}
    \resizebox{\linewidth}{!}{
    \begin{tabular}{l|c|c|c|c}
        \toprule
        Condition & Transition freq. (Hz) & Total usage & Usage per sample & Max. duration (s) \\
        \midrule
        \textsc{Awake} & $22.05\pm1.06$ & $53.33\%$ & $44\%\pm1.47\%$ & $0.22\pm0.02$\\
        \textsc{Anaesthetised} & $17.71\pm1.29$ & $33.33\%$ & $26.40\% \pm 1.48\%$ & $0.35 \pm 0.04$ \\
        \bottomrule
    \end{tabular}}
    \label{tab:table_neuro}

\end{table}

\subsection{Financial Data}\label{sec:financial_data}

Financial time series are a direct application area for regime-switching models. Early developments in MSMs \citep{hamilton1989new} focused on modelling market-cycle regimes and laid the foundations for regime-switching models in finance. In this section, we study US sector-based stock price data and use iSDS to extract structured latent representations and regimes. Related applications are considered with both MSMs \citep{alizadeh2004markov,cavaliere2014investigating,mahmoudi2022detection} and SDSs \citep{azzouzi1999modelling,carvalho2007simulation,fox2011bayesian}. Empirically, researchers typically work with \emph{log returns} or \emph{log price(-dividend)}.

Log returns $r_t = \log(\frac{p_t}{p_{t-1}})$ are often treated as stationary \citep{cont2001empirical}, and regime-switching mainly focuses on regime-dependent means and/or volatilities. This includes MSMs combined with conditional heteroskedasticity (e.g., GARCH \citep{bollerslev1986generalized}), where the conditional variance depends on latent variables and/or past returns \citep{ang2002international,carvalho2007simulation}. Such models often interpret states as bull/bear or tranquil/turbulent and are widely used for risk assessment and allocation \citep{guidolin2007asset}.

Because prices are typically nonstationary, regime-switching on log price data is often modelled through present-value frameworks that separate company fundamentals (e.g., earnings) from speculative components \citep{balke2009market,chan2021speculative}. These models capture longer-horizon regimes (persistent bear/bull phases), and speculative episodes.

\paragraph{Data modality and identifiability.}

We analyse both data modalities (log returns and log prices) to illustrate our theory in a real-world setting.
Our SDS results imply that regimes are identifiable under suitable modelling conditions (Theorem~\ref{thm:identifiability_sds:i}). For continuous latents, identifiability up to permutation and scaling requires certain heterogeneous or history-dependent noise (Theorem \ref{thm:identifiability_sds:ii}; Corollary \ref{cor:causality_identifiability}). More specifically, it requires for a pair of regimes to have distinct noise ratios across dimensions. In practice, nonstationary log prices are more likely to satisfy the above conditions due to regime and sector-specific scales. In contrast, the noise scales in log returns are normalised, since they are ratios of adjacent prices. This implies that even history-dependent volatility might not meet the required conditions for permutation and scaling identifiability of the latent variables. We therefore evaluate both settings to assess which one captures better structured representations in iSDS.

\paragraph{Dataset and model setting.}

We source 100 U.S. stock prices from Bloomberg Terminal \citep{bloomberg_us_equities_pxlast_2015_2025}, grouped into 10 sectors, and retain daily prices for the past ten years (Jan.\ 2015–May 2025). This yields a sequence of $2630$  time steps. See Appendix \ref{app:finance} for the company list and sector mapping. For training, we draw random subsequences of $T=200$, from the first $2000$ time steps, and we leave the remaining $630$ for testing. We train iSDS with 11 latent variables and 3 regimes: ten sector latents (one per sector) plus a global latent, and three market regimes intended to capture, e.g., bull/bear/sideways or high/mid/low volatility, similar to related works on financial time series \citep{chan2021speculative}. We set $M=5$ and use heterogeneous covariances, i.e., $\bm{\Sigma}(\cdot,k)=\bm{\Sigma}(k)$ for $k\in[K]$. See Appendix \ref{app:training_specs} for additional training details.

\paragraph{Results on log returns.}

\begin{figure}
    \centering
    \includegraphics[width=\linewidth]{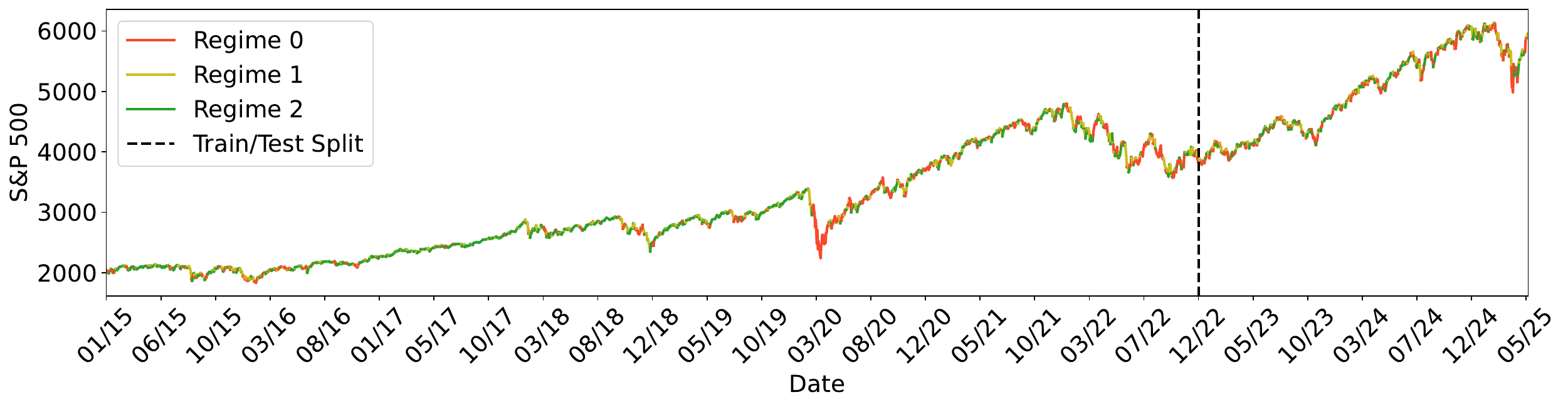}
    \caption{Regime segmentation of log returns overlaid with the S\&P~500; dashed line separates train/test data.}
    \label{fig:sp500regime_log_return}
\end{figure}
\begin{figure}
    \centering
    \includegraphics[width=\textwidth]{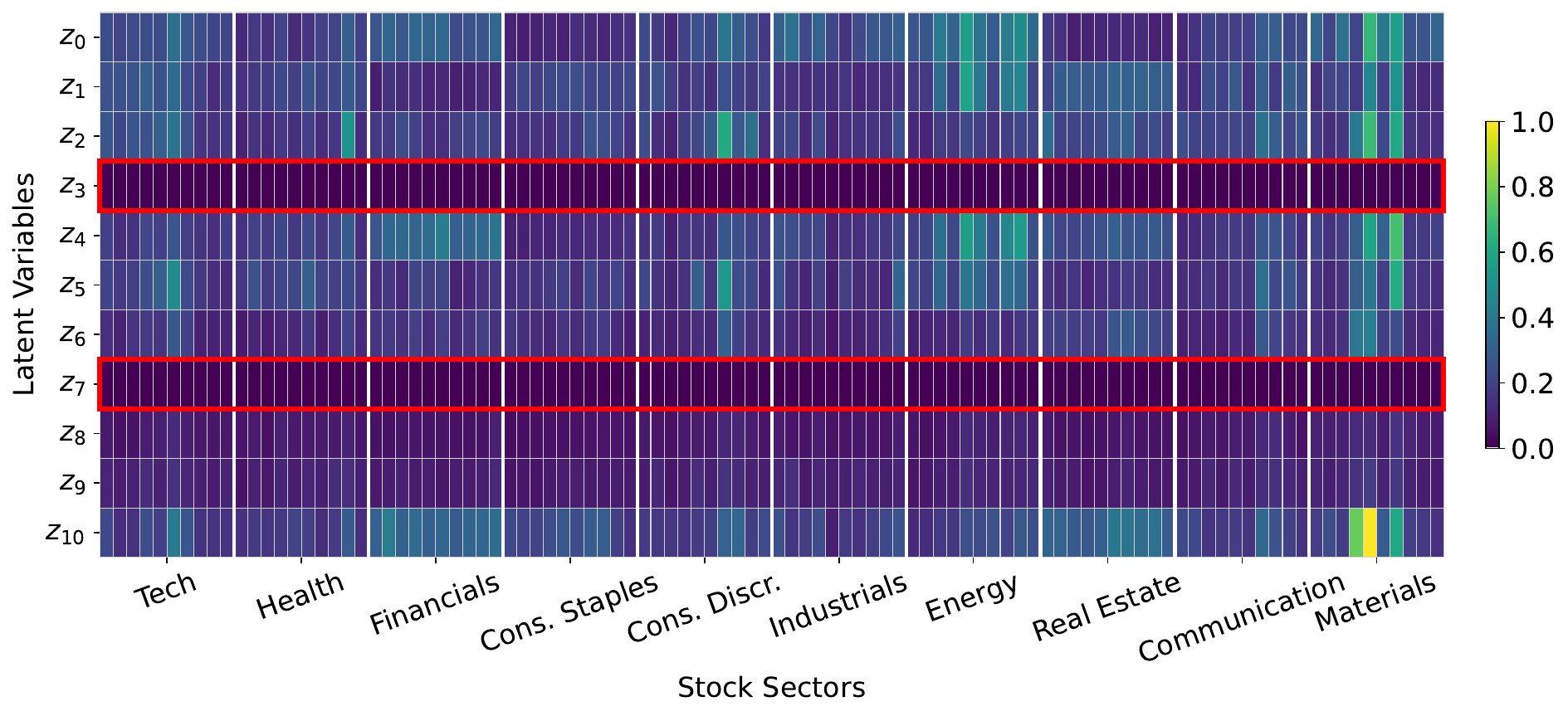}
    \caption{Decoder Jacobian (latents $\to$ assets) for log returns, grouped by sector.}
    \label{fig:jacobian_decoder_log_return}
\end{figure}

Figure~\ref{fig:sp500regime_log_return} shows the training/test segmentation overlaid on the S\&P~500 for context. When trained on log returns, iSDS exhibits regimes capturing short-horizon volatility adaptation, where green, yellow, and red colours correspond to low, medium, and high volatility, respectively. We observe a prolonged low-volatility period from late 2016 to early 2018 and a significantly long high-volatility regime in March 2020 (COVID-19). From 2020 onward, the segmentation consists mainly of medium and high-volatility regimes, consistent with the increased variability of the S\&P~500.

To assess latent structure, Figure~\ref{fig:jacobian_decoder_log_return} displays the decoder Jacobian (latents $\to$ assets), grouped by sector. The pattern shows little sector-specific structure: multiple latent variables mapping to the same sector (e.g., significant concentration around the ``Materials" sector), while some latents (e.g., $z_3$ and $z_7$) are weakly used. This aligns with our theory: the ratio in log returns normalises noise scales across dimensions, preventing disentanglement theoretically.

\paragraph{Results on log prices.} Figure~\ref{fig:sp500regime_log_price} reports the corresponding segmentation for log prices. Here, iSDS extracts longer-horizon regimes: a green state aligned with bull periods, and a yellow state aligned with lateral/bear phases. Compared to the returns model, the inferred regimes are more persistent and track macroeconomic phases, consistent with the literature on present-value frameworks \citep{chan2021speculative}.

The decoder Jacobian (Fig.~\ref{fig:jacobian_decoder_log_price}) shows a significantly clearer structure than in the returns case (cf.\ Fig.~\ref{fig:jacobian_decoder_log_return}). Several latents align with sectors or sector pairs (e.g., $z_5$: Materials/Consumer Staples; $z_5$, $z_7$: Consumer Staples; $z_8$: Real Estate; $z_4$, $z_6$: Health Care). Other latents ($z_2$ and $z_10$) behave as global factors. Some sectors (e.g., Technology and Industrials) spread across multiple latents, which is plausible given their cyclicality and sensitivity to market conditions. In general, the log price model yields a more structured, sector-aware representation, which is aligned with our identifiability expectations. The lack of stronger sector-specific disentanglement is also consistent with our synthetic results, where iSDS did not achieve high strong MCC scores.

\begin{figure}
    \centering
    \includegraphics[width=\linewidth]{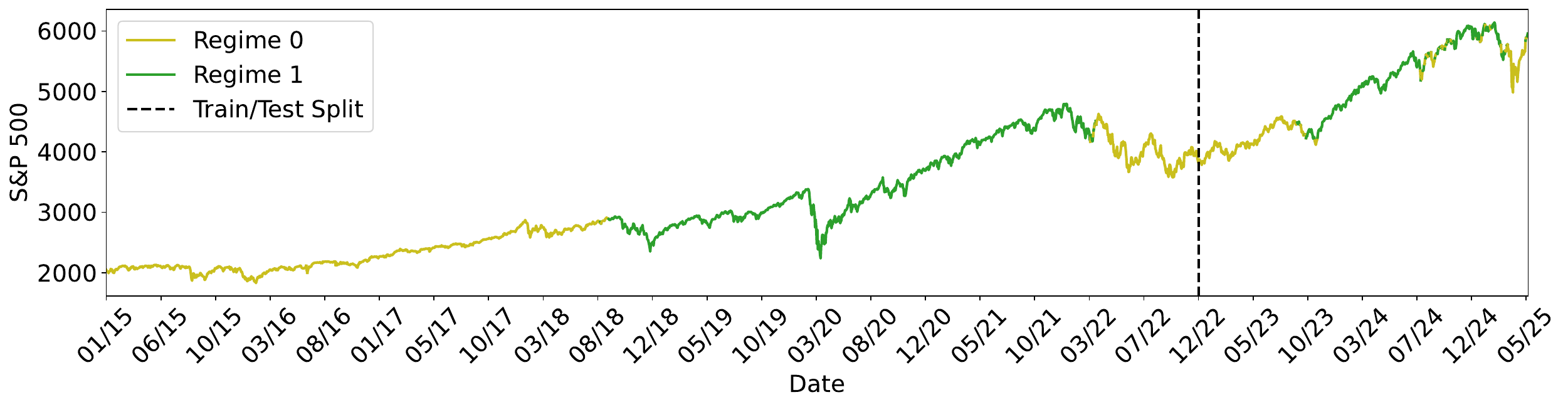}
    \caption{Regime segmentation of log prices overlaid with the S\&P~500; dashed line separates train/test.}
    \label{fig:sp500regime_log_price}
\end{figure}
\begin{figure}
    \centering
    \includegraphics[width=\linewidth]{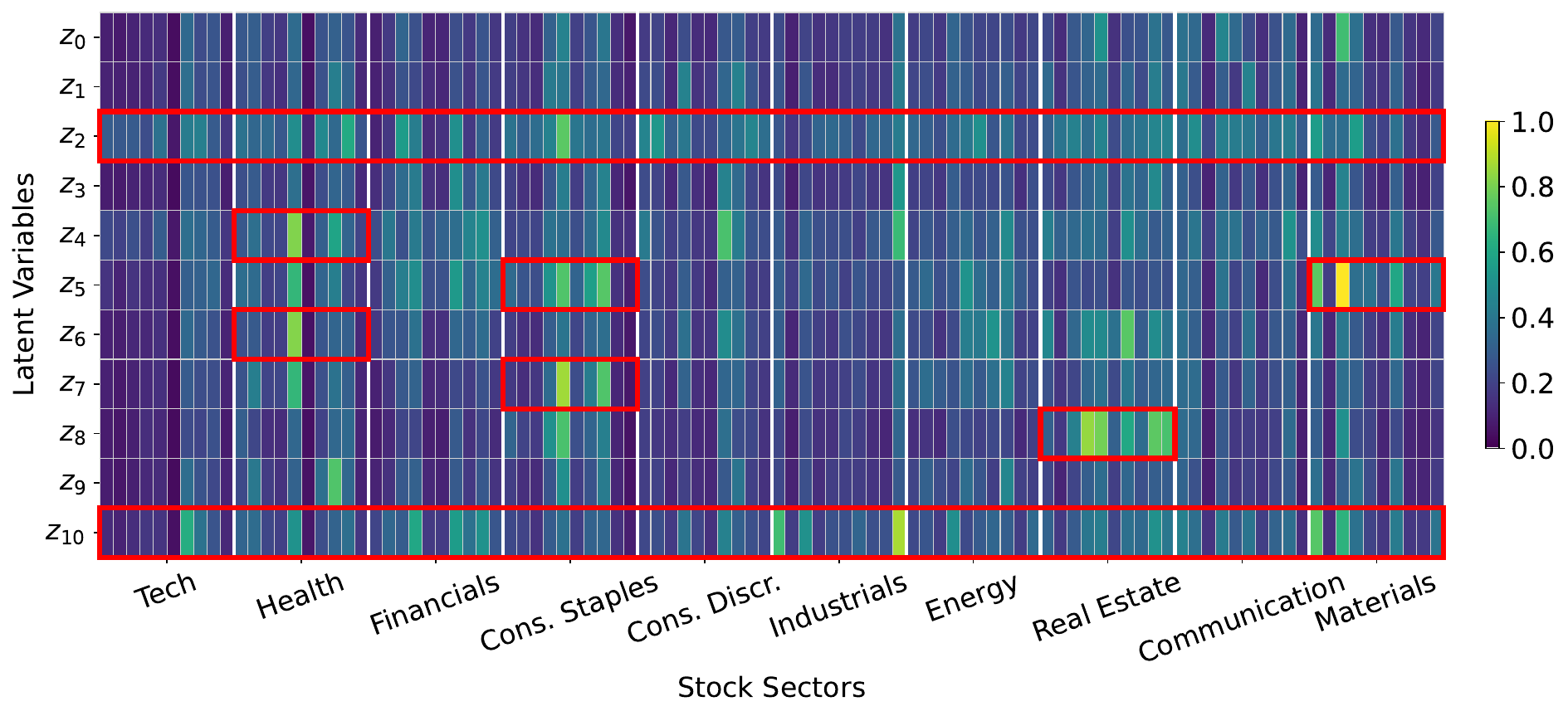}
    \caption{Decoder Jacobian (latents $\to$ assets) for log prices, grouped by sector.}
    \label{fig:jacobian_decoder_log_price}
\end{figure}

\paragraph{Summary.} Comparing both modalities, iSDS on log returns discovers short-time volatility regimes with weak sector disentanglement, while log prices yield more persistent macro regimes with clearer sector-aligned latents. The stronger structured representation in the log price modality is consistent with the nonstationarity of the data and our heterogeneity requirements for identifiable representations. For additional details, in Appendix \ref{app:finance} we also investigate the learned latent structures of iSDS in both data modalities.

\subsection{Climate Data}\label{sec:climate}

Climate data is characterised by strong seasonal patterns and regime-dependent approaches have recently been explored in this context \citep{saggioro2020reconstructing}. Modelling Earth systems is crucial to understand climatological dynamics and to detect extreme events. In this section, we investigate high-dimensional spatio-temporal climate data to demonstrate the applicability of our theoretical framework to this scientific domain. 

We consider the Normalised Difference Vegetation Index (NDVI) over Africa, following a setup similar to that of \citet{varando2022learning}. The NDVI measures vegetation greenness as the normalised ratio between near-infrared and red reflectance, and is widely used as a proxy for vegetation productivity and land-surface phenology \citep{nathalieNDVI2013}. We use the \href{https://www.earthdata.nasa.gov/data/catalog/lpcloud-mod13c1-006}{MODIS/Terra Vegetation Indices 16-Day L3 Global 0.05Deg CMG V006} product \citep{Didan2015-mg} together with the ENSO 3.4 climate index, both provided in preprocessed form by \citet{varando2022learning}. The dataset follows similar preprocessing described in their work: NDVI is reprojected to a 25km Equal-Area Scalable Earth (EASE) grid\footnote{Information on EASE grid reprojections can be found here \url{https://nsidc.org/data/user-resources/help-center/guide-ease-grids}.}, retaining the pixels corresponding to Africa (bounding box $[-20,\ 55]$, $[-37,\ 38]$ in [latitude, longitude]). For each year, the product provides NDVI values from day 1 to 353 every 16 days. We select the period 2001-2020. Missing values are filled temporally (per pixel) using linear interpolation. The resulting dataset is a video tensor with $460$ time steps and $289\times356$ pixels. The ENSO 3.4 index is sampled to match the 16-day temporal resolution. Since the index is available daily, a 16-day sliding window is used, followed by $16\times$ downsampling. With these datasets, we expect our iMSM and iSDS to extract regimes corresponding to seasonal dynamics and, similarly to Granger-PCA \citep{varando2022learning}, investigate whether the learned latent variables align with ENSO 3.4.

\paragraph{Model setting.} We select the $360$ most recent time steps, and reserve the first $100$ for testing and model selection. We train iMSM following \citet{varando2022learning}, where we use PCA on the land pixels (a total of $56196$ pixels) and retain $10$ principal components. For iSDS, we use random $64\times 64$ patches from land pixels, and use CNN-MLP encoder-decoder. 
Here we consider $20$ latent variables to account for additional spatial features on the continent shape. Given the inherent seasonality in climate, we assume known $K=4$ (one regime per season). Through model selection, we set heterogeneous covariances, GELU activations in the transitions, $M=1$ for iMSM, and $M=2$ for iSDS. See Appendices \ref{app:training_specs} and \ref{app:climate_details} for additional training and model selection details, respectively.

\begin{figure}
    \centering
    \includegraphics[width=\linewidth]{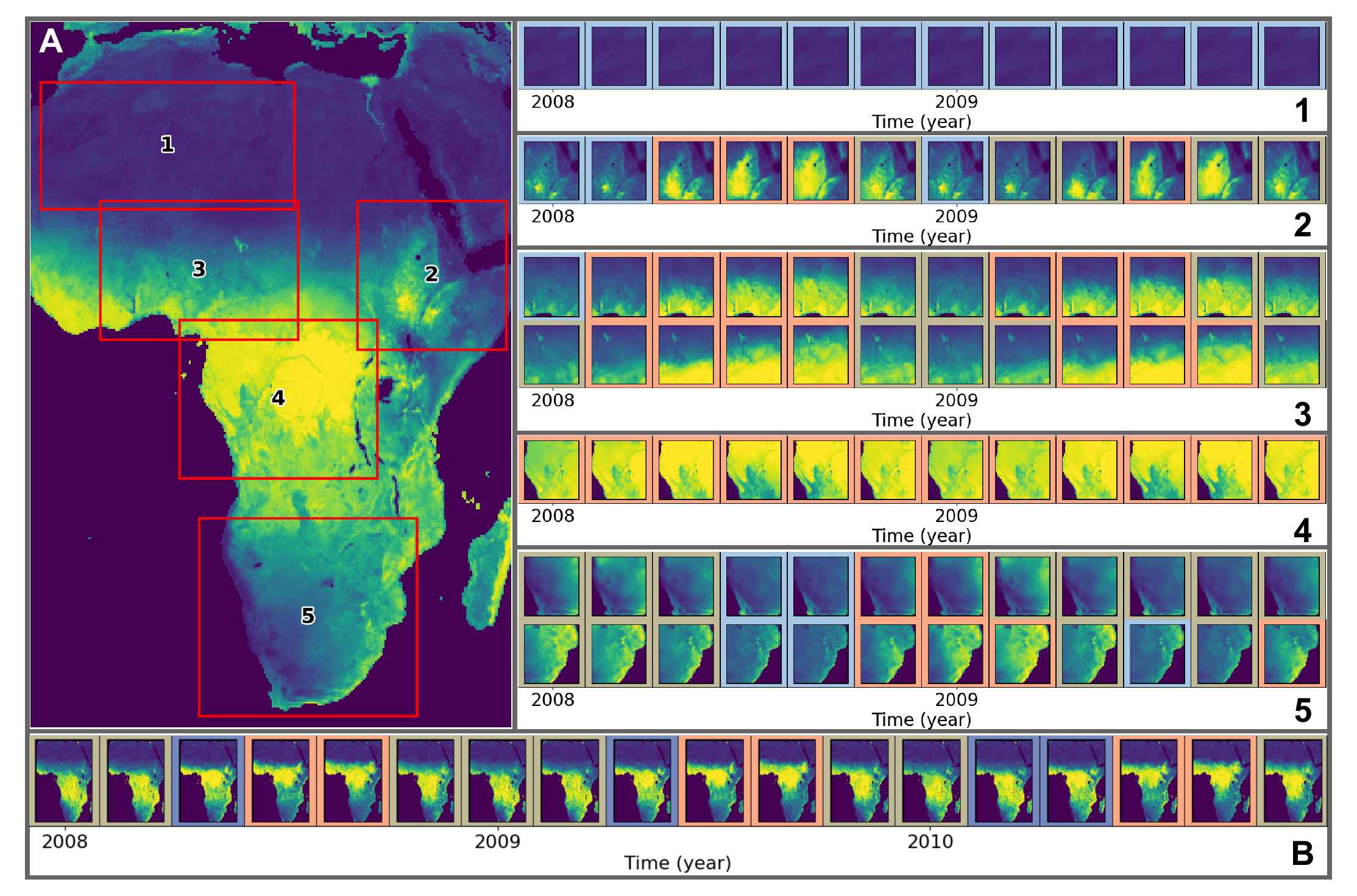}
    \caption{A. Segmentation examples of iSDS across the African continent: (1) Sahara, (2) Ethiopia, (3) Sahel, (4) Congo and, (5) South Africa. B. Segmentation results of iMSM.}
    \label{fig:climate_image}
\end{figure}

\paragraph{Segmentation results.} Figure \ref{fig:climate_image} shows segmentation results of both models. Because iMSM is trained on $10$ principal components, the extracted regimes act globally across the continent. We observe three coherent seasonal regimes recurring each year: Mar-May, Jun-Aug, and Sep-Feb. In contrast, iSDS yields region-dependent regime structures: one regime is predominant over the Sahara and Congo (patches 1 and 4 in Figure \ref{fig:climate_image}A); while two/three regimes alternate over Ethiopia, the Sahel, and South Africa (patches 2, 3, and 5 in Figure \ref{fig:climate_image}B). Although the spatial regime structure differs across regions, the underlying regime effects are shared in iSDS: for example, South Africa's segmentation is approximately the opposite of the Sahel/Ethiopia segmentations. Visualisations of global and regional year-based structures are provided in Appendix \ref{app:climate_details}.

\paragraph{Correlation with ENSO.} To quantitatively assess the learned regime-dependent dynamics, we relate the learned latents to the preprocessed ENSO 3.4 time series. For iMSM (global), we split the time series into seasonal slices determined by the regimes and compute lagged Pearson correlations between each PCA component and ENSO 3.4 over lags ranging from $0$-$240$ days ($16$-day steps; ENSO leads), and also report the ``All times'' baseline (i.e., no splits). For iSDS (regional), we sample $10$ $64 \times 64$ land patches per region, compute latent-ENSO correlation matrices (latents $\times$ lags), and combine them with Fisher $z$-averaging. For each season/region, we report the peak correlation strength and its lag. Full correlation matrices, and additional visualisations are provided in Appendix \ref{app:climate_details}.

\begin{figure}
  \centering
    \begin{minipage}{.4\linewidth}
      \centering
    \captionof{table}{Per-season and global ENSO alignment with iMSM PCs. Peak correlation and lag with ENSO leading.}
    \resizebox{\linewidth}{!}{
    \begin{tabular}{lcccc}
          \toprule
          Season & Correlation & Lag (days) \\
          \midrule
          All times & $0.38$ & $80$ \\
          Mar--May    & $0.49$ & $208$ \\
          Jun--Sep    & $0.37$ & $80$ \\
          Oct--Feb & $\textbf{0.53}$ & $\textbf{80}$ \\
          \bottomrule
        \end{tabular}}
      \label{tab:imsm_season_table}
  \end{minipage}\hfill
  \begin{minipage}{0.55\linewidth}
  \captionof{table}{Per-season ENSO alignment with iSDS latents. For each region, we report averaged peak correlation and lag with ENSO leading.}
  \resizebox{\linewidth}{!}{
       \begin{tabular}{lcccc}
          \toprule
          Region & Season & Correlation & Lag (days) \\
          \toprule
          Sahara & All times & $0.15$ & $48$ \\
          \midrule
          \multirow{3}{*}{Ethiopia} & Jan--Mar    & $\textbf{0.56}$ & $\textbf{144}$ \\
          & Apr--Aug & $0.27$ & $100$ \\
          & Aug--Dec & $0.26$ & $0$ \\
          \midrule
          \multirow{2}{*}{Sahel} & Mar--Aug    & $0.18$ & $112$ \\
          & Sep--Feb & $0.19$ & $0$ \\
          \midrule
          Congo & All times & $0.12$ & $144$ \\
          \midrule
          \multirow{3}{*}{South Africa} & Mar--Jun \& Sep-Nov    & $0.25$ & $64$ \\
          & Jun--Sep & $0.22$ & $128$ \\
          & Dec--Feb & $0.28$ & $192$ \\
          \bottomrule
        \end{tabular}}
    
    \label{tab:isds_season_table}
  \end{minipage}

\end{figure}

The results are summarised in Tables~\ref{tab:imsm_season_table} and~\ref{tab:isds_season_table}. For iMSM (c.f. Table~\ref{tab:imsm_season_table}), we observe regime-dependent splits yield clearer peaks than the ``All times'' baseline. This indicates that aggregating across seasons attenuates the lagged effects of ENSO on the time series. Our results show that iMSM segments the time series into regimes where the effects of ENSO can be perceived with different lags. For iSDS (c.f. Table~\ref{tab:isds_season_table}), the regional responses are heterogeneous. We observe that Sahara, Congo and Sahel are not significantly aligned with ENSO. On the other hand, South Africa shows slightly higher correlation with ENSO in the South African Summer compared to Winter. Furthermore, Ethiopia shows a very high correlation with ENSO during the months of January-March. In climate analysis, researchers study effects of phenomena such as ENSO across lag scales \citep{rouault2022impact}, and within different regions (e.g., Sahel \citep{nicholson2013west} and Ethiopia \citep{Bekele-Biratu}. In general, iMSM and iSDS can provide noteworthy insights on external drivers such as ENSO, as they decouple both seasonal and regional effects.

\paragraph{Summary.} Climate data yields a strong seasonal structure in NDVI over Africa. iMSM captures the global scales, while iSDS reveals region-specific regime structures. Although interpretability of the latent variables is complex in climate time series, our regime identifiability results still enable interpretable regime-conditioned analyses. Extending iSDS with recent identifiable spatiotemporal frameworks could further disentangle regional effects and improve latent space interpretability \citep{wangdiscovering}.
\section{Conclusions}

We developed a unified identifiability framework for regime-switching models with multi-lag dependencies. For Markov switching models, we framed the model as a temporal finite mixture and proved identifiability of both the number of regimes and the regime-specific multi-lag transitions in a nonlinear Gaussian setting, with a thorough nonparametric analysis provided in Appendix~\ref{app:proof_identifiability}. For switching dynamical systems, we established identifiability of latent variables up to permutation and scaling and derived conditions under which regime-dependent latent causal graphs are identifiable up to regime and node permutations. Importantly, the guarantees hold in a fully unsupervised setting via flexible architectural (function class) and noise assumptions. Empirically, we validated the theory on synthetic data and demonstrated iMSM and iSDS across scientific domains, showing that identifiability yields trustworthy interpretations by ensuring unique representations up to acceptable equivalences.

While we provide strong identifiability results for iMSM, our iSDS guarantees assume piecewise linear generative functions and specific noise and function classes, which may not hold universally. On the estimation side, we rely on variational inference and do not prove statistical consistency. Consistency results exist in related literature assuming the variational family has universal approximation properties \citep{khemakhem2020variational}. Empirically, we observe strong synthetic performance, with recovery of regime-dependent causal structures becoming more challenging as dimensionality increases.

Our framework is complementary to existing methodologies and suggests several directions for future work. Some examples include incorporating contemporaneous effects, continuous-time dynamics, and infinite mixture processes, as well as going beyond piecewise linearity. These extensions would broaden the applicability of identifiable regime-switching models for scientific discovery. Furthermore, our identifiability results extend beyond standard MSMs with time-homogeneous Markov switching to models with time-dependent regime distributions, offering room for experimental exploration.


\acks{We thank Gherardo Varando and Gustau Camps-Valls (Image Processing Laboratory, Universitat de Val\`encia) for early discussions on the climate application and for sharing preprocessed datasets used in Section~\ref{sec:climate}. C.~B.-R. also thanks colleagues at Imperial College London for helpful discussions on the theoretical results and contents of this paper. Y.W is supported in part by the Office of Naval Research under grant N00014-23-1-2590, the National Science Foundation under grant No. 2231174, No. 2310831, No. 2428059, No. 2435696, No. 2440954, and a Michigan Institute for Data Science Propelling Original Data Science (PODS) grant.}


\vskip 0.2in
\bibliography{main}

\newpage

\appendix

\section{Proof of Theorem \ref{thm:identifiability_main}}\label{app:proof_identifiability}
\paragraph{Sketch of the proof:} We organise the proof strategy into 4 steps.
\begin{enumerate}
    \item We show the requirement for identifiability is linear independence of the trajectory family (Appendix \ref{app:linear_indep_render_id}).
    \item We provide linear independence results for products of nonparametric functions. We first start with $M=1$ (Appendix \ref{app:preliminaries_lin_indep}), and we then strengthen the assumptions (Appendix \ref{app:extending_assumptions}) to allow products of $M$ functions (Appendix \ref{app:lippo_m}).
    \item We prove linear independence of the trajectory family for nonparametric transitions of order $M$ using induction (Appendix \ref{app:lin_indep_joint}).
    \item We show the parametric assumptions (m1), (m2) satisfy linear independence of the trajectory family (Appendix \ref{app:parametric_assumptions}).
\end{enumerate}

\subsection{Linear independence renders identifiability}\label{app:linear_indep_render_id}

\begin{theorem}\label{thm:identifiability_msm} 
Assume the functions in $\mathcal{P}^{T,M}_{\A,\B}$ are linearly independent under finite mixtures, then the distribution family $\mathcal{M}^T(\Pi_{\A}^M,\mathcal{P}_{\B}^M)$ is identifiable as defined in Def \ref{def:identifiability}.
\end{theorem}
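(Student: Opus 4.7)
The plan is to reduce the MSM identification problem to the general finite-mixture identifiability result (Prop.~\ref{prop:mixture_pdf_identifiability}), applied to the trajectory family $\mathcal{P}^{T,M}_{\A,\B}$ viewed as a family of joint densities on $\mathbb{R}^{mT}$. By construction $\mathcal{M}^T(\Pi_{\A}^M,\mathcal{P}_{\B}^M)\subseteq\mathrm{conv}(\mathcal{P}^{T,M}_{\A,\B})$, and the hypothesis of the theorem provides exactly the ingredient that Prop.~\ref{prop:mixture_pdf_identifiability} requires. Applying it would yield $C=\tilde{C}$, a bijection $\pi$ of trajectory indices with $c_i=\tilde{c}_{\pi(i)}$, and pointwise equality of the full trajectory densities. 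The remaining work is to refine this ``trajectory-level'' matching into the per-factor identities of Def.~\ref{def:identifiability}.

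To extract the per-factor equalities I would proceed by backward peeling. Starting from the equality
\[
p_{a^i}(\z_{1:M})\prod_{t=M+1}^{T} p_{b^i_t}(\z_t\mid\z_{t-1:t-M})
= p_{\tilde{a}^{\pi(i)}}(\z_{1:M})\prod_{t=M+1}^{T} p_{\tilde{b}^{\pi(i)}_t}(\z_t\mid\z_{t-1:t-M}),
\]
integrating out $\z_T$ collapses each side onto the analogous identity of length $T-1$, and dividing the two identities where the common factor is positive isolates $p_{b^i_T}=p_{\tilde{b}^{\pi(i)}_T}$. Iterating this backward through $t=T-1,\dots,M+1$ yields condition~(4), and the residual identity at the end of the recursion is the initial-density equality $p_{a^i}=p_{\tilde{a}^{\pi(i)}}$, i.e.\ condition~(3).

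For the repetition pattern (condition~(2)) and the counts $K_0=\tilde{K}_0$, $K=\tilde{K}$, the key observation is that linear independence of $\mathcal{P}^{T,M}_{\A,\B}$ under finite mixtures forces each indexing map $b\mapsto p_b$ and $a\mapsto p_a$ to be injective: if $b\neq b'$ gave $p_b=p_{b'}$, substituting these into any common surrounding trajectory would produce two distinct trajectory indices with identical density, a nontrivial linear relation contradicting the hypothesis. Combined with condition~(4), this injectivity makes the repetition pattern of the $b$-indices an intrinsic property of the sequence of conditional densities, hence preserved under $\pi$. The count equalities then follow because the sets $\{p_a:a\in\A_0\}$ and $\{p_b:b\in\B_0\}$ can be read off from the trajectories, the bijection $\pi$ identifies them across the two representations, and injective indexing forces the cardinalities to coincide.

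The main obstacle is the very first step: justifying that Prop.~\ref{prop:mixture_pdf_identifiability} applies to $\mathcal{P}^{T,M}_{\A,\B}$ without modification. Concretely, the distinctness convention ``$a_i\neq a_j$ for $i\neq j$'' in Def.~\ref{def:identifiability_finite_mixture} must be reconciled with the compound-index distinctness ``$(a^i,b^i_{M+1:T})\neq(a^j,b^j_{M+1:T})$'' used in \eqref{eq:msm_finite_mixture}; this is precisely the role played by the injectivity of the indexing maps established above, so the two arguments must be woven together carefully. A secondary, more routine technicality is the handling of null sets when dividing densities in the peeling step, which is standard under the convention that PDFs are positive almost everywhere but nonetheless deserves explicit mention.
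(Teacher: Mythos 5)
Your proposal is correct and follows essentially the same route as the paper's proof: apply Proposition~\ref{prop:mixture_pdf_identifiability} to the trajectory family, peel factors backward by marginalising $\z_T,\z_{T-1},\dots$ and dividing to obtain conditions (3)--(4), and derive condition (2) and the counts by observing that two distinct transition indices yielding the same conditional density would produce a nontrivial vanishing linear combination of trajectory densities, contradicting the hypothesis. The technical points you flag (reconciling compound-index distinctness with Definition~\ref{def:identifiability_finite_mixture}, and null sets in the division step) are indeed glossed over in the paper but do not change the argument.
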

\begin{proof}
By Proposition \ref{prop:mixture_pdf_identifiability}, linear independence of $\jointfamily$ under finite mixtures implies identifiability up to permutation in the sense of Definition \ref{def:identifiability_finite_mixture}. Then, for $p(\z_{1:T})$ and $\tilde{p}(\z_{1:T})$ from Definition \ref{def:identifiability}, we have $C=\tilde{C}$, and for every $1\leq i \leq C$, there exists $1\leq j \leq \tilde{C}$ such that $c_i=\tilde{c}_j$ and:
\begin{multline}
p_{a^i}(\z_{1:M})\prod_{t=M+1}^T p_{b_t^i}(\z_t | \z_{t-1}, \dots, \z_{t-M}) = \\ p_{\tilde{a}^j}(\z_{1:M})\prod_{t=M+1}^T p_{\tilde{b}_t^j}(\z_t | \z_{t-1}, \dots, \z_{t-M}), \text{  } \forall \z_{1:T} \in \mathbb{R}^{Tm}.    
\end{multline}
Given that we have conditional PDFs, if the joint distributions are equal on $\z_{1:T}$, then the distributions on $\z_{1:T-1}$ are also equal:
\begin{multline}
p_{a^i}(\z_{1:M})\prod_{t=M+1}^{T-1} p_{b_t^i}(\z_t | \z_{t-1}, \dots, \z_{t-M}) = \\
p_{\tilde{a}^j}(\z_{1:M})\prod_{t=M+1}^{T-1} p_{\tilde{b}_t^j}(\z_t | \z_{t-1}, \dots, \z_{t-M}), \text{  } \forall \z_{1:T-1} \in \mathbb{R}^{(T-1)m}.    
\end{multline}
Therefore, we have $p_{b_T^i}(\z_T | \z_{T-1}, \dots, \z_{T-M}) = p_{\tilde{b}_T^j}(\z_T | \z_{T-1}, \dots, \z_{T-M})$ for all $\z_t, \dots, \z_{t-M}\in \R^m$.  We can follow the same reasoning for other time indices to have $p_{b_t^i}(\z_t | \z_{t-1}, \dots, \z_{t-M}) = p_{\tilde{b}_t^j}(\z_t | \z_{t-1}, \dots, \z_{t-M})$ for all $t > M, \z_t, \dots, \z_{t-M}\in \R^m$. Similar logic applies to the initial distribution, where we have $p_{a^i}(\z_{1:M})=p_{\tilde{a}^j}(\z_{1:M})$ for all $\z_1,\dots,\z_M\in\R^m$; and we have $K_0=\tilde{K}_0$, $K=\tilde{K}$.

Finally, if there exists $t_1 \neq t_2$ such that $b_{t_1}^i = b_{t_2}^i$ but $\tilde{b}_{t_1}^j \neq \tilde{b}_{t_2}^j$, we have for any $\bm{\alpha}\in \mathbb{R}^{Mm},  \bm{\beta} \in\R^m$:
\begin{align*}
p_{\tilde{b}_{t_1}^j}(\z_{t_1} = \bm{\beta} | \z_{t_1 -1:t_1 - M} = \bm{\alpha}) &= p_{b_{t_1}^i}(\z_{t_1} = \bm{\beta} | \z_{t_1 -1:t_1 - M} = \bm{\alpha}) \\
&= p_{b_{t_2}^i}(\z_{t_2} = \bm{\beta} | \z_{t_2 -1:t_2-M} = \bm{\alpha}) \\
&= p_{\tilde{b}_{t_2}^j}(\z_{t_2} = \bm{\beta} | \z_{t_2 -1:t_2-M} = \bm{\alpha}),
\end{align*}
which implies linear dependence under finite mixtures of $\mathcal{P}^M_{\B}$. We note this contradicts the assumption of linear independence under finite mixtures of the trajectory family $\jointfamily$: such assumption says we should have
\begin{equation}
\sum_{i}\sum_{j} \gamma_{ij} p_{b^i_{1:T-1}}(\z_{1:T-1}) p_{b^j_T}(\z_T | \z_{T-1:T-M}) = 0, \quad \forall \z_{1:T} \in \mathbb{R}^{Tm},
\end{equation}
if and only if $\gamma_{ij}=0$ for all $1 \leq i \leq C/K, 1 \leq j \leq K$. However, given linear dependence on $\mathcal{P}^M_{\B}$, we can rearrange the terms and set $\gamma_{i,j} = \gamma_j \neq 0, 1\leq j \leq K$ such that the above equation is satisfied. Therefore a contradiction is reached and we have $b_{t_1}^i = b_{t_2}^i \Leftrightarrow \tilde{b}_{t_1}^j = \tilde{b}_{t_2}^j$ for all $M < t_1, t_2 \leq T$.
\end{proof}
The next step is to show conditions under which the trajectory family $\jointfamily$ is linearly independent under finite mixtures.

\subsection{Linear independence of nonparametric product families}
To establish linear independence of the nonparametric trajectory family, we analyze consecutive products of conditional distributions. For instance, when $M=1$, considering two consecutive variables gives the product $$\{p(\textcolor{purple}{\z_{t}}\mid\z_{t-1},s_t)p(\textcolor{teal}{\z_{t+1}}\mid\textcolor{purple}{\z_{t}},s_{t+1})\}$$ with one overlapping variable $\z_t$. Increasing $M$ increases the number of overlaps when considering $M+1$ consecutive variables. The following product
\begin{equation}\label{eq:joint_product_M}
    \left\{\prod_{\ell=0}^{M} p(\z_{t+\ell} \mid \z_{t-1+\ell}, \dots, \z_{t-M+\ell}, s_{t+\ell}): (s_{t}, \dots, s_{t+M})\in[K]^{M+1}\right\},
\end{equation} 
contains two types of overlapping variables for $M>1$:
\begin{enumerate}
    \item $M$ variables that appear both as observed variables and as conditioning variables, namely $\z_{t+M-1}, \dots, \z_{t}$ in Eq. (\ref{eq:joint_product_M}).
    \item $M-1$ variables that appear only as conditioning variables in multiple factors, namely $\z_{t-1}, \dots, \z_{t-M+1}$ in Eq. (\ref{eq:joint_product_M}).
\end{enumerate}

For $M>1$, the additional overlaps between the conditioning variables $(\z_{t-1}, \dots, \z_{t-M+1})$ complicate the analysis, as these prevent a direct extension of the case $M=1$ presented in \citet{balsells-rodas2024on}.

\subsubsection{Preliminaries}\label{app:preliminaries_lin_indep}

We recall the central result for the case $M=1$ from \citet{balsells-rodas2024on}, which we name
\emph{\textbf{L}inear \textbf{I}ndependence on \textbf{P}roduct functions with $1$ consecutive \textbf{O}verlap} (LIPO-1). It provides conditions under which linear independence is preserved for products with a single overlapping variable.

\begin{lemma}[LIPO-1]
\label{lemma:linear_independence_two_nonlinear_gaussians}
Consider two scalar real-valued function families $\mathcal{U}_I := \{u_i(\y, \x) | i \in I \}$ and $\mathcal{V}_J := \{v_j(\z, \y) | j \in J \}$ with $\x \in \mathcal{X}, \y \in \mathbb{R}^{d_y}$ and $\z \in \mathbb{R}^{d_z}$. We further make the following assumptions:
\begin{itemize}
    \item[(a1)] Positive function values: $u_i(\y, \x) > 0$ for all $i \in I, (\y, \x) \in \mathbb{R}^{d_y} \times \mathcal{X}$. Similar positive function values assumption applies to $\mathcal{V}_J$: $v_j(\z, \y) > 0$ for all $j \in J, (\z, \y) \in \mathbb{R}^{d_z} \times \mathbb{R}^{d_y}$.
    \item[(a2)] Unique indexing: for $\mathcal{U}_I$, $i \neq i' \in I \Leftrightarrow \exists \ \x, \y \text{ s.t. } u_i(\y, \x) \neq u_{i'}(\y, \x)$. Similar unique indexing assumption applies to $\mathcal{V}_J$;
    \item[(a3)] Linear independence \textcolor{black}{under finite mixtures} on specific nonzero measure subsets for $\mathcal{U}_I$: for any nonzero measure subset $\mathcal{Y} \subset \mathbb{R}^{d_y}$, $\mathcal{U}_I$ contains linearly independent functions \textcolor{black}{under finite mixtures} on $(\y, \x) \in \mathcal{Y} \times \mathcal{X}$. 
    \item[(a4)] Linear independence \textcolor{black}{under finite mixtures} on specific nonzero measure subsets for $\mathcal{V}_J$: there exists a nonzero measure subset $\mathcal{Y} \subset \mathbb{R}^{d_y}$, such that for any nonzero measure subsets $\mathcal{Y}' \subset \mathcal{Y}$ and $\mathcal{Z} \subset \mathbb{R}^{d_z}$, $\mathcal{V}_J$ contains linearly independent functions \textcolor{black}{under finite mixtures} on $(\z, \y) \in \mathcal{Z} \times \mathcal{Y}'$;
    \item[(a5)] Linear dependence \textcolor{black}{under finite mixtures} for subsets of functions in $\mathcal{V}_J$ implies repeating functions: for any $\bm{\beta} \in \mathbb{R}^{d_y}$, any nonzero measure subset $\mathcal{Z} \subset \mathbb{R}^{d_z}$ and any subset $J_0 \subset J$ \textcolor{black}{such that $|J_0| < +\infty$}, $\{v_j(\z, \y = \bm{\beta}) | j \in J_0 \}$ contains linearly dependent functions on $\z \in \mathcal{Z}$ only if $ \exists \ j \neq j' \in J_0$ such that $v_j(\z, \bm{\beta}) = v_{j'}(\z, \bm{\beta})$ for all $\z \in \textcolor{black}{\mathbb{R}^{d_z}}$.
    \item[(a6)] Continuity for $\mathcal{V}_J$: for any $j \in J$, $v_j(\z, \y)$ is continuous in $\y \in \mathbb{R}^{d_y}$.
\end{itemize}
Then for any nonzero measure subset $\mathcal{Z} \subset \mathbb{R}^{d_z}$, $\mathcal{U}_I \otimes \mathcal{V}_J := \{v_j(\z, \y) u_i(\y, \x) | i \in I, j \in J \}$ 
contains linear independent functions under finite mixtures defined on $(\x, \y, \z) \in \mathcal{X} \times \mathbb{R}^{d_y} \times \mathcal{Z}$.
\end{lemma}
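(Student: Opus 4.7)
The plan is to proceed by contradiction. Assume there are finite subsets $I_0 \subset I$, $J_0 \subset J$ and coefficients $c_{ij}$ (not all zero) with
$$\sum_{i \in I_0, j \in J_0} c_{ij}\, u_i(\y,\x)\, v_j(\z,\y) = 0 \quad \text{on } \mathcal{X}\times\mathbb{R}^{d_y}\times\mathcal{Z}.$$
Regrouping in $j$, define $\alpha_j(\y,\x) := \sum_{i\in I_0} c_{ij}\, u_i(\y,\x)$, so that $\sum_{j\in J_0} \alpha_j(\y,\x)\, v_j(\z,\y)=0$ on the same domain. The easy case is that every $\alpha_j \equiv 0$ on $\mathcal{X}\times\mathbb{R}^{d_y}$; then for each $j \in J_0$, assumption (a3) (applied with any positive-measure $\mathcal{Y}$) forces all $c_{ij}=0$, contradicting nontriviality.

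Otherwise some $\alpha_{j_*}$ is not identically zero. The key move is to locate a slice on which it is nonzero inside the distinguished set $\mathcal{Y}$ of (a4). I would first argue that $\alpha_{j_*}$ cannot vanish on $\mathcal{Y}\times\mathcal{X}$: if it did, (a3) applied to this $\mathcal{Y}$ would force all $c_{i,j_*}=0$, hence $\alpha_{j_*}\equiv 0$ globally, a contradiction. By Fubini, there then exists $\x_*\in\mathcal{X}$ such that $\mathcal{Y}_* := \{\y\in\mathcal{Y} : \alpha_{j_*}(\y,\x_*)\ne 0\}$ has positive Lebesgue measure. For every $\y\in\mathcal{Y}_*$, $\sum_j \alpha_j(\y,\x_*)\, v_j(\z,\y) = 0$ in $\z\in\mathcal{Z}$ is a nontrivial finite linear combination, so (a5) supplies an unordered pair $\{j(\y),j'(\y)\}\subset J_0$ with $v_{j(\y)}(\z,\y)=v_{j'(\y)}(\z,\y)$ for \emph{all} $\z\in\mathbb{R}^{d_z}$.

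Since $J_0$ is finite, a pigeonhole over the $\binom{|J_0|}{2}$ possible pairs selects a single $(j_1,j_2)$, $j_1\ne j_2$, such that $\mathcal{Y}^{(0)} := \{\y\in\mathcal{Y}_* : v_{j_1}(\cdot,\y)=v_{j_2}(\cdot,\y)\}$ still has positive measure. Here the continuity assumption (a6) is exactly what guarantees that $\mathcal{Y}^{(0)}$ is measurable, since it can be rewritten as the intersection of the closed (in $\y$) zero sets of $\y\mapsto v_{j_1}(\z_n,\y)-v_{j_2}(\z_n,\y)$ along a countable dense $\{\z_n\}\subset\mathbb{R}^{d_z}$. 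On the positive-measure rectangle $\mathcal{Z}\times\mathcal{Y}^{(0)} \subset \mathcal{Z}\times\mathcal{Y}$ we now have $1\cdot v_{j_1}-1\cdot v_{j_2}\equiv 0$ with nontrivial constant coefficients. Since $\mathcal{Y}^{(0)}\subset\mathcal{Y}$, this is exactly the form of linear dependence under finite mixtures precluded by (a4), delivering the desired contradiction.

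The main obstacle I anticipate is the measure-theoretic bridging between ``$\alpha_{j_*}$ is not identically zero'' and ``$\alpha_{j_*}$ is nonzero on a positive-measure slice that actually lies inside the distinguished $\mathcal{Y}$ of (a4)''; this is where (a3) must be applied carefully to the specific $\mathcal{Y}$, rather than to all of $\mathbb{R}^{d_y}$. A secondary subtlety is the pigeonhole step: it must produce a pair on a set of positive measure rather than merely a non-empty set, which is precisely what (a6) enables through the measurability of $\mathcal{Y}^{(0)}$. Everything else (Fubini, the dichotomy on $\alpha_j$, and the final invocation of (a4)) is routine once these two points are handled.
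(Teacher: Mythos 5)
Your proposal is correct in substance, but it takes a genuinely different and noticeably shorter route than the paper's proof. Both arguments open the same way: regroup the alleged dependence as $\sum_{j\in J_0}\alpha_j(\y,\x)\,v_j(\z,\y)=0$ with $\alpha_j=\sum_i c_{ij}u_i$, dispose of the case where every $\alpha_j\equiv 0$ via (a3), and use (a3) again to guarantee the combination is nontrivial for a positive-measure set of $\y$ inside the distinguished $\mathcal{Y}$ of (a4). From there the paper partitions $J_0$, for each fixed $\y$, into equivalence classes of coinciding functions, minimises the cardinality of the smallest class over $\y$, and uses (a6) to show that this minimal class is locally constant on an $\epsilon$-ball; it is then forced to be a singleton, and the contradiction lands back on (a3). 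You instead extract one repeating pair per $\y$ from (a5), pigeonhole over the finitely many pairs to obtain a single pair $(j_1,j_2)$ coinciding on a positive-measure subset of $\mathcal{Y}$ — using (a6) only to make the coincidence set closed, hence measurable — and contradict (a4) directly. Your route is more economical for LIPO-1; the paper's class-splitting and $\epsilon$-ball machinery is the template that gets reused and strengthened in the multi-overlap results (the (b4)/(b5) extensions and LIPO-M), where factors must be peeled off one at a time and a one-shot pigeonhole is no longer available. Two points to tighten, neither fatal: (i) the Fubini step selecting a single $\x_*$ is both unnecessary and delicate, since $\mathcal{X}$ is an abstract index set with no product measure; it suffices to let $\x$ depend on $\y$, because only nontriviality of the coefficient vector is needed to invoke (a5) (note also that the set where $\alpha_{j_*}(\y,\cdot)\equiv 0$ need not be measurable, but (a3) ensures it contains no positive-measure measurable subset, which is all the covering-by-pairs argument requires); (ii) writing the coincidence set as a countable intersection over a dense $\{\z_n\}$ implicitly assumes continuity in $\z$, which is not among the hypotheses — but the full intersection $\bigcap_{\z\in\mathbb{R}^{d_z}}\{\y:\,v_{j_1}(\z,\y)=v_{j_2}(\z,\y)\}$ of sets that are closed by (a6) is still closed, so measurability holds regardless.
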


\begin{proof}
Assume this sufficiency statement is false, then there exists a nonzero measure subset $\mathcal{Z} \subset \mathbb{R}^{d_z}$, $S_0 \subset I \times J$ with $|S_0| < +\infty$ and a set of nonzero values $\{ \gamma_{ij} \in \mathbb{R} | (i, j) \in S_0 \}$, such that 
\begin{equation}
\label{eq:lipo1_assumed_linear_dependence_gaussian}
\begin{aligned}
\sum_{ (i, j) \in S_0} \gamma_{ij} v_j(\z, \y) u_i(\y, \x) = 0, &\quad \forall (\x, \y, \z) \in \mathcal{X} \times \mathbb{R}^{d_y} \times \mathcal{Z}.
\end{aligned}
\end{equation}
Note that the choices of $S_0$ and $\gamma_{ij}$ are independent of any $\x, \y, \z$ values, but might be dependent on $\mathcal{Z}$. By assumptions (a1), the index set $S_0$ contains at least 2 different indices $(i, j)$ and $(i', j')$.
In particular, $S_0$ contains at least 2 different indices $(i, j)$ and $(i', j')$ with $j \neq j'$, otherwise we can extract the common term $v_j(\z, \y)$ out:
$$\sum_{ (i, j) \in S_0} \gamma_{ij} v_j(\z, \y) u_i(\y, \x) = v_j(\z, \y) \left( \sum_{i: (i, j) \in S_0} \gamma_{ij}  u_i(\y, \x) \right) = 0, \quad \forall (\x, \y, \z) \in \mathcal{X} \times \mathbb{R}^{d_y} \times \mathcal{Z},$$
and as there exist at least 2 different indices $(i', j)$ and $(i, j)$ in $S_0$, we have at least one $i' \neq i$, and the above equation contradicts assumptions (a1) - (a3).

Now define $J_0 = \{j \in J | \exists (i, j) \in S_0 \}$ the set of all possible $j$ indices that appear in $S_0$, and from $|S_0| < +\infty$ we have $|J_0| < +\infty$ as well. We rewrite the linear combination equation (Eq.~(\ref{eq:lipo1_assumed_linear_dependence_gaussian})) for any $\bm{\beta} \in \mathbb{R}^{d_y}$ as
\begin{equation}
\label{eq:lipo1_linear_dependence_division}
\sum_{j \in J_0} \left(\sum_{ i: (i, j) \in S_0} \gamma_{ij} u_{i}(\y = \bm{\beta}, \x) \right) v_j(\z, \y = \bm{\beta}) = 0, \quad \forall (\x, \z) \in \mathcal{X} \times \mathcal{Z}. 
\end{equation}
From assumption (a3) we know that the set $\mathcal{Y}_0 := \{\bm{\beta} \in \mathbb{R}^{d_y} | \sum_{ i: (i, j) \in S_0} \gamma_{ij} u_{i}(\y = \bm{\beta}, \x) = 0, \forall \x \in \mathcal{X} \}$ can only have zero measure in $\mathbb{R}^{d_y}$. Write $\mathcal{Y} \subset \mathbb{R}^{d_y}$ the nonzero measure subset defined by assumption (a4), we have $\mathcal{Y}_1 := \mathcal{Y} \backslash \mathcal{Y}_0 \subset \mathcal{Y}$ also has nonzero measure and satisfies assumption (a4). Combined with assumption (a1), we have for each $\bm{\beta} \in \mathcal{Y}_1$, there exists $\x \in \mathcal{X}$ such that $\sum_{ i: (i, j) \in S_0} \gamma_{ij} u_{i}(\y = \bm{\beta}, \x) \neq 0$ for at least two $j$ indices in $J_0$. 
For each $\bm{\beta} \in \mathcal{Y}_1$, $\{v_j(\z, \y = \bm{\beta}) | j \in J_0 \}$ contains linearly dependent functions on $\z \in \mathcal{Z}$.
Now under assumption (a5), we can split the index set $J_0$ into subsets indexed by $k \in K(\bm{\beta})$ as follows, such that within each index subset $J_k(\bm{\beta})$ the functions with the corresponding indices are equal:
\begin{equation}
\label{eq:lipo1_j_index_split_def}
\begin{aligned}
J_0 = \cup_{k \in K(\bm{\beta})} J_k(\bm{\beta}), \quad J_k(\bm{\beta}) \cap J_{k'}(\bm{\beta}) = \emptyset, \forall k \neq k' \in K(\bm{\beta}), \\
j \neq j' \in J_k(\bm{\beta}) \quad \Leftrightarrow \quad v_j(\z, \y = \bm{\beta}) = v_{j'}(\z, \y = \bm{\beta}), \quad \forall \z \in \mathcal{Z}.
\end{aligned}
\end{equation}
Then we can rewrite Eq.~(\ref{eq:lipo1_linear_dependence_division}) for any $\bm{\beta} \in \mathcal{Y}_1$ as
\begin{equation}
\sum_{k \in K(\bm{\beta})} \left(\sum_{j \in J_k(\bm{\beta})}\sum_{ i: (i, j) \in S_0} \gamma_{ij} u_{i}(\y = \bm{\beta}, \x) v_j(\z, \y = \bm{\beta}) \right) = 0, \quad \forall (\x, \z) \in \mathcal{X} \times \mathcal{Z}.   
\end{equation}
Recall from Eq.~(\ref{eq:lipo1_j_index_split_def}) that $v_j(\bm{z}, \bm{y} = \bm{\beta})$ and $v_{j'}(\bm{z}, \bm{y} = \bm{\beta})$ are the same functions on $\bm{z} \in \mathcal{Z}$ iff.~$j \neq j'$ are in the same index set $J_k(\bm{\beta})$. This means if Eq.~(\ref{eq:lipo1_assumed_linear_dependence_gaussian}) holds, then for any $\bm{\beta} \in \mathcal{Y}_1$, under assumptions (a1) and (a5),
\begin{equation}
\label{eq:lipo1_zero_constraint_given_beta}
\sum_{j \in J_k(\bm{\beta})}\sum_{ i: (i, j) \in S_0} \gamma_{ij} u_{i}(\y = \bm{\beta}, \x) = 0, \quad \forall \x \in \mathbb{R}^d, \quad k \in K(\bm{\beta}).  
\end{equation}
Define $C(\bm{\beta}) = \min_k |J_k(\bm{\beta})|$ the minimum cardinality count for $j$ indices in the $J_k(\bm{\beta})$ subsets. Choose $\bm{\beta}^* \in \arg\min_{\bm{\beta} \in \textcolor{black}{\mathcal{Y}_1}} C(\bm{\beta})$:
\begin{itemize}
    \item[1.] We have $C(\bm{\beta}^*) < |J_0|$ and $|K(\bm{\beta}^*)| \geq 2$. Otherwise for all $j \neq j' \in J_0$ we have $v_j(\z, \y = \bm{\beta}) = v_{j'}(\z, \y = \bm{\beta})$ for all $\z \in \mathcal{Z}$ and $\bm{\beta} \in \textcolor{black}{\mathcal{Y}_1}$, so that they are linearly dependent on $(\z, \y) \in \mathcal{Z} \times \textcolor{black}{\mathcal{Y}_1}$, a contradiction to assumption (a4) by setting $\mathcal{Y}' = \textcolor{black}{\mathcal{Y}_1}$.
    \item[2.] Now assume $|J_{1}(\bm{\beta}^*)| = C(\bm{\beta}^*)$ w.l.o.g.. \textcolor{black}{From assumption (a5), we know that for any $j \in J_1(\bm{\beta}^*)$ and $j' \in J_0 \backslash J_{1}(\bm{\beta}^*)$, $v_j(\z, \y = \bm{\beta}) = v_{j'}(\z, \y = \bm{\beta})$ only on zero measure subset of $\mathcal{Z}$ at most. Then as $|J_0| < +\infty$ and $\mathcal{Z} \subset \mathbb{R}^{d_z}$ has nonzero measure, there exist $\z_0 \in \mathcal{Z}$ and $\delta > 0$} such that 
    $$|v_j(\z = \z_0, \y = \bm{\beta}^*) - v_{j'}(\z = \z_0, \y = \bm{\beta}^*)| \geq \delta, \quad \forall j \in J_{1}(\bm{\beta}^*), \forall j' \in J_0 \backslash J_{1}(\bm{\beta}^*).$$
    Under assumption (a6), there exists $\epsilon_j > 0$ such that we can construct an $\epsilon$-ball $B(\bm{\beta}^*, \epsilon_j)$ using $\ell_2$-norm, such that 
    $$|v_j(z = \z_0, \y = \bm{\beta}^*) - v_j(\z = \z_0, \y = \bm{\beta})| \leq \delta/3, \quad \forall \bm{\beta} \in B(\bm{\beta}^*, \epsilon_j).$$
Choosing a suitable $0 < \epsilon \leq \min_{j \in J_0} \epsilon_j$ (note that $\min_{j \in J_0} \epsilon_j > 0$ as $|J_0| < +\infty$) and constructing an $\ell_2$-norm-based $\epsilon$-ball $B(\bm{\beta}^*, \epsilon) \subset \textcolor{black}{\mathcal{Y}_1}$, we have for all $j \in J_{1}(\bm{\beta}^*), j' \in J_0 \backslash J_{1}(\bm{\beta}^*)$, $j' \notin J_1(\bm{\beta})$ for all $\bm{\beta} \in B(\bm{\beta}^*, \epsilon)$ due to
    $$|v_j(\z = \z_0, \y = \bm{\beta}) - v_{j'}(\z = \z_0, \y = \bm{\beta})| \geq \delta/3, \quad \forall \bm{\beta} \in B(\bm{\beta}^*, \epsilon).$$
    So this means for the split $\{ J_k(\bm{\beta}) \}$ of any $\bm{\beta} \in B(\bm{\beta}^*, \epsilon)$, we have $ J_1(\bm{\beta}) \subset J_1(\bm{\beta}^*)$ 
    and therefore $ |J_1(\bm{\beta})| \leq |J_1(\bm{\beta}^*)|$. Now by definition of $\bm{\beta}^* \in \arg\min_{\bm{\beta} \in \mathcal{Y}} C(\bm{\beta})$ and $|J_{1}(\bm{\beta}^*)| = C(\bm{\beta}^*)$, we have $J_1(\bm{\beta}) = J_1(\bm{\beta}^*)$ for all $\bm{\beta} \in B(\bm{\beta}^*, \epsilon)$. 
    \item[3.] One can show that $|J_{1}(\bm{\beta}^*)| = 1$, otherwise by definition of the split (Eq.~(\ref{eq:lipo1_j_index_split_def})) and the above point, there exists $j \neq j' \in J_{1}(\bm{\beta}^*)$ such that $v_j(\z, \y = \bm{\beta}) = v_{j'}(\z, \y = \bm{\beta})$ for all $\z \in \mathcal{Z}$ and $\bm{\beta} \in B(\bm{\beta}^*, \epsilon)$, a contradiction to assumption (a4) by setting $\mathcal{Y}' = B(\bm{\beta}^*, \epsilon)$. 
    Now assume that $j \in J_1(\bm{\beta}^*)$ is the only index in the subset, then the fact proved in the above point that $J_1(\bm{\beta}) = J_1(\bm{\beta}^*)$ for all $\bm{\beta} \in B(\bm{\beta}^*, \epsilon)$ means
    \begin{equation*}
    \sum_{ i: (i, j) \in S_0} \gamma_{ij} u_{i}(\y = \bm{\beta}, \x) = 0, \quad \forall \x \in \mathcal{X}, \quad \forall \bm{\beta} \in B(\bm{\beta}^*, \epsilon),
    \end{equation*}
    again a contradiction to assumption (a3) by setting $\mathcal{Y} = B(\bm{\beta}^*, \epsilon)$. 
\end{itemize}
The above 3 points indicate that Eq.~(\ref{eq:lipo1_zero_constraint_given_beta}) cannot hold for all $\bm{\beta} \in \mathcal{Y}_1$ (and therefore for all $\bm{\beta} \in \mathcal{Y}$) under assumptions (a3) - (a6), therefore a contradiction is reached.
\end{proof}

The proof shows that under (a1)–(a4), linear dependence could in principle occur for every value of the overlapping variable $\mathbf y$, which then requires additional assumptions (a5)–(a6).

\paragraph{Why does LIPO-1 not solve the general case $M>1$?}
As noted in Section \ref{sec:msm_theory}, LIPO-1 applies directly when $M=1$ and, more generally, when $T=M+1$. For given initial and transition families, we set $$u_i(\y,\x):=p_a(\z_{1}, \dots,\z_M), \quad \x:=\emptyset,\quad \y:=(\z_1,\dots,\z_M),\quad i=a,$$ and $$v_j(\z,\y):=p_b(\z_{M+1}\mid \z_{M},\dots,\z_1), \quad \z:=\z_{M+1}, \quad \y:=(\z_1,\dots,\z_M), \quad j=b,$$ 
so that the overlapping variable is $\y=(\z_1,\dots,\z_M)$ and the assumptions (a1)–(a6) are satisfied.
For $T>M+1$, this approach fails because the variable $\z$ in one step must become $\y$ in the next step to apply LIPO-1. For example, with $T=M+2$, we require $v'(\z',\y')$ to satisfy (a4-a6). Therefore we define
$$v'_j(\z',\y'):=p_b(\z_{M+2}\mid \z_{M+1},\dots,\z_2),\quad  \y':=(\z_2,\dots,\z_{M+1}), \quad \z':=\z_{M+2}.$$
However, from the previous step we have $\z:=\z_{M+1}$, and the only $u'_i(\y,\x)$ that preserves (a3) defines
$$u'_i(\y',\x') := p_a(\z_{1}, \dots,\z_M)p_b(\z_{M+1}\mid \z_{M},\dots,\z_1), \quad \x':=(\z_1,\dots,\z_M), \quad \y':=\z_{M+1}.$$
We note this $\y'$ is incompatible with the required $(\z_2,\dots,\z_{M+1})$ in $v_j'(\z',\y')$, therefore LIPO-1 cannot be used. To ensure $\y'=\z$ we can define $v_j(\z,\y)$ as a product over $M$ consecutive transitions: \begin{multline*}
    v_j(\z,\y):=\prod_{\ell=1}^M p_{b_\ell}(\z_{t+\ell} \mid \z_{t-M+\ell:t-1+\ell}), \\ \y:=(\z_{t-M+1},\dots, \z_{t}),\ \z:=(\z_{t+1}, \dots, \z_{t+M}),\ j := (b_1, \dots, b_M).
\end{multline*} With this definition, the overlap variables now satisfy $\y'=\z$, so the product family has the structure required to apply LIPO-1 sequentially. The remaining task is to verify that this product family still satisfies assumptions (a1)–(a2) and (a4)–(a6).

\subsubsection{Extending assumptions on product families}\label{app:extending_assumptions}

\noindent\textbf{Notation.} 
In the main text and above, we write the coordinate variables in ascending order. However, for better readability, in this section we write variables in descending order, $\z_M, \dots, \z_1$ and $\y_M, \dots \y_1$, to mirror the product family construction. We also use the following letters to denote distinct notions in our proofs: $r$ denotes the number of consecutive products, $\ell$ denotes the product factor index, and $q$ denotes the class label in the split (used later in LIPO-M, Lemma~\ref{lemma:absolutely_magical_lemma}). We also remove $d_y,d_z$ (used in LIPO-1), and instead we use $d$ to denote the dimensionality of a single variable $\y_i,i\in[M]$ or $\z$.

Below we explore assumptions on nonparametric families with $M+1$ variables (aligned with some transition distribution $p(\z_t\mid\z_{t-1}, \dots, \z_{t-M}, s_t)$, such that the product of $M$ consecutive variables satisfies (a1), (a2), and (a4-a6). This would allow us to use LIPO-1 (or an equivalent extension) to prove linear independence under finite mixtures of the trajectory family $\jointfamily$. We note assumption (a2) is redundant, as it holds for $U_I$ and $V_J$ if (a3) and (a4) hold, respectively. Therefore, we remove (a2) from our theoretical analysis for simplicity. Given a family $\mathcal{V}_J=\{v_j(\z,\y_M,\dots,\y_1)| j\in J\}$ with variables $(\z,\y_M,\dots,\y_1)$ defined on $\times^{M+1} \R^{d}$, we provide the assumption modifications following the enumeration presented in LIPO-1 (Lemma \ref{lemma:linear_independence_two_nonlinear_gaussians}). We keep (a3) unchanged as we will not modify the family $\mathcal{U}_I$.
\begin{enumerate}[leftmargin=2.25cm]
\item[(a1) $\rightarrow$ (b1)] $v_j(\z,\y_M,\dots,\y_1) > 0$ for all $j \in J, (\z,\y_M,\dots,\y_1) \in (\times^{M+1} \R^{d})$.
\item[(a4) $\rightarrow$ (b4)] There exists a full-measure set $\mathcal{Y}\subset (\times^M \R^d)$ (i.e., $\mu((\times^M \mathbb{R}^d)\setminus \mathcal Y)=0$) such that for every  nonzero measure sets $\mathcal{Y}' \subset \mathcal{Y}$, and $\mathcal{Z} \subset \R^{d}$, $\mathcal{V}_J$ contains linearly independent functions \textcolor{black}{under finite mixtures} on $(\z,\y_M,\dots,\y_1) \in \mathcal{Z} \times \mathcal{Y}'$;
\item[(a5) $\rightarrow$ (b5)] For any $(\bm{\beta}_\ell,\dots,\bm{\beta}_1) \in ( \times^\ell \R^{d})$, any $\ell$ with $1\leq \ell \leq M$, any nonzero measure subsets $\mathcal{Z} \subset \mathbb{R}^{d}$, $\mathcal{Y}\subset (\times^{M-\ell}\R^{d})$, and any subset $J_0 \subset J$ \textcolor{black}{such that $|J_0| < +\infty$}: $\{v_j(\z,\y_M,\dots, \y_{\ell+1},\y_\ell=\bm{\beta}_\ell,\dots, \y_1=\bm{\beta}_1) | j \in J_0 \}$ contains linearly dependent functions on $(\z, \y_M,\dots,\y_{\ell+1}) \in \mathcal{Z}\times\mathcal{Y}$ only if $ \exists \ j \neq j' \in J_0$ such that $v_j(\z,\y_M,\dots, \y_{\ell+1},\y_\ell=\bm{\beta}_\ell,\dots, \y_1=\bm{\beta}_1) = v_{j'}(\z,\y_M,\dots, \y_{\ell+1},\y_\ell=\bm{\beta}_\ell,\dots, \y_1=\bm{\beta}_1)$ for all $(\z, \y_M,\dots,\y_{\ell+1}) \in \times^{M-\ell+1}\mathbb{R}^{d}$.
\item[(a6) $\rightarrow$ (b6)] For any $j \in J$, $v_j(\z,\y_M,\dots, \y_1)$ is continuous in $(\y_M,\dots,\y_1) \in ( \times^M \R^{d})$.
\end{enumerate}
Contrary to the modifications of (a1) and (a6), we note that the extension of (a4) and (a5) to $M$ consecutive products requires them to be strengthened respectively to (b4) and (b5). For (b4), the nonzero measure subset $\mathcal{Y}$ must now have full measure. This differs from the case $M=1$ where $\mathcal{Y}$ can have any nonzero measure, and the intuition behind this requirement can be explained from the appearance of conditioned-conditioned overlaps for $M>1$ (See Eq.~(\ref{eq:joint_product_M})). For (b5), linear dependence only implied by repeating functions for fixed $\y$ is strengthened, and now, this must be satisfied for repeating functions for certain fixed subgroups of variables from $\y_M, \dots, \y_1$. 

We provide the following definitions which will be used in the results below.

\begin{definition}[Projection onto coordinates]\label{def:set-projection}
Let $\mathcal{A} \subseteq (\times^r \mathbb{R}^d)$ and $I = \{i_1, \dots, i_\ell\} \subset [r]$, with $1 \le \ell \le r$. 
The projection of $\mathcal{A}$ onto the coordinates in $I$ is
\[
\pi_I(\mathcal{A}) := \{ (\x_i)_{i\in I} \in (\times^\ell \mathbb{R}^d) \;|\; (\x_1, \dots, \x_r) \in \mathcal{A} \} \subseteq (\times^\ell \mathbb{R}^d).
\]
\end{definition}
\begin{definition}[Conditioned projection]\label{def:reverse-projection}
Let $\mathcal A \subseteq (\times^r \mathbb R^d)$ with variables $(\x_1,\dots,\x_r)$. 
Given $I=\{i_1,\dots,i_\ell\}\subset[r]$ and assignments $(\tilde{\x}_{i_1},\dots,\tilde{\x}_{i_\ell})$, define
\begin{multline*}
\mathcal A\big|_{(\x_{i_1}=\tilde{\x}_{i_1},\dots,\mathbf x_{i_\ell}=\tilde{\x}_{i_\ell})}
:= \Big\{ (\x_j)_{j\notin I} \in (\times^{r-\ell}\mathbb R^d) : \\
(\y_1,\dots,\y_r)\in\mathcal A,\ 
y_i=\tilde{\x}_i\ \forall i\in I,\ 
y_j=\x_j\ \forall j\notin I
\Big\}.
\end{multline*}
\end{definition}

Assumption (b4) extends to consecutive product function families as follows.

\begin{lemma}[Extension of (b4)]
\label{lemma:extension_b4}
Assume a scalar function family $\mathcal{V}_J=\{v_j(\z,\y_M,\dots,\y_1)\\| j\in J\}$
defined on $(\z,\y_M,\dots,\y_1) \in \times^{M+1} \R^{d}$, such that it satisfies (b1), (b4), (b5), and (b6).

Then, for any $r$ with $2\leq r \leq M$, there exists a full-measure set $\mathcal{Y}\subset (\times^M \R^d)$ such that for every nonzero measure sets $\mathcal{Y}' \subset \mathcal{Y}$, and $\mathcal{Z}_r \subset (\times^r \R^{d})$, the product family 
$$\otimes^r \mathcal{V}_J := \{v_{j_r}(\z_r,\dots,\z_1,\y_M,\dots, \y_{r})\dots v_{j_1}(\z_1,\y_M,\dots,\y_1)|(j_r,\dots,j_1)\in \times^r J\},$$ 
is linearly independent under finite mixtures on $(\z_r,\dots,\z_1,\y_M,\dots,\y_1)\in \mathcal{Z}_r \times \mathcal{Y}'$.
\end{lemma}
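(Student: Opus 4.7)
The plan is to proceed by induction on $r$. The base case $r=1$ reduces directly to (b4), taking $\mathcal{Y}_1:=\mathcal{Y}^{(b4)}$ (the full-measure set guaranteed by (b4)). For the inductive step, assuming the conclusion for $r-1$ with full-measure set $\mathcal{Y}_{r-1}$, I would set $\mathcal{Y}_r := \mathcal{Y}_{r-1}\cap\mathcal{Y}^{(b4)}$ (still full-measure) and apply LIPO-1 (Lemma~\ref{lemma:linear_independence_two_nonlinear_gaussians}) to the factorisation $\otimes^r\mathcal{V}_J = (\otimes^{r-1}\mathcal{V}_J)\otimes\mathcal{V}_J$, peeling off the outermost transition $v_{j_r}$. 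In the LIPO-1 framing I would identify $\z:=\z_r$, the overlap $\y:=(\z_{r-1},\dots,\z_1,\y_M,\dots,\y_r)\in\times^M\mathbb{R}^d$, and the $\mathcal{U}_I$-side non-overlap $\x:=(\y_{r-1},\dots,\y_1)\in\times^{r-1}\mathbb{R}^d$.

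LIPO-1's hypotheses for the inner family $\mathcal{V}_J$ carry over directly: (a1) follows from (b1) (a product of positive functions is positive); (a4) follows from (b4) with $\mathcal{Y}=\mathcal{Y}^{(b4)}$; (a5) follows from (b5) specialised to $\ell=M$ (where all conditioning variables are fixed to a $\bm{\beta}$); and (a6) is (b6). The critical step is verifying (a3) for the outer family $\mathcal{U}_I = \otimes^{r-1}\mathcal{V}_J$ under the LIPO-1 variable split, i.e.\ linear independence of $\otimes^{r-1}\mathcal{V}_J$ on $\mathcal{Y}^*\times(\times^{r-1}\mathbb{R}^d)$ for every positive-measure $\mathcal{Y}^*\subset\times^M\mathbb{R}^d$ of the overlap variables. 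Once (a3) is in hand, LIPO-1 — after a routine refinement of its proof to let $\y$ vary over a prescribed positive-measure subset of $\mathcal{Y}^{(b4)}$ rather than all of $\mathbb{R}^{d_y}$ (by intersecting its internally defined set $\mathcal{Y}_1$ with $\mathcal{Y}'$) — yields linear independence of $\otimes^r\mathcal{V}_J$ on any $\mathcal{Z}_r\times\mathcal{Y}'$ with $\mathcal{Y}'\subset\mathcal{Y}_r$ as desired.

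The main obstacle is the measure-theoretic reconciliation required for (a3): the inductive hypothesis supplies linear independence on product sets $\mathcal{Z}_{r-1}\times\mathcal{Y}''$ in the \emph{natural} split $(\z_{r-1},\dots,\z_1)\times(\y_M,\dots,\y_1)$, whereas LIPO-1 needs it on $\mathcal{Y}^*\times(\times^{r-1}\mathbb{R}^d)$ where the overlap $\mathcal{Y}^*$ mixes $\z$- and $\y$-coordinates. I would bridge this by combining Fubini with a Lebesgue-density / measurable-rectangle extraction: any positive-measure $\mathcal{Y}^*\subset\times^M\mathbb{R}^d$ contains a positive-measure product subset $\mathcal{Z}_{r-1}^0\times\mathcal{Y}_{M,r}^0$; setting $\mathcal{Y}''^0:=(\mathcal{Y}_{M,r}^0\times(\times^{r-1}\mathbb{R}^d))\cap\mathcal{Y}_{r-1}$ preserves positive measure because $\mathcal{Y}_{r-1}$ has full measure, so the inductive hypothesis gives linear independence on $\mathcal{Z}_{r-1}^0\times\mathcal{Y}''^0$; and monotonicity of linear independence (independence on a subset implies independence on every superset) then transfers this to the larger set $\mathcal{Y}^*\times(\times^{r-1}\mathbb{R}^d)$, establishing (a3) and closing the induction.
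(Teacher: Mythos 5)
Your overall architecture coincides with the paper's: induct on $r$, peel off the outermost factor $v_{j_r}$, treat $(\z_{r-1},\dots,\z_1,\y_M,\dots,\y_r)$ as the LIPO-1 overlap, and let the induction hypothesis play the role of (a3). The paper, however, does not invoke Lemma~\ref{lemma:linear_independence_two_nonlinear_gaussians} as a black box. It re-runs the contradiction argument with the overlap confined to the product of coordinate projections $\pi_{\{r-1,\dots,1\}}(\mathcal{Z}_r)\times\pi_{\{M,\dots,r\}}(\mathcal{Y}')$ and the remaining coordinates confined to conditioned projections, precisely so that every set on which a linear dependence is ever asserted stays in the rectangle form $\mathcal{Z}_{r-1}\times\mathcal{Y}''$ demanded by the induction hypothesis (b4*). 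Your observation that LIPO-1 needs surgery (overlap restricted to a positive-measure subset of the full-measure $\mathcal{Y}$ from (b4); $\x$ restricted to slices) is right, but this surgery is essentially the reason the paper reproves the lemma rather than citing it, so calling it ``routine'' understates the work.

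The genuine gap is your bridge for (a3). You need linear independence of $\otimes^{r-1}\mathcal{V}_J$ on $\mathcal{Y}^*\times(\times^{r-1}\R^d)$ for \emph{every} positive-measure $\mathcal{Y}^*$ in the mixed coordinates, and you propose to obtain it by extracting a positive-measure measurable rectangle $\mathcal{Z}^0_{r-1}\times\mathcal{Y}^0_{M,r}\subset\mathcal{Y}^*$ and then invoking monotonicity. But a positive-measure subset of a product space need not contain any measurable rectangle with both factors of positive measure. For instance (take $d=1$, two blocks of coordinates), let $E=\{(x,y)\in[0,1]^2 : |x-y-q_n|\ge\epsilon_n\ \forall n\}$ with $\{q_n\}$ an enumeration of the rationals and $\sum_n\epsilon_n$ small: $E$ has measure arbitrarily close to $1$, yet if $A\times B\subseteq E$ with both factors of positive measure, then $A-B$ contains an interval by the Steinhaus theorem and therefore meets the dense open set $\bigcup_n(q_n-\epsilon_n,q_n+\epsilon_n)$, a contradiction. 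So the rectangle extraction fails, monotonicity has nothing to act on, and (a3) is not established; your Lebesgue-density/Fubini remarks do not repair this, since density points only give that the set \emph{nearly} fills small cubes, not that it contains a product set. To close the gap you should do what the paper does: never quantify over arbitrary positive-measure overlap sets, but keep the overlap ranging over the product of projections of $\mathcal{Z}_r$ and $\mathcal{Y}'$ (a rectangle by construction), show the exceptional set where the outer coefficient functions vanish is null using (b4) together with the full measure of $\mathcal{Y}$, and land the final contradiction on an $\epsilon$-ball intersected with $\mathcal{Y}$, which is of the product form required to contradict (b4*).
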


\begin{proof}
    We prove the above statement by induction, where we start with the base case ($r=2$) as follows. The proof strategy for the base case largely follows the LIPO-1 proof.

    \paragraph{Case $r=2$.} 

    Assume the statement is false. Then, for every set $\mathcal{Y}\subset (\times^M \R^d)$ with $\mu(\mathcal{Y}) = \mu(\times^M \R^d)$, there exist nonzero measure sets $\mathcal{Z}_2 \subset (\R^{d} \times \R^{d})$ and $\mathcal{Y}'\subset \mathcal{Y}$ such that $\mathcal{V}_J \otimes \mathcal{V}_J$ contains linearly dependent functions. We fix $\mathcal{Y}$ from (b4) and consider any choice of $\mathcal{Y'}$ and $\mathcal{Z}_2$. This means there exists an index set $S_0\subset (J\times J), |S_0|< +\infty$ and the corresponding nonzero coefficients $\{\gamma_{j_1 j_2}\in\R , (j_1,j_2)\in S_0\}$, such that the following linear dependence condition hold:
    \begin{multline}\label{eq:linear_independence_b4_extension}
        \sum_{(j_1,j_2)\in S_0} \gamma_{j_1 j_2} v_{j_2}(\z_2 , \z_1 , \y_M, \dots, \y_2 ) v_{j_1}(\z_1 , \y_M, \dots , \y_1 ) = 0, \\ \forall (\z_2 ,\z_1 ,\y_M, \dots ,\y_1)\in (\mathcal{Z}_2 \times \mathcal{Y}'),
        \end{multline}
The choices of $S_0$ and $\gamma_{j_1 j_2}$ values might depend on the choice of $\mathcal{Z}_2$ and $\mathcal{Y}'$, and $\mathcal{Y}'$ depends on the choice of full-measure $\mathcal{Y}$. 

From (b1), the set $S_0$ contains at least two different indices $(j_1,j_2)$ and $(j_1',j_2')$, with $j_2\neq j_2'$. To see this, assume we have $j_2=j_2'$ for all possible $(j_1,j_2), (j_1',j_2') \in S_0$. Then we can factor out $v_{j_2}(\z_2 , \z_1 , \y_M, \dots, \y_2 )$ and rewrite the linear dependence equation as
    \begin{multline}\label{eq:linear_independence_2_js_b4_extension}
        v_{j_2}(\z_2 , \z_1 , \y_M, \dots, \y_2 ) \sum_{j_1:(j_1,j_2)\in S_0} \gamma_{j_1 j_2}  v_{j_1}(\z_1 , \y_M, \dots , \y_1 ) = 0, \\ \forall (\z_2 ,\z_1 ,\y_M, \dots ,\y_1)\in (\mathcal{Z}_2 \times \mathcal{Y}'),
        \end{multline}
which contradicts (b4) since $\mathcal{Y}'\subset\mathcal{Y}$.

Now we define $J_0:=\{j_2\in J | \exists (j_1,j_2)\in S_0\}$, $|J_0|< +\infty$. Then, linear dependence can occur for any $(\bm{\beta}_M,\dots,\bm{\beta}_1) \in (\pi_{\{1\}}(\mathcal{Z}_2) \times \pi_{\{M,\dots,2\}}(\mathcal{Y}'))$, where $\pi$ denotes set projections defined in Def.~\ref{def:set-projection}.
 \begin{multline}
 \sum_{j_2\in J_0}\Bigg( \sum_{j_1:(j_1,j_2)\in S_0} \gamma_{j_1 j_2}  v_{j_1}(\z_1=\bm{\beta}_{M} , \y_M =\bm{\beta}_{M-1}, \dots , \y_2=\bm{\beta}_{1},\y_1 ) \Bigg) \\ v_{j_2}(\z_2 , \z_1 = \bm{\beta}_M , \y_M=\bm{\beta}_{M-1}, \dots, \y_2 = \bm{\beta}_1 )= 0, \\  \forall (\z_2 ,\y_1)\in (\mathcal{Z}' \times \mathcal{Y}'_1).
 \end{multline}
Where both $\mathcal{Z}'=\mathcal{Z}_{2}|_{(\z_1=\bm{\beta}_M)}$ and $\mathcal{Y}'_1 = \mathcal{Y}'|_{(\y_M=\bm{\beta}_{M-1}, \dots, \y_2
=\bm{\beta}_1)}$ are conditioned projections (Def.~\ref{def:reverse-projection}) of $\mathcal{Z}_2$ and $\mathcal{Y}'$ respectively and depend on $(\bm{\beta}_M, \dots, \bm{\beta}_1)$. These conditioned projections are nonempty by construction since 
$(\bm{\beta}_M,\dots,\bm{\beta}_1)\in (\pi_{\{1\}}(\mathcal Z_2)\times \pi_{\{M,\dots,2\}}(\mathcal Y'))$. Now, we define the following set.
\begin{multline}
\mathcal{D}_0 := \Bigg\{(\bm{\beta}_M,\dots,\bm{\beta}_1) \in \Big(\pi_{\{1\}}(\mathcal{Z}_2) \times \pi_{\{M,\dots,2\}}(\mathcal{Y}')\Big)\bigg| \\ \sum_{j_1:(j_1,j_2)\in S_0}\gamma_{j_1,j_2}v_{j_1}(\z_1=\bm{\beta}_{M} , \y_M =\bm{\beta}_{M-1}, \dots , \y_2=\bm{\beta}_{1},\y_1 ) = 0, \ \forall\y_1\in\mathcal{Y}'_1\Bigg\}.
\end{multline}
From (b4) we require linear independence for every $\mathcal{Y}' \subset \mathcal{Y}$, where $\mathcal{Y}$ has full measure; and every $\mathcal{Z}'\subset \R^d$. Therefore, linear dependence as described above can happen at most in a measure zero set as $\mathcal{Y}_0 :=(\times^M \R^d) \setminus\mathcal{Y}$ has zero measure in $(\times^M \R^d)$. Therefore, the set 
\[
\mathcal{D}_0 \subset \pi_{\{1\}}(\mathcal{Z}_2) \times \pi_{\{M,\dots,2\}}(\mathcal{Y}_0)
\]
also has measure zero due to \(\mathcal{Y}_0\) having measure zero.

Now define $\mathcal{D}:= (\pi_{\{1\}}(\mathcal{Z}_2) \times \pi_{\{M,\dots,2\}}(\mathcal{Y}') ) \setminus \mathcal{D}_0$ which is nonzero measured. From assumption (b1), we have $\forall (\z_1=\bm{\beta}_{M} , \y_M =\bm{\beta}_{M-1}, \dots , \y_2=\bm{\beta}_{1})\in \mathcal{D}$, there exists $\y_1\in \mathcal{Y}'|_{(\y_M=\bm{\beta}_{M-1}, \dots, \y_2
=\bm{\beta}_1)}$ such that 
$$\tilde{\gamma}_{j_2}(\bm{\beta}_M,\dots,\bm{\beta}_1):=\sum_{j_1:(j_1,j_2)\in S_0}\gamma_{j_1j_2}v_{j_1}(\z_1=\bm{\beta}_{M} , \y_M =\bm{\beta}_{M-1}, \dots , \y_2=\bm{\beta}_{1},\y_1)\neq 0$$for at least two different $j_2\in J_0$ as shown in Eq.~(\ref{eq:linear_independence_2_js_b4_extension}). This implies linear dependence of $\{v_{j_2}(\z_2 , \z_1 = \bm{\beta}_M , \y_M=\bm{\beta}_{M-1}, \dots, \y_2 = \bm{\beta}_1 )\}$ on $\z_2\in \mathcal{Z}_{2}|_{(\z_1=\bm{\beta}_M)}$, $\forall (\z_1 = \bm{\beta}_M , \y_M=\bm{\beta}_{M-1}, \dots, \y_2 = \bm{\beta}_1 )\in \mathcal{D}$ by referring back to Eq.~(\ref{eq:linear_independence_b4_extension}).

Under assumption (b5), we have linear dependence from repeating functions. Therefore, we can split $J_0$ into subsets indexed by $k\in K(\bm{\beta}_M,\dots,\bm{\beta}_1)$, such that the $v_{j}$ functions within each subset $J_k(\bm{\beta}_M,\dots,\bm{\beta}_1)$ are equal as described in Eq.~\eqref{eq:b4_extension_splits_2}:
\begin{multline}
\label{eq:b4_extension_splits_1}
J_0 = \cup_{k \in K(\bm{\beta}_M,\dots,\bm{\beta}_1)} J_k(\bm{\beta}_M,\dots,\bm{\beta}_1), \text{  }
J_k(\bm{\beta}_M,\dots,\bm{\beta}_1) \cap J_{k'}(\bm{\beta}_M,\dots,\bm{\beta}_1) = \emptyset, \\
\forall k \neq k' \in K(\bm{\beta}_M,\dots,\bm{\beta}_1),
\end{multline}
\begin{multline}
\label{eq:b4_extension_splits_2}
 j \neq j' \in J_k(\bm{\beta}_M,\dots,\bm{\beta}_1) \quad \Leftrightarrow \quad \\
v_{j}(\z_2 , \z_1 = \bm{\beta}_M , \dots, \y_2 = \bm{\beta}_1 ) = v_{j'}(\z_2 , \z_1 = \bm{\beta}_M , \dots, \y_2 = \bm{\beta}_1 ), \quad \forall \z_2 \in \mathcal{Z}_{2}|_{(\z_1=\bm{\beta}_M)}.
\end{multline}

Then we can rewrite the linear dependence condition for any $(\bm{\beta}_M,\dots,\bm{\beta}_1) \in \mathcal{D}$ as
    \begin{multline}
        \sum_{k \in K(\bm{\beta}_M,\dots,\bm{\beta}_1)} \Bigg( \sum_{j_2 \in J_k(\bm{\beta}_M,\dots,\bm{\beta}_1)}  \sum_{j_1 : (j_1,j_2)\in S_0} \gamma_{j_1 j_2}  v_{j_1}(\z_1=\bm{\beta}_{M} , \y_M =\bm{\beta}_{M-1}, \dots , \y_2=\bm{\beta}_{1},\y_1 ) \\
         v_{j_2}(\z_2 , \z_1 = \bm{\beta}_M , \dots, \y_2 = \bm{\beta}_1 ) \Bigg)= 0, \quad  \forall (\z_2, \y_1)\in \mathcal{Z}_{2}|_{(\z_1=\bm{\beta}_M)} \times \mathcal{Y}'|_{(\y_M=\bm{\beta}_{M-1}, \dots, \y_2 =\bm{\beta}_1)}
        \end{multline}
Recall from that $v_{j}(\z_2 , \z_1 = \bm{\beta}_M , \dots, \y_2 = \bm{\beta}_1 )$ and $v_{j'}(\z_2 , \z_1 = \bm{\beta}_M , \dots, \y_2 = \bm{\beta}_1 )$ are the same functions on $\z_2 \in \mathcal{Z}_{2}|_{(\z_1=\bm{\beta}_M)} $ iff.~$j \neq j'$ are in the same index set $J_k(\bm{\beta}_M,\dots,\bm{\beta}_1)$. This means if linear independence holds, then for any $(\bm{\beta}_M,\dots,\bm{\beta}_1)\in \mathcal{D}$, under assumptions (b1) and (b5),
\begin{multline}
\sum_{j_2 \in J_k(\bm{\beta}_M,\dots,\bm{\beta}_1)}  \sum_{j_1 : (j_1,j_2)\in S_0} \gamma_{j_1 j_2}  v_{j_1}(\z_1=\bm{\beta}_{M} , \y_M =\bm{\beta}_{M-1}, \dots , \y_2=\bm{\beta}_{1},\y_1 ) = 0, \\
\quad \forall  \y_1\in \mathcal{Y}'|_{(\y_M=\bm{\beta}_{M-1}, \dots, \y_2 =\bm{\beta}_1)}, \quad k \in K(\bm{\beta}_{M-1}, \dots, \bm{\beta}_1 ).  
\end{multline}

Define $C(\bm{\beta}_M,\dots,\bm{\beta}_1) = \min_k |J_k(\bm{\beta}_M,\dots,\bm{\beta}_1)|$ the minimum cardinality count for $j_2$ indices in the $J_k(\bm{\beta}_M,\dots,\bm{\beta}_1)$ subsets. 

Choose $(\bm{\beta}_M^*,\dots,\bm{\beta}_1^*) \in \arg\min_{(\bm{\beta}_M,\dots,\bm{\beta}_1) \in \mathcal{D}} C(\bm{\beta}_M,\dots,\bm{\beta}_1) $:
\begin{itemize}
    \item[1.] We have $C(\bm{\beta}_M^*,\dots,\bm{\beta}_1^*)  < |J_0|$ and $|K(\bm{\beta}_M^*,\dots,\bm{\beta}_1^*)| \geq 2$. Otherwise for all $j \neq j' \in J_0$ we have $v_{j}(\z_2 , \z_1 = \bm{\beta}_M , \dots, \y_2 = \bm{\beta}_1 ) = v_{j'}(\z_2 , \z_1 = \bm{\beta}_M , \dots, \y_2 = \bm{\beta}_1 )$ for all $\z_2 \in \mathcal{Z}_{2}|_{(\z_1=\bm{\beta}_M)}$ and $(\bm{\beta}_M,\dots,\bm{\beta}_1) \in\mathcal{D})$, so that they are linearly dependent on $(\z_2, \z_1, \y_M, \dots, \y_2) \in \mathcal{Z}' \times \mathcal{D}$ for some nonzero measure $\mathcal{Z}' \subseteq \pi_{\{2\}}(\mathcal{Z}_2)$, a contradiction to assumption (b4) by setting $\mathcal{Y}' = \mathcal{D}$, which holds no matter the choice of $\mathcal{Z}_2$ as $\mathcal{Y}'\subset \mathcal{Y}$ where $\mathcal{Y}$ has full measure.
    \item[2.] Now assume $|J_{1}(\bm{\beta}_M^*,\dots,\bm{\beta}_1^*)| = C(\bm{\beta}_M^*,\dots,\bm{\beta}_1^*)$ w.l.o.g.. From assumption (b5), we know that for any $j \in J_1(\bm{\beta}_M^*,\dots,\bm{\beta}_1^*)$ and $j' \in J_0 \backslash J_{1}(\bm{\beta}_M^*,\dots,\bm{\beta}_1^*)$, $v_{j}(\z_2 , \z_1 = \bm{\beta}_M , \dots, \y_2 = \bm{\beta}_1 ) = v_{j'}(\z_2 , \z_1 = \bm{\beta}_M , \dots, \y_2 = \bm{\beta}_1 )$ only on a zero measure subset of $\mathcal{Z}_{2}|_{(\z_1=\bm{\beta}_M)}$ at most. Then as $|J_0| < +\infty$ and $\mathcal{Z}_{2}|_{(\z_1=\bm{\beta}_M)} \subset \mathbb{R}^{d}$ has nonzero measure, there exist $\z_0 \in \mathcal{Z}_{2}|_{(\z_1=\bm{\beta}_M)}$ and $\delta > 0$ such that 
    \begin{multline}
        |v_{j}(\z_2 = \z_0 , \z_1 = \bm{\beta}_M^* , \dots, \y_2 = \bm{\beta}_1^* ) - v_{j'}(\z_2 = \z_0 , \z_1 = \bm{\beta}_M^* , \dots, \y_2 = \bm{\beta}_1^* )| \geq \delta, \quad \\
        \forall j \in J_{1}(\bm{\beta}_M^*,\dots,\bm{\beta}_1^*), \forall j' \in J_0 \backslash J_{1}(\bm{\beta}_M^*,\dots,\bm{\beta}_1^*)
    \end{multline}
    Under assumption (b6), there exists $\epsilon_j > 0$ such that we can construct an $\epsilon$-ball $B(\bm{\beta}_M^*,\dots,\bm{\beta}_1^*, \epsilon_j)$ using $\ell_2$-norm, such that 
    \begin{multline}
     |v_{j}(\z_2 = \z_0 , \z_1 = \bm{\beta}_M^* , \dots, \y_2 = \bm{\beta}_1^* ) - v_{j}(\z_2 = \z_0 , \z_1 = \bm{\beta}_M , \dots, \y_2 = \bm{\beta}_1)| \leq \delta/3, \\
    \quad \forall (\bm{\beta}_M,\dots,\bm{\beta}_1) \in B(\bm{\beta}_M^*,\dots,\bm{\beta}_1^*, \epsilon_j).
    \end{multline}
    Choosing a suitable $0 < \epsilon \leq \min_{j \in J_0} \epsilon_j$ (note that $\min_{j \in J_0} \epsilon_j > 0$ as $|J_0| < +\infty$) and constructing an $\ell_2$-norm-based $\epsilon$-ball $B(\bm{\beta}_M^*,\dots,\bm{\beta}_1^*, \epsilon) \subset \mathcal{D}$, we have for all $j \in J_{1}(\bm{\beta}_M^*,\dots,\bm{\beta}_1^*), j' \in J_0 \backslash J_{1}(\bm{\beta}_M^*,\dots,\bm{\beta}_1^*)$, $j' \notin J_1(\bm{\beta}_M,\dots,\bm{\beta}_1)$ for all $(\bm{\beta}_M,\dots,\bm{\beta}_1) \in B(\bm{\beta}_M^*,\dots,\bm{\beta}_1^*, \epsilon)$ due to
    \begin{multline}
     |v_{j}(\z_2 = \z_0 , \z_1 = \bm{\beta}_M , \dots, \y_2 = \bm{\beta}_1 ) - v_{j'}(\z_2 = \z_0 , \z_1 = \bm{\beta}_M , \dots, \y_2 = \bm{\beta}_1)| \geq \delta/3,   \\
      \quad \forall (\bm{\beta}_M,\dots,\bm{\beta}_1) \in B(\bm{\beta}_M^*,\dots,\bm{\beta}_1^*, \epsilon).
    \end{multline}
    So this means for the split $\{ J_k(\bm{\beta}_M,\dots,\bm{\beta}_1) \}$ of any $(\bm{\beta}_M,\dots,\bm{\beta}_1) \in B(\bm{\beta}_M^*,\dots,\bm{\beta}_1^*, \epsilon)$, we have $ J_1(\bm{\beta}_M,\dots,\bm{\beta}_1) \subset J_1(\bm{\beta}^*)$ 
    and therefore $ |J_1(\bm{\beta}_M,\dots,\bm{\beta}_1)| \leq |J_1(\bm{\beta}_M^*,\dots,\bm{\beta}_1^*)|$. Now by definition of $(\bm{\beta}_M^*,\dots,\bm{\beta}_1^*)\in \arg\min_{(\bm{\beta}_M,\dots,\bm{\beta}_1) \in \mathcal{D}} C(\bm{\beta}_M,\dots,\bm{\beta}_1)$ and $|J_{1}(\bm{\beta}_M^*,\dots \\,\bm{\beta}_1^*)| = C(\bm{\beta}_M^*,\dots,\bm{\beta}_1^*)$, we have $J_1(\bm{\beta}_M,\dots,\bm{\beta}_1) = J_1(\bm{\beta}_M^*,\dots,\bm{\beta}_1^*)$ for all $(\bm{\beta}_M,\dots,\bm{\beta}_1) \\\in B(\bm{\beta}_M^*,\dots,\bm{\beta}_1^*, \epsilon)$. 
    \item[3.] One can show that $|J_{1}(\bm{\beta}_M^*,\dots,\bm{\beta}_1^*)| = 1$, otherwise by definition of the splits and the above point, there exists $j \neq j' \in J_{1}(\bm{\beta}_M^*,\dots,\bm{\beta}_1^*)$ such that $v_{j}(\z_2 , \z_1 = \bm{\beta}_M , \dots, \y_2 = \bm{\beta}_1 ) = v_{j'}(\z_2 , \z_1 = \bm{\beta}_M , \dots, \y_2 = \bm{\beta}_1 )$ for all $\z_2 \in \mathcal{Z}_{2}|_{(\z_1=\bm{\beta}_M)}$ and $(\bm{\beta}_M,\dots,\bm{\beta}_1) \in B(\bm{\beta}_M^*,\dots,\bm{\beta}_1^*, \epsilon)$, a contradiction to assumption (b4) by setting $\mathcal{Y}^* :=B(\bm{\beta}_M^*,\dots,\bm{\beta}_1^*, \epsilon)\cap \mathcal{Y}$. 
    Now assume that $j \in J_1(\bm{\beta}_M^*,\dots,\bm{\beta}_1^*)$ is the only index in the subset, then the fact proved in the above point that $J_1(\bm{\beta}_M,\dots,\bm{\beta}_1) = J_1(\bm{\beta}_M^*,\dots,\bm{\beta}_1^*)$ for all $(\bm{\beta}_M,\dots,\bm{\beta}_1) \in \mathcal{Y}^*$ means
    \begin{multline}
    \label{eq:b4_extension_linear_dependence_contradiction}
    \sum_{ j_1: (j_1, j_2) \in S_0} \gamma_{j_1j_2} v_{j_1}(\z_1=\bm{\beta}_{M} , \y_M =\bm{\beta}_{M-1}, \dots , \y_2=\bm{\beta}_{1},\y_1 )  = 0, \\
    \quad \forall \y_1\in \mathcal{Y}'|_{(\y_M=\bm{\beta}_{M-1}, \dots, \y_2 =\bm{\beta}_1)}, \quad \forall (\bm{\beta}_M, \dots, \bm{\beta}_1) \in \mathcal{Y}^*,
    \end{multline}
    which again contradicts (b4) as we find linear dependence at least in a nonzero measure set.
\end{itemize}

The above 3 points indicate that for any choice of $\mathcal{Z}_2$ and $\mathcal{Y}$, we find linear dependence of $\mathcal{V}_J$ in at least a nonzero measure set. This is a direct contradiction of assumption (b4), where linear independence under finite mixtures must hold for any nonzero measure subset of $\mathcal{Y}\times \R$ with full-measure $\mathcal{Y}$.

We note the assumption of $\mathcal{Y}$ having full measure in (b4), as opposed to (a4) is crucial here. Otherwise, the resulting $\epsilon$-ball could lie outside of the choice of $\mathcal{Y}$, as it is constructed from projections of $\mathcal{Z}_2\subset(\R^d\times\R^d)$ and $\mathcal{Y'}\subset\mathcal{Y}$. Therefore, we would have $\mu(B(\bm{\beta}_M^*,\dots,\bm{\beta}_1^*, \epsilon)\cap\mathcal{Y})=0$ and the above contradiction would not hold.

    \paragraph{Case $r > 2$.} Now assume the statement holds for $r-1$, and again prove the case by contradiction. Assume the statement is false for $r > 2$. Then, for every set $\mathcal{Y}\subset (\times^M \R^d)$ with $\mu(\mathcal{Y}) = \mu(\times^M \R^d)$, there exists nonzero measure sets $\mathcal{Z}_r \subset (\times^r \R^d)$ and $\mathcal{Y}'\subset \mathcal{Y}$ such that the family $\otimes^r \mathcal{V}_J$ contains linearly dependent functions. As before, we fix $\mathcal{Y}$ from (b4) and consider any choice of $\mathcal{Y}'$ and $\mathcal{Z}_r$. Then, there exists $S_0 \subset (\times^{r}J),\ |S_0| < +\infty$ such that
    \begin{multline}
    \sum_{(j_r,\dots j_1)\in S_0}\gamma_{j_r,\dots,j_1}v_{j_r}(\z_r,\dots,\z_1,\y_M,\dots, \y_{r})\dots v_{j_1}(\z_1,\y_M,\dots,\y_1) = 0, \\ \forall (\z_r,\dots ,\z_1 ,\y_M, \dots ,\y_1)\in (\mathcal{Z}_r \times \mathcal{Y}'),
    \end{multline}
    with $\{\gamma_{j_r, \dots, j_1}\in\R , (j_r,\dots,j_1)\in S_0\}$ a set of nonzero values which, as before, might depend on the choice of $\mathcal{Z}_r$ and $\mathcal{Y}'\subset\mathcal{Y}$. The previous equality can be arranged as follows
\begin{multline}\label{eq:b4_extension_linear_dependence_relation_n}
    \sum_{(j_r,\dots j_1)\in S_0}\gamma_{j_r,\dots,j_1}v_{j_r}(\z_r,\dots,\z_1,\y_M,\dots, \y_{r}) v_{j_{r-1},\dots,j_1}(\z_{r-1},\dots,\z_1,\y_M,\dots,\y_1) = 0, \\ \forall (\z_r,\dots ,\z_1 ,\y_M, \dots ,\y_1)\in (\mathcal{Z}_r \times \mathcal{Y}'),
\end{multline}
    where $v_{j_{r-1},\dots,j_1}:=v_{j_{r-1}}(\z_{r-1},\dots,\z_1,\y_M,\dots, \y_{r-1})\dots v_{j_1}(\z_1,\y_M,\dots,\y_1)$ for $j_i \in J, 1\leq i < r$ denotes the functions on the family $\otimes^{r-1} \mathcal{V}_J$ and satisfies the following: 
    \begin{enumerate}
        \item[(b1)] Positive function values: $v_{j_{r-1},\dots,j_1}(\z_{r-1},\dots,\z_1,\y_M,\dots,\y_1) > 0$ for all $(j_{r-1}, \dots, j_1) \in (\times^{r-1} J), (\z_{r-1},\dots,\z_1,\y_M,\dots,\y_1) \in (\times^{M+r-1} \R^{d})$;
    \end{enumerate}
    and from the induction hypothesis:
    \begin{enumerate}
        \item[(b4*)] There exists a full-measure set $\mathcal{Y}\subset (\times^M \R^d)$ such that for every nonzero measure sets $\mathcal{Y}' \subset \mathcal{Y}$, and $\mathcal{Z}_{r-1} \subset (\times^{r-1} \R^{d})$, the family $\otimes^{r-1} \mathcal{V}_J$ contains linearly independent functions under finite mixtures in $(\z_{r-1},\dots,\z_1,\y_M,\dots,\y_1)\in \mathcal{Z}_{r-1} \times \mathcal{Y}'$.
    \end{enumerate}
The strategy here is to reduce this case to the above base case ($n=2$), where importantly we need to show: 
\begin{enumerate}
    \item[(1)]  $S_0$ contains at least two $(j_r, \dots, j_1)$ and $(j_r',\dots, j_1')$ with $j_r \neq j_r'$; and
    \item[(2)] the overlapping variables between the product of families, $(\z_{r-1}, \dots, \z_1, \y_M, \dots, \y_r) \in (\pi_{\{r-1,\dots,1\}}(\mathcal{Z}_r) \times \pi_{\{M, \dots, r\}}(\mathcal{Y}'))$ do not cause linear dependence in nonzero measure sets for any choice of $\mathcal{Z}_r$ and $\mathcal{Y}'\subset \mathcal{Y}$.
\end{enumerate}
We can simply see (1) holds from (b1) and (b4) by following the same logic from Eq.~\eqref{eq:linear_independence_2_js_b4_extension}, where the linear dependence relation contradicts (b4) if we have $j_r=j_r'$. Therefore, we define $J_0:=\{j_r\in J | \exists (j_r, \dots,j_1)\in S_0\}$, $|J_0|< +\infty$. Then, linear dependence can occur for any $(\bm{\beta}_M,\dots,\bm{\beta}_1) \in (\pi_{\{r-1,\dots,1\}}(\mathcal{Z}_r) \times \pi_{\{M, \dots, r\}}(\mathcal{Y}'))\subset (\times^{M}\R^d)$, where $\pi$ denotes set projections defined in Def.~\ref{def:set-projection}.
 \begin{multline}
 \sum_{j_r\in J_0} v_{j_r}(\z_r, \z_{r-1}=\bm{\beta}_{M},\dots,\z_1=\bm{\beta}_{M-r+2},\y_M=\bm{\beta}_{M-r+1}\dots, \y_{r}=\bm{\beta}_1) \\ \Bigg( \sum_{(j_{r-1},\dots,j_1):(j_r,\dots,j_1)\in S_0} \gamma_{j_r,\dots, j_1} v_{j_{r-1},\dots,j_1}(\z_{r-1}=\bm{\beta}_{M},\dots,\z_1=\bm{\beta}_{M-r+2},\y_M=\bm{\beta}_{M-r+1}, \\ \dots, \y_{r}=\bm{\beta}_1, \y_{r-1},\dots,\y_1) \Bigg) = 0,  \\ \forall (\z_r ,\y_{r-1}, \dots, \y_1)\in \Big(\mathcal{Z}_{r}|_{(\z_{r-1}=\bm{\beta}_M, \dots, \z_1=\bm{\beta}_{M-r+2})} \times \mathcal{Y}'|_{(\y_M=\bm{\beta}_{M-r+1}, \dots, \y_r=\bm{\beta}_1)}\Big).
 \end{multline}
Where as in the base case, both $\mathcal{Z}_{r}|_{(\z_{r-1}=\bm{\beta}_M, \dots, \z_1=\bm{\beta}_{M-r+2})}$ and $\mathcal{Y}'|_{(\y_M=\bm{\beta}_{M-r+1}, \dots, \y_r=\bm{\beta}_1)}$ are conditioned projections (Def.~\ref{def:reverse-projection}) from $\mathcal{Z}_r$ and $\mathcal{Y}'$ respectively  and are dependent on $\bm{\beta}_M, \dots, \bm{\beta}_1$. As before, these sets are never empty, and we can establish the following equivalence between variables and sets (given the choice of $\mathcal{Z}_r$ and $\mathcal{Y}'\subset \mathcal{Y}$) with respect to the base case ($r=2$).
\begin{itemize}[leftmargin=1.5em]
    \item $\z_{r}$ is equivalent to $\z_2$, and therefore $\mathcal{Z}_{r}|_{(\z_{r-1}=\bm{\beta}_M, \dots, \z_1=\bm{\beta}_{M-r+2})}$ is equivalent to $\mathcal{Z}_{2}|_{(\z_1=\bm{\beta}_M)}$.
    \item $\z_{r-1}, \dots, \z_1$ are equivalent to $\z_1$, and therefore $\pi_{\{r-1, \dots, 1\}}(\mathcal{Z}_r)$ is equivalent to $\pi_{\{1\}}(\mathcal{Z}_2)$.
    \item $\y_{M}, \dots, \y_r$ are equivalent to $\y_M, \dots, \y_2$, and therefore $\pi_{\{M, \dots, r\}}(\mathcal{Y}')$ is equivalent to $\pi_{\{M, \dots, 2\}}(\mathcal{Y}')$.
    \item $\y_{r-1}, \dots, \y_1$ are equivalent to $\y_1$, and therefore $\mathcal{Y}'|_{(\y_M=\bm{\beta}_{M-r+1}, \dots,\y_r=\bm{\beta}_1)}$ is equivalent to $\mathcal{Y}'|_{(\y_{M}=\bm{\beta}_{M-1}, \dots, \y_2=\bm{\beta}_1)}$.
\end{itemize}
Given the above equivalences, the arguments for contradiction given linear dependence for any $(\bm{\beta}_M,\dots,\bm{\beta}_1) \in (\pi_{\{r-1,\dots,1\}}(\mathcal{Z}_r) \times \pi_{\{M, \dots, r\}}(\mathcal{Y}'))$ still hold. The only difference is that the dimensionality of the projections and conditioned projections will change (except for $\z_r\in\mathcal{Z}_{r}|_{(\z_{r-1}=\bm{\beta}_M, \dots, \z_1=\bm{\beta}_{M-r+2})}$). Therefore, the set
\begin{multline}
\mathcal{D}_0 := \Bigg\{(\bm{\beta}_M,\dots,\bm{\beta}_1) \in \Big(\pi_{\{r-1,\dots,1\}}(\mathcal{Z}_r) \times \pi_{\{M, \dots, r\}}(\mathcal{Y}')\Big)\bigg| \\ \sum_{(j_{r-1},\dots,j_1):(j_r,\dots,j_1)\in S_0} \gamma_{j_r,\dots, j_1} v_{j_{r-1},\dots,j_1}(\z_{r-1}=\bm{\beta}_{M},\dots,\z_1=\bm{\beta}_{M-r+2},\y_M=\bm{\beta}_{M-r+1}, \\ \dots, \y_{r}=\bm{\beta}_1, \y_{r-1},\dots,\y_1)  = 0, \forall(\y_{r-1}, \dots, \y_1)\in\mathcal{Y}'|_{(\y_M=\bm{\beta}_{M-r+1}, \dots,\y_r=\bm{\beta}_1)}\Bigg\}
\end{multline}
has zero measure under assumption (b4), which implies that for any $(\z_{r-1}=\bm{\beta}_{M},\dots, \z_1=\bm{\beta}_{M-r+2},\y_M=\bm{\beta}_{M-r+1}, \dots, \y_{r}=\bm{\beta}_1)\in \mathcal{D}$, with $\mathcal{D}:= (\pi_{\{r-1,\dots,1\}}(\mathcal{Z}_r) \times \pi_{\{M, \dots, r\}}(\mathcal{Y}')) \setminus \mathcal{D}_0$, there exists $(\y_{r-1}, \dots, \y_1)\in\mathcal{Y}'|_{(\y_M=\bm{\beta}_{M-r+1}, \dots,\y_r=\bm{\beta}_1)}$ such that
\begin{multline}
    \tilde{\gamma}_{j_r}(\bm{\beta}_M,\dots,\bm{\beta}_1):=\sum_{(j_{r-1},\dots,j_1):(j_r,\dots,j_1)\in S_0} \gamma_{j_r,\dots, j_1} v_{j_{r-1},\dots,j_1}(\z_{r-1}=\bm{\beta}_{M},\dots,\\ \z_1=\bm{\beta}_{M-r+2},\y_M=\bm{\beta}_{M-r+1}, \dots, \y_{r}=\bm{\beta}_1, \y_{r-1},\dots,\y_1)\neq 0
\end{multline}
for at least two $j_r\in J_0$. As before, this implies linear dependence of $\{v_{j_r}(\z_r, \z_{r-1}=\bm{\beta}_{M},\dots,\z_1=\bm{\beta}_{M-r+2},\y_M=\bm{\beta}_{M-r+1}\dots, \y_{r}=\bm{\beta}_1)\}$ on $\z_r\in\mathcal{Z}_{r}|_{(\z_{r-1}=\bm{\beta}_M, \dots, \z_1=\bm{\beta}_{M-r+2})}$, $\forall ( \z_{r-1}=\bm{\beta}_{M},\dots,\z_1=\bm{\beta}_{M-r+2},\y_M=\bm{\beta}_{M-r+1}\dots, \y_{r}=\bm{\beta}_1)\in \mathcal{D}$. Therefore, using assumptions (b5-b6) we can split $J_0$ subsets indexed by $K(\bm{\beta}_M, \dots, \bm{\beta}_1)$ such that the functions in each subset are equal, similarly as in Eqs.~(\ref{eq:b4_extension_splits_1}-\ref{eq:b4_extension_splits_2}). Following previous arguments from the case $r=2$ we choose $(\bm{\beta}_M^*,\dots,\bm{\beta}_1^*) \in \arg\min_{(\bm{\beta}_M,\dots,\bm{\beta}_1) \in \mathcal{D}} C(\bm{\beta}_M,\dots,\bm{\beta}_1)$, and assumption (b6) establishes linear dependence of the elements $v_{j_{r-1},\dots,j_1}(\cdot)$ in at least a nonzero measure set (Eq.~\eqref{eq:b4_extension_linear_dependence_contradiction})

The linear dependence of $\otimes^{r-1}\mathcal{V}_J$ on a nonzero measure set contradicts the induction hypothesis (b4*), as $\mathcal{Y}$ has full measure. Hence, Eq.~\eqref{eq:b4_extension_linear_dependence_relation_n} cannot hold for any choice of $\mathcal{Y}'$ and $\mathcal{Z}_r$, and a contradiction is reached.
\end{proof}

Regarding (b5), we observe that linear dependence on $M$ products of functions given fixed subsets of variables does not imply repeating functions anymore. Instead, we show that at least one component of the product family is linearly independent under finite mixtures.

\begin{lemma}[Extension of (b5)]\label{lemma:extension_b5}
Assume a distribution family $\mathcal{V}_J=\{v_j(\z,\y_M,\dots,\y_1)| j\in J\}$ with variables $\z,\y_M,\dots,\y_1$ defined on $\times^{M+1} \R^{d}$, such that it satisfies (b1), (b4), (b5), (b6).

Then, for any $r$ with $2\leq r \leq M$, any $(\bm{\beta}_M,\dots,\bm{\beta}_1) \in ( \times^M \R^{d})$, any nonzero measure $\mathcal{Z}_r\subset (\times^{r}\R^{d})$, and $S_0 \subset (\times^r J)$ such that $|S_0| <+\infty$ and the family 
\begin{multline*}
\Big\{v_{j_r}(\z_r,\dots,\z_1,\y_M=\bm{\beta}_M,\dots, \y_r=\bm{\beta}_r)\dots \\ v_{j_1}(\z_1,\y_M=\bm{\beta}_M,\dots, \y_1=\bm{\beta}_1)|(j_r,\dots,j_1)\in S_0\Big\}
\end{multline*}
contains linearly dependent functions only if $\exists \ell \leq r$ such that \begin{multline*}
\Big\{v_{j_\ell}(\z_\ell, \dots, \z_1, \y_M=\bm{\beta}_M,\dots, \y_\ell=\bm{\beta}_\ell)|j_\ell\in J_\ell\Big\},\\ \text{with } J_\ell:=\big\{j_\ell\in J|\exists(j_r,\dots,j_1)\in S_0\big\}
\end{multline*}
contains linearly dependent functions on $(\z_\ell, \dots, \z_1) \in \pi_{\{\ell, \dots, 1\}}(\mathcal{Z}_r)$.
\end{lemma}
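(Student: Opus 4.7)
My plan is to proceed by induction on $r$. The base case $r=1$ is immediate, since the hypothesis of the lemma then coincides with the conclusion for $\ell=1$.

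For the inductive step, assume the result holds for $r-1$ and suppose
\[
\sum_{(j_r,\dots,j_1)\in S_0}\gamma_{j_r,\dots,j_1}\, v_{j_r}(\z_r,\dots,\z_1,\bm{\beta}_M,\dots,\bm{\beta}_r)\cdots v_{j_1}(\z_1,\bm{\beta}_M,\dots,\bm{\beta}_1)=0\ \ \text{on } \mathcal{Z}_r.
\]
I would collect the terms by the outermost factor and rewrite this as
\[
\sum_{j_r\in J_r} v_{j_r}(\z_r,\dots,\z_1,\bm{\beta}_M,\dots,\bm{\beta}_r)\,G_{j_r}(\z_{r-1},\dots,\z_1)=0,
\]
where each $G_{j_r}$ is a finite linear combination of $(r-1)$-fold products corresponding to the tuples in $S_0$ beginning with $j_r$. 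The argument then splits according to whether some $G_{j_r^*}$ is identically zero on a positive-measure subset of $\pi_{\{r-1,\dots,1\}}(\mathcal{Z}_r)$.

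In Case A (some $G_{j_r^*}\equiv 0$ on positive-measure $\mathcal{A}_0$), the induction hypothesis applied to the $(r-1)$-sub-product on $\mathcal{A}_0$ provides some $\ell^*\leq r-1$ together with a linear dependence of $\{v_{j_{\ell^*}}\mid j_{\ell^*}\in J_{\ell^*}^{j_r^*}\}$ on $\pi_{\{\ell^*,\dots,1\}}(\mathcal{A}_0)$. I would then upgrade this to a linear dependence on the larger set $\pi_{\{\ell^*,\dots,1\}}(\mathcal{Z}_r)$ by the following slice argument: Fubini yields a positive-measure set of slice parameters $\bm{\alpha}'\in\pi_{\{\ell^*-1,\dots,1\}}(\mathcal{A}_0)$ on which the $\z_{\ell^*}$-slice has positive measure; applying (b5) at each such slice with $v_j$-level $\ell=M$ (all $\y$'s fixed, including $\bm{\alpha}'$) extracts a pair of functions with $v_{j_{\ell^*}}(\cdot,\bm{\alpha}',\bm{\beta})=v_{j'_{\ell^*}}(\cdot,\bm{\alpha}',\bm{\beta})$; a finite pigeonhole on $J_{\ell^*}^{j_r^*}$ locks in a single pair valid on a positive-measure product slab $\R^d\times\mathcal{A}^*$; and a second invocation of (b5) with $v_j$-level $\ell=M-\ell^*+1$ on this slab promotes the pairwise equality to $\times^{\ell^*}\R^d$ globally. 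Extending with zero coefficients then gives the conclusion at index $\ell^*$. In Case B (no such $G_{j_r^*}$ exists), every $G_{j_r}$ is nonzero almost everywhere on $\pi_{\{r-1,\dots,1\}}(\mathcal{Z}_r)$, so by finiteness of $J_r$ combined with Fubini there is a positive-measure set of $\bm{\alpha}$ on which all $G_{j_r}(\bm{\alpha})$ are nonzero and the $\z_r$-slice $\tilde{\mathcal{Z}}_r(\bm{\alpha})$ has positive measure. The relation then reduces to a genuine linear dependence of $\{v_{j_r}(\z_r,\bm{\alpha},\bm{\beta})\mid j_r\in J_r\}$ on $\tilde{\mathcal{Z}}_r(\bm{\alpha})$ with all coefficients nonzero, and I would repeat the same slice-(b5)-pigeonhole-(b5) recipe — now with $\ell=M$ and $\ell=M-r+1$ in the two applications of (b5) — to conclude $v_{j_r^*}\equiv v_{j'^*_r}$ on $\times^r\R^d$ for some pair in $J_r$, which yields the lemma for $\ell=r$ since $\pi_{\{r,\dots,1\}}(\mathcal{Z}_r)=\mathcal{Z}_r$.

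The principal obstacle I anticipate is the propagation step in Case A: the set $\pi_{\{\ell^*,\dots,1\}}(\mathcal{A}_0)$ on which the induction delivers linear dependence need not be a product set, and so (b5) does not apply directly. The slice-Fubini-(b5)-pigeonhole-(b5) recipe circumvents this by first reducing to the $\ell=M$ instance of (b5) slice-by-slice, where linear dependence on any positive-measure subset of $\R^d$ forces pointwise global equality of two factors, and then carving out a positive-measure product slab on which a single pair of indices works (exploiting the finiteness of $J_{\ell^*}^{j_r^*}$). Notably, (b6) is not explicitly required — (b5) alone promotes any positive-measure linear dependence at fixed conditioning variables to pointwise global equality, which is what makes the induction close.
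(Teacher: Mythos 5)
Your proposal is correct in outline, but it takes a genuinely different route from the paper. The paper proves this lemma by casting the $r$-fold product into the $\mathcal{U}_I\otimes\mathcal{V}_J$ shape of LIPO--1 (Lemma~\ref{lemma:linear_independence_two_nonlinear_gaussians}) --- the first $r-1$ factors playing $\mathcal{U}_I$, the last factor playing $\mathcal{V}_J$, with (a5) supplied by (b5) at level $M+1-r$ --- and then reads the conclusion off the contrapositive: if the product is dependent then (a3) or (a4) must fail, i.e.\ one of the two blocks is dependent, and the $(r-1)$-block is handed to the induction hypothesis. You instead expand the relation by the outermost index $j_r$, split on whether some inner combination $G_{j_r}$ vanishes on a positive-measure set, and --- this is the substantive difference --- add an explicit propagation step (Fubini slicing, (b5) at level $M$ to get a pair equal on each slice, pigeonhole over the finitely many pairs, then (b5) at level $M-\ell+1$ on the resulting product slab) that upgrades linear dependence on \emph{some} positive-measure set to global pointwise equality of a pair, and hence to dependence on the specific projection $\pi_{\{\ell,\dots,1\}}(\mathcal{Z}_r)$ demanded by the statement. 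This buys two things the paper's write-up leaves implicit: it closes the domain-matching gap (LIPO--1's failure of (a3), and hence the induction hypothesis, only delivers dependence on some nonzero-measure set, not a priori on the projection of the original $\mathcal{Z}_r$), and it shows (b6) is not actually needed for this lemma, since the double application of (b5) replaces the continuity/$\epsilon$-ball machinery. The price is a longer, self-contained argument and a few measure-theoretic points you assert rather than verify --- that projections of positive-measure sets contain positive-measure measurable subsets, and that the pigeonhole cells $\{\bm{\alpha}': v_j(\cdot,\bm{\alpha}',\bm{\beta})\equiv v_{j'}(\cdot,\bm{\alpha}',\bm{\beta})\}$ are measurable --- but the paper's own proofs operate at the same level of rigour on these points, so I do not regard them as gaps. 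You should also make explicit the degenerate case $|J_r|=1$ in Case~B, which by positivity (b1) collapses into Case~A.
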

\begin{proof}
We prove the statement by induction starting from the base case ($r=2$) as follows.

\paragraph{Case $r=2$.} Fix $\mathcal{Z}_2\subset \R^{d}\times \R^{d}$, and assume that for any $(\bm{\beta}_M,\dots,\bm{\beta}_1) \in ( \times^M \R^{d})$, and $S_0\subset J\times J$, the family  $$\Big\{v_{j_2}(\z_2, \z_1,\y_M=\bm{\beta}_{M},\dots, \y_2=\bm{\beta}_2)v_{j_1}(\z_1,\y_M=\bm{\beta}_M,\dots, \y_1=\bm{\beta}_1)|(j_2,j_1)\in S_0\Big\}$$ contains linearly dependent functions. 
From (b1, b4-b6), we can connect these assumptions to the ones defined in Lemma \ref{lemma:linear_independence_two_nonlinear_gaussians} by setting $u_i(\y=\z_1, \x=\emptyset)=v_{j_1}(\z_1,\y_M=\bm{\beta}_M,\dots, \y_1=\bm{\beta}_1)$, $i=j_1$, and $v_j(\z=\z_2, \y=\z_1)=v_{j_2}(\z_2, \z_1,\y_M=\bm{\beta}_{M},\dots, \y_2=\bm{\beta}_2)$, $j=j_2$, where (a1), (a5) and (a6) are satisfied. Therefore, by Lemma \ref{lemma:linear_independence_two_nonlinear_gaussians}, we know that either one of the families contains linearly dependent functions (from contradictions to (a3) or (a4) in Lemma \ref{lemma:linear_independence_two_nonlinear_gaussians}).

\paragraph{Case $r > 2$.}

Fix $\mathcal{Z}_r\subset (\times^{r}\R^{d})$, and assume that for any $(\bm{\beta}_M,\dots,\bm{\beta}_1) \in ( \times^M \R^{d})$, and $S_0 \subset (\times^r J)$, the family
\begin{multline*}
 \Big\{v_{j_r}(\z_r,\dots,\z_1,\y_M=\bm{\beta}_M,\dots, \y_r=\bm{\beta}_r)\dots \\ v_{j_1}(\z_1,\y_M=\bm{\beta}_M,\dots, \y_1=\bm{\beta}_1)|(j_r,\dots,j_1)\in S_0\Big\}   
\end{multline*}
contains linearly dependent functions. 
As before, from (b1, b4-b6) and by setting 
\begin{multline*}
u_i(\y=(\z_{r-1}, \dots, \z_1), \x=\emptyset)=v_{j_{r-1}}(\z_{r-1},\dots,\z_1,\y_M=\bm{\beta}_M,\dots, \y_{r-1}=\bm{\beta}_{r-1})\dots \\ v_{j_1}(\z_1,\y_M=\bm{\beta}_M,\dots, \y_1=\bm{\beta}_1), \quad i=(j_{r-1},\dots, j_1), 
\end{multline*}
and $v_j(\z=\z_{r}, \y=(\z_{r-1}, \dots, \z_1))=v_{j_r}(\z_r,\dots \z_1,\y_M=\bm{\beta}_M,\dots, \y_r=\bm{\beta}_r), \quad j=j_r$, we can establish correspondence to assumptions on Lemma \ref{lemma:linear_independence_two_nonlinear_gaussians}. We know (a1) and (a6) are satisfied directly from (b1) and (b6). (a5) is satisfied by setting $\ell=M+1-r$ in (b5). Then, from contradictions to either (a3) or (a4) in Lemma \ref{lemma:linear_independence_two_nonlinear_gaussians}, either one of the families must contain linearly dependent functions.
Therefore, either
\begin{equation*}
    \Big\{v_{j_{r}}(\z_{r},\dots,\z_1,\y_M=\bm{\beta}_M,\dots, \y_{r}=\bm{\beta}_{r})|j_{r}\in J_r\Big\}, \text{ with }J_r=\big\{j_r\in J|\exists(j_r,\dots,j_1)\in S_0\big\}
\end{equation*}
contains linearly dependent functions on $(\z_{r},\dots,\z_1) \in \pi_{\{r,\dots,1\}}(\mathcal{Z}_r)$, or from the induction hypothesis there is some $\ell \leq r-1$ such that 
\begin{multline*}
    \Big\{v_{j_{\ell}}(\z_{\ell}, \dots,\z_1,\y_M=\bm{\beta}_M,\dots, \y_{\ell}=\bm{\beta}_\ell)|j_\ell\in J_\ell\Big\},\\ \text{ with }J_\ell=\big\{j_\ell\in J|\exists(j_r,\dots,j_1)\in S_0\big\}
\end{multline*}
contains linearly dependent functions on $(\z_\ell,\dots, \z_1) \in \pi_{\{\ell, \dots, 1\}}(\mathcal{Z}_r)$.
\end{proof}

\subsubsection{Linear independence on products with M overlapping variables}\label{app:lippo_m}

Given that assumption (b5) does not extend similarly as (b4), we generalise LIPO-1 to LIPO-M (Linear Independence on Product functions with $M$ consecutive Overlapping variables) using Lemma \ref{lemma:extension_b5} to show linear independence under finite mixtures when $V_J$ is defined as a product of $M$ functions with consecutive variables. 

\textbf{Notation:} We use the following block notation $\z_{\ell:1}$, $\bm{\beta}_{M:\ell}$, $\y_{M:\ell}$ to denote tuples of variables $(\z_\ell, \dots,\z_1)$, $(\bm{\beta}_M, \dots,\bm{\beta}_\ell)$, $(\y_M,\dots,\y_\ell)$, respectively.

\begin{lemma}[LIPO-M]
\label{lemma:absolutely_magical_lemma}
Assume two families $\mathcal{U}_I=\{u_i(\y,\x)| i\in I\}$ and $\mathcal{V}_J=\{v_j(\z,\y)| j\in J\}$, with $\x\in\mathcal{X}$, $\y\in\R^{d_y}$, and $\z\in\R^{d_z}$, with $d_y=d\cdot M$, $d_z=d\cdot r$, and $1 \leq r \leq M$. We further assume:
\begin{itemize}
    \item[(a)] The family $\mathcal{U}_I$ satisfies assumptions (a1) and (a3).
    \item[(b)] Each element in the family $\mathcal{V}_J$ is defined as a product of $r$ functions, all of which belong to the same family $\mathcal{W}_K=\{w_k(\z,\y_M,\dots,\y_1)| k\in K\}$, with $\z_1\in\R^d$, $(\y_M, \dots, \y_1) \in (\times^M \R^d)$, and is expressed as follows
    \begin{multline*}
        \mathcal{V}_J := \otimes^r \mathcal{W}_K = \{w_{k_r}(\z_r, \dots, \z_1, \y_M, \dots, \y_r)\dots w_{k_1}(\z_1,\y_M,\dots,\y_1)\mid \\
        (k_r,\dots, k_1)\in (\times^r K) \}
    \end{multline*}
    where $J:= (\times^r K)$, and $\z:=(\z_r, \dots, \z_1)$, $\y:=(\y_M, \dots, \y_1)$. The family $\mathcal{W}_K$ satisfies assumptions (b1), (b4), (b5), and (b6).
\end{itemize}
Then for any nonzero measure subset $\mathcal{Z} \subset \mathbb{R}^{d_z}$, $\mathcal{U}_I \otimes \mathcal{V}_J := \{u_i(\y, \x)v_{j}(\z, \y) | i \in I, j \in J \}$ 
contains linearly independent functions under finite mixtures defined on $(\x, \y, \z) \in \mathcal{X} \times \mathbb{R}^{d_y}\times \mathcal{Z}$.
\end{lemma}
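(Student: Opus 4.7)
The plan is to mirror the three-step contradiction argument from the proof of LIPO-1 (Lemma~\ref{lemma:linear_independence_two_nonlinear_gaussians}), using the extensions of (b4) and (b5) (Lemmas~\ref{lemma:extension_b4} and~\ref{lemma:extension_b5}) as replacements for the roles played there by (a4) and (a5) on the $v$-family. Suppose for contradiction there exist a nonzero measure $\mathcal{Z} \subset \R^{d_z}$, a finite $S_0 \subset I \times J$, and nonzero coefficients $\{\gamma_{ij}\}$ such that $\sum_{(i,j)\in S_0} \gamma_{ij} u_i(\y,\x) v_j(\z,\y) = 0$ on $\mathcal{X} \times \R^{d_y} \times \mathcal{Z}$. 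Each $v_j$ is a finite product of factors $w_k > 0$ from $\mathcal{W}_K$, so positivity (b1) together with (a3) forces $S_0$ to contain at least two distinct $j,j' \in J$, exactly as in the LIPO-1 argument. Reading the relation at a fixed $\y = \bm{\beta}$ and using (a3) to exclude a zero measure set of $\bm{\beta}$, together with the full-measure set $\mathcal{Y}$ from the extension of (b4), I would identify a nonzero measure set $\mathcal{Y}_1 \subset \mathcal{Y}$ on which the restricted family $\{v_j(\z,\bm{\beta})\}_{j \in J_0}$ is linearly dependent on $\z \in \mathcal{Z}$.

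The main obstacle, and the key departure from LIPO-1, is the substitute for (a5). The extension of (b5) does not directly yield that two distinct $v_j, v_{j'}$ coincide at fixed $\bm{\beta}$; it only guarantees the existence of some level $\ell \leq r$ at which a single-factor family $\{w_{k_\ell}(\z_\ell,\dots,\z_1,\bm{\beta}_M,\dots,\bm{\beta}_\ell) : k_\ell \in J_\ell\}$ is linearly dependent on $(\z_\ell,\dots,\z_1) \in \pi_{\{\ell,\dots,1\}}(\mathcal{Z})$. To recover the ``equal functions'' conclusion that the LIPO-1 partition step requires, I would then apply (b5) for $\mathcal{W}_K$ itself at position $\ell' = M-\ell+1$ (i.e., treating the last $M-\ell+1$ arguments of $w$ as the fixed block $(\bm{\beta}_M,\dots,\bm{\beta}_\ell)$). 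This yields two distinct indices $k_\ell \neq k_\ell'$ with $w_{k_\ell}(\cdot,\bm{\beta}_M,\dots,\bm{\beta}_\ell) \equiv w_{k_\ell'}(\cdot,\bm{\beta}_M,\dots,\bm{\beta}_\ell)$ on all remaining arguments. Swapping $k_\ell$ for $k_\ell'$ inside each multi-index $(k_r,\dots,k_1)$ defines an equivalence relation on $J_0$ whose classes group multi-indices into products $v_j(\z,\bm{\beta})$ that coincide on $\z \in \mathcal{Z}$, reproducing the split $J_0 = \cup_q J_q(\bm{\beta})$ that LIPO-1 obtains directly from (a5).

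With the partition available, I would then close the proof by reproducing the three-step stabilisation argument of LIPO-1: choose $\bm{\beta}^* \in \mathcal{Y}_1$ minimising the smallest-class cardinality $C(\bm{\beta}) = \min_q |J_q(\bm{\beta})|$; observe that continuity of $\y \mapsto v_j(\z,\y)$ is inherited by $\mathcal{V}_J$ from (b6) since each $v_j$ is a finite product of continuous $w_k$'s, and use it together with an $\ell_2$-ball construction around $\bm{\beta}^*$ to freeze the partition on a neighbourhood $B(\bm{\beta}^*,\epsilon)$; and then derive a contradiction either with (a3) (if some class is a singleton, forcing a nontrivial $\x$-relation on a nonzero measure ball) or with the analog of (a4) for $\mathcal{V}_J$ given by Lemma~\ref{lemma:extension_b4} (if no class is a singleton, forcing two distinct products to coincide on a nonzero measure $\mathcal{Z} \times B(\bm{\beta}^*,\epsilon)$). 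A subtle but essential point throughout is that the full-measure, rather than merely nonzero measure, nature of $\mathcal{Y}$ in (b4) is what guarantees that $B(\bm{\beta}^*,\epsilon) \cap \mathcal{Y}$ has nonzero measure; this is exactly the reason (b4) had to be strengthened over (a4) in Appendix~\ref{app:extending_assumptions}, and the same strengthening makes the final contradiction with Lemma~\ref{lemma:extension_b4} valid here.
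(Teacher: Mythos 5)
Your setup is faithful to the paper's strategy: the contradiction hypothesis, the use of (b1) and (a3) to force at least two distinct $j$-indices, the invocation of Lemma~\ref{lemma:extension_b4} in place of (a4), and the remark that the \emph{full-measure} nature of $\mathcal{Y}$ is what keeps the eventual $\epsilon$-ball inside the admissible set are all exactly what the paper does. You have also correctly located the crux: the extension of (b5) does not give ``repeating products'', only linear dependence of some single-factor family at a level $\ell\leq r$, followed by (b5) for $\mathcal{W}_K$ to get $w_{k_\ell}(\cdot,\bm{\beta}_{M:\ell})\equiv w_{k_\ell'}(\cdot,\bm{\beta}_{M:\ell})$.

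The gap is in how you convert that single-coordinate information into a usable partition. You propose to let the swap $k_\ell\leftrightarrow k_\ell'$ generate an equivalence relation on the multi-index set $J_0$ and then run the LIPO-1 three-step argument once on its classes. This fails for two reasons. First, $J_0$ is merely the set of tuples appearing in $S_0$ and need not be closed under the swap, so the relation can be trivial on $J_0$ (all classes singletons) even though the level-$\ell$ family is linearly dependent; your partition then carries no information. Second, and more fundamentally, the LIPO-1 machinery needs the distinct class representatives to be linearly \emph{independent} on $\z\in\mathcal{Z}$ — that is what licenses the conclusion that the coefficient sums over each class vanish (the analogue of Eq.~\eqref{eq:lipo1_zero_constraint_given_beta}). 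For the product family this is exactly what you cannot assume: two products lying in different swap-classes can coincide or be linearly dependent through compensating factors, and no hypothesis excludes this — Lemma~\ref{lemma:extension_b5} is deliberately the weaker statement precisely because ``linear dependence implies repeating products'' is unavailable. Since you have already established that $\{v_j(\z,\bm{\beta})\}_{j\in J_0}$ \emph{is} linearly dependent for $\bm{\beta}$ in a nonzero measure set, a partition with singleton classes gives you back the original dependence relation and no cancellation. The paper's resolution is different in kind: it partitions the single-coordinate index set $K_0^{(\ell)}$, runs the stabilisation argument on that one coordinate to shrink the minimal class to a singleton on a ball $B(\bm{\beta}^*_{M:\ell},\epsilon'_\ell)$, factors the corresponding $w_{k_\ell}$ out of the dependence relation (Eq.~\eqref{eq:lipo_M_big_equation_linear_dependence}), and then \emph{iterates} Lemma~\ref{lemma:extension_b5} on the remaining $r-1$ factors, at each stage intersecting the new ball-times-projection cylinder with the previous admissible set (Eq.~\eqref{eq:lipo_M_admissible_beta_values}, Figure~\ref{fig:proof_sketch_sphere_cube_cylinder}) to preserve nonzero measure, until every factor is eliminated and $\sum_{i}\gamma_{i,k_r,\dots,k_1}u_i(\y=\bm{\beta},\x)=0$ on a nonzero measure set contradicts (a3). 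This iterative factor-elimination, together with the nested-set bookkeeping, is the essential content of the lemma and is missing from your proposal.
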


\begin{proof}
Assume this sufficiency statement is false, then there exist a nonzero measure subset $\mathcal{Z} \subset \mathbb{R}^{d_z}$, $S_0 \subset I \times J$ with $|S_0| < +\infty$ and a set of nonzero values $\{ \gamma_{ij} \in \mathbb{R} | (i,j) \in S_0 \}$, such that 
\begin{equation}
\label{eq:linear_dependence_division}
\sum_{ (i, j) \in S_0} \gamma_{ij}v_{j}(\z, \y)u_i(\y, \x) = 0, \\
\quad \forall (\x, \y, \z) \in \mathcal{X}\times \mathbb{R}^{d_y} \times \mathcal{Z}.
\end{equation}
For convenience, we denote the elements of $J$ as indices despite being a tuple. Note that the choices of $S_0$ and $\gamma_{ij}$ are independent of any $(\x, \y, \z)$ values, but might be dependent on $\mathcal{Z}$. We wish to follow the logic presented in LIPO-1 (Lemma \ref{lemma:linear_independence_two_nonlinear_gaussians}) under (a1-a6). Below we establish some connections. First, we know that the family $\mathcal{V}_J$ as defined in (b) is a product of $r$ elements with $1\leq r \leq M$, all of which satisfy (b1) and (b4-b5). Lemma \ref{lemma:extension_b4}, shows that there exists some full-measure $\mathcal{Y}$, such that for any subset $\mathcal{Z}\times\mathcal{Y}'\subset\R^{d_z}\times\mathcal{Y}$, the family $\mathcal{V}_J$ is linearly independent under finite mixtures. This is stronger than (a4) in Lemma \ref{lemma:linear_independence_two_nonlinear_gaussians} because (b4) provides a full-measure $\mathcal{Y}$, while (a4) only requires a nonzero measure $\mathcal{Y}$. Furthermore, it also satisfies (a1) as it is defined as a product of $r$ functions that satisfy (b1), with $1\leq r \leq M$. Therefore, we follow the same arguments as in Lemma \ref{lemma:linear_independence_two_nonlinear_gaussians}, which show that the index set $S_0$ contains at least 2 different indices $(i, j)$ and $(i, j')$ with $j
\neq j'$. Moreover, we define $J_0 = \{j \in J | \exists (i, j) \in S_0 \}$ the set of all possible $j$ indices that appear in $S_0$, where $|J_0| < +\infty$ and the following set $\mathcal{Y}_0 := \{\bm{\beta} \in \mathbb{R}^{d_y} | \sum_{ i: (i, j) \in S_0} \gamma_{ij} u_{i}(\y = \bm{\beta}, \x) = 0, \forall \x \in \mathcal{X} \}$ can only have zero measure in $\mathbb{R}^{d_y}$ from (a3). Again following Lemma \ref{lemma:linear_independence_two_nonlinear_gaussians}, we have a nonzero measure set $\mathcal{Y}' := \mathcal{Y} \backslash \mathcal{Y}_0 \subset \mathcal{Y}$ where we choose $\mathcal{Y}$ from the full-measure set in (b4). Then, we have for each $\bm{\beta} \in \mathcal{Y}'$, there exists $\x \in \mathcal{X}$ such that $\sum_{ i: (i, j) \in S_0} \gamma_{ij} u_{i}(\y = \bm{\beta}, \x) \neq 0$ for at least two $j$ indices in $J_0$. This implies for each $\bm{\beta} \in \mathcal{Y}'$, $\{v_j(\z, \y = \bm{\beta}) | j \in J_0 \}$ contains linearly dependent functions on $\z \in \mathcal{Z}$.

We will now write $J$, $S_0$ and $J_0$ tuple format, as we need to expand the product family $\mathcal{V}_J$. Therefore, we now regard $j=(k_r,\dots,k_1)$, and re-define $J_0=\{(k_r,\dots,k_1)\in J|\exists (i, k_r,\dots,k_1)\in S_0\}$. From (b4-b6), Lemma \ref{lemma:extension_b5} shows that the product of $r$ functions ($1\leq r \leq M$) that composes the family $\mathcal{V}_J$ implies linearly dependent functions in at least one of the components. That is, for some $\ell \leq r$,  $\{w_{k_\ell}(\z_{\ell}, \dots, \z_1,\y_{M:\ell}=\bm{\beta}_{M:\ell})\mid k_\ell\in K^{(\ell)}_0\}$, with $\y_{M:\ell}:=(\y_M,\dots,\y_\ell)$, $\bm{\beta}_{M:\ell}:=(\bm{\beta}_M,\dots,\bm{\beta}_\ell)$, and $K^{(\ell)}_0 := \{k_\ell\in K|\exists(i,k_r,\dots,k_1)\in S_0\}$, contains linearly dependent functions on $(\z_\ell,\dots,\z_1) \in \pi_{\{\ell,\dots, 1\}}(\mathcal{Z})$. Under assumption (b5), we can split the index set $K_0^{(\ell)}$ into subsets indexed by $q \in L^{(\ell)}(\bm{\beta}_{M:\ell})$ as follows, such that within each index subset $K_q^{(\ell)}(\bm{\beta}_{M:\ell})$ the functions with the corresponding indices are equal:
\begin{equation}
\label{eq:j_index_split_def}
\begin{aligned}
K_0^{(\ell)}& = \cup_{q \in L^{(\ell)}(\bm{\beta}_{M:\ell})} K_q^{(\ell)}(\bm{\beta}_{M:\ell}), \quad K_q^{(\ell)}(\bm{\beta}_{M:\ell}) \cap K_{q'}^{(\ell)}(\bm{\beta}_{M:\ell}) = \emptyset, \forall q \neq q' \in L^{(\ell)}(\bm{\beta}_{M:\ell}), \\
& k_\ell \neq k_\ell' \in  K_q^{(\ell)}(\bm{\beta}_{M:\ell}) \quad \Leftrightarrow \quad w_{k_\ell}(\z_{\ell}, \dots, \z_1,\y_{M:\ell}=\bm{\beta}_{M:\ell}) = \\ & \qquad  w_{k'_\ell}(\z_{\ell}, \dots, \z_1,\y_{M:\ell}=\bm{\beta}_{M:\ell}), \forall (\z_\ell,\dots,\z_1) \in \pi_{\{\ell,\dots, 1\}}(\mathcal{Z}).
\end{aligned}
\end{equation}
Then we can rewrite Eq.~(\ref{eq:linear_dependence_division}) for any $\bm{\beta} \in \mathcal{Y}'$ as
\begin{multline}
\sum_{q \in L^{(\ell)}(\bm{\beta}_{M:\ell})} \Bigg(\sum_{k_\ell \in K^{(\ell)}_q(\bm{\beta}_{M:\ell})}\sum_{ (i, k_r, \dots, k_{\ell+1}, k_{\ell-1}, \dots, k_1): (i, k_r, \dots, k_1) \in S_0} \gamma_{i,k_r,\dots,k_1} u_{i}(\y = \bm{\beta}, \x) \\ w_{k_r}(\z_r,\dots,\z_1, \y_M=\bm{\beta}_M)\dots  w_{k_1}(\z_1, \y_M=\bm{\beta}_M,\dots, \y_1=\bm{\beta}_1) \Bigg) = 0, \\ \quad \forall (\x, \z_r, \dots, \z_1) \in \mathcal{X} \times \mathcal{Z}.  
\end{multline}
Note that we use $\gamma_{i,k_r,\dots,k_1} = \gamma_{ij}$ for any $(i,j)\in S_0$ and $(k_r,\dots,k_1)\in J_0$, such that $j = (k_r,\dots,k_1)$. Recall from Eq.~(\ref{eq:j_index_split_def}) that $w_{k_\ell}(\z_{\ell}, \dots, \z_1,\y_{M:\ell}=\bm{\beta}_{M:\ell})$ and $w_{k'_\ell}(\z_{\ell}, \dots, \z_1,\\ \y_{M:\ell}=\bm{\beta}_{M:\ell})$ are the same functions on $(\z_\ell, \dots, \z_1) \in \pi_{\{\ell,\dots, 1\}}(\mathcal{Z})$ iff.~$k_\ell \neq k_\ell'$ are in the same index set $K_l^{(\ell)}(\bm{\beta})$. This means if Eq.~(\ref{eq:linear_dependence_division}) holds, then for any $\bm{\beta} \in \mathcal{Y}'$, under assumptions (b1) and (b5),
\begin{multline}
\label{eq:zero_constraint_given_beta}
\sum_{k_\ell \in K^{(\ell)}_q(\bm{\beta}_{M:\ell})}\sum_{ (i, k_r, \dots, k_{\ell+1}, k_{\ell-1}, \dots, k_1): (i, k_r, \dots, k_1) \in S_0} \gamma_{i,k_r,\dots,k_1} u_{i}(\y = \bm{\beta}, \x) w_{k_r}(\z_r,\dots,\z_1,\\ \y_M=\bm{\beta}_M) \dots w_{k_{\ell+1}}(\z_{\ell+1},\dots,\z_1, \y_{M:\ell+1}=\bm{\beta}_{M:\ell+1})w_{k_{\ell-1}}(\z_{\ell-1},\dots,\z_1,\\ \y_{M:\ell-1}=\bm{\beta}_{M:\ell-1}) \dots w_{k_1}(\z_1, \y_{M:1}=\bm{\beta}_{M:1}) = 0 ,\\ \quad \forall (\x, \z_r, \dots, \z_1) \in \mathcal{X} \times \mathcal{Z},\text{  } q \in L^{(\ell)}(\bm{\beta}_{M:\ell}).
\end{multline}

Define $C^{(\ell)}(\bm{\beta}_{M:\ell}) = \min_q |K^{(\ell)}_q(\bm{\beta}_{M:\ell})|$ the minimum cardinality count for $k_\ell$ indices in the $K^{(\ell)}_q(\bm{\beta}_{M:\ell})$ subsets. Choose $\bm{\beta}^*_{M:\ell} \in \arg\min_{\bm{\beta}_{M:\ell} \in \pi_{ \{M,\dots, \ell\}}(\mathcal{Y}')} C^{(\ell)}(\bm{\beta}_{M:\ell})$:
\begin{itemize}
    \item[1.] We have $C^{(\ell)}(\bm{\beta}^*_{M:\ell}) < |K^{(\ell)}_0|$ and $|L^{(\ell)}(\bm{\beta}^*_{M:\ell})| \geq 2$. Otherwise for all $k_\ell \neq k_\ell' \in K^{(\ell)}_0$ we have $w_{k_\ell}(\z_\ell,\dots,\z_1, \y_{M:\ell}=\bm{\beta}_{M:\ell}) = w_{k_\ell'}(\z_\ell,\dots,\z_1, \y_{M:\ell}=\bm{\beta}_{M:\ell})$ for all $(\z_\ell, \dots, \z_1) \in \pi_{\{\ell,\dots, 1\}}(\mathcal{Z})$ and $\bm{\beta}_{M:\ell} \in \pi_{\{M,\dots,\ell\}}(\mathcal{Y}')$, so that they are linearly dependent on $(\z_\ell, \dots, \z_1, \y_{M:\ell}) \in \pi_{\{\ell,\dots, 1\}}(\mathcal{Z}) \times \pi_{ \{M,\dots, \ell\}}(\mathcal{Y}')$, a contradiction to assumption (b4).
    \item[2.] Now assume $|K_1^{(\ell)}(\bm{\beta}^*_{M:\ell})| = C^{(\ell)}(\bm{\beta}^*_{M:\ell})$ w.l.o.g.. From assumption (b5), we know that for any $k_\ell \in K_1^{(\ell)}(\bm{\beta}^*_{M:\ell})$ and $k_\ell' \in K_0^{(\ell)} \backslash K_{1}^{(\ell)}(\bm{\beta}^*_{M:\ell})$, $w_{k_\ell}(\z_\ell, \dots, \z_1, \y_{M:\ell}=\bm{\beta}_{M:\ell}) = w_{k_\ell'}(\z_\ell, \dots, \z_1, \y_{M:\ell}=\bm{\beta}_{M:\ell})$ only on zero measure subset of $\pi_{\{\ell, \dots, 1\}}(\mathcal{Z})$ at most. Then as $|K_0^{(\ell)}| < +\infty$ and $\pi_{ \{\ell,\dots,1\}}(\mathcal{Z}) \subset (\times^{\ell} \mathbb{R}^{d})$ has nonzero measure, there exist $(\z^{(0)}_\ell, \dots, \z^{(0)}_1) \in \pi_{\{\ell,\dots,1\}}(\mathcal{Z})$ and $\delta > 0$ such that 
    \begin{multline}
    |w_{k_\ell}(\z_{\ell:1}=\z_{\ell:1}^{(0)}, \y_{M:\ell}=\bm{\beta}^*_{M:\ell}) - w_{k_\ell'}(\z_{\ell:1}=\z_{\ell:1}^{(0)}, \y_{M:\ell}=\bm{\beta}^*_{M:\ell})| \geq \delta, \\ \forall k_\ell \in K_{1}^{(\ell)}(\bm{\beta}^*_{M:\ell}), \forall k_\ell' \in K_0^{(\ell)} \backslash K_{1}^{(\ell)}(\bm{\beta}^*_{M:\ell}).
    \end{multline}
    Under assumption (b6), there exists $\epsilon_{k_\ell} > 0$ such that we can construct an $\epsilon$-ball $B(\bm{\beta}^*_{M:\ell}, \epsilon_{k_\ell})$ using $\ell_2$-norm, such that 
    \begin{multline}
    |w_{k_\ell}(\z_{\ell:1}=\z_{\ell:1}^{(0)}, \y_{M:\ell}=\bm{\beta}^*_{M:\ell}) - w_{k_\ell}(\z_{\ell:1}=\z_{\ell:1}^{(0)}, \y_{M:\ell}=\bm{\beta}_{M:\ell})| \leq \delta/3, \\ \text{ 
 } \forall \bm{\beta}_{M:\ell} \in B(\bm{\beta}^*_{M:\ell}, \epsilon_{k_\ell}).
    \end{multline}
    Choosing a suitable $0 < \epsilon'_\ell \leq \min_{k_\ell \in K_0^{(\ell)}} \epsilon_{k_\ell}$ (note that $\min_{k_\ell \in K_0^{(\ell)}} \epsilon_{k_\ell} > 0$ as $|K_0^{(\ell)}| < +\infty$) and constructing an $\ell_2$-norm-based $\epsilon$-ball $B(\bm{\beta}^*_{M:\ell}, \epsilon'_\ell) \subset \pi_{\{M,\dots, \ell\}}(\mathcal{Y}')$, we have for all $k_\ell \in K_{1}^{(\ell)}(\bm{\beta}^*_{M:\ell}), k_\ell' \in K_0^{(\ell)} \backslash K_{1}^{(\ell)}(\bm{\beta}^*_{M:\ell})$, $k_\ell' \notin K_1^{(\ell)}(\bm{\beta}_{M:\ell})$ for all $\bm{\beta}_{M:\ell} \in B(\bm{\beta}^*_{M:\ell}, \epsilon'_\ell)$ due to
    \begin{multline}
    |w_{k_\ell}(\z_{\ell:1}=\z_{\ell:1}^{(0)}, \y_{M:\ell}=\bm{\beta}_{M:\ell}) - w_{k_\ell'}(\z_{\ell:1}=\z_{\ell:1}^{(0)}, \y_{M:\ell}=\bm{\beta}_{M:\ell})| \geq \delta/3, \\ \text{ 
 } \forall \bm{\beta}_{M:\ell} \in B(\bm{\beta}^*_{M:\ell}, \epsilon'_\ell).
    \end{multline}
    So this means for the split $\{ K_l^{(\ell)}(\bm{\beta}_{M:\ell}) \}$ of any $\bm{\beta} \in B(\bm{\beta}^*_{M:\ell}, \epsilon)$, we have $ K_1^{(\ell)}(\bm{\beta}_{M:\ell}) \\ \subset K_1^{(\ell)}(\bm{\beta}^*_{M:\ell})$ 
    and therefore $ |K_1^{(\ell)}(\bm{\beta}_{M:\ell})| \leq |K_1^{(\ell)}(\bm{\beta}^*_{M:\ell})|$. Now by definition of $\bm{\beta}^*_{M:\ell} \in \arg\min_{\bm{\beta}_{M:\ell} \in \pi_{\{M,\dots,\ell\}}(\mathcal{Y})} C^{(\ell)}(\bm{\beta}_{M:\ell})$ and $|K_1^{(\ell)}(\bm{\beta}^*_{M:\ell})| = C^{(\ell)}(\bm{\beta}^*_{M:\ell})$, we have $K_1^{(\ell)}(\bm{\beta}_{M:\ell}) = K_1^{(\ell)}(\bm{\beta}^*_{M:\ell})$ for all $\bm{\beta}_{M:\ell} \in B(\bm{\beta}^*_{M:\ell}, \epsilon'_\ell)$. 
    \item[3.] One can show that $|K_1^{(\ell)}(\bm{\beta}^*_{M:\ell})| = 1$, otherwise by definition of the split (Eq.~(\ref{eq:j_index_split_def})) and the above point, there exists $k_\ell \neq k_\ell' \in K_1^{(\ell)}(\bm{\beta}^*_{M:\ell})$ such that $w_{k_\ell}(\z_{\ell}, \dots, \z_1,\y_{M:\ell}=\bm{\beta}_{M:\ell})=w_{k'_\ell}(\z_{\ell}, \dots, \z_1,\y_{M:\ell}=\bm{\beta}_{M:\ell})$ for all $(\z_\ell, \dots, \z_1) \in \pi_{\{\ell,\dots,1\}}(\mathcal{Z})$ and $\bm{\beta}_{M:\ell} \in B(\bm{\beta}^*_{M:\ell}, \epsilon'_\ell)$, again a contradiction to assumption (b4). 
 \end{itemize}   
    Given the above 3 points and assuming $k_\ell \in K_1^{(\ell)}(\bm{\beta}^*_{M:\ell})$ is the only index in the subset, then the fact proved in the above point that $K_1^{(\ell)}(\bm{\beta}_{M:\ell}) = K_1^{(\ell)}(\bm{\beta}^*_{M:\ell})$ means
\begin{multline}\label{eq:lipo_M_big_equation_linear_dependence}
\sum_{ (i, k_r, \dots, k_{\ell+1}, k_{\ell-1}, \dots, k_1): (i, k_r, \dots, k_1) \in S_0} \gamma_{i,k_r,\dots,k_1} u_{i}(\y = \bm{\beta}, \x) w_{k_r}(\z_r,\dots,\z_1, \y_M=\bm{\beta}_M)\cdot\\ \ldots \cdot w_{k_{\ell+1}}(\z_{\ell+1},\dots,\z_1, \y_{M:\ell+1}=\bm{\beta}_{M:\ell+1})w_{k_{\ell-1}}(\z_{\ell-1},\dots,\z_1, \y_{M:\ell-1}=\bm{\beta}_{M:\ell-1})\cdot  \\ \ldots \cdot w_{k_1}(\z_1, \y_{M:1}=\bm{\beta}_{M:1}) = 0 ,\quad \forall (\x, \z_r, \dots, \z_1) \in \mathcal{X} \times \mathcal{Z}, \forall \bm{\beta}_{M:\ell} \in B(\bm{\beta}^*_{M:\ell}, \epsilon'_\ell);
\end{multline}
where the term $w_{k_{\ell}}(\z_{\ell},\dots,\z_1, \y_{M:\ell}=\bm{\beta}_{M:\ell})$ can be factored out from the above equation thanks to (b1). Given that earlier we established linear dependence of $\{v_j(\z, \y = \bm{\beta}) | j \in J_0 \}$ for each $\bm{\beta} \in \mathcal{Y}'$, We define $\mathcal{Y}_\ell:=(B(\bm{\beta}^*_{M:\ell}, \epsilon'_\ell)\times \pi_{\{\ell-1,\dots, 1\}}(\mathcal{Y}'))\setminus\mathcal{Y}_0$. And from Eq.~\eqref{eq:lipo_M_big_equation_linear_dependence}, for each $\bm{\beta}_{M:1}\in\mathcal{Y}_\ell$ the family 
\begin{multline*}
    \Big\{w_{k_r}(\z_r,\dots,\z_1, \y_M=\bm{\beta}_M) \dots w_{k_{\ell+1}}(\z_{\ell+1},\dots,\z_1, \y_{M:\ell+1}=\bm{\beta}_{M:\ell+1})\cdot \\ w_{k_{\ell-1}}(\z_{\ell-1},\dots,\z_1, \y_{M:\ell-1}=\bm{\beta}_{M:\ell-1}) \dots w_{k_1}(\z_1, \y_{M:1}=\bm{\beta}_{M:1})|, \\(k_r,\dots, k_{\ell+1}, k_{\ell-1},\dots,k_1)\in \{k_r,\dots, k_{\ell+1}, k_{\ell-1},\dots,k_1)| \exists(i,k_r,\dots, k_1)\in S_0\}\Big\}
\end{multline*}
contains linearly dependent functions. The above family is a product of $r-1$ functions, all of which belong to the same family $\mathcal{W}_K$ which satisfies (b1), (b4), (b5) and (b6). Therefore, we can use Lemma \ref{lemma:extension_b5} again. Then, we know for each $\bm{\beta}_{M:1}\in\mathcal{Y}_\ell$, there is at least one $\ell'\leq r$ with $\ell'\neq \ell$ such that $\{w_{k_{\ell'}}(\z_{\ell'}, \dots, \z_1,\y_{M:\ell'}=\bm{\beta}_{M:\ell'})| k_\ell'\in K^{(\ell')}_0\}$, with $K^{(\ell')}_0=\{k_{\ell'}\in K|\exists(i,k_r,\dots,k_1)\in S_0\}$, contains linear dependent functions under finite mixtures. 

As before, we create splits as in Eq.~\eqref{eq:j_index_split_def}, choose
$$ \bm{\beta}^*_{M:\ell'} \in \arg\min_{\bm{\beta}_{M:\ell'} \in \pi_{\{M,\dots,\ell'\}}(\mathcal{Y})} C^{(\ell')}(\bm{\beta}_{M:\ell'}),
$$
and construct $B(\bm{\beta}^*_{M:\ell'},\epsilon'_{\ell'})\subset\pi_{\{M,\dots,\ell'\}}(\mathcal{Y}_\ell)$ with $\epsilon'_\ell >\epsilon'_{\ell'}>0$, such that by (b4), (b6) and Eq.~\eqref{eq:j_index_split_def}, the corresponding split satisfies $|K_1^{(\ell')}(\bm{\beta}_{M:\ell'})| =1$ for all $\bm{\beta}_{M:\ell'}\in B(\bm{\beta}^*_{M:\ell'},\epsilon'_{\ell'})$ (we again assume $|K_1^{(\ell')}(\bm{\beta}^*_{M:\ell'})| = C^{(\ell')}(\bm{\beta}^*_{M:\ell'})$ w.l.o.g.). Therefore we can factor out $\{w_{k_{\ell'}}(\z_{\ell'}, \dots, \z_1,\y_{M:\ell'}=\bm{\beta}_{M:\ell'})| k_{\ell'}\in K^{(\ell')}_0\}$ from Eq.~\eqref{eq:lipo_M_big_equation_linear_dependence} for all $\bm{\beta}_{M:\ell'}\in B(\bm{\beta}^*_{M:\ell'},\epsilon'_{\ell'})$. 

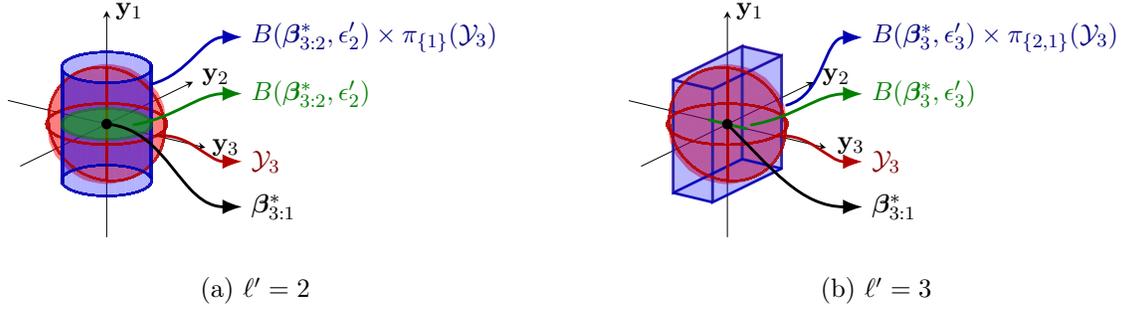
\begin{figure}
  \centering
  \begin{subfigure}{0.45\textwidth}
    \centering
    \resizebox{\linewidth}{!}{
    \begin{tikzpicture}
\begin{axis}[
  view={35}{20},
  clip=false,
  axis lines=center,
  xlabel={$\y_3$}, ylabel={$\y_2$}, zlabel={$\y_1$},
  ylabel style={
  yshift=2pt, xshift=0pt,   
  anchor=west
},
  xlabel style={
  yshift=0pt, xshift=-1pt,   
  anchor=west
},
  zlabel style={  
  anchor=west
},
  ticks=none,
  unit vector ratio=1 1 1,
  xmin=-2, xmax=2,
  ymin=-2.5, ymax=2.5,
  zmin=-2, zmax=2
]

\def\R{1}
\def\RC{0.75}

\addplot3[
  surf,
  domain=0:180, y domain=0:360,
  samples=13, samples y=25,      
  shader=flat,
  draw=none,
  faceted color=none,
  fill=red, opacity=0.3, draw opacity=0.0,
]
({\R*sin(x)*cos(y)}, {\R*sin(x)*sin(y)}, {\R*cos(x)});

\addplot3[
  surf,
  domain=0:360, y domain={-\R}:{\R},
  samples=25, samples y=5,      
  shader=flat,
  draw=none,
  faceted color=none,
  fill=blue, opacity=0.3, draw opacity=0.0,
]
({\RC*cos(x)}, {\RC*sin(x)}, {y});

\addplot3[domain=0:360, samples=60, thick, red!70!black, opacity=0.9]
({\R*cos(x)}, {\R*sin(x)}, {0});

\foreach \phi in {20,200} {
  \addplot3[domain=0:180, samples=30, thick, red!70!black, opacity=0.85]
  ({\R*sin(x)*cos(\phi)}, {\R*sin(x)*sin(\phi)}, {\R*cos(x)});
}

\foreach \alpha in {45,225} {
  \addplot3[domain=-\R:\R, samples=30, very thick, blue!70!black, opacity=0.9]
  ({\RC*cos(\alpha)}, {\RC*sin(\alpha)}, {x});
}

\addplot3[domain=0:360, samples=30, thick, blue!70!black, opacity=0.9]
({\RC*cos(x)}, {\RC*sin(x)}, { \R});
\addplot3[domain=0:360, samples=30, thick, blue!70!black, opacity=0.9]
({\RC*cos(x)}, {\RC*sin(x)}, { -\R});

\addplot3[domain=0:360, samples=30, thick, green!50!black, opacity=0.6, fill=green!70!black]
({\RC*cos(x)}, {\RC*sin(x)}, {0});
\addplot3[mark=*, mark size=2pt, color=black] coordinates {(0,0,0)};

\def\zc{0}   
\def\rc{0.6*\RC}
\coordinate (CircleFrom) at (axis cs:\xC,\yC,\zc);

\coordinate (SphereFrom) at (axis cs:\R,0,0);
\pgfmathsetmacro{\xC}{\rc*cos(30)}
\pgfmathsetmacro{\yC}{\rc*sin(30)}
\pgfmathsetmacro{\cx}{\RC*cos(25)}
\pgfmathsetmacro{\cy}{\RC*sin(25)}
\tikzset{annarrow/.style={-Latex, very thick, line cap=round}}
\coordinate (CylFrom) at (axis cs:\cx,\cy,\RC);

\coordinate (OriginFrom) at (axis cs:0,0,0);

\coordinate (Right1) at (rel axis cs:.75,1, 0.5);
\coordinate (Right2) at (rel axis cs:.75,1,0.20);
\coordinate (Right3) at (rel axis cs:.75,1,0.75);
\coordinate (Right4) at (rel axis cs:.75,1,0.0);

\draw[annarrow, green!50!black] (CircleFrom) to[out=10,in=180] (Right1);
\draw[annarrow, red!70!black]   (SphereFrom) to[out=5, in=180] (Right2);
\draw[annarrow, blue!70!black]  (CylFrom)    to[out=15,in=180] (Right3);
\draw[annarrow, black]          (OriginFrom) to[out=-5,in=180] (Right4);

\node[anchor=west, text=green!50!black] at (Right1) {$B(\bm{\beta}^*_{3:2}, \epsilon'_{2})$};
\node[anchor=west, text=red!50!black]   at (Right2)  {$\mathcal{Y}_3$};
\node[anchor=west, text=blue!50!black]  at (Right3)  {$B(\bm{\beta}^*_{3:2}, \epsilon'_{2})\times \pi_{\{1\}}(\mathcal{Y}_3)$};
\node[anchor=west]                      at (Right4)  {$\bm{\beta}_{3:1}^*$};

\end{axis}
\end{tikzpicture}}
    \vspace{-.5em}
    \caption{$\ell'=2$}
  \end{subfigure}
  \hfill
  \begin{subfigure}{0.45\textwidth}
    \centering
    \resizebox{\linewidth}{!}{
    \begin{tikzpicture}
\begin{axis}[
  view={35}{20},
  clip=false,
  axis lines=center,
  xlabel={$\y_3$}, ylabel={$\y_2$}, zlabel={$\y_1$},
  ylabel style={
  yshift=2pt, xshift=0pt,   
  anchor=west
},
  xlabel style={
  yshift=1pt, xshift=1pt,   
  anchor=west
},
  zlabel style={  
  anchor=west
},
  ticks=none,
  unit vector ratio=1 1 1,
  xmin=-2, xmax=2,
  ymin=-2.5, ymax=2.5,
  zmin=-2, zmax=2
]

\def\R{1}
\def\RC{0.4}
\addplot3[
  surf,
  domain=0:180, y domain=0:360,
  samples=13, samples y=25,      
  shader=flat,
  draw=none,
  faceted color=none,
  fill=red, opacity=0.3, draw opacity=0.0,
]
({\R*sin(x)*cos(y)}, {\R*sin(x)*sin(y)}, {\R*cos(x)});


\addplot3[
  surf, domain=-1:1, y domain=-1:1,
  samples=2, samples y=2,
  shader=flat, draw=none, faceted color=none,
  fill=blue, fill opacity=0.15
] ({\RC},{x},{y});

\addplot3[
  surf, domain=-1:1, y domain=-1:1,
  samples=2, samples y=2,
  shader=flat, draw=none, faceted color=none,
  fill=blue, fill opacity=0.15
] ({-\RC},{x},{y});

\addplot3[
  surf, domain=-\RC:\RC, y domain=-1:1,
  samples=2, samples y=2,
  shader=flat, draw=none, faceted color=none,
  fill=blue, fill opacity=0.15
] ({x},{1},{y});

\addplot3[
  surf, domain=-\RC:\RC, y domain=-1:1,
  samples=2, samples y=2,
  shader=flat, draw=none, faceted color=none,
  fill=blue, fill opacity=0.15
] ({x},{-1},{y});

\addplot3[
  surf, domain=-\RC:\RC, y domain=-1:1,
  samples=2, samples y=2,
  shader=flat, draw=none, faceted color=none,
  fill=blue, fill opacity=0.15
] ({x},{y},{1});

\addplot3[
  surf, domain=-\RC:\RC, y domain=-1:1,
  samples=2, samples y=2,
  shader=flat, draw=none, faceted color=none,
  fill=blue, fill opacity=0.15
] ({x},{y},{-1});

\draw[very thick, blue!70!black, opacity=0.8]
  (axis cs:-\RC,-1,-1) -- (axis cs:-\RC,-1, 1) --
  (axis cs:-\RC, 1, 1) -- (axis cs:-\RC, 1,-1) -- cycle;
\draw[very thick, blue!70!black, opacity=0.8]
  (axis cs: \RC,-1,-1) -- (axis cs: \RC,-1, 1) --
  (axis cs: \RC, 1, 1) -- (axis cs: \RC, 1,-1) -- cycle;
\draw[very thick, blue!70!black, opacity=0.8]
  (axis cs:-\RC,-1,-1) -- (axis cs: \RC,-1,-1)
  (axis cs:-\RC, 1,-1) -- (axis cs: \RC, 1,-1)
  (axis cs:-\RC,-1, 1) -- (axis cs: \RC,-1, 1)
  (axis cs:-\RC, 1, 1) -- (axis cs: \RC, 1, 1);

\addplot3[very thick, green!50!black] coordinates {(-\RC,0,0) (\RC,0,0)};
\addplot3[domain=0:360, samples=60, thick, red!70!black, opacity=0.9]
({\R*cos(x)}, {\R*sin(x)}, {0});

\foreach \phi in {20,200} {
  \addplot3[domain=0:180, samples=30, thick, red!70!black, opacity=0.85]
  ({\R*sin(x)*cos(\phi)}, {\R*sin(x)*sin(\phi)}, {\R*cos(x)});
}

\addplot3[mark=*, mark size=2pt, color=black] coordinates {(0,0,0)};

\def\zc{0}   
\def\rc{\RC}
\coordinate (CircleFrom) at (axis cs:\xC,\yC,\zc);

\coordinate (SphereFrom) at (axis cs:\R,0,0);
\pgfmathsetmacro{\xC}{\rc*cos(30)}
\pgfmathsetmacro{\yC}{\rc*sin(30)}
\pgfmathsetmacro{\cx}{\R*cos(25)}
\pgfmathsetmacro{\cy}{\R*sin(25)}
\tikzset{annarrow/.style={-Latex, very thick, line cap=round}}
\coordinate (CylFrom) at (axis cs:\cx,\cy,\rc);

\coordinate (OriginFrom) at (axis cs:0,0,0);

\coordinate (Right1) at (rel axis cs:.75,1, 0.5);
\coordinate (Right2) at (rel axis cs:.75,1,0.20);
\coordinate (Right3) at (rel axis cs:.75,1,0.75);
\coordinate (Right4) at (rel axis cs:.75,1,0.0);

\draw[annarrow, green!50!black] (CircleFrom) to[out=10,in=180] (Right1);
\draw[annarrow, red!70!black]   (SphereFrom) to[out=5, in=180] (Right2);
\draw[annarrow, blue!70!black]  (CylFrom)    to[out=15,in=180] (Right3);
\draw[annarrow, black]          (OriginFrom) to[out=-45,in=180] (Right4);

\node[anchor=west, text=green!50!black] at (Right1) {$B(\bm{\beta}^*_{3}, \epsilon'_{3})$};
\node[anchor=west, text=red!50!black]   at (Right2)  {$\mathcal{Y}_3$};
\node[anchor=west, text=blue!50!black]  at (Right3)  {$B(\bm{\beta}^*_{3}, \epsilon'_{3})\times \pi_{\{2,1\}}(\mathcal{Y}_3)$};
\node[anchor=west]                      at (Right4)  {$\bm{\beta}_{3:1}^*$};

\end{axis}
\end{tikzpicture}}
    \vspace{-.5em}
    \caption{$\ell'=3$}
  \end{subfigure}
  \vspace{-.5em}
  \caption{Illustrations of nonzero intersection of subsets to construct $\mathcal{Y}_{\ell'}$ with $M=3,\ \ell=1$.}
  \label{fig:proof_sketch_sphere_cube_cylinder}
\end{figure}

Now we construct a new nonzero set of admissible $\bm{\beta}$-values such that we can factor out the $\ell'$-th factor of the product family $\{v_j(\cdot)\}$. We also need to ensure that the new set $\mathcal{Y}_{\ell'}$ is a subset of $\mathcal{Y}_\ell$, so that all previously factored terms remain valid. Define
\begin{equation}\label{eq:lipo_M_admissible_beta_values}
    \mathcal{Y}_{\ell'}:=\mathcal{Y}_\ell\cap\Big(B(\bm{\beta}^*_{M:\ell'}, \epsilon'_{\ell'})\times \pi_{\{\ell'-1,\dots, 1\}}(\mathcal{Y}_\ell)\Big),
\end{equation}
where, by construction, $\mathcal{Y}_{\ell'}\subseteq\mathcal{Y}_\ell$. The set also has nonzero measure, since $B(\bm{\beta}^*_{M:\ell'}, \epsilon'_{\ell'})\subset \pi_{\{M,\dots,\ell'\}}(\mathcal{Y}_\ell)$, and the product with $\pi_{\{\ell'-1,\dots,1\}}(\mathcal{Y}_\ell)$ results in a nonzero measure cylinder where at least a nonzero measure subset is in $\mathcal{Y}_{\ell}$, because the cylinder's base lies in $\pi_{\{M,\dots,\ell'\}}(\mathcal{Y}_\ell)$. 
Figure~\ref{fig:proof_sketch_sphere_cube_cylinder} illustrates this for $M=3$ and $\ell=3$, where the red sphere depicts $\mathcal{Y}_3$. We observe that the product between $B(\bm{\beta}^*_{M:\ell'}$ and the projection $\pi_{\{\ell'-1,\dots,1\}}(\mathcal{Y}_\ell)$ results in a cylinder (or a cube if $\ell'=1$), and their intersection with $\mathcal{Y}_\ell$ is nonzero measure. In this example, $\bm{\beta}_{M:\ell}$ across $\ell\in[M]$ are aligned: i.e., for $\ell\neq\ell'$ and $\bm{\beta}^{*(\ell)}_{M:\ell}$, $\bm{\beta}^{*(\ell')}_{M:\ell'}$, we have $\bm{\beta}^{*(\ell)}_{M:\ell'} = \bm{\beta}^{*(\ell')}_{M:\ell'}$ (we prove this alignment later). However, the nonzero measure intersection does not rely on alignment: it already follows from $B(\bm{\beta}^*_{M:\ell'}, \epsilon'_{\ell'})\subset \pi_{\{M,\dots,\ell'\}}(\mathcal{Y}_\ell)$.

Therefore, for each $\bm{\beta} \in \mathcal{Y}_{\ell'}$, the family is still linearly dependent under finite mixtures $\{v_j(\z, \y = \bm{\beta}) | j \in J_0 \}$, where two terms in the product $v_j$ have only one element ($w_{k_\ell}$ and $w_{k_{\ell'}}$). By repeating this procedure over all remaining indices, we obtain a nested sequence: $\mathcal{Y}_{final}\subseteq\dots\subseteq\mathcal{Y}_{\ell'}\subseteq\mathcal{Y}_\ell$ where for each $\bm{\beta}\in\mathcal{Y}_{final}$, every factor in $\{v_j(\z, \y = \bm{\beta}) | j \in J_0 \}$ has been reduced to a single element. Then, Eq.~\eqref{eq:lipo_M_big_equation_linear_dependence} reduces to
\begin{equation}
\sum_{ i: (i, k_r, \dots, k_1) \in S_0} \gamma_{i,k_r,\dots,k_1} u_{i}(\y = \bm{\beta}, \x) = 0, \quad \forall (\x, \y) \in \mathcal{X} \times \mathcal{Y}_{final}.
\end{equation}
Because $|K_1^{(\ell)}(\bm{\beta}_{M:\ell})| \leq |K_1^{(\ell)}(\bm{\beta}^*_{M:\ell})|$ for all $\bm{\beta}_{M:\ell} \in B(\bm{\beta}^*_{M:\ell}, \epsilon)$, we may choose the $\bm{\beta}^*_{M:\ell}$ so that the overlapping coordinates agree. That is, for any $\ell\neq \ell'$, with $1 \leq \ell < \ell'\leq r$ w.l.o.g., we can obtain two $\bm{\beta}^{*(\ell)}_{M:\ell}$, $\bm{\beta}^{*(\ell')}_{M:\ell'}$, such that $\bm{\beta}^{*(\ell)}_{M:\ell'} = \bm{\beta}^{*(\ell')}_{M:\ell'}$. Given alignment of $\bm{\beta}^*_{M:\ell}$ and since $\epsilon'_\ell>0$ is arbitrarily small, the final nested domain can be taken to be $\mathcal{Y}_{final}=B(\bm{\beta}^*,\epsilon'_{final})$,
with $\min_{\ell\in[r]}\epsilon'_{\ell} > \epsilon'_{final} > 0$ and $\bm{\beta}^* := \bm{\beta}^*_{M:1}$. This results in a nonzero measure set in which the family $\{u_i(\cdot)\}$ is linearly dependent, contradicting (a3). Therefore, a contradiction is reached.
\end{proof}

\subsubsection{Linear independence of nonparametric trajectory families}\label{app:lin_indep_joint}

In LIPO-M, we state results for generic per-variable dimension $d$, so that $\y\in\R^{dM}$ and $\z\in\R^{dr}$ with $1\leq r \leq M$. In the nonparametric trajectory setting we set $d=m$ because $\z_t\in\R^m$. Below we present linear independence under finite mixtures for nonparametric trajectory families.
\begin{theorem}
\label{thm:linear_independence_joint_non_parametric}
Define the following trajectory family for a trajectory $(\z_1,\dots,\z_T)\in\R^{Tm}$:
\begin{multline}
\jointfamily = \Bigg\{ p_{a,b_{M+1:T}}(\z_{1:T}) = p_{a}(\z_{1:M})\prod_{t=M+1}^T p_{b_t}(\z_t | \z_{t-1},\dots, \z_{t-M}), \\ \quad p_a \in \Pi_\A^M,\quad p_{b_t} \in \mathcal{P}_\B^M,\quad t = M+1, ..., T \Bigg\},
\end{multline}
and assume $\Pi_\A^M$ and $\mathcal{P}_\B^M$ satisfy the following assumptions,
\begin{itemize}
    \item[(c1)] $\Pi^M_\A$ satisfies (a1), (a3), and
    \item[(c2)] $\mathcal{P}_\B^M$ satisfies (b4-b6).
\end{itemize}
Then the following statement holds: For any $T > M$ such that $r=M$ if $(T \mod M) = 0$, or $r=(T \mod M)$, otherwise; and any subset $\mathcal{Z}_r\subset (\times^r\mathbb{R}^{m})$, the trajectory family contains linearly independent distributions under finite mixtures for $(\z_{1:T-r}, \z_{T-r+1:T}) \in \mathbb{R}^{(T-r)m} \times \mathcal{Z}_r$.
\end{theorem}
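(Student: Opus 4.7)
The plan is to proceed by outer induction on the trajectory length $T$, exactly as illustrated by the outer (red) boxes in Figure~\ref{fig:linear_independence_induction_sketch}, using LIPO-M (Lemma~\ref{lemma:absolutely_magical_lemma}) as the inductive engine at every step. The purpose of the peculiar definition of $r$ is to guarantee that $T-r$ is always a multiple of $M$, so that each inductive call lands on a length at which LIPO-M can be reapplied with the $M$-variable overlap structure it requires: when $T \bmod M = 0$ I would peel off the last $M$ transitions (so $r=M$), and otherwise peel off $T \bmod M$ transitions; in both cases the remaining ``head'' has length $T-r$ divisible by $M$ and strictly larger than $M$ whenever $T > 2M$.

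For the base cases $M < T \leq 2M$, we have $T-r = M$, so a trajectory factors as $p_a(\z_{1:M})\prod_{t=M+1}^{T} p_{b_t}(\z_t\mid \z_{t-1:t-M})$. I would apply LIPO-M with $\mathcal{U}_I := \Pi_\A^M$ (overlap $\y = \z_{1:M}$, trivial $\mathcal{X}$), which satisfies (a1) and (a3) by hypothesis (c1); and $\mathcal{V}_J := \otimes^{r}\mathcal{P}_\B^M$ with new coordinates $\z = \z_{M+1:T}\in \mathcal{Z}_r$, whose component family $\mathcal{P}_\B^M$ satisfies (b1) (positivity of conditional densities) and (b4)--(b6) by (c2). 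LIPO-M then yields linear independence under finite mixtures on $\R^{Mm}\times \mathcal{Z}_r$, which is exactly the claim. For the inductive step $T > 2M$, assuming the theorem at every length $T'$ with $M < T' < T$, I would write each trajectory as
\[
p_{a,b_{M+1:T}}(\z_{1:T}) = \Big(p_a(\z_{1:M})\prod_{t=M+1}^{T-r} p_{b_t}(\z_t\mid \z_{t-1:t-M})\Big) \cdot \Big(\prod_{t=T-r+1}^{T} p_{b_t}(\z_t\mid \z_{t-1:t-M})\Big),
\]
and call LIPO-M with $\mathcal{U}_I = \mathcal{P}^{T-r,M}_{\A,\B}$ (identifying $\x = \z_{1:T-r-M}$ and $\y = \z_{T-r-M+1:T-r}$, so $d_y = Mm$) and $\mathcal{V}_J = \otimes^{r}\mathcal{P}_\B^M$ (with $\z = \z_{T-r+1:T}$). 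The induction hypothesis at length $T-r$ (a multiple of $M$, exceeding $M$) gives linear independence on $\R^{(T-r-M)m}\times \mathcal{Z}'_M$ for every nonzero measure $\mathcal{Z}'_M\subset \R^{Mm}$, which is precisely (a3) for $\mathcal{U}_I$; (a1) is immediate from positivity; and (c2) supplies the hypotheses needed for $\mathcal{V}_J$. The conclusion of LIPO-M rearranges to linear independence on $\R^{(T-r)m}\times \mathcal{Z}_r$, closing the induction.

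The delicate part of this argument, and the reason the peculiar choice of $r$ is forced on us, is matching LIPO-M's $M$-variable overlap interface to the transition structure: each $p_{b_t}$ depends on exactly $M$ past variables, so the ``seam'' between a length-$(T-r)$ head and the $r$ appended transitions must be precisely $\z_{T-r-M+1:T-r}$, which is why $T-r$ must be a multiple of $M$. The genuinely heavy lifting --- strengthening (b4) and (b5) so that the product family $\otimes^{r}\mathcal{P}_\B^M$ inherits the hypotheses that LIPO-M demands of $\mathcal{V}_J$ for any $r\in[1,M]$ --- has already been absorbed into Lemmas~\ref{lemma:extension_b4} and~\ref{lemma:extension_b5} (the inner induction in Figure~\ref{fig:linear_independence_induction_sketch}) and is packaged inside LIPO-M itself, so the proof of this theorem reduces to the two-case outer induction sketched above.
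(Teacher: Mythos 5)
Your proposal is correct and follows essentially the same route as the paper's proof: the same peeling of $r$ trailing transitions so the head length $T-r$ is a multiple of $M$, the same identifications $\x=\z_{1:T-r-M}$, $\y=\z_{T-r-M+1:T-r}$, $\z=\z_{T-r+1:T}$ when invoking LIPO-M, and the same use of the induction hypothesis to supply (a3) for the head family. The only (immaterial) difference is organisational — the paper first runs the induction over multiples of $M$ and then handles $T\bmod M\neq 0$ by one extra LIPO-M call, whereas you fold both into a single outer induction.
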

\begin{proof}
We first prove the statement for any $T>M$ such that $T \mod M = 0$ by induction as follows.

\underline{$T = 2M$}: The result can be proved using Lemma \ref{lemma:absolutely_magical_lemma} by setting in the proof,
\begin{equation*}
u_i(\y = \z_{1:M}, \x = \z_0) = \pi_{a}(\z_{1:M}), i = a,
\end{equation*}
and
\begin{equation*}
    v_j(\z = \z_{M+1:2M}, \y = \z_{1:M}) = \prod_{\ell=1}^M p_{b_{M+\ell}}(\z_{M+\ell} | \z_{\ell:M-1+\ell}), j=b_{M+1:2M},
\end{equation*}
where $j$ is a tuple index, and $\z_{\ell:M-1+\ell}:=(\z_\ell, \z_{\ell+1},\dots,\z_{M-1+\ell})$ is a sliding window of length $M$.
We observe that the Lemma holds using assumptions (c1-c2) directly as $v_j(\z = \z_{M+1:2M}, \y = \z_{1:M})$ is a product of $M$ functions that satisfies (b) in LIPO-M (Lemma \ref{lemma:absolutely_magical_lemma}).

\underline{$T = rM, r\in\mathbb{Z}^{+}, r> 2$}: Assume the statement holds for the trajectory family when $T = \tau - M$.
Note that we can write $p_{a,b_{M+1:\tau}}(\z_{1:\tau})$ as
\begin{equation}
p_{a,b_{M+1:\tau}}(\z_{1:\tau}) = p_{a,b_{M+1:\tau-M}}(\z_{1:\tau-M}) \prod_{\ell=1}^M p_{b_{\tau-M+\ell}}(\z_{\tau-M+\ell} | \z_{\tau-2M+\ell:\tau-M-1+\ell}).
\end{equation}
Then the statement for $T = \tau$ can be proved using LIPO-M (Lemma \ref{lemma:absolutely_magical_lemma}) by setting
\begin{equation*}
u_i(\y = \z_{\tau - 2M+1 : \tau - M}, \x = \z_{1:\tau-2M}) = p_{a,b_{M+1:\tau-M}}(\z_{1:\tau-M}), i = a,b_{M+1:\tau-M},
\end{equation*}
and
\begin{multline}\label{eq:v_family_proof_multiple}
    v_j(\z = \z_{\tau-M+1:\tau}, \y = \z_{\tau - 2M +1 : \tau - M}) = \\ \prod_{\ell=1}^M p_{b_{\tau-M+\ell}}(\z_{\tau-M+\ell} | \z_{\tau-2M+\ell:\tau-M-1+\ell}),  j=b_{\tau-M+1:\tau}.
\end{multline}
Note now both $i$ and $j$ denote tuple indices. The family spanned with $p_{a,b_{M+1:\tau-M}}(\z_{1:\tau-M}), i = a,b_{M+1:\tau-M}$ satisfies (a1) from (c1), and from the induction hypothesis it is linearly independent under finite mixtures, which is exactly (a3). The family in Eq.~(\ref{eq:v_family_proof_multiple}) is a product of $M$ functions where (b1) and (b4-b6) are satisfied from (c2), which imply (b) in Lemma \ref{lemma:absolutely_magical_lemma}.

Given the above result we proceed to prove the case where $T \mod M \neq 0$.

\underline{$T \mod M \neq 0$}: Let $r$ be the remainder of $T/M$: $T\mod M = r$. From the previous result we know the statement holds for $T - r$, i.e., when $T$ is a multiple of $M$. We can write $p_{a,b_{M+1:T}}(\z_{1:T})$ as follows
\begin{equation}
p_{a,b_{M+1:T}}(\z_{1:T}) = p_{a,b_{M+1:T-r}}(\z_{1:T-r}) \prod_{\ell=1}^r p_{b_{T-r+\ell}}(\z_{T-r+\ell} | \z_{T-r+\ell-M:T-r+\ell-1}).
\end{equation}
We can again prove the statement with Lemma \ref{lemma:absolutely_magical_lemma} by setting 
\begin{equation*}
u_i(\y = \z_{T - r-M+1 : T - r}, \x = \z_{1:T-r-M}) = p_{a,b_{M+1:T-r}}(\z_{1:T-r}), i = a,b_{M+1:T-r},
\end{equation*}
and
\begin{multline}\label{eq:v_family_proof_no_multiple}
    v_j(\z = \z_{T-r+1:T}, \y = \z_{T - r-M +1 : T - r}) = \\ \prod_{\ell=1}^r p_{b_{T-r+\ell}}(\z_{T-r+\ell} | \z_{T-r+\ell-M:T-r+\ell-1}), j=b_{T-r+1:T}.
\end{multline}
The family spanned with $p_{a,b_{M+1:T-r}}(\z_{1:T-r}), i = a,b_{M+1:T-r}$ satisfies (a1) from (c1). (a3) is satisfied as follows:
\begin{itemize}
    \item For $M < T<2M$, we have $T-r=M$ and (a3) holds directly from (c1).
    \item For $T > 2M$, we have $(T-r) \mod M = 0$ and (a3) holds from the induction hypothesis at $T-r$.    
\end{itemize}
The family in Eq.~(\ref{eq:v_family_proof_no_multiple}) is a product of $r$ function families which satisfy (b1) and (b4-b6) from (c2). This implies (b) in Lemma \ref{lemma:absolutely_magical_lemma}.

We showed that the conditions in LIPO-M are satisfied for all possible cases of $T$, and therefore by LIPO-M, the resulting trajectory family is linearly independent under finite mixtures.
\end{proof}

\subsection{Linear independence under finite mixtures with assumptions (m1) and (m2)}\label{app:parametric_assumptions}

Recall assumptions (m1) and (m2):
\begin{enumerate}
    \item[(m1)] Unique indexing for $\mathcal{G}_{\B}^M$ and $\mathcal{I}_{\A}^M$: Eqs. (\ref{eq:unique_indexing_conditional_gaussian}) and (\ref{eq:unique_indexing_marginal_gaussian}) hold;
    \item[(m2)] The mean and covariance in $\mathcal{G}_{\B}^M$, $\bm{m}(\cdot,b): \R^{mM} \rightarrow \R^m$ and $\bm{\Sigma}(\cdot,b): \R^{mM} \rightarrow \R^{m\times m}$, are analytic functions, for any $b\in \B$.
\end{enumerate}

Below we explore whether these parametric conditions ensure linear independence under finite mixtures of the trajectory family (and thus, identifiability). The unique indexing assumption of the Gaussian transition family (Eq.~\eqref{eq:unique_indexing_conditional_gaussian}) implies linear independence as shown below.
\begin{proposition}
\label{prop:gaussian_linear_independence}
Functions in $\mathcal{G}_\B$ are linearly independent under finite mixtures on variables $(\z_t, \dots, \z_{t-M})$ if the unique indexing assumption (Eq.~(\ref{eq:unique_indexing_conditional_gaussian})) holds.
\end{proposition}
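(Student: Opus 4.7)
Plan: Fix a finite subset $\B_0 \subset \B$ and suppose
\[
\sum_{b \in \B_0} \gamma_b\, p_b(\z_t \mid \z_{t-1}, \dots, \z_{t-M}) = 0 \quad \forall\, (\z_t, \z_{t-1}, \dots, \z_{t-M}) \in \R^{m(M+1)}.
\]
The goal is to deduce $\gamma_b = 0$ for every $b \in \B_0$. My strategy is to exhibit a single conditioning tuple $\z^\star := (\z^\star_{t-1}, \dots, \z^\star_{t-M}) \in \R^{mM}$ at which the Gaussians $\{\mathcal{N}(\cdot \mid \bm{m}(\z^\star, b), \bm{\Sigma}(\z^\star, b))\}_{b \in \B_0}$ have pairwise distinct mean--covariance parameters. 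At such a $\z^\star$, the displayed identity collapses to a linear relation in $\z_t$ alone among Gaussian densities with pairwise distinct parameters, and the classical linear-independence statement for finite Gaussian families (implicit in Yakowitz--Spragins, 1968) then forces every $\gamma_b$ to vanish.

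The production of $\z^\star$ is where the analyticity built into this subsection (assumption (m2), in force alongside (m1)) does the heavy lifting. For each pair $b \neq b'$ in $\B_0$, introduce the parameter-agreement set
\[
\mathcal{E}_{b,b'} := \bigl\{\, \z \in \R^{mM} : \bm{m}(\z, b) = \bm{m}(\z, b')\ \text{and}\ \bm{\Sigma}(\z, b) = \bm{\Sigma}(\z, b') \,\bigr\},
\]
which is the common zero set of the real-analytic coordinate differences $\bm{m}(\cdot, b) - \bm{m}(\cdot, b')$ and $\bm{\Sigma}(\cdot, b) - \bm{\Sigma}(\cdot, b')$. The identity theorem for real-analytic functions on the connected domain $\R^{mM}$ yields the dichotomy that $\mathcal{E}_{b,b'}$ is either all of $\R^{mM}$ or has Lebesgue measure zero. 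Unique indexing (Eq.~\ref{eq:unique_indexing_conditional_gaussian}) rules out the former whenever $b \neq b'$, so each $\mathcal{E}_{b,b'}$ is Lebesgue-null. Since $|\B_0|<\infty$, the union $\bigcup_{b \neq b' \in \B_0} \mathcal{E}_{b,b'}$ is still Lebesgue-null, hence its complement has full measure and any $\z^\star$ in it meets the required pairwise-distinctness condition.

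The step I anticipate as the main technicality is the vector-valued form of the identity theorem underpinning the dichotomy. The agreement set $\mathcal{E}_{b,b'}$ is the zero set of the analytic map $\Phi_{b,b'}(\z) := \bigl(\bm{m}(\z, b) - \bm{m}(\z, b'),\ \bm{\Sigma}(\z, b) - \bm{\Sigma}(\z, b')\bigr) \in \R^{m + m^2}$, and I need the implication: if $\Phi_{b,b'}$ vanishes on a subset of $\R^{mM}$ of positive Lebesgue measure, then $\Phi_{b,b'} \equiv 0$ on $\R^{mM}$ --- which combined with (m1) is the sought contradiction. This follows by componentwise application of the scalar real-analytic identity theorem (a real-analytic function on a connected open set whose zero set has positive Lebesgue measure is identically zero), but requires a small unpacking of what it means for all coordinates of $\Phi_{b,b'}$ to vanish simultaneously on a common positive-measure set. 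Once this dichotomy is in place, the remaining argument --- picking $\z^\star$ in the Lebesgue-null complement and invoking Yakowitz--Spragins at that single point --- is essentially routine.
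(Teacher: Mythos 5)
Your proof is correct, but it takes a genuinely different route from the paper's, and the difference is worth spelling out. The paper argues from unique indexing alone: it fixes an arbitrary conditioning value $\z_{t-1}$ and asserts that a vanishing combination with nonzero weights forces $\bm{m}(\z_{t-1},b)=\bm{m}(\z_{t-1},b')$ and $\bm{\Sigma}(\z_{t-1},b)=\bm{\Sigma}(\z_{t-1},b')$ for \emph{all} pairs $b\neq b'\in\B_0$, which contradicts Eq.~(\ref{eq:unique_indexing_conditional_gaussian}). You instead use analyticity (m2) to exhibit a single $\z^\star$ at which the components of $\B_0$ are pairwise distinct and apply the Yakowitz--Spragins independence result only there. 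Strictly speaking you prove a version of the statement with a stronger hypothesis than the one literally written (the proposition mentions only unique indexing, not (m2)), but this is harmless where the proposition is used, and your agreement-set/identity-theorem argument is essentially the content the paper defers to Proposition~\ref{prop:intersection_mom} together with the Mityagin zero-set observation in the conclusion of Appendix~\ref{app:proof_identifiability}.

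What your route buys is robustness. The paper's inference is airtight only for $|\B_0|\le 3$: for four or more components, a vanishing nonzero-weight combination of Gaussians merely forces the components to partition into groups with coinciding parameters and zero within-group weight sums, and that grouping may change with $\z_{t-1}$. With continuous but non-analytic means one can pair four components one way on a half-space of $\z_{t-1}$ and the other way on its complement so that every pair still differs on a positive-measure set (unique indexing holds) while a $(+1,-1,-1,+1)$ combination vanishes identically. So the analyticity you lean on is not a convenience; it appears genuinely needed, and your proof closes a gap in the paper's own argument rather than introducing one. Your componentwise reduction of the vector-valued identity theorem to the scalar statement is also correct: if the common zero set of the analytic coordinate differences has positive measure, each coordinate's zero set does too, so each vanishes identically, contradicting (m1).
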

\begin{proof}
Assume the statement is false and w.l.o.g. $M=1$. Then there exists $\B_0 \subset \B$ with $|\B_0|<+\infty$ and a set of nonzero values $\{ \gamma_b | b \in \B_0 \}$, such that 
$$\sum_{b \in \B_0} \gamma_b \mathcal{N}(\z_t; \bm{m}(\z_{t-1}, b), \bm{\Sigma}(\z_{t-1}, b)) = 0, \quad \forall \z_t, \z_{t-1} \in \mathbb{R}^m.$$
In particular, this equality holds for any $\z_{t-1} \in \mathbb{R}^m$, meaning that a weighted sum of Gaussian distributions (defined on $\z_t$) equals to zero. Note that \citet{yakowitz1968identifiability} proved that multivariate Gaussian distributions with different means and/or covariances are linearly independent. Therefore the equality above implies for any $\z_{t-1}$
$$\bm{m}(\z_{t-1}, b) = \bm{m}(\z_{t-1}, b') \quad \text{and} \quad \bm{\Sigma}(\z_{t-1}, b) = \bm{\Sigma}(\z_{t-1}, b') \quad \forall b, b' \in \B_0, b \neq b',$$
a contradiction to the unique indexing assumption.
\end{proof}

The transition and initial Gaussian distribution families defined in Eqs. (\ref{eq:gaussian_transition_family}) and (\ref{eq:gaussian_initial_family}) respectively align with assumptions (a) and (b) in LIPO-M as follows. 

\begin{proposition}
\label{prop:conditional_gaussian_properties}
The conditional Gaussian distribution family $\mathcal{G}_{\B}$ (Eq.~(\ref{eq:gaussian_transition_family})), under the unique indexing assumption (Eq.~(\ref{eq:unique_indexing_conditional_gaussian})), satisfies assumptions (b1), (b5), and (b6) in Lemmas \ref{lemma:extension_b4}, \ref{lemma:extension_b5}, and \ref{lemma:absolutely_magical_lemma}, if we define $\mathcal{V}_J := \mathcal{G}_{\B}, \z: = \z_t$ and $(\y_M,\dots,\y_1) := (\z_{t-1},\dots,\z_{t-M})$.
\end{proposition}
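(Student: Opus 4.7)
I will verify each of (b1), (b6), (b5) in turn, with (b5) being by far the substantive one. For (b1), Gaussian densities are strictly positive on their support $\R^m$, so any positive choice of $\bm\Sigma(\cdot,b)\succ 0$ immediately gives $v_j(\z,\y_M,\dots,\y_1)>0$. For (b6), continuity in $(\y_M,\dots,\y_1)=(\z_{t-1},\dots,\z_{t-M})$ follows because a Gaussian density is continuous in its mean and covariance, and $\bm m(\cdot,b),\bm\Sigma(\cdot,b)$ are continuous (indeed analytic) in the conditioning variables; positive-definiteness of $\bm\Sigma(\cdot,b)$ keeps the normalising constant finite.

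The main work is (b5). Fix $\ell$, $(\bm\beta_\ell,\dots,\bm\beta_1)$, nonzero-measure $\mathcal Z\subset\R^m$ and $\mathcal Y\subset\times^{M-\ell}\R^m$, and a finite $J_0\subset\B$. Assume a nontrivial linear relation
\begin{equation*}
\sum_{b\in J_0}\gamma_b\,\mathcal N\!\bigl(\z;\bm m(\y_M,\dots,\y_{\ell+1},\bm\beta_\ell,\dots,\bm\beta_1,b),\bm\Sigma(\y_M,\dots,\y_{\ell+1},\bm\beta_\ell,\dots,\bm\beta_1,b)\bigr)=0
\end{equation*}
on $(\z,\y_M,\dots,\y_{\ell+1})\in\mathcal Z\times\mathcal Y$. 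For each fixed $\bm\alpha=(\y_M,\dots,\y_{\ell+1})\in\mathcal Y$, the identity holds for $\z\in\mathcal Z$, a set of positive Lebesgue measure. Since each Gaussian density is real-analytic in $\z$, the (finite) sum is also real-analytic in $\z$, and vanishing on a positive-measure set forces vanishing on all of $\R^m$. By the classical Yakowitz/Teicher identifiability of multivariate Gaussians with distinct $(\mu,\Sigma)$ pairs, if we partition $J_0$ into equivalence classes under $b\sim_{\bm\alpha}b'\iff(\bm m_b,\bm\Sigma_b)=(\bm m_{b'},\bm\Sigma_{b'})$ at $\bm\alpha$, then within every class the corresponding $\gamma_b$'s must sum to zero.

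Because $|J_0|<+\infty$, there are only finitely many possible partitions of $J_0$; by pigeonhole, at least one partition $P$ is realised on a subset $\mathcal Y'\subset\mathcal Y$ of positive measure. Non-triviality of the $\gamma_b$'s rules out the fully singleton partition, so $P$ contains some class with two distinct indices $b\neq b'$. On $\mathcal Y'$ we thus have
\begin{equation*}
\bm m(\bm\alpha,\bm\beta,b)=\bm m(\bm\alpha,\bm\beta,b'),\qquad \bm\Sigma(\bm\alpha,\bm\beta,b)=\bm\Sigma(\bm\alpha,\bm\beta,b'),\qquad\forall\bm\alpha\in\mathcal Y'.
\end{equation*}
By (m2) both differences $\bm m(\cdot,\bm\beta,b)-\bm m(\cdot,\bm\beta,b')$ and $\bm\Sigma(\cdot,\bm\beta,b)-\bm\Sigma(\cdot,\bm\beta,b')$ are analytic functions of $\bm\alpha$ on $\times^{M-\ell}\R^m$, and vanishing on the positive-measure set $\mathcal Y'$ forces them to vanish identically (analytic continuation). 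Consequently the Gaussian densities $v_b$ and $v_{b'}$ coincide for all $(\z,\y_M,\dots,\y_{\ell+1})\in\R^m\times(\times^{M-\ell}\R^m)$ at the prescribed $\bm\beta$'s, which is precisely the conclusion of (b5).

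\textbf{Expected obstacle.} The routine part is (b1) and (b6); the only delicate point in (b5) is the order of the two analytic-continuation arguments. I must first extend from $\mathcal Z$ to all of $\R^m$ in the observed variable $\z$ (this is what licenses the use of Gaussian linear independence to read off per-class parameter equalities), and only then upgrade from the positive-measure $\mathcal Y'$ to the full free-$\y$ space using analyticity in the conditioning variables supplied by (m2). The pigeonhole step requires $|J_0|<\infty$, which is given; note also that the argument crucially uses that the identified $b,b'$ satisfy the required equality as functions on the whole of $\times^{M-\ell+1}\R^m$ (with $\bm\beta$'s fixed), not merely on $\mathcal Z\times\mathcal Y$, which is exactly what (b5) demands.
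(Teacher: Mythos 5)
Your proof is correct and follows the same core idea as the paper's (very brief) justification: reduce linear dependence of the sliced family to a violation of Gaussian unique indexing via the Yakowitz--Spragins linear independence of distinct Gaussians. You supply the two pieces of rigor the paper glosses over -- first extending the vanishing identity from the positive-measure set $\mathcal Z$ to all of $\mathbb{R}^m$ by real-analyticity of Gaussian densities in $\z$, and then, after the finite pigeonhole over partitions (equivalently, over pairs $b\neq b'$) produces a positive-measure set $\mathcal Y'$ on which two parameter maps coincide, extending that coincidence to the whole free-variable space.

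One observation worth recording: your second continuation step invokes (m2), which is \emph{not} among the hypotheses of the proposition as stated (only unique indexing is assumed). This is not a defect of your argument but of the statement: for $\ell<M$ the conclusion of (b5) requires equality of $v_j$ and $v_{j'}$ on all of $\times^{M-\ell+1}\mathbb{R}^{d}$, and with merely continuous $\bm m(\cdot,b)$ one can make two regimes agree on an open slice $\mathcal Y'\times\{\bm\beta\}$ while still satisfying unique indexing globally, so (b5) would fail. Analyticity of $\bm m,\bm\Sigma$ in the conditioning variables (and continuity for (b6)) must therefore be read into the family $\mathcal{G}_{\B}^M$ for the proposition to hold; this is consistent with how the proposition is actually used, since (m2)/(d2) are always in force there, but your proof makes the dependence explicit where the paper does not.
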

\begin{proposition}
\label{prop:marginal_gaussian_properties}
The initial Gaussian distribution family $\mathcal{I}_{\A}$ (Eq.~(\ref{eq:gaussian_initial_family})), under the unique indexing assumption (Eq.~(\ref{eq:unique_indexing_marginal_gaussian})), satisfies assumptions (a1), (a3) in Lemma \ref{lemma:absolutely_magical_lemma}, if we define $\mathcal{U}_I := \mathcal{I}_{\A}, \y: = (\z_1, \dots, \z_M)$ and $\x = \emptyset$.
\end{proposition}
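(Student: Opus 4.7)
The plan is to verify the two required properties separately, both of which reduce quickly to standard facts about multivariate Gaussians combined with the unique indexing assumption~(\ref{eq:unique_indexing_marginal_gaussian}).

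Property (a1) is immediate: for every $a \in \mathcal{A}$, the density $\mathcal{N}(\z_{1:M}\mid \bm{\mu}(a),\bm{\Sigma}_1(a))$ is strictly positive on $\mathbb{R}^{mM}$ because $\bm{\Sigma}_1(a)\succ 0$ by assumption. Since $\x=\emptyset$, there is no additional variable to consider, so the positivity requirement of (a1) holds pointwise.

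For (a3), I would fix an arbitrary nonzero measure subset $\mathcal{Y}\subset \mathbb{R}^{mM}$ and argue by contradiction. Suppose a finite subset $\mathcal{A}_0\subset \mathcal{A}$ and nonzero coefficients $\{\gamma_a\}_{a\in\mathcal{A}_0}$ satisfy $\sum_{a\in\mathcal{A}_0}\gamma_a\,\mathcal{N}(\y\mid \bm{\mu}(a),\bm{\Sigma}_1(a))=0$ for all $\y\in\mathcal{Y}$. The key step is to upgrade this equation from $\mathcal{Y}$ to the whole space. The function
$$
g(\y) := \sum_{a\in\mathcal{A}_0}\gamma_a\,\mathcal{N}\!\left(\y\mid \bm{\mu}(a),\bm{\Sigma}_1(a)\right)
$$
is real analytic on $\mathbb{R}^{mM}$, being a finite sum of Gaussian densities, and the zero set of a nontrivial real analytic function on a connected Euclidean domain has Lebesgue measure zero. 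Since $g$ vanishes on $\mathcal{Y}$ with $\mu(\mathcal{Y})>0$, we conclude $g\equiv 0$ on $\mathbb{R}^{mM}$.

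Once the vanishing holds globally, the conclusion follows from the classical identifiability of finite Gaussian mixtures \citep{yakowitz1968identifiability}, exactly the tool used in the proof of Proposition~\ref{prop:gaussian_linear_independence}: distinct multivariate Gaussians are linearly independent, so $g\equiv 0$ forces either all $\gamma_a=0$ (contradicting the assumption that they are nonzero) or the existence of $a\neq a'\in\mathcal{A}_0$ with $\bm{\mu}(a)=\bm{\mu}(a')$ \emph{and} $\bm{\Sigma}_1(a)=\bm{\Sigma}_1(a')$, contradicting~(\ref{eq:unique_indexing_marginal_gaussian}). The only subtlety I would expect is the analytic-continuation step; everything else is essentially Yakowitz's theorem applied verbatim.
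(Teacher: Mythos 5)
Your proof is correct and uses exactly the ingredients the paper relies on for this family of results: strict positivity of nondegenerate Gaussian densities for (a1), and, for (a3), the measure-zero zero set of a nontrivial real-analytic function (the paper's appeal to \citet{mityagin2015zero}) to upgrade vanishing on a positive-measure set to global vanishing, followed by the Yakowitz--Spragins linear independence of distinct multivariate Gaussians combined with the unique indexing assumption~(\ref{eq:unique_indexing_marginal_gaussian}), mirroring the argument of Proposition~\ref{prop:gaussian_linear_independence}. The paper states Proposition~\ref{prop:marginal_gaussian_properties} without an explicit proof, and your argument is a faithful and complete instantiation of the intended one.
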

To see why (b5) holds under the Gaussian transition family $\mathcal{G}_\B$, we note from Prop. \ref{prop:gaussian_linear_independence} that linear dependence occurs only if the unique indexing assumption does not hold. Therefore, we can fix a subset of the $(\z_{t-1}, \dots, \z_{t-M})$ variables, such that the resulting mean and covariance functions violate the unique indexing assumption, which would imply linear dependence on the resulting function family. To verify (b4), we require the following zero-measure intersection of moments result.
\begin{proposition}\label{prop:intersection_mom}
Assume Gaussian family transitions $\mathcal{G}_\B$ under unique indexing defined by Eq. (\ref{eq:unique_indexing_conditional_gaussian}), with zero-measure intersection of moments:  For any $b,b'\in \B$ with $b \neq b'$,  the set 
\begin{multline*}
\mathcal{X}_{b,b'}:=\bigg\{ (\z_{t-1},\dots, \z_{t-M}) \in \mathbb{R}^{mM} \mid \bm{m}(\z_{t-1},\dots,\z_{t-M}, b) = \bm{m}(\z_{t-1},\dots,\z_{t-M}, b'),\\
 \bm{\Sigma}(\z_{t-1},\dots,\z_{t-M}, b) = \bm{\Sigma}(\z_{t-1},\dots,\z_{t-M}, b') \bigg\}
\end{multline*}
 has zero measure. Then (b4) holds if $V_J:=\mathcal{G}_\B$, $\z:=\z_t$, and $(\y_1,\dots,\y_M):=(\z_{t-1},\dots,\z_{t-M})$.
\end{proposition}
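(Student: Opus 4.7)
}

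The plan is to take $\mathcal{Y} := \times^M \mathbb{R}^m$ itself (trivially full measure) and argue that on any nonzero-measure rectangle $\mathcal{Z}\times \mathcal{Y}' \subset \mathbb{R}^m \times (\times^M\mathbb{R}^m)$ the family $\mathcal{G}_\B$ is linearly independent under finite mixtures. The argument proceeds by contradiction: assume a finite $\B_0 \subset \B$ and nonzero coefficients $\{\gamma_b\}_{b\in\B_0}$ with
\begin{equation*}
\sum_{b\in \B_0}\gamma_b\,\mathcal{N}\!\bigl(\z_t;\bm{m}(\z_{t-1},\dots,\z_{t-M},b),\bm{\Sigma}(\z_{t-1},\dots,\z_{t-M},b)\bigr)=0,\quad \forall (\z_t,\z_{t-1:t-M})\in \mathcal{Z}\times \mathcal{Y}'.
\end{equation*}

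The key step is to freeze the conditioning variables at a carefully chosen point so the identity reduces to a classical statement about linear independence of Gaussian densities. Define the ``bad'' set $\mathcal{X}^{bad}_{\B_0}:=\bigcup_{b\neq b'\in \B_0}\mathcal{X}_{b,b'}$, which is a \emph{finite} union of zero-measure sets by the hypothesis of the proposition, and therefore itself has measure zero. Since $\mathcal{Y}'$ has positive measure, $\mathcal{Y}'\setminus \mathcal{X}^{bad}_{\B_0}$ is nonempty; pick any $(\z_{t-1:t-M}^{*})$ in it. By construction, the Gaussian components indexed by $b \in \B_0$ now have pairwise distinct $(\bm{m},\bm{\Sigma})$ parameters, i.e.\ unique indexing in the sense of Eq.~\eqref{eq:unique_indexing_conditional_gaussian} holds at this point.

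At this fixed conditioning point the identity becomes a linear combination of Gaussians in $\z_t$ vanishing on $\mathcal{Z}\subset\mathbb{R}^m$. Since Gaussian densities are (jointly) real-analytic in $\z_t$ and $\mathcal{Z}$ has positive Lebesgue measure (hence accumulation points), the identity propagates to all of $\mathbb{R}^m$. Applying the classical linear independence of Gaussian mixtures \citep{yakowitz1968identifiability} (equivalently Proposition~\ref{prop:gaussian_linear_independence} restricted to the single-lag projection with distinct moments) forces $\gamma_b=0$ for all $b \in \B_0$, a contradiction.

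The main obstacle I anticipate is not the high-level strategy but making sure the argument is order-of-quantifiers robust: (b4) demands a \emph{single} $\mathcal{Y}$ that works for every finite $\B_0$ and every $(\mathcal{Y}',\mathcal{Z})$. This is precisely where the hypothesis ``$\mathcal{X}_{b,b'}$ has zero measure for each pair'' is critical, because the overall bad set over all of $\B$ could well have positive measure if $\B$ is uncountable; the rescue is that finite-mixture linear dependence only ever involves \emph{finitely many} indices, so the relevant bad set $\mathcal{X}^{bad}_{\B_0}$ is a finite union and remains negligible. A small but necessary care is the passage from vanishing on $\mathcal{Z}$ to vanishing on $\mathbb{R}^m$, which uses only the analyticity of Gaussians in their argument and not any extra analyticity of $\bm{m}(\cdot,b)$ or $\bm{\Sigma}(\cdot,b)$ (so the proposition is genuinely weaker than (m2)).
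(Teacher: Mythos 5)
Your proposal is correct and follows essentially the same route as the paper's proof: both fix $\mathcal{Y}=\times^M\mathbb{R}^m$, observe that the relevant bad set $\bigcup_{b\neq b'\in\B_0}\mathcal{X}_{b,b'}$ is a \emph{finite} union of zero-measure sets and hence cannot contain the positive-measure $\mathcal{Y}'$, and then invoke linear independence of distinct Gaussians \citep{yakowitz1968identifiability} to kill the coefficients. Your version is if anything slightly more careful than the paper's, since you make explicit the passage from vanishing on a positive-measure $\mathcal{Z}\subset\mathbb{R}^m$ to vanishing on all of $\mathbb{R}^m$ via real-analyticity of the Gaussian density in $\z_t$ (the paper applies Proposition~\ref{prop:gaussian_linear_independence} to the restricted domain without spelling this out), and you correctly note that no analyticity of $\bm{m}(\cdot,b)$ or $\bm{\Sigma}(\cdot,b)$ is needed.
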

\begin{proof}
    We prove the statement by contradiction. Define $\mathcal{V}_J := \mathcal{G}_{\B}$, $\z:=\z_t$, and $(\y_1,\dots,\y_M):=(\z_{t-1},\dots,\z_{t-M})$ from (b4); and w.l.o.g. set $\mathcal{Y}:= \R^{mM}$. If (b4) does not hold, then there exist nonzero measure sets $\mathcal{Y}'\subset\mathcal{Y}$ and $\mathcal{Z}\subset \R^m$ such that $\mathcal{G}_{\B}$ contains linearly dependent functions. From Proposition \ref{prop:gaussian_linear_independence}, this means there exists an index set $\B_0\subset \B$ with $|\B_0|<+\infty$ such that the unique indexing assumption (Eq. (\ref{eq:unique_indexing_conditional_gaussian})) does not hold in $\mathcal{Y}'$. From the intersection of moments assumption, we know that this set is $\mathcal{Y}'=\bigcup_{b\neq b'\in \B_0} \mathcal{X}_{b,b'}$, which has zero measure. This contradicts the requirement for a nonzero measure set $\mathcal{Y}'\subset\mathcal{Y}$ in (b4).
\end{proof}

Finally, we present linear independence under finite mixtures of the trajectory family in the parametric case.
\begin{theorem}
\label{thm:linear_independence_nonlinear_gaussian}
Define the following trajectory family under the nonlinear Gaussian model
\begin{multline}\label{eq:joint_distrib_fam}
\Bigg\{ p_{a,b_{M+1:T}}(\z_{1:T}) = p_{a}(\z_{1:M})\prod_{t=M+1}^T p_{b_t}(\z_t | \z_{t-1},\dots, \z_{t-M}),\\  \quad p_{a} \in \mathcal{I}^M_{\A}, \quad p_{b_t} \in \mathcal{G}^M_{\B},\quad t = M+1, ..., T \Bigg\},
\end{multline}
with $\mathcal{G}^M_{\B}$, $\mathcal{I}^M_{\A}$ defined by Eqs. (\ref{eq:gaussian_transition_family}), (\ref{eq:gaussian_initial_family}) respectively. Assume:
\begin{itemize}
    \item[(d1)] Unique indexing for $\mathcal{G}_{\B}^M$ and $\mathcal{I}_{\A}^M$: Eqs.~(\ref{eq:unique_indexing_conditional_gaussian}), ~(\ref{eq:unique_indexing_marginal_gaussian}) hold;
    \item[(d2)] The functions in $\mathcal{G}_{\B}$ are continuous with respect to $(\z_{t-1},\dots,\z_M) \in \mathbb{R}^{mM}$;
    \item[(d3)] Zero-measure intersection of moments:  For any $b,b'\in \B$ with $b \neq b'$,  the set 
\begin{multline*}
\mathcal{X}_{b,b'}:=\bigg\{ (\z_{t-1},\dots, \z_{t-M}) \in \mathbb{R}^{mM} \mid  \bm{m}(\z_{t-1},\dots,\z_{t-M}, b) = \bm{m}(\z_{t-1},\dots,\z_{t-M}, b'),\\
 \bm{\Sigma}(\z_{t-1},\dots,\z_{t-M}, b) = \bm{\Sigma}(\z_{t-1},\dots,\z_{t-M}, b') \bigg\}
\end{multline*}
    has zero measure.
\end{itemize}
Then, the trajectory family contains linearly independent distributions under finite mixtures for $(\z_{1:T-r}, \z_{T-r+1}, \dots,\z_T) \in \mathbb{R}^{(T-r)m} \times (\times^r \mathbb{R}^m)$, where $r=M$ if $T\mod M =0$, or $r=T\mod M$ otherwise.
\end{theorem}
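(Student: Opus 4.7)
The plan is to invoke Theorem~\ref{thm:linear_independence_joint_non_parametric} directly by showing that the parametric assumptions (d1)--(d3) translate into the nonparametric conditions (c1)--(c2). Under this reduction, the statement follows immediately from the nested induction argument already carried out in Theorem~\ref{thm:linear_independence_joint_non_parametric}.

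First, I would verify (c1) for $\mathcal{I}^M_\A$. Positivity (a1) is automatic for Gaussian densities, and linear independence under finite mixtures (a3) is given by Proposition~\ref{prop:marginal_gaussian_properties} under the unique indexing condition in (d1). This step is essentially a direct citation, combined with the fact that Gaussians with distinct $(\bm{\mu}(a),\bm{\Sigma}_1(a))$ are linearly independent \citep{yakowitz1968identifiability}.

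Next, I would verify (c2) for $\mathcal{G}^M_\B$. Positivity of (b1) is immediate; continuity (b6) is exactly (d2); and the repeated-functions property (b5) is provided by Proposition~\ref{prop:conditional_gaussian_properties} under (d1). The technically loaded condition is (b4), which requires the existence of a \emph{full-measure} subset $\mathcal{Y}\subset(\times^M\mathbb{R}^m)$ on which linear independence under finite mixtures holds on every nonzero-measure subset. Here I would apply Proposition~\ref{prop:intersection_mom}: assumption (d3) ensures that for any $b\neq b'\in\B$ the pairwise coincidence set $\mathcal{X}_{b,b'}$ has measure zero, so for any finite index subset $\B_0\subset\B$ the union $\bigcup_{b\neq b'\in\B_0}\mathcal{X}_{b,b'}$ also has measure zero. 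Its complement $\mathcal{Y}$ is a full-measure set on which the unique indexing assumption is preserved pointwise, and Proposition~\ref{prop:gaussian_linear_independence} then delivers linear independence on every nonzero-measure subset of $\mathcal{Y}$, i.e., (b4).

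The main obstacle, already absorbed in the supporting propositions, is (b4): strengthening from ``some nonzero-measure $\mathcal{Y}$'' (the analogue of (a4) used in LIPO-1) to ``full-measure $\mathcal{Y}$'' is exactly what assumption (d3) is designed to deliver, and it is this upgrade that lets the overlap-free induction of Lemmas~\ref{lemma:extension_b4}--\ref{lemma:absolutely_magical_lemma} go through for general lag $M$. Once (c1) and (c2) are established, Theorem~\ref{thm:linear_independence_joint_non_parametric} yields linear independence under finite mixtures of the trajectory family in Eq.~\eqref{eq:joint_distrib_fam} on $\mathbb{R}^{(T-r)m}\times(\times^r\mathbb{R}^m)$ with $r=M$ if $T\bmod M=0$ and $r=T\bmod M$ otherwise, which is the desired conclusion.
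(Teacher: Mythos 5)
Your proposal is correct and follows essentially the same route as the paper's proof: verify (c1) via Proposition~\ref{prop:marginal_gaussian_properties}, verify (c2) via Propositions~\ref{prop:conditional_gaussian_properties} and~\ref{prop:intersection_mom} (with (d2) giving (b6) and (d3) delivering the full-measure set needed for (b4)), and then conclude by Theorem~\ref{thm:linear_independence_joint_non_parametric}. The added remarks on why the finite union of zero-measure coincidence sets yields a full-measure $\mathcal{Y}$ match the argument inside Proposition~\ref{prop:intersection_mom}.
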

\begin{proof}
Note that (d2) and (b6) are equivalent. Assumptions (a1), (a3), (b1), and (b5) are satisfied due to Propositions \ref{prop:conditional_gaussian_properties}, \ref{prop:marginal_gaussian_properties}. Assumption (b4) holds due to assumption (d3) via Prop \ref{prop:intersection_mom}. Then, the statement holds by Theorem \ref{thm:linear_independence_joint_non_parametric}.
\end{proof}

\subsection{Conclusion}

Below we conclude the proof of Theorem \ref{thm:identifiability_main}.
\begin{proof}
    We first clarify the alignment from assumptions (m1-m2) to (d1-d3) in Theorem \ref{thm:linear_independence_nonlinear_gaussian}. (d1) and (m1) are equivalent. (d2) is satisfied from (m2) as analytic functions are $\mathcal{C}^{\infty}$, and therefore continuous. From \citet{mityagin2015zero}, we know the zero set of an analytic function is measure zero. Therefore, (m2) implies that the intersections between on means or covariances for any $b,b'\in\mathcal{B}$, with $b\neq b'$ are also measure zero, which implies (d3). Therefore, from Theorem \ref{thm:linear_independence_nonlinear_gaussian}, the trajectory family under the Gaussian transition family $\mathcal{G}_\B$ is linearly independent under finite mixtures. By Theorem \ref{thm:identifiability_msm}, linear independence under finite mixtures of the trajectory family implies identifiability of the MSM in terms of Def. \ref{def:identifiability}. 
\end{proof}

\section{Proof of SDS identifiability}\label{app:sds}

We extend the results from \citet{kivva2022identifiability} and \citet{balsells-rodas2024on} with our multi-lag MSM family as a prior distribution for $\z_{1:T}$. The strategy requires finding some open set where the transformations $F$ and $G$ from two identically distributed SDSs are invertible. Then, identifiability can be shown thanks to piecewise linearity of $F$ and $G$, and the multi-lag MSM being an analytic distribution and closed under affine transformations.

\subsection{Preliminaries}

We need to introduce some definitions and results that will be used in the proof. These have been previously defined in \citet{kivva2022identifiability}.

\begin{definition}
    Let $D_0\subseteq D\subseteq \R^n$ be open sets. Let $f_0:D_0\rightarrow\R$. We say that an analytic function $f:D\rightarrow\R$ is an analytic continuation of $f_0$ onto $D$ if $f(\x) = f_0(\x)$ for every $\x\in D_0$.
\end{definition}
\begin{definition}
    Let $\x_0\in\R^m$ and $\delta>0$. Let $p:B(\x_0,\delta)\rightarrow\R$. Define
    $$\text{Ext}(p):\R^m\rightarrow\R$$
    to be the unique analytic continuation of $p$ on the entire space $\R^m$ if such a continuation exists, and to be 0 otherwise.
\end{definition}
\begin{definition}
    Let $D_0\subset D$ and $p:D\rightarrow \R$ be a function. We define $p|_{D_0}:D\rightarrow \R$ to be a restriction of $p$ to $D_0$, namely a function that satisfies $p|_{D_0}(\x)=p(\x)$ for every $\x\in D_0$.
\end{definition}
\begin{definition}
    Let $f:\R^m\rightarrow\R^n$ be a piecewise affine function. We say that a point $\x\in f(\R^m)\subseteq\R^n$ is generic with respect to $f$ if the pre-image $f^{-1}(\{\x\})$ is finite and there exists $\delta>0$, such that $f:B(\z,\delta)\rightarrow\R^{n}$ is affine for every $\z\in f^{-1}(\{\x\})$.
\end{definition}

We also need to show that the points in the pre-image of a piecewise factored mapping $F$ can be computed using the multi-lag MSM prior under assumptions (m1-m2), which is then used to show that there exists a nonzero measure set where both $F$ and $G$ are invertible. We follow Lemma~D.6 from \citet{balsells-rodas2024on}, where we observe that including additional lags does not alter the statement. We also adapt Corollary~D.7 and Theorem~D.8 from \citet{balsells-rodas2024on} to multi-lag MSMs, where the proofs require minor clarifications.

\begin{lemma}[adapted from \citet{balsells-rodas2024on}]\label{lemma:extension_msm_preimage}
    Consider a random variable $\z_{1:T}$ distributed according to the multi-lag MSM family $\mathcal{M}^{T,M}_{NL}$ with assumptions (m1-m2). Let us consider $f:\mathbb{R}^m \rightarrow \mathbb{R}^m$, a piecewise affine mapping which generates the random variable $\x_{1:T} = F(\z_{1:T})$ as $\x_t = f(\z_t), 1\leq t \leq T$. Also, consider $\x^{(0)} \in \mathbb{R}^{m}$ a generic point with respect to $f$. Then, $\x^{(0)}_{1:T} = \{\x^{(0)},\dots,\x^{(0)}\}\in\mathbb{R}^{Tm}$ is also a generic point with respect to $F$ and the number of points in the pre-image $F^{-1}(\{\x^{(0)}_{1:T}\})$ can be computed as
    $$\left|F^{-1}\left(\left\{\x^{(0)}_{1:T}\right\}\right)\right| = \lim_{\delta\rightarrow 0} \int_{\x_{1:T}\in\mathbb{R}^{Tm}}\text{Ext}\left(p|_{B(\x^{(0)}_{1:T},\delta)}\right) d\x_{1:T}$$
\end{lemma}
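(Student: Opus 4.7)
The plan is to adapt the strategy of Lemma~D.6 in \citet{balsells-rodas2024on} to the multi-lag setting. The only genuinely new ingredient is that the joint density $p(\z_{1:T})$ now comes from $\mathcal{M}^{T,M}_{NL}$, a finite mixture of products of analytic Gaussians. Two observations drive the proof: (i) because $F$ is factored componentwise from $f$, genericity and local affineness of $F$ at $\x^{(0)}_{1:T}$ reduces coordinatewise to genericity and local affineness of $f$ at $\x^{(0)}$; and (ii) under (m2) the MSM density is analytic on all of $\mathbb{R}^{Tm}$, so the local density representation obtained from the change-of-variables formula admits a unique entire analytic continuation, whose integral can be computed exactly.

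\paragraph{Step 1 (Genericity).} First I would verify that $\x^{(0)}_{1:T}$ is generic with respect to $F$. Since $F(\z_{1:T}) = (f(\z_1),\dots,f(\z_T))^\top$, the preimage factorises as
\[
F^{-1}\bigl(\{\x^{(0)}_{1:T}\}\bigr) \;=\; f^{-1}(\{\x^{(0)}\}) \times \cdots \times f^{-1}(\{\x^{(0)}\}),
\]
which is finite because $\x^{(0)}$ is generic with respect to $f$. For each component $\z_t^{(i)}\in f^{-1}(\{\x^{(0)}\})$ there is $\delta_t^{(i)}>0$ such that $f$ is affine on $B(\z_t^{(i)},\delta_t^{(i)})$; taking $\delta^{(i)}=\min_t \delta_t^{(i)}/\sqrt{T}$ yields a ball $B(\z^{(i)}_{1:T},\delta^{(i)})$ on which $F$ is affine.

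\paragraph{Step 2 (Local density and analytic extension).} Enumerate the preimages as $\z^{(1)}_{1:T},\dots,\z^{(N)}_{1:T}$, with local affine maps $F(\z_{1:T}) = A_i\z_{1:T} + b_i$ on $B(\z^{(i)}_{1:T},\epsilon)$. Choose $\delta>0$ small enough that the images $F(B(\z^{(i)}_{1:T},\epsilon))$ are pairwise disjoint and each contains $B(\x^{(0)}_{1:T},\delta)$. The change-of-variables formula gives, for $\x_{1:T}\in B(\x^{(0)}_{1:T},\delta)$,
\[
p(\x_{1:T}) \;=\; \sum_{i=1}^{N} p_{\z}\!\bigl(A_i^{-1}(\x_{1:T}-b_i)\bigr)\,|\det A_i|^{-1}.
\]
Assumption (m2) makes every initial and transition component of $p_{\z}$ analytic, so their finite mixture $p_{\z}$ is analytic on $\mathbb{R}^{Tm}$; composing with the affine bijections $A_i^{-1}(\cdot-b_i)$ preserves analyticity. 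Hence the right-hand side defines an entire analytic function on $\mathbb{R}^{Tm}$, and by uniqueness of analytic continuation,
\[
\mathrm{Ext}\bigl(p|_{B(\x^{(0)}_{1:T},\delta)}\bigr)(\x_{1:T})
\;=\; \sum_{i=1}^{N} p_{\z}\!\bigl(A_i^{-1}(\x_{1:T}-b_i)\bigr)\,|\det A_i|^{-1}
\qquad \forall\, \x_{1:T}\in\mathbb{R}^{Tm}.
\]

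\paragraph{Step 3 (Integration and limit).} Integrating term-by-term and applying the affine change of variables $\z_{1:T}=A_i^{-1}(\x_{1:T}-b_i)$ gives
\[
\int_{\mathbb{R}^{Tm}} \mathrm{Ext}\bigl(p|_{B(\x^{(0)}_{1:T},\delta)}\bigr)(\x_{1:T})\,d\x_{1:T}
\;=\; \sum_{i=1}^{N} \int_{\mathbb{R}^{Tm}} p_{\z}(\z_{1:T})\,d\z_{1:T} \;=\; N.
\]
The limit $\delta\to 0$ is needed only to guarantee that the ball lies inside the common image of the local affine pieces, so the formula holds for every sufficiently small $\delta$ and in particular in the limit. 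The main technical obstacle is Step~2: we need the local density representation to coincide with a \emph{globally} analytic function so that $\mathrm{Ext}$ is well defined and equals the desired sum. This is precisely what (m2) secures, by ensuring analyticity of each Gaussian factor, and hence of the whole MSM density, on $\mathbb{R}^{Tm}$; the rest of the argument then mirrors the single-lag case.
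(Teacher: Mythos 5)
Your proof is correct and follows essentially the same route as the paper's: factorise the preimage coordinatewise, apply the change-of-variables formula on a small ball around $\x^{(0)}_{1:T}$, use analyticity of the MSM density under (m2) to identify the unique entire continuation, and integrate term by term to count the preimage points (the paper simply makes your Step~2 concrete by writing out the affinely transformed Gaussian mixture, invoking closure of the MSM under factored affine transformations). One cosmetic slip: the images $F(B(\z^{(i)}_{1:T},\epsilon))$ cannot be pairwise disjoint since they all contain $\x^{(0)}_{1:T}$ — what you actually need is that the \emph{domain} balls are disjoint and cover $F^{-1}(B(\x^{(0)}_{1:T},\delta))$, which is exactly what genericity of $\x^{(0)}$ provides.
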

\begin{proof}
    If $\x^{(0)}\in\mathbb{R}^{m}$ is a generic point with respect to $f$, $\x^{(0)}_{1:T}$ is also a generic point with respect to $F$ since the pre-image is $F^{-1}(\{\x^{(0)}_{1:T}\})$ now larger but still finite. In other words, $F^{-1}(\{\x^{(0)}_{1:T}\})$ is the Cartesian product $\times^T\mathcal{Z}$, where $\mathcal{Z}=\{\z_1,\z_2,\dots,\z_\ell\}$ are the points in the pre-image $f^{-1}(\{\x^{(0)}\})$.  Considering this, we have well defined affine mappings $G_{i_1,\dots,i_T}:B(\z_{i_1},\dots,\z_{i_T}, \epsilon)\rightarrow\mathbb{R}^m$, $i_t\in[\ell]$ for $1\leq t \leq T$, such that $G_{i_1,\dots,i_T}=F(\z_{1:T}), \forall\z_{1:T}\in B(\z_{i_1},\dots,\z_{i_T},\epsilon)$. This affine mapping $G_{i_1,\dots,i_T}$ is factored: $g_{i_t}(\z_t) = f(\z_t), \  \forall \z_t\in B(\z_i,\epsilon)$,
$$G_{i_1,\dots,i_T}=\begin{pmatrix}
A_{i_1} & \dots & \bm{0}\\
\vdots & \ddots & \vdots \\
\bm{0} & \dots & A_{i_T}
\end{pmatrix} \begin{pmatrix}
    \z_1 \\
    \vdots \\
    \z_T
\end{pmatrix} + \begin{pmatrix}
    b_{i_1} \\
    \vdots \\
    b_{i_T}
\end{pmatrix},$$
with invertible $A_{i_1},\dots,A_{i_T}$. Let $\delta_0 > 0$ such that
$$B(\x^{(0)}_{1:T},\delta_0) \subseteq \bigcap_{i_1, \dots, i_T}^\ell G_{i_1,\dots,i_T}\big(B(\z_{i_1},\dots,\z_{i_T}, \epsilon)\big).$$
We can compute the likelihood for every $\x_{1:T}\in B(\x^{(0)}_{1:T},\delta')$ with $0 < \delta' < \delta_0$ using Prop.~C.2 from \citet{balsells-rodas2024on} where the MSM is closed under factored affine transformations. We can assume $M=1$ w.l.o.g., since for lag $M>1$, the MSM is still closed under factored affine transformations,
\begin{multline*}
p|_{B(\x^{(0)}_{1:T},\delta)} =  \sum_{i_1,\dots,i_T}^\ell \sum_{j_1,\dots,j_T}^K p\left(s_{1:T} = \{j_1, \dots, j_T\}\right) \N\left(\x_1; A_{i_1}\bm{\mu}(j_1) + \Bb_{i_1}, A_{i_1}\bm{\Sigma}_{1}(j_1)A_{i_1}^T\right)\\
     \prod_{t=2}^T \N\left(\x_t; A_{i_t}\bm{m}\left(A_{i_{t-1}}^{-1}\left(\x_{t-1} - \Bb_{i_{t-1}}\right),j_t\right) + \Bb_{i_t}, A_{i_t}\bm{\Sigma}\Big( 
A_{i_{t-1}}^{-1}\left(\x_{t-1} - \Bb_{i_{t-1}} \right) ,j_t\Big)A_{i_t}^T\right),
\end{multline*}
where we use $\x'_{t-1}= A_{i_{t-1}}^{-1}\left(\x_{t-1} - \Bb_{i_{t-1}} \right)$, and for higher lags $M>1$ we will have $(\x'_{t-1}, \dots, \x'_{t-M})=\left( A_{i_{t-1}}^{-1}\left(\x_{t-1} - \Bb_{i_{t-1}} \right), \dots, A_{i_{t-M}}^{-1}\left(\x_{t-M} - \Bb_{i_{t-M}} \right)\right)$ on the arguments of $\bm{m}(\cdot)$ and $\bm{\Sigma}(\cdot)$ in the equation above. The previous density is an analytic function which is defined on an open neighbourhood of $\x^{(0)}_{1:T}$. Then from the identity theorem of analytic functions the resulting density defines the analytic extension of $p|_{B(\x^{(0)}_{1:T},\delta)}$ on $\mathbb{R}^{Tm}$. Then, we have
\begin{align*}
&\int_{\x_{1:T}\in\mathbb{R}^{Tm}} \text{Ext}\left(p|_{B(\x^{(0)}_{1:T},\delta)}\right) d\x_{1:T} \\
& \quad=\int_{\x_{1:T}\in\mathbb{R}^{Tm}} \sum_{i_1,\dots,i_T}^\ell \sum_{j_1,\dots,j_T}^K p\left(s_{1:T} = \{j_1, \dots, j_T\}\right) \N\left(\x_1; A_{i_1}\bm{\mu}(j_1) + \Bb_{i_1},  A_{i_1}\bm{\Sigma}_{1}(j_1)A_{i_1}^T\right)\\
    &\qquad \prod_{t=2}^T \N\Big(\x_t; A_{i_t}\bm{m}\left(\x'_{t-1},j_t\right) + \Bb_{i_t}, A_{i_t}\bm{\Sigma}\left(\x'_{t-1} ,j_t\right)A_{i_{t}}^T\Big)\, d\x_{1:T}\\
 &\quad=  \sum_{i_1,\dots,i_T}^\ell \int_{\x_{1:T}\in\mathbb{R}^{Tm}} \sum_{j_1,\dots,j_T}^K p\left(s_{1:T} = \{j_1, \dots, j_T\}\right) \N\left(\x_1; A_{i_1}\bm{\mu}(j_1) + \Bb_{i_1}, A_{i_1}\bm{\Sigma}_{1}(j_1)A_{i_1}^T\right)\\
    &\qquad \prod_{t=2}^T \N\Big(\x_t; A_{i_t}\bm{m}\left(\x'_{t-1},j_t\right) + \Bb_{i_t}, A_{i_t}\bm{\Sigma}\left(\x'_{t-1} ,j_t\right)A_{i_{t}}^T\Big)\, d\x_{1:T}\\
 &\quad= \sum_{i_1,\dots,i_T}^\ell 1 = \ell^T = \left|F^{-1}(\{\x^{(0)}_{1:T}\})\right|.
\end{align*}
\end{proof}

As mentioned, this leads to having an open set where both $F$ and $G$ mappings are invertible.

\begin{corollary}[adapted from \citet{balsells-rodas2024on}]\label{cor:invertibility}
    Let $F$, $G : \mathbb{R}^{Tm}\rightarrow\mathbb{R}^{Tn}$ be factored piecewise affine mappings, with $\x_t := f(\z_t)$ and  $\x_t' := g(\z_t')$,  for $1\leq t \leq T$. Assume $f$ and $g$ are weakly injective (Def. \ref{def:weakly_injective}). Let $\z_{1:T}$ and $\z_{1:T}'$ be distributed according to the identifiable multi-lag MSM family $\mathcal{M}^{T,M}_{NL}$ with assumptions (m1-m2). Assume $F(\z_{1:T})$ and $G(\z_{1:T}')$ are identically distributed. Assume that for $\x_0\in\R^{n}$ and $\delta>0$, $f$ is invertible on $B(\x_0,2\delta)\cap f(\R^m)$.

    Then, for $\x^{(0)}_{1:T}=\{\x_0,\dots,\x_0\}\in\R^{Tn}$ there exists $\x^{(1)}_{1:T}\in B(\x^{(0)}_{1:T},\delta)$ and $\delta_1 >0$ such that both $F$ and $G$ are invertible on $B(\x^{(1)}_{1:T},\delta_1)\cap F(\R^{Tm})$.
\end{corollary}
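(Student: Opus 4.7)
}
My plan is to reduce invertibility of $F$ and $G$ on a common open set to a counting argument for preimages, which Lemma~\ref{lemma:extension_msm_preimage} already provides via the analytic extension of the density. First, since $f$ is assumed invertible on $B(\x_0,2\delta)\cap f(\R^m)$, in particular $|f^{-1}(\{\x_0\})|=1$ and $\x_0$ is generic with respect to $f$. By Lemma~\ref{lemma:extension_msm_preimage}, the point $\x^{(0)}_{1:T}=(\x_0,\dots,\x_0)$ is then generic with respect to $F$, and
\[
\bigl|F^{-1}(\{\x^{(0)}_{1:T}\})\bigr|=\lim_{\delta'\to 0}\int_{\R^{Tm}}\mathrm{Ext}\!\bigl(p_F|_{B(\x^{(0)}_{1:T},\delta')}\bigr)\,d\x_{1:T}=1,
\]
where $p_F$ denotes the density of $F(\z_{1:T})$.

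Next I would use the distributional equality $F(\z_{1:T})\stackrel{d}{=}G(\z'_{1:T})$ to transfer this count to $G$. Because the densities agree on $B(\x^{(0)}_{1:T},\delta')$, their analytic continuations on $\R^{Tm}$ coincide (uniqueness of analytic continuation), provided the corresponding extension for $G$ exists. The potential difficulty is that $\x^{(0)}_{1:T}$ might not be generic for $G$; I would circumvent this by first perturbing to a nearby point. Since $g$ is weakly injective, the set of non-generic points of $g$ has measure zero in $g(\R^m)$, and the same property lifts to $G$ via the factored structure (a finite union of measure-zero sets in $G(\R^{Tm})$). Combined with the fact that $F(\R^{Tm})=G(\R^{Tm})$ $(F_\#p=G_\#p')$ modulo measure zero, I can pick $\x^{(1)}_{1:T}\in B(\x^{(0)}_{1:T},\delta)\cap F(\R^{Tm})$ that is simultaneously generic for $F$ and for $G$, and such that $f$ remains invertible on some ball around each preimage (using that $f$ is invertible on $B(\x_0,2\delta)\cap f(\R^m)$ and $\x^{(1)}_{1:T}\in B(\x^{(0)}_{1:T},\delta)$, so $\x^{(1)}_t\in B(\x_0,2\delta)$).

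Now I apply Lemma~\ref{lemma:extension_msm_preimage} at $\x^{(1)}_{1:T}$ to both $F$ and $G$:
\[
\bigl|F^{-1}(\{\x^{(1)}_{1:T}\})\bigr|=\lim_{\delta'\to 0}\!\int\!\mathrm{Ext}\!\bigl(p_F|_{B(\x^{(1)}_{1:T},\delta')}\bigr)d\x=\lim_{\delta'\to 0}\!\int\!\mathrm{Ext}\!\bigl(p_G|_{B(\x^{(1)}_{1:T},\delta')}\bigr)d\x=\bigl|G^{-1}(\{\x^{(1)}_{1:T}\})\bigr|,
\]
where the middle equality uses $p_F=p_G$ almost everywhere and thus everywhere on a neighbourhood of $\x^{(1)}_{1:T}$ by continuity of the analytic extensions. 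Since $F$ is invertible near $\x^{(1)}_{1:T}$ (one preimage and $f$ affine there), this common count equals $1$. Hence $G$ has a unique preimage at $\x^{(1)}_{1:T}$ as well, and by genericity there is a $\delta_1>0$ such that $G$ is affine (with invertible linear part, otherwise the pushforward would be singular along a lower-dimensional set contradicting the density match) on a neighbourhood of the unique preimage. Shrinking $\delta_1$ if necessary so that $B(\x^{(1)}_{1:T},\delta_1)$ lies in the image of both local affine pieces, both $F$ and $G$ are invertible on $B(\x^{(1)}_{1:T},\delta_1)\cap F(\R^{Tm})$.

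The main obstacle I anticipate is the simultaneous genericity and image-agreement step: ensuring that we can pick $\x^{(1)}_{1:T}$ inside $F(\R^{Tm})\cap G(\R^{Tm})$ where neither mapping's singular locus obstructs the application of Lemma~\ref{lemma:extension_msm_preimage}. This is handled by observing that the non-generic loci for $F$ and $G$ are measure-zero in their respective (equal, up to measure zero) images, and that $F(\R^{Tm})$ has positive measure near $\x^{(0)}_{1:T}$ since $f$ is locally invertible there. The rest is bookkeeping to translate the equality of preimage counts into the invertibility statement via the piecewise-affine structure.
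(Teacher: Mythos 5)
Your proposal follows essentially the same route as the paper's proof (which simply defers to the argument of \citet{balsells-rodas2024on}, with Lemma~\ref{lemma:extension_msm_preimage} substituted for its single-lag analogue): use the analytic-extension preimage count, transfer it from $F$ to $G$ via equality of the pushforward densities, and perturb to a point that is simultaneously generic for both maps. The substance is correct. Two small repairs are needed, though. First, invertibility of $f$ on $B(\x_0,2\delta)\cap f(\R^m)$ gives $|f^{-1}(\{\x_0\})|=1$ but not that $f$ is affine near the preimage, so $\x_0$ need not be generic; this is harmless since the count at $\x^{(0)}_{1:T}$ is never actually used. Second, and more importantly, Lemma~\ref{lemma:extension_msm_preimage} is stated only for \emph{diagonal} tuples $\x^{(0)}_{1:T}=(\x^{(0)},\dots,\x^{(0)})$, so a generic perturbation $\x^{(1)}_{1:T}$ chosen freely inside $B(\x^{(0)}_{1:T},\delta)\subset\R^{Tn}$ is not covered by the lemma as written; you should instead perturb $\x_0$ itself to some $\x_1\in B(\x_0,\delta/\sqrt{T})\cap f(\R^m)$ that is generic for both $f$ and $g$ (possible since the non-generic loci of each are null in the common image) and set $\x^{(1)}_{1:T}=(\x_1,\dots,\x_1)$, which stays in $B(\x^{(0)}_{1:T},\delta)$ and in $B(\x_0,2\delta)$ componentwise. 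With that adjustment your counting argument goes through and yields $|G^{-1}(\{\x^{(1)}_{1:T}\})|=|F^{-1}(\{\x^{(1)}_{1:T}\})|=1$, from which local affineness with injective linear part (forced already by finiteness of the preimage) gives the claimed ball $B(\x^{(1)}_{1:T},\delta_1)$.
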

\begin{proof}
    Identical to \citet{balsells-rodas2024on}, replacing Lemma~D.6 there with Lemma \ref{lemma:extension_msm_preimage} (multi-lag version), which shows that the points in the pre-image of $G$ can be computed thanks to $\z'_{1:T}$ following a multi-lag MSM under assumptions (m1-m2) as almost every point in the pre-image of $G$ is generic, leading to invertibility of both $F$ and $G$ for a nonzero measure set.
\end{proof}

\begin{theorem}[adapted from \citet{balsells-rodas2024on}]\label{thm:identifiability-affine-msm}
    Let $F$, $G : \mathbb{R}^{Tm}\rightarrow\mathbb{R}^{Tn}$ be factored piecewise affine mappings, with $\x_t := f(\z_t)$ and  $\x_t' := g(\z_t')$,  for $1\leq t \leq T$. Let $\z_{1:T}$ and $\z_{1:T}'$ be distributed according to the identifiable multi-lag MSM family $\mathcal{M}^{T,M}_{NL}$ under assumptions (m1-m2). Assume $F(\z_{1:T})$ and $G(\z_{1:T}')$ are identically distributed, and that there exists $\x^{(0)}_{1:T}\in\R^{Tn}$ and $\delta>0$ such that $F$ and $G$ are invertible on $B(\x^{(0)}_{1:T}, \delta) \cap F(\mathbb{R}^{Tm})$. Then there exists an invertible factored affine transformation $H$ such that $H(\z_{1:T}) = \z_{1:T}'$.
\end{theorem}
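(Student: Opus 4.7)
The plan is to leverage the local invertibility assumption together with the analyticity of MSM densities to promote a locally-defined affine map into a globally valid factored affine transformation, and then invoke analytic continuation plus the MSM identifiability result (Theorem~\ref{thm:identifiability_main}).

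First, I would shrink $\delta$ so that, on $B(\x^{(0)}_{1:T},\delta)\cap F(\mathbb{R}^{Tm})$, both $F$ and $G$ are given by single affine pieces. Since $F$ and $G$ are factored piecewise affine and invertible on this neighbourhood, their local affine representations are block-diagonal with invertible diagonal blocks. Define
$$
H_0 := G^{-1} \circ F
$$
on the open set $D_F := F^{-1}(B(\x^{(0)}_{1:T},\delta))$. Then $H_0$ is a composition of block-diagonal affine maps, hence factored affine with invertible linear part on $D_F$. Since an affine map on $\mathbb{R}^{mT}$ is determined by its values on any open set, I extend $H_0$ to a unique invertible factored affine map $H : \mathbb{R}^{mT} \to \mathbb{R}^{mT}$.

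Next, I would translate the distributional equality $F_\# p_{\z} = G_\# p_{\z'}$ into an equality at the density level on the latent side. On $D_F$, change of variables through the affine $F$ and $G$ gives
$$
p_{\z}(\z_{1:T}) \;=\; |\det J_H|\, p_{\z'}(H(\z_{1:T})), \qquad \z_{1:T} \in D_F.
$$
Both sides are analytic functions on $\mathbb{R}^{mT}$: the MSM density is a finite sum of products of multivariate Gaussians with analytic means $\bm{m}(\cdot,b)$ and covariances $\bm{\Sigma}(\cdot,b)$ (assumption (m2)), and $\bm{\Sigma}(\cdot,b)\succ 0$ ensures the Gaussian normalising constants are analytic. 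The composition $p_{\z'}\circ H$ remains analytic because $H$ is affine, and $|\det J_H|$ is a positive constant. Since two real-analytic functions on the connected domain $\mathbb{R}^{mT}$ that agree on an open set must agree everywhere, the density identity holds globally, i.e.\ $H_\# p_{\z} = p_{\z'}$.

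The main obstacle I anticipate is establishing this global extension rigorously: specifically, ensuring (i) that $H_0$ is genuinely factored (not merely affine with some non-block structure), which requires that $F$ and $G$ are factored in their local affine pieces, so their inverses and composition preserve block-diagonality; and (ii) that the analytic continuation argument applies to the MSM density as written, which is where assumption (m2) is essential — without analyticity the local equality would not propagate. Once $H_\# p_{\z} = p_{\z'}$ is established globally, together with invertibility of $H$ inherited from invertibility of its linear part on $D_F$, the conclusion $H(\z_{1:T}) \stackrel{d}{=} \z_{1:T}'$ follows, completing the proof. Note that the MSM family being closed under factored affine transformations (used implicitly in Lemma~\ref{lemma:extension_msm_preimage}) guarantees $H_\# p_{\z} \in \mathcal{M}_{NL}^{T,M}$, so this identity is consistent within the model class, and identifiability of the MSM (Theorem~\ref{thm:identifiability_main}) is not required beyond what analytic continuation already gives.
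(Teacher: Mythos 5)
Your proposal follows essentially the same route as the paper, whose proof of this theorem is a direct appeal to the conference version's argument via exactly the two facts you isolate: closure of the multi-lag MSM family under factored affine transformations and analyticity of its densities, followed by the local-invertibility-plus-analytic-continuation step inherited from \citet{kivva2022identifiability}. The only point to tighten is your opening move: invertibility of $F$ and $G$ on $B(\x^{(0)}_{1:T},\delta)\cap F(\R^{Tm})$ does not by itself make them single affine pieces there, and shrinking $\delta$ about the \emph{same} centre can fail if $\x^{(0)}_{1:T}$ lies on a crease, so one should instead pass to a smaller ball around a generic point of the region (the non-affine locus being measure zero), after which your argument goes through.
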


\begin{proof}
    Since a multi-lag MSM is closed under affine transformations \citep[Prop~C.2]{balsells-rodas2024on}, and the distributions of the multi-lag MSM family $\mathcal{M}^{T,M}_{NL}$ under assumptions (m1-m2) are analytic functions, the proof is identical to \citet{balsells-rodas2024on}.
\end{proof}

Now we have all the elements to prove Theorem \ref{thm:identifiability_sds:i} (i).

\subsection{Proof of Theorem \ref{thm:identifiability_sds:i} (i)}

\begin{proof}
    It follows from \citet{balsells-rodas2024on}. Assume there exists another model that generates the same distribution from Eq.(\ref{eq:generative_model_sns}), with a prior $p'\in \mathcal{M}^{T,M}_{NL}$ under assumptions (m1-m2), and nonlinear emission $F'$, composed by $f'$ which is weakly injective and piecewise linear: i.e., $(F\#p )(\x_{1:T}) = (F'\#p' )(\x_{1:T})$.
    
    From weakly-injectivity, at least for some $\x_0\in\R^{n}$ and $\delta>0$, $f$ is invertible on $B(\x_0,2\delta)\cap f(\R^m)$. This satisfies the preconditions from Corollary \ref{cor:invertibility}, which implies there exists $\x^{(1)}_{1:T}\in B(\x^{(0)}_{1:T},\delta)$ and $\delta_1 >0$ such that both $F$ and $F'$ are invertible on $B(\x^{(1)}_{1:T},\delta_1)\cap F(\R^{Tm})$. Thus, by Theorem \ref{thm:identifiability-affine-msm}, there exists an affine transformation $H$ such that $H(\z_{1:T}) = \z_{1:T}'$, i.e., $p\in\mathcal{M}^{T,M}_{NL}$ is identifiable up to affine transformations.
\end{proof}

\subsection{Proof of Theorem \ref{thm:identifiability_sds:ii} (ii)}
\begin{proof}
    It follows from Theorem \ref{thm:identifiability_sds:i} (i) and Theorem E.1 from \citet{kivva2022identifiability}, which is applicable for any $t\in\{M+1,\dots,T\}$ as $\z_t | \z_{t-1}, \dots \z_{t-M}$ is a GMM. Note that $t\in\{1,\dots,M\}$ is already covered by \citet{kivva2022identifiability} as it reduces to the initial distribution case. 
    From Theorem \ref{thm:identifiability_sds:i} (i) we have identifiability up to affine transformations: i.e., we have some $\z'_{1:T}$ such that  $\z'_t = A \z_{t} + \Bb, t\in\{1,\dots,T\}$. Given (s1) and (s2), for any $t\in\{M+1,\dots,T\}$, the conditions of Theorem E.1 in \citet{kivva2022identifiability} are satisfied in a nonzero measure set in the space of $(\z_{t-1}, \dots \z_{t-M})$. Therefore, we can find $\tilde{A}$ such that $\tilde{\z}_t =\tilde{A}^{-1}\z'_t = \tilde{P}\tilde{D}\z_t + \tilde{\Bb}$, where $\tilde{P}$ and $\tilde{D}$ are permutation and scaling matrices respectively.
\end{proof}

\section{Proof of Corollary \ref{cor:causality_identifiability}}\label{app:causality_proof}
We first provide the definition of a node permutation in the context of temporal graph structures.
\begin{definition}\label{def:graph_node_permutation}
    Let $\mathbf{G}=(V,E_{1:M})$ be a graph, where from Section \ref{sec:causal_discovery}, $\mathbf{G}\in\mathbb{G}$ has $V:=[m]$ as nodes and contains information about the temporal edges with maximum lag $M$. A node permutation $\pi\in S_m$ on $\mathbf{G}$ results in a permuted graph $\tilde{\mathbf{G}}=\pi(\mathbf{G})$ with the same set of vertices, but with a permuted set of edges, for each $1\leq \tau \leq M$, as follows:
    \begin{equation}
        \tilde{\mathbf{G}}=(\tilde{V},\tilde{E}_{1:M}), \quad \tilde{V}=\big\{\pi(1),\dots, \pi(m)\big\}, \quad \tilde{E}_{\tau}= \Big\{\big(\pi(i),\pi(j) \big)\mid \big( i,j \big)\in E_{\tau}\Big\}
    \end{equation}
\end{definition}

\begin{proof}
    Assume two SDSs $p(\x_{1:T}),\tilde{p}(\x_{1:T})$ with corresponding regime-dependent graphs $\mathbf{G}_{1:K}$, $\tilde{\mathbf{G}}_{1:\tilde{K}}$. 

    Given assumption (m1) and (m2*), we recover $p(\z_{1:T})$ up to permutation and scaling from Theorem \ref{thm:identifiability_sds}. Combined with Theorem \ref{thm:identifiability_main}, we have identifiability up to permutations from Def. \ref{def:identifiability}, which implies $K=\tilde{K}$. We focus on condition 4 where for each $1 \leq i \leq C$, there is some $1\leq j \leq \tilde{C}$ such that:
    \begin{equation}
        p_{b^i_t}(\z_t | \z_{t-1:t-M})=\tilde{p}_{\tilde{b}^j_t}(\z'_t | \z'_{t-1:t-M}), \quad\forall(\z'_t,\dots, \z'_{t-M}) \in \R^{m(M+1)}.
    \end{equation}
    Then, at each time step $t$ there exists a permutation $\sigma$ such that for each $k\in[K]$, $p(\z_t | \z_{t-1:t-M}, s_t=k)=\tilde{p}(\z'_t | \z'_{t-1:t-M}, s_t=\sigma(k)), \quad\forall(\z_t,\dots, \z_{t-M}) \in \R^{m(M+1)}$. We know from condition 2 that $\sigma$ is constant for $t > M$. Since we know the distributions are Gaussian, from Gaussian identifiability \citep{yakowitz1968identifiability} we have
\begin{align}\label{eq:transition_function_proof_structure}
    \bm{m}(\z_{t-1},\dots, \z_{t-M}, k) & = PD\tilde{\bm{m}}(\z'_{t-1},\dots, \z'_{t-M}, \sigma(k)) + \Bb  \quad k\in[K],
\end{align}
where each $\z'_t = D^{-1}P^T\z_t + \tilde{\Bb}$ since the MSM is closed under affine transformations \citep[Prop.~C.2]{balsells-rodas2024on}.
Now, given that the functions $\bm{m}(\cdot,k), k\in[K]$ are analytic, the Jacobian will preserve similar permutation equivalences. W.l.o.g. we fix $\tau\in[M]$, and $k\in[K]$ and denote the Jacobian of the transition w.r.t. $\z_{t-\tau}$ as $J_{\bm{m}(\cdot,k), \tau}$. We compute the derivative w.r.t. $\z_{t-\tau}$ on Eq. (\ref{eq:transition_function_proof_structure}):
\begin{align}
    \frac{\partial \bm{m}(\z_{t-1},\dots, \z_{t-M}, k)}{\partial \z_{t-\tau}} &= PD \frac{\partial \tilde{\bm{m}}(\z'_{t-1},\dots, \z'_{t-M}, \sigma(k))}{\partial \z'_{t-\tau}}\frac{\partial \z'_{t-\tau}}{\partial \z_{t-\tau}},\\
    J_{\bm{m}(\cdot,k), \tau} &= P D J_{\tilde{\bm{m}}(\cdot,\sigma(k)), \tau}D^{-1}P^T.
\end{align}
Given the minimality assumption (m3), we can directly map $J_{\bm{m}(\cdot,k), \tau}$ and $J_{\tilde{\bm{m}}(\cdot,\sigma(k)), \tau}$ to some adjacency matrix via thresholding entries with absolute value greater than 0. Given $D$ is diagonal, the structure of $DJ_{\tilde{\bm{m}}(\cdot,\sigma(k)), \tau}D^{-1}$ is preserved. The resulting set of graph adjacency matrices $A_{k,\tau},\tilde{A}_{k,\tau}$ for regime $k$ and lag $\tau$ relate as follows
\begin{equation}
    A_{k,\tau} = P \tilde{A}_{\sigma(k), \tau} P^T,
\end{equation}
where the permutation matrix $P$ is associated to some permutation $\pi\in S_m$, and reflects a permutation of the rows and columns of $A_{k,\tau}$ following $\pi$. Therefore, the above operation is equivalent to a node permutation as defined in Def.~\ref{def:graph_node_permutation}, and we have $\mathbf{G}_{k}=\pi(\tilde{\mathbf{G}}_{\sigma(k)})$, for each $k\in [K]$.
\end{proof}

\section{Estimation Details}\label{app:estimation_details}

The ELBO presented in Eq.~\eqref{eq:elbo} is derived as follows:
\begin{align*}
\log p_{\mparam}&(\x_{1:T}) =\log \int \sum_{\s_{M:T}} p_{\mparam}(\x_{1:T}, \z_{1:T}, \s_{M:T})d\z_{1:T}  \\
& \geq \mathbb{E}_{q_{\vparam,\mparam}(\z_{1:T},\s_{M:T}| \x_{1:T})}\left[\log\frac{p_{\mparam}(\x_{1:T},\z_{1:T}| \s_{M:T})p_{\mparam}(\s_{M:T})}{q_{\vparam}(\z_{1:T},\s_{M:T}| \x_{1:T})}  \right]\\
& \geq \mathbb{E}_{q_{\vparam}(\z_{1:T}| \x_{1:T})}\left[\log\frac{p_{\mparam}(\x_{1:T}|\z_{1:T})}{q_{\vparam}(\z_{1:T}| \x_{1:T})}  + \mathbb{E}_{p_{\mparam}(\s_{M:T}| \z_{1:T})}\left[\log\frac{p_{\mparam}(\z_{1:T}|\s_{M:T})p_{\mparam}(\s_{M:T})}{p_{\mparam}(\s_{M:T}| \z_{1:T})}\right]\right]\\
& \geq \mathbb{E}_{q_{\vparam}(\z_{1:T}| \x_{1:T})}\left[\log\frac{p_{\mparam}(\x_{1:T}|\z_{1:T})}{q_{\vparam}(\z_{1:T}| \x_{1:T})} +   \log p_{\mparam}(\z_{1:T})\right]\\
& \geq \mathbb{E}_{q_{\vparam}(\z_{1:T}|\x_{1:T})} \Big[\log p_{\mparam}(\x_{1:T}|\z_{1:T})\Big]
    - KL\Big(q_{\vparam}(\z_{1:T}|\x_{1:T}) || p_{\mparam}(\z_{1:T})\Big).
\end{align*}

Given samples $\tilde{\z}\sim q_{\vparam}(\z_{1:T}|\x_{1:T})$, the posterior distributions $\gamma_{t,k}(\tilde{\z}_{1:T})$ and $\xi_{t,k,k'}(\tilde{\z}_{1:T})$ used in Eq.~\eqref{eq:gradients_z_posterior} can be computed using forward and backward messages \citep{bishop2006pattern}: $\{ \alpha_{t,k}(\tilde{\z}_{1:t}) = p_{\mparam}(\tilde{\z}_{1:t},s_t=k) \}$, and $\{ \beta_{t,k}(\tilde{\z}_{t+1:T}) = p_{\mparam}(\tilde{\z}_{t+1:T}|s_t=k) \}$, respectively:
\begin{equation}
    \gamma_{t,k}(\tilde{\z}_{1:T}) = \frac{\alpha_{t,k}(\tilde{\z}_{1:t})\beta_{t,k}(\tilde{\z}_{t+1:T})}{\sum_{k=1}^K\alpha_{t,k}(\tilde{\z}_{1:t})\beta_{t,k}(\tilde{\z}_{t+1:T})},
\end{equation}
\begin{equation}
    \xi_{t,k,k'}(\tilde{\z}_{1:T}) = \frac{\alpha_{t-1,k'}(\tilde{\z}_{1:t-1})\beta_{t,k}(\tilde{\z}_{t+1:T})p(\tilde{\z}_{t}|\tilde{\z}_{t-1}, \dots, \tilde{\z}_{t-M}, s_t=k)p(s_t=k|s_{t-1}=k')}{\sum_{k=1}^K\alpha_{t,k}(\tilde{\z}_{1:t})\beta_{t,k}(\tilde{\z}_{t+1:T})}
\end{equation}
where $\alpha_{t,k}(\tilde{\z}_{1:t})$ and $\beta_{t,k}(\tilde{\z}_{t+1:T})$ are computed as follows:
\begin{equation}
    \alpha_{t,k}(\tilde{\z}_{1:t}) = p_{\mparam}(\tilde{\z}_t | \tilde{\z}_{t-1:t-M}, s_t = k)  
\times \sum_{k'=1}^K p_{\mparam}(s_t = k | s_{t-1} = k') \alpha_{t-1,k'}(\tilde{\z}_{1:t-1}),
\end{equation}
\begin{equation}
   \beta_{t,k}(\tilde{\z}_{t+1:T}) = \sum_{k'=1}^K p_{\mparam}(\tilde{\z}_{t+1} | \tilde{\z}_{t:t+1-M}, s_{t+1} = k') 
\times p_{\mparam}(s_{t+1} = k' | s_{t} = k) \beta_{t+1,k'}(\tilde{\z}_{t+1:T}). 
\end{equation}

\section{Experiment Details}\label{app:experiment_details}

The code for reproducing our experiments can be found here: \url{https://github.com/charlio23/identifiable-SDS}.

\subsection{Synthetic Data Generation}

We provide additional details of the data generation process. The initial Gaussian components are sampled from $\mathcal{N}(0,0.7^2\mathbf{I})$. For the mean transition functions $\bm{m}(\cdot,k)$, we already indicate the use of locally connected networks \citep{zheng2018dags} where network dependencies follow a regime-dependent causal structure. The networks consist of two-layer MLPs with 16 hidden units and cosine activations. The covariance matrix $\bm{\Sigma}(\cdot, k)$ is designed as follows depending on the setting:
\begin{itemize}[leftmargin=1.5em]
  \setlength{\itemsep}{5pt}
  \setlength{\parskip}{0pt}
  \setlength{\parsep}{0pt}
    \item Constant noise: fixed, $\bm{\Sigma}= 0.01\mathbf{I}$.
    \item Heterogeneous noise: $\bm{\Sigma}(k)= \text{diag}( \bm{\sigma}^{2}_k)$, where for each $k\in[K]$ and $i\in[m]$, $\sigma^{2}_{k,i}\sim U(0.005, 0.08)$, such that assumption (s2) holds. That is, we check whether there are at least two regimes $k_1,k_2\in[K]$ with distinct diagonal ratios across dimensions, and re-sample otherwise.
    \item History-dependent noise: $\bm{\Sigma}(\cdot, k)= \text{diag}(C_k\cdot \mathrm{sigmoid}(\bm{m}(\cdot, k)))$, where $C_k\sim U(0.05, 0.1)$.
\end{itemize}
For iSDS, we further generate observations using two-layer Leaky ReLU networks with 8 hidden units. Regarding ablations, the assumptions we explore use the following networks for the mean transitions:
\begin{itemize}[leftmargin=1.5em]
  \setlength{\itemsep}{5pt}
  \setlength{\parskip}{0pt}
  \setlength{\parsep}{0pt}
    \item \textsc{Overlap}: piecewise analytic functions with cosine activations, where we force the same function across states if the $L_2$ norm of the conditioned trajectory at time $t$ $(\z_{t-1}, \dots, \z_{t-M})$ is between 3 and 5; and
    \item \textsc{Zero}: Random networks with cosine activations.
\end{itemize}

\subsection{Training Specifications}\label{app:training_specs}

The experiments are implemented in PyTorch \citep{paszke2017automatic} and performed on NVIDIA RTX 2080 Ti GPUs, except for the experiments on climate data with SDSs, where we used NVIDIA GeForce RTX 3090 GPUs. In all our experiments, we use Adam optimiser \citep{kingma2014adam} with a learning rate of $7\cdot 10^{-3}$ for iMSM, and $5\cdot 10^{-4}$ for iSDS. Other specific configurations are dependent on the dataset and indicated below. Similar to related approaches \citep{halva2020hidden}, we use random restarts to achieve better parameter estimates: $3$ for iMSM initialisation (see below), and $5$ for the full models. 

To train iSDS, we generally follow a 4-stage procedure.
\begin{itemize}
    \item \textbf{Initialisation (iMSM):} We initialise the latent dynamical prior by training iMSM on the observations after dimensionality reduction via PCA. To train iMSM, we use a batch size of $100$, set learning rate $7\cdot10^{-3}$, and decrease it by a factor of 0.5 when the likelihood plateaus, up to twice. We use 3 random restarts, and select the best model on training likelihood to initialise iSDS.
    \item \textbf{Encoder-decoder pre-training:} We find an initialisation for the encoder and decoder by freezing the gradients on the dynamical prior. 
    \item \textbf{Warmup phase:} iSDS is trained jointly but we keep the switch parameters $\pi,\ Q$ frozen. This helps us prevent state collapse.
    \item \textbf{Final phase:} All the networks are fine-tuned. 
\end{itemize}
Related works \citep{dong2020collapsed,ansari2021deep} introduce annealing to force uniform regime posterior assignments and ensure the model uses all the switches for inference. Instead, we find that combining initialisation from iMSM and random restarts prevents state collapse. Below we describe the specific configurations and network architecture choices for each dataset.
\paragraph{Synthetic data.} For training, we use a batch size of 100, initialise the latent iMSM as described above, pre-train the encoder and decoder for 40 epochs, and warmup for 10 additional epochs. We then unfreeze all the gradients and train for 700 epochs with a learning rate of $5\cdot10^{-4}$ and decay of 0.8 every 200 epochs. The transition means and covariances follow the architecture from the data generation process. The encoder and decoder are two-layer MLPs with 128 hidden units and Leaky ReLU activations. We regularise the Jacobian transitions with $\eta=0.05$ in the loss function Eq.~\eqref{eq:loss_function_elbo}.

\paragraph{Brain activity data.} Training follows the initialisation phase of iSDS, and the transitions consist of two-layer MLPs with $32$ hidden units with GELU activations.

\paragraph{Financial data.} Training largely follows the synthetic data procedure with minor modifications, and we use the same setting for log prices and log returns. In the initialisation phase, we use the whole time series to encourage better regime learning, with a learning rate of $7\cdot10^{-3}$ which is reduced by a factor of 0.5 if likelihood plateaus for 10 iterations. iMSM training stops after the model plateaus 3 times. The rest of the phases follow the procedure described in the synthetic data with subsampled time series of length $T=200$. We train for $100$k iterations, with a learning rate of $5\cdot 10^{-4}$ and a decay of 0.8 every $40$k iterations. Pre-train and warmup phases last for $2$k and $8$k iterations respectively. The architecture consists of encoder-decoder from two-layer MLPs with 256 hidden dimensions and Leaky ReLU activations, and mean transitions of two-layer MLPs with 32 hidden dimensions and GELU activations, and regime-dependent diagonal covariances. Here we consider $\eta=0$ in the loss function Eq.~\eqref{eq:loss_function_elbo}.

\paragraph{Climate data.} Training follows the financial time series, where initialisation trains iMSM on the whole time series with $T=360$ time steps. We later use the iMSM trained for the interpretability analysis presented in Section~\ref{sec:climate}. To train the rest of iSDS, we subsample sequences of length $T=100$. We pre-train the encoder-decoder for $250$k iterations, with a learning rate of $5\cdot10^{-4}$ and a decay of 0.8 every $100$k iterations. We then warmup iSDS for $4000$ iterations, and we run the final phase for $120$k iterations, with a decay of 0.8 every $20$k iterations. For the network architecture, the transitions are the same as used in financial time series, and we use CNNs for inference and generation. For inference, we use a $9$-layer CNN with increasing channels, kernel size $3$, and padding $1$, Leaky ReLU activations, and we alternate between using stride $2$ and $1$. We start with $64$ channels and double the channels every two layers, until we have $512$ channels. We then run a three-layer MLP with Leaky ReLU activations and $256$ hidden dimensions. For generation, we use a similar network, starting with a two-layer MLP with Leaky ReLU activations and $256$ hidden dimensions, and use transposed convolutions instead of convolutions (with the same configuration as before). We also consider $\eta=0$ in this setting.

\subsection{Evaluation metrics}\label{app:metrics}

In Section \ref{sec:experiments_msm} we outline how each metric is computed. Here we provide additional details for the $L_2$ distance and $R^2$ score on the regime-specific mean functions $\bm{m}(\cdot,k)$, and for recovering the latent regime-dependent causal structure.

\paragraph{Mean function metrics.} Let $f,g:\R^m \rightarrow \R^m$ and define the squared $L_2$ distance as follows:
\begin{equation}
    d^2_{L_2}(f, g) := \int_{\x\in\R^m}\big|\big|f(\x) - g(\x) \big|\big|^2d\x \approx \frac{1}{I}\sum_{i=1}^{I}\big|\big|f\left(\x^{(i)}\right) - g\left(\x^{(i)}\right) \big|\big|^2,
\end{equation}
where $\x^{(i)}$ denote samples from held-out data. Because models are identifiable up to a permutation of the regimes and an affine transformation of the continuous latents (or permutation and scaling), we first align the regimes by choosing the permutation $\sigma$ that maximises the $F_1$ score of regime estimation using posterior assignments from $p(\s_{M:T}|\z_{1:T})$. Recall that under an affine transformation, the multi-lag MSM preserves the following equivalence between mean functions:
\begin{equation}\label{eq:affine_alignment_mean}
    \bm{m}(\z_{t-1},\dots, \z_{t-M},k) = A\hat{\bm{m}}\Big(A^{-1}(\z_{t-1} - \Bb),\dots, A^{-1}(\z_{t-M} - \Bb),\sigma(k)\Big) + \Bb,
\end{equation}
where $A,\ \Bb$ denote the affine transformation parameters. To evaluate functions within the appropriate regime domain, we partition the held-out trajectories into $K$ sets $\{\mathcal{Z}_k\}_{k\in[K]}$, where each set $\mathcal{Z}_k$ contains variables such that for each $M < t \leq T$,  $\{\z_{t-1}^{(i)}, \dots, \z_{t-M}^{(i)}\}\in\mathcal{Z}_k$ if $k=\arg\max p(s^{(i)}_t|\z_{1:T}^{(i)})$, where index $i$ denotes a sample in the held-out dataset. We use $1000$ held-out trajectories. We then compute the error as follows:
\begin{equation}\label{eq:l2_metric}
   L_2 \text{ err iSDS}:= \min_{A,\Bb}\frac{1}{K} \sum_{k=1}^K d^2_{L_2}\Big(\bm{m}\big(\cdot,k\big), A\hat{\bm{m}}\big(\cdot,k\big) + \Bb\Big),
\end{equation}
where inputs to $\bm{m}(\cdot,k)$ are drawn from $\mathcal{Z}_k$, and inputs to $\hat{\bm{m}}(\cdot,k)$ are mapped according to Eq.~\eqref{eq:affine_alignment_mean}.

For $R^2$ scores, we follow a similar procedure, and compute a variance-normalised $L_2$ discrepancy:
\begin{equation}
    d_{R^2}(f, g) := 1 - \frac{d^2_{L_2}(f,g)}{Var(f)}, \quad Var(f) = \frac{1}{I}\sum_i \Big|\Big|f\big(\x^{(i)}\big) - \frac{1}{I}\sum_j f\big(\x^{(j)}\big)\Big|\Big|^2.
\end{equation}
In ablations, we observe low $F_1$ regime scores due to the introduced function overlaps. Therefore, for iMSMs we choose the regime permutation which maximises average ${R^2}$: 
\begin{equation}
   R^2 \text{ err iMSM} := \max_{\sigma\in S_K}\frac{1}{K} \sum_{k=1}^K d_{R^2}\Big(\bm{m}\big(\cdot,k\big), \hat{\bm{m}}\big(\cdot,\sigma(k)\big)\Big).
\end{equation}
For iSDS, we follow the same procedure, but first align the latents via an affine map:
\begin{equation}
   R^2 \text{ err iSDS} := \max_{\sigma\in S_K}\frac{1}{K} \sum_{k=1}^K d_{R^2}\Big(\bm{m}\big(\cdot,k\big), A\hat{\bm{m}}\big(\cdot,\sigma(k)\big) + \Bb\Big),
\end{equation}
with inputs to $\hat{\bm{m}}\big(\cdot)$ rescaled as in Eq.~\eqref{eq:affine_alignment_mean}. Here we evaluate on the same 1000 held-out trajectories; unlike the $L_2$ computation, we do not pre-sort samples by regimes in the ablations (due to the low $F_1$ score).

\paragraph{Causal structure recovery.}
Let $\bm{J}_{\hat{\bm{m}}(\cdot,k)}(\z_{t-1}, \dots, \z_{t-M})$ denote the Jacobian of $\hat{\bm{m}}(\cdot,k)$ at $(\z_{t-1}, \dots, \z_{t-M})$. We estimate the regime-dependent latent causal graph $\hat{\mathbf{G}}_{1:K}$ by thresholding the mean absolute Jacobian over samples assigned to regime $k$. Using the same partition $\mathcal{Z}_k$ with size $|\mathcal{Z}_k|=N_k$, we define: 
\begin{equation}
    \hat{\mathbf{G}}_k := \1\left(  \frac{1}{N_k}\sum_{i=1}^{N_k}\left|\bm{J}_{\hat{\bm{m}}(\cdot,k)} \left(\z_{t-1}^{(i)}, \dots, \z_{t-M}^{(i)}\right)\right| > \tau \right), \quad (\z_{t-1}^{(i)}, \dots, \z_{t-M}^{(i)})\in\mathcal{Z}_k,
\end{equation}
where we use $\tau=0.05$ in our experiments, and $\1(\cdot)$ denotes the indicator function, which equals 1 if the argument is true and 0 otherwise.  For lag $M$, $\hat{\mathbf{G}}_k$ has shape $m\times (mM)$ (rows: current variables; columns: lagged variables across all $M$ lags). Regimes are aligned by the permutation that maximises the regime $F_1$ score (as above). For latent alignment, we decompose the affine map fitted for the $L_2$ distance as $A\approx DP$, with $P$ a permutation matrix and $D$ diagonal. $P$ is obtained by selecting maximal-absolute entries per row. For each lag $\tau\in[M]$ with $\hat{E}_{k,\tau}$ denoting the lag-$\tau$ adjacency matrix of $\hat{\mathbf{G}}_k$, we permute with $P\hat{E}_{\sigma(k),\tau}P^T$ for latent variable alignment. Finally, we report the average $F_1$ score between the aligned estimate and the ground-truth graphs $\mathbf{G}_{1:K}$, averaged over regimes.

\subsection{Brain Activity Data}\label{app:neuroscience}

\begin{figure}
\centering
  \begin{minipage}{0.35\textwidth}
    \centering
    \includegraphics[width=\linewidth]{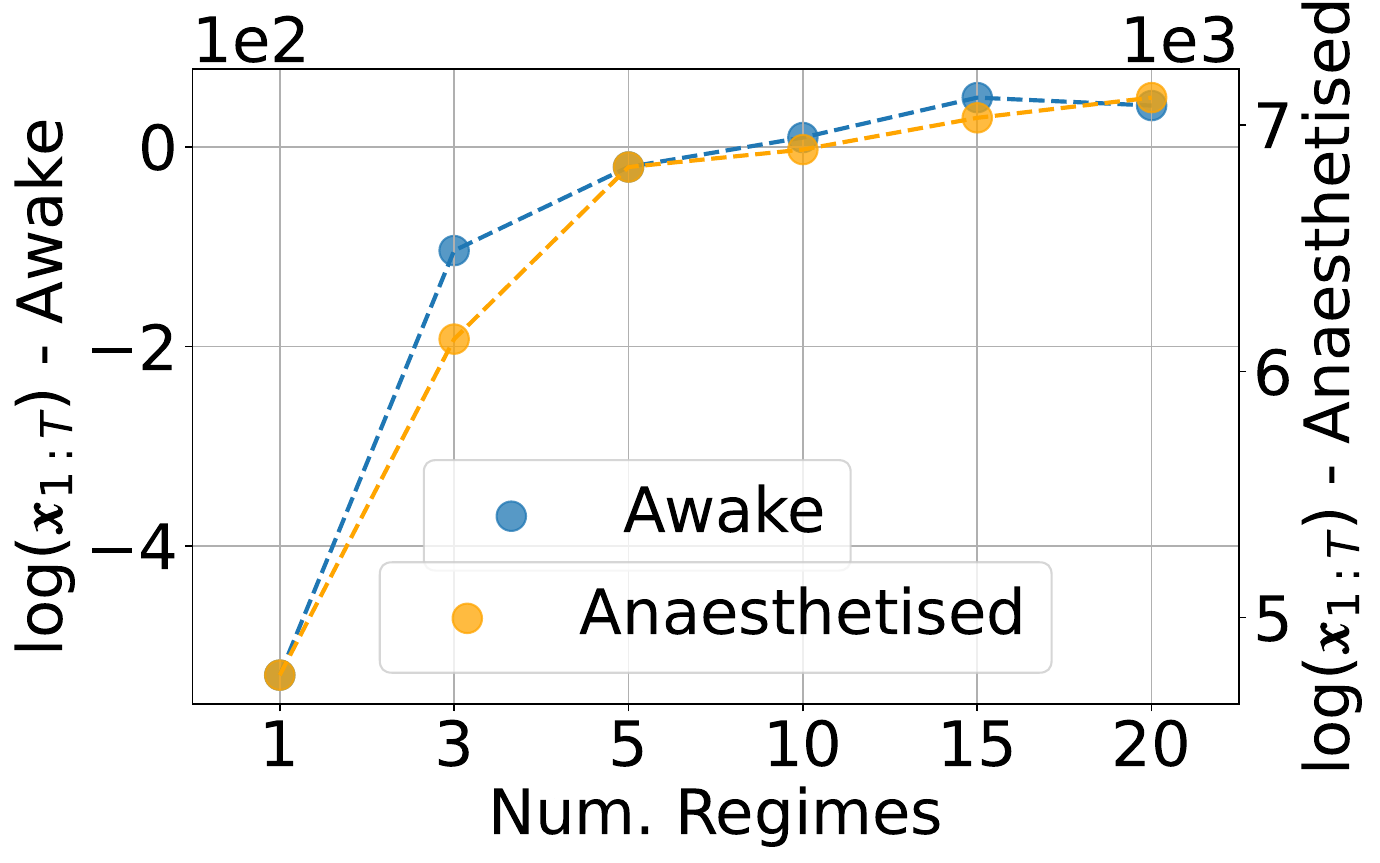}
  \end{minipage}
  \quad
  \begin{minipage}{0.58\textwidth}
    \caption{Test log-likelihood using different regimes and $M=2$ lags for \textsc{Awake} and \textsc{Anaesthetised} conditions.}
    \label{fig:neuroscience_model_selection}
  \end{minipage}
\end{figure}

Figure~\ref{fig:neuroscience_model_selection} reports test log-likelihoods for the \textsc{Awake} and \textsc{Anaesthetised} conditions with increasing regimes $K$. Performance improves with larger $K$ and plateaus around $K\approx10$. We adopt an overcomplete setting with $K=15$ in the main experiments, and expect the models to use a subset the regimes in practice. As reported in Table~\ref{tab:table_neuro} (Section \ref{sec:neuroscience}), the total regime usage is $53.33\%$ for \textsc{Awake} and $33.33\%$ for \textsc{Anaesthetised}, corresponding to 8 and 5 active regimes. Allowing a larger candidate set of regimes helps prevent state collapse, as unused regimes remain inactive, and test likelihood does not degrade.

\paragraph{Metrics from Table \ref{tab:table_neuro}.} Each sequence last 2\,s, with $T=400$ time steps. We compute all metrics from the smoothed posterior over regimes (uniform kernel of length 3).
\begin{itemize}[leftmargin=1.35em]
  \setlength{\itemsep}{5pt}
  \setlength{\parskip}{0pt}
  \setlength{\parsep}{0pt}
  \item \textbf{Transition frequency (Hz):} Regime changes per second, averaged across epochs.
  \item \textbf{Total usage:} Fraction of regimes whose occupancy exceeds 1\% of time points in at least one test epoch.
  \item \textbf{Usage per sample:} Mean (over epochs) fraction of regimes whose per-epoch occupancy exceeds 1\% of time points.
\item\textbf{Max duration (s):} Maximum regime dwell time, averaged across epochs.
\end{itemize}

\subsection{Financial Data}\label{app:finance}

Below we list the tickers of the 100 U.S. companies we consider in our experiments organised by sector.

\begin{itemize}[leftmargin=1.35em]
  \setlength{\itemsep}{5pt}
  \setlength{\parskip}{0pt}
  \setlength{\parsep}{0pt}
    \item \textbf{Tech:} AAPL, MSFT, GOOGL, META, AMZN, NVDA, INTC, CSCO, IBM, ORCL
    \item \textbf{Health:} JNJ, PFE, MRK, ABBV, BMY, LLY, AMGN, GILD, ILMN, UNH
    \item \textbf{Financials:} JPM, BAC, WFC, C, GS, MS, USB, PNC, TFC, COF
    \item \textbf{Consumer Staples:} KO, PEP, PG, CL, KMB, MDLZ, GIS, K, WMT, COST
    \item \textbf{Consumer Discretionary:} NKE, SBUX, MCD, DIS, BKNG, TJX, GAP, KMX, ROST, YUM
    \item \textbf{Industrials:} BA, CAT, MMM, GE, HON, LMT, RTX, UNP, CSX, DAL
    \item \textbf{Energy:} XOM, CVX, COP, PSX, DVN, OXY, KMI, EOG, HES, VLO
\item \textbf{Real Estate:} SPG, PLD, PSA, AVB, EQR, WELL, VTR, DLR, O, ESS
\item \textbf{Communication:} VZ, T, TMUS, CMCSA, CHTR, DIS, NFLX, PARA, EA, GOOGL
\item \textbf{Materials:} DD, SHW, LYB, NEM, FCX, NUE, AA, MLM, CRH, CE    
\end{itemize}

We also examine latent transition structure across regimes for log returns (Fig.~\ref{fig:graph_transition_log_return}), and the edge-intensity distribution (Fig.~\ref{fig:histogram_log_return}). The histogram is computed using the average absolute off-diagonal Jacobian entries for each regime, and it is concentrated near zero. This indicates weak latent interactions. For interpretability, we show the higher-intensity subset (right of the red threshold). In Regimes~0 and ~1, $z_0$ and $z_1$ dominate the latent dynamics (consistent with global market factors). Furthermore, the high-volatility regime (Regime~0), exhibits denser interactions than the medium and low-volatility regimes (Regime~2), consistent with the need for stronger dynamical adjustments during turbulent periods.

\begin{figure}
    \centering

        \begin{subfigure}{0.24\textwidth}
            \centering
            \includegraphics[width=\textwidth]{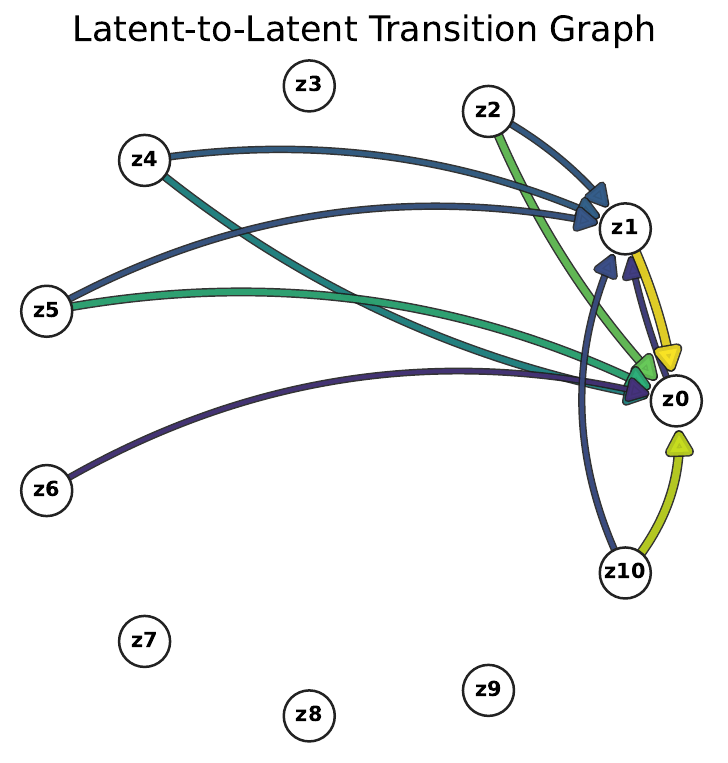}
            \vspace{-.5cm}
            \caption{Regime 0.}
        \end{subfigure}%
        \hfill
        \begin{subfigure}{0.24\textwidth}
            \centering
            \includegraphics[width=\textwidth]{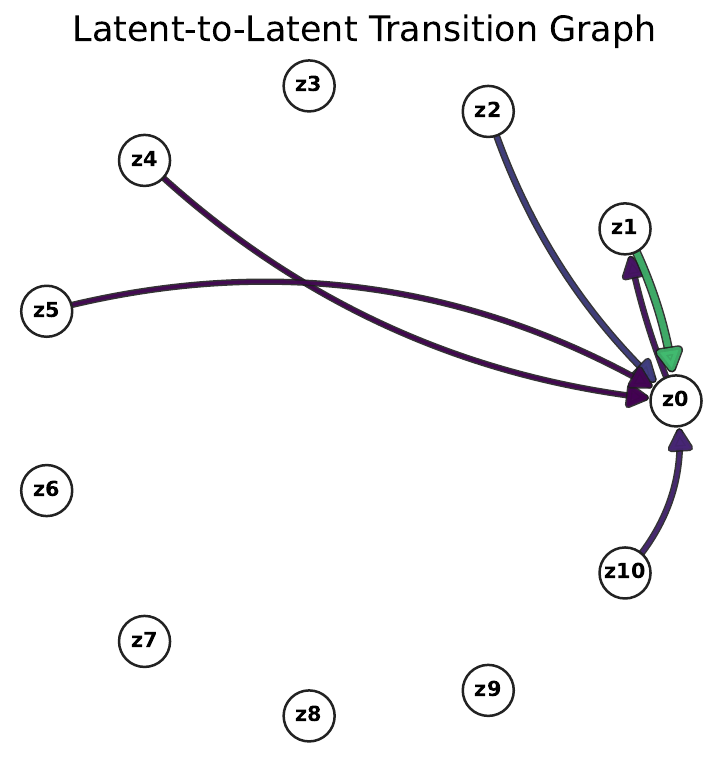}
            \vspace{-.5cm}
            \caption{Regime 1.}
        \end{subfigure}
        \begin{subfigure}{0.24\textwidth}
            \centering
            \includegraphics[width=\textwidth]{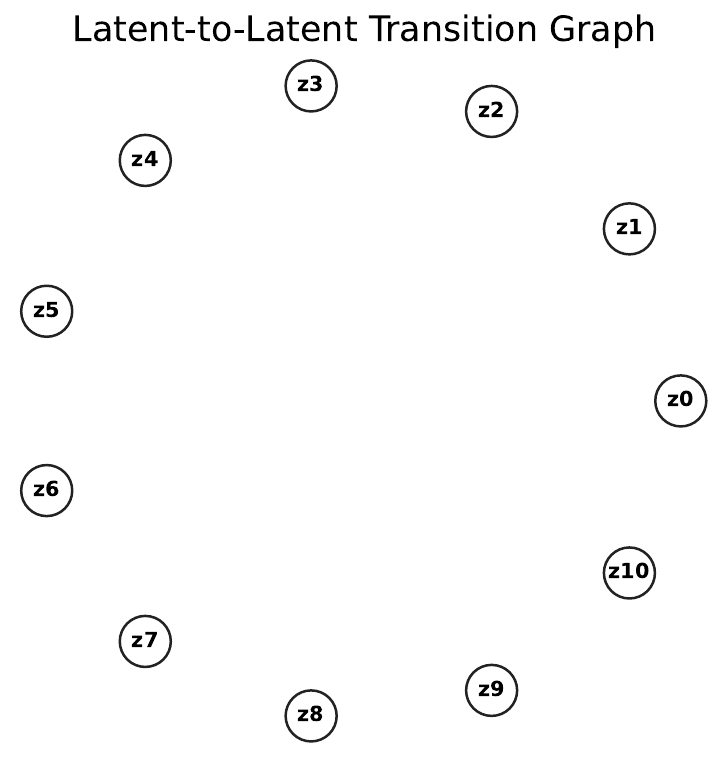}
            \vspace{-.5cm}
            \caption{Regime 2.}
        \end{subfigure}%
        \hfill
        \begin{subfigure}{0.24\textwidth}
            \centering
            \includegraphics[width=\textwidth]{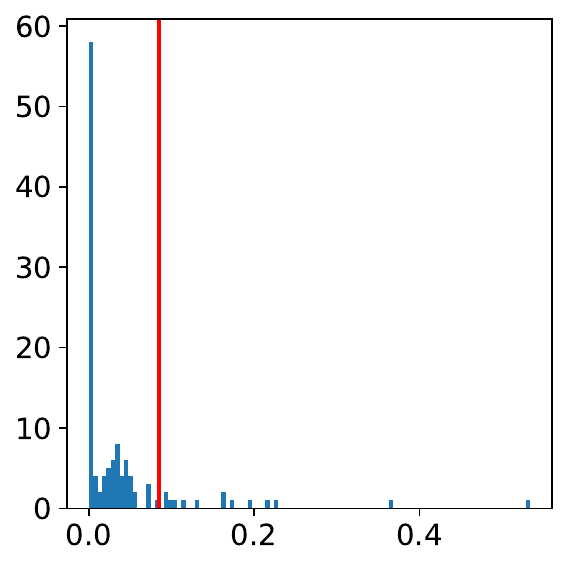}
            \vspace{-.5cm}
            \caption{Edge histogram.}
            \label{fig:histogram_log_return}
        \end{subfigure}
        \vspace{-.25cm}
        \caption{(a-c) Latent transition graphs by regimes and (d) edge intensities (log returns).}
        \label{fig:graph_transition_log_return}
\end{figure}

For log price data, Figure~\ref{fig:graph_transition_log_price} shows regime-specific latent transition graphs and the corresponding edge-intensity histogram (Fig. \ref{fig:histogram_log_price}). The edge distribution is centred near $0.1$ with a long tail of stronger interactions. The latent dynamics differ across regimes, with the lateral/bear regime (Regime~0) exhibiting simpler and weaker dynamics compared to the bull regime (Regime~1). The two global factors ($z_2$ and $z_10$) are involved differently across regimes. Notably, interactions are concentrated on $z_5$ and $z_8$, which are aligned with Consumer Staples, Real Estate, and Materials. This suggests these sectors require relatively complex, regime-dependent adjustments (e.g., Consumer Staples as a defensive sector; Real Estate as rate-sensitive).

\begin{figure}
    \centering
        \begin{subfigure}{0.24\textwidth}
            \centering
            \includegraphics[width=\textwidth]{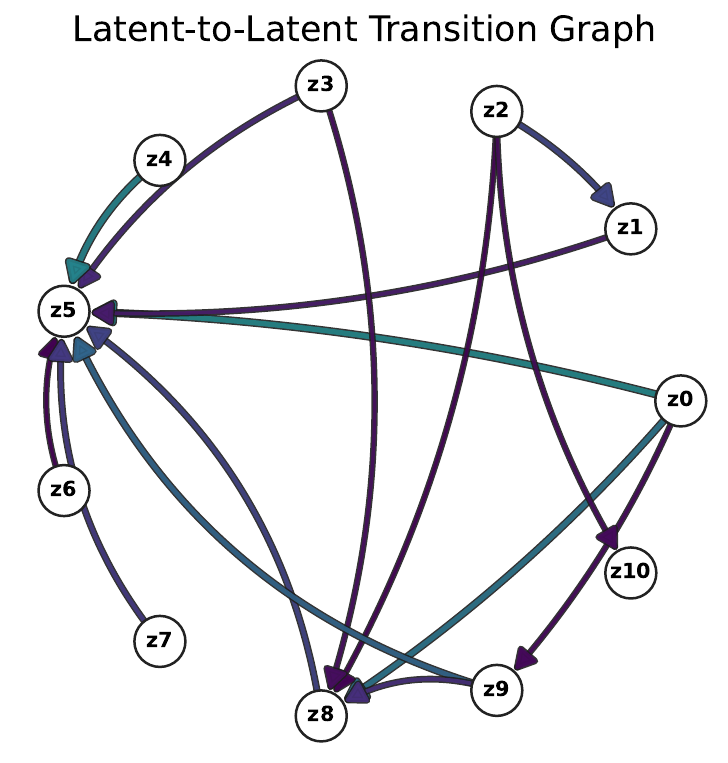}
            \vspace{-.5cm}
            \caption{Regime 0.}
        \end{subfigure}
        \begin{subfigure}{0.24\textwidth}
            \centering
            \includegraphics[width=\textwidth]{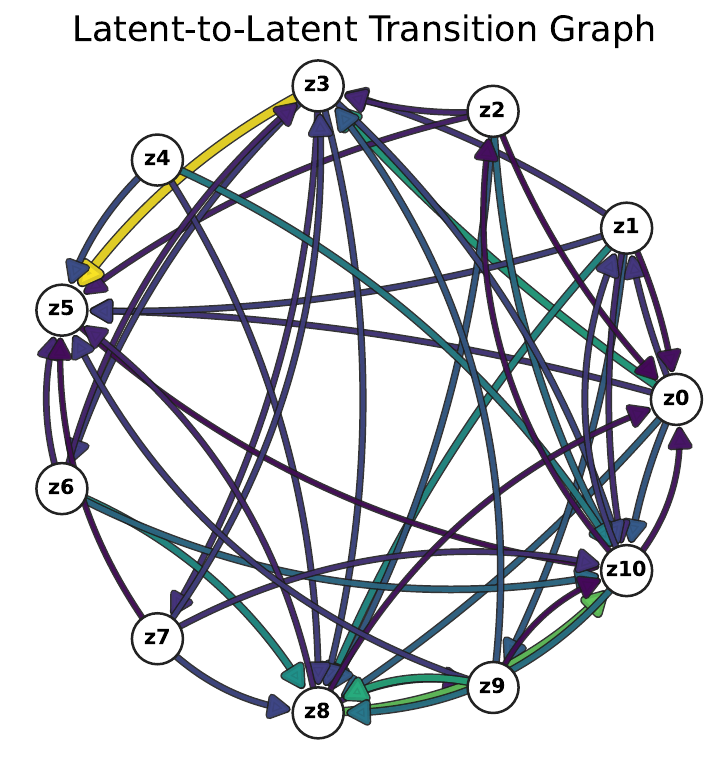}
            \vspace{-.5cm}
            \caption{Regime 1.}
        \end{subfigure}%
        \begin{subfigure}{0.24\textwidth}
            \centering
            \includegraphics[width=\textwidth]{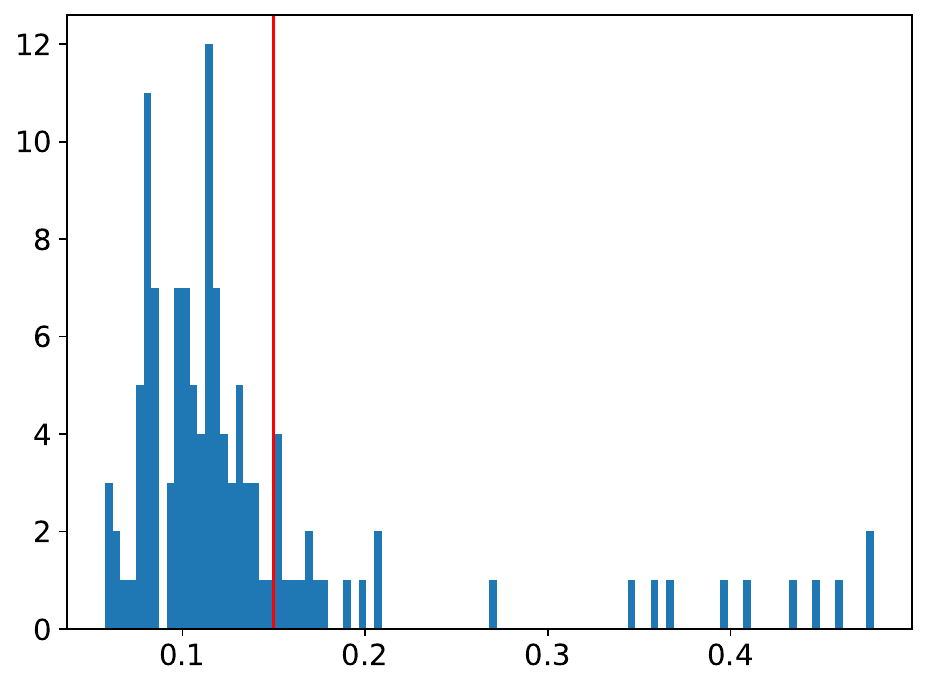}
            \vspace{-.5cm}
            \caption{Edge histogram.}
            \label{fig:histogram_log_price}
        \end{subfigure}
        \vspace{-.25cm}
        \caption{(a-b) Latent transition graphs by regime and (c) edge intensities (log price).}
        \label{fig:graph_transition_log_price}

\end{figure}

\subsection{Climate data}\label{app:climate_details}

\begin{figure}
\centering
  \begin{minipage}{0.38\textwidth}
    \centering
    \includegraphics[width=\linewidth]{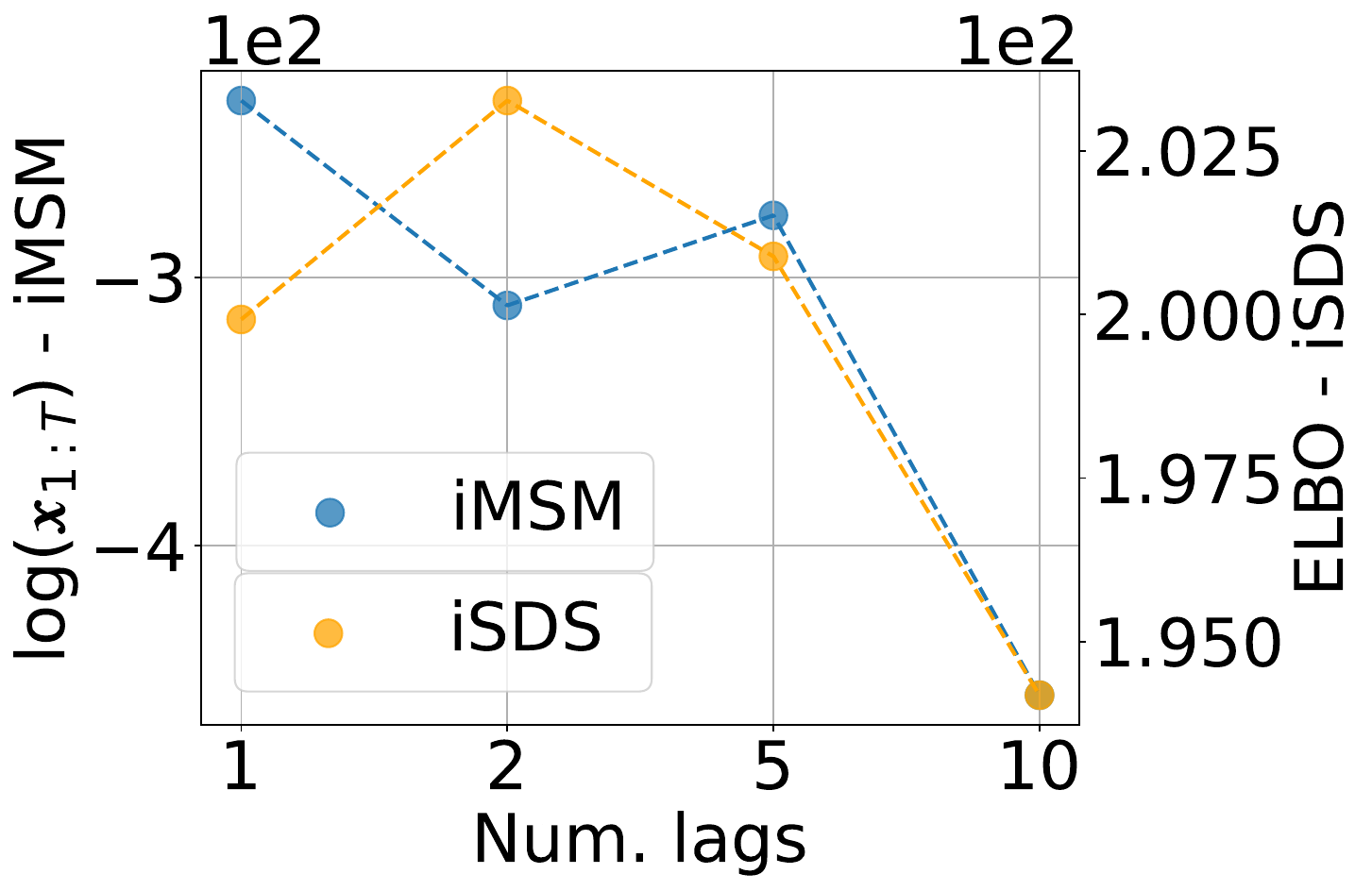}
  \end{minipage}
  \quad
  \begin{minipage}{0.58\textwidth}
    \caption{Test log-likelihood (blue) and ELBO (orange) for iMSM and iSDS respectively with increasing $M$. Quantities are reported per observation size for better comparison (i.e., $10$ for iMSM, $64\times64$ for iSDS).}
    \label{fig:climate_model_selection}
  \end{minipage}
\end{figure}

\paragraph{Model selection.} The data settings differ between iMSM and iSDS. For iMSM, we follow \citet{varando2022learning} and project land pixels onto $10$ principal components. For iSDS, we train on random $64{\times}64$ land patches with subsequences of length $T=100$. We fix $K=4$ regimes and vary the number of lags $M$. Figure~\ref{fig:climate_model_selection} reports held-out performance as a function of $M$: test log-likelihood and ELBO (higher is better in both cases). To make scales comparable, we report both metrics per observation (i.e., per PC for iMSM and per pixel for iSDS). We observe that iMSM performs best at $M=1$, while iSDS benefits from higher orders, with $M=2$ performing best. We note that iMSM operates on globally mixed signals coming from very high-dimensional data (land pixels), and therefore increasing $M$ can quickly overfit unseen sequences. In contrast, iSDS models local dynamics with shared transitions, and parameter sharing regularises additional lag parameters.

\paragraph{Regime calendars.} In Section~\ref{sec:climate}, Tables \ref{tab:imsm_season_table} and \ref{tab:isds_season_table} report ENSO correlations by season for iMSM and iSDS respectively. We visualise the within-year regime occupancy of iMSM in Figure \ref{fig:iMSM_year_structure}. As mentioned, the dynamics are structured yearly within 3 periods: from March to May, from June to September, and from October to February. The structures remain consistent for unobserved years. For iSDS, we observe 5 types of recurrent year-based patterns depending on location. Figures~\ref{fig:iSDS_year_structure_ET}–\ref{fig:iSDS_year_structure_SA} show Ethiopia, Sahel, and South Africa. We omit calendars for Sahara and Congo, as they predominantly use a single regime. Across regions, the reported months in Table~\ref{tab:isds_season_table} agree with the visual occupancy, which remain consistent for unseen years.

\begin{figure}
  \begin{subfigure}{0.46\textwidth}
    \centering
    \includegraphics[width=\linewidth]{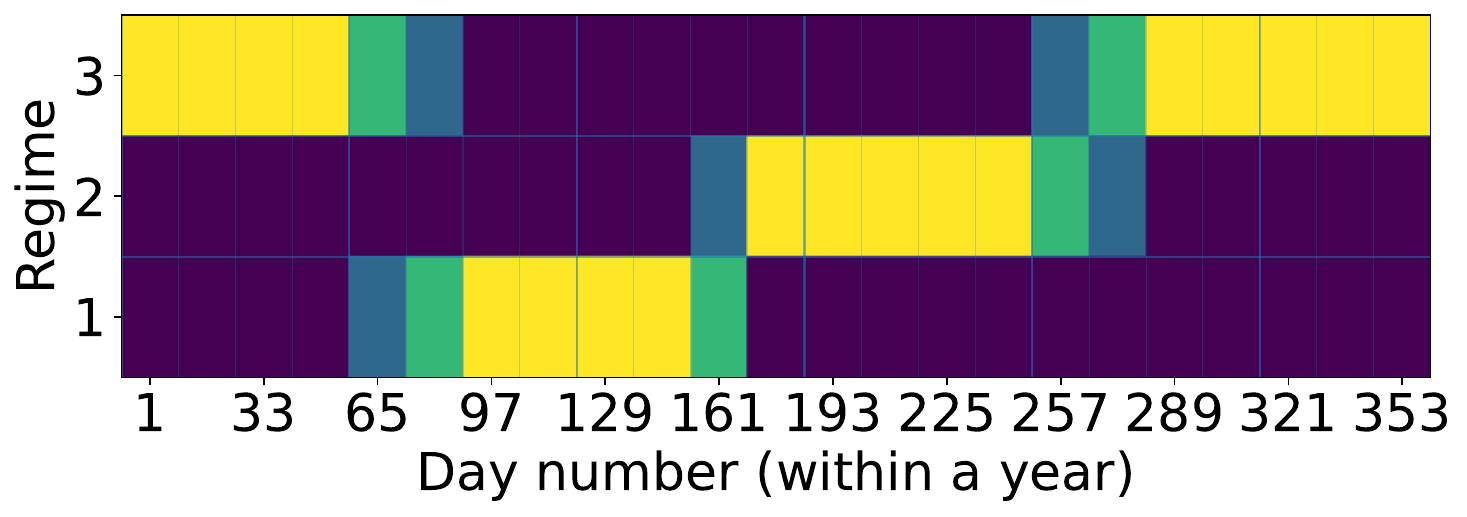}
    \vspace{-.65cm}
    \caption{Jan. 2001 -- Dec. 2004 (held-out).}
  \end{subfigure}
  \begin{subfigure}{0.525\textwidth}
    \includegraphics[width=\linewidth]{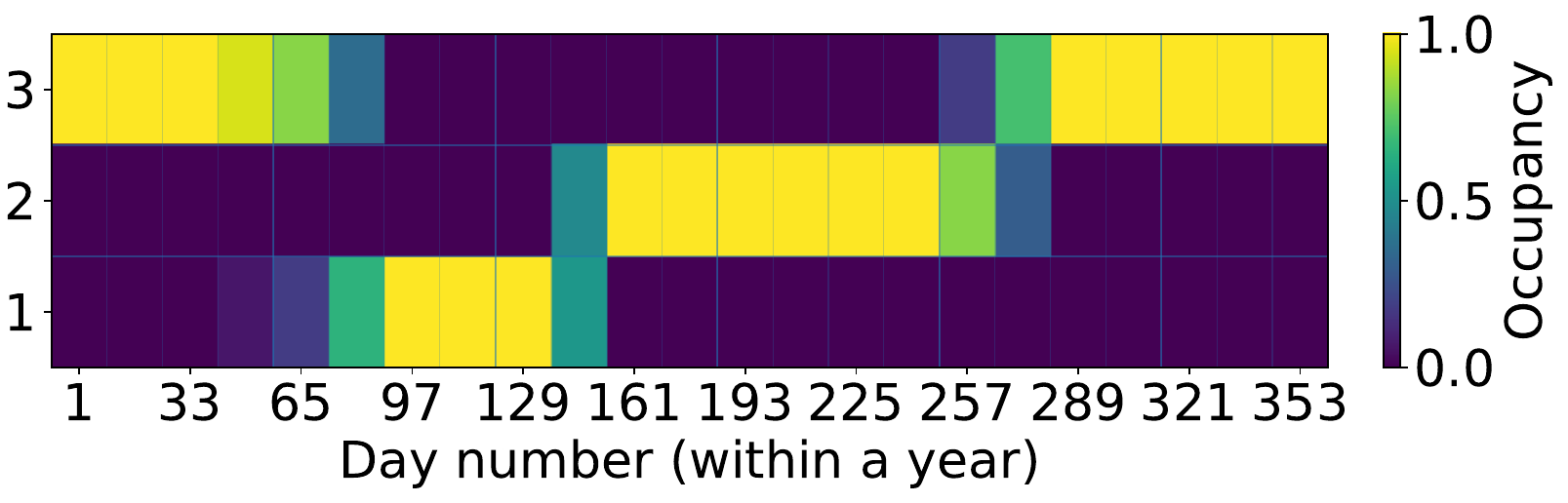}
    \vspace{-.65cm}
    \caption{Jan. 2005 -- Dec. 2020.}
  \end{subfigure}
  \vspace{-.25cm}
  \caption{Year-based structures from land pixels using iMSM.}
  \label{fig:iMSM_year_structure}
\end{figure}
\begin{figure}
  \begin{subfigure}{0.46\textwidth}
    \centering
    \includegraphics[width=\linewidth]{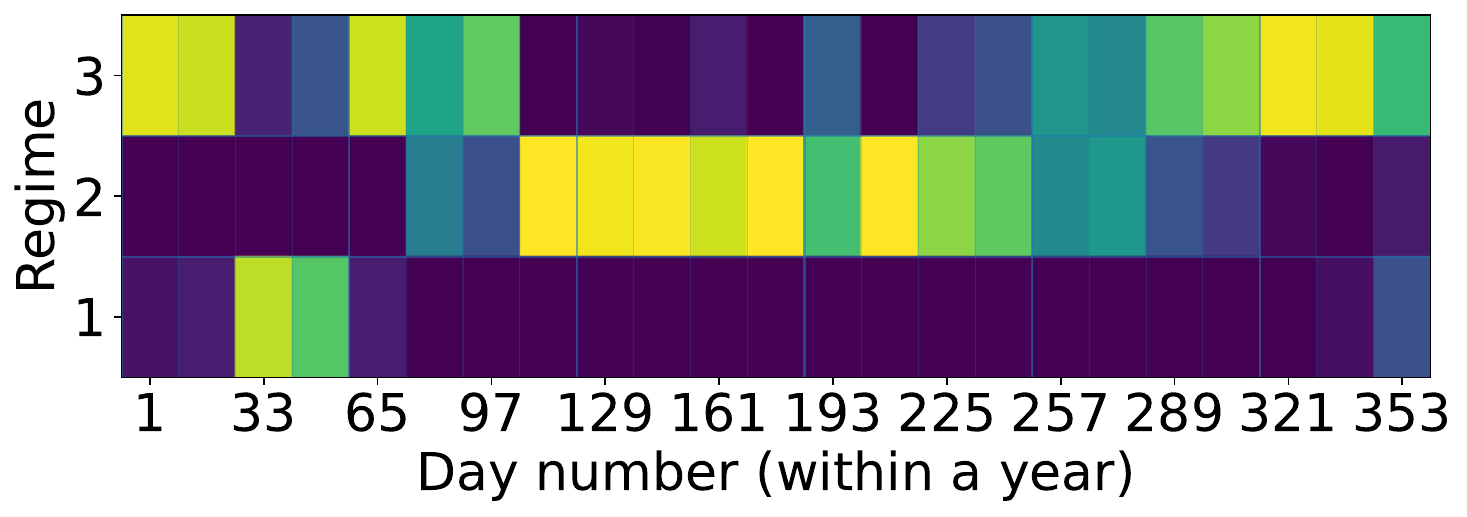}
    \vspace{-.65cm}
    \caption{Jan. 2001 -- Dec. 2004 (held-out).}
  \end{subfigure}
  \begin{subfigure}{0.525\textwidth}
    \includegraphics[width=\linewidth]{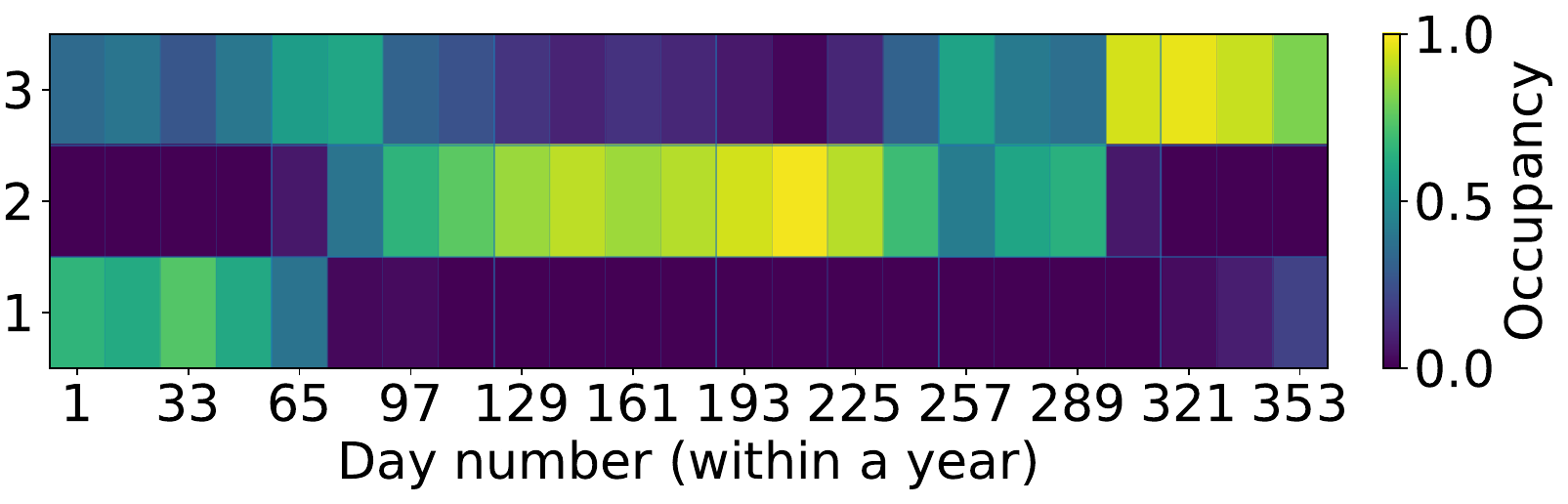}
    \vspace{-.65cm}
    \caption{Jan. 2005 -- Dec. 2020.}
  \end{subfigure}
  \vspace{-.25cm}
  \caption{Year-based structures on Ethiopia uncovered by iSDS.}
  \label{fig:iSDS_year_structure_ET}
\end{figure}
\begin{figure}
  \begin{subfigure}{0.46\textwidth}
    \centering
    \includegraphics[width=\linewidth]{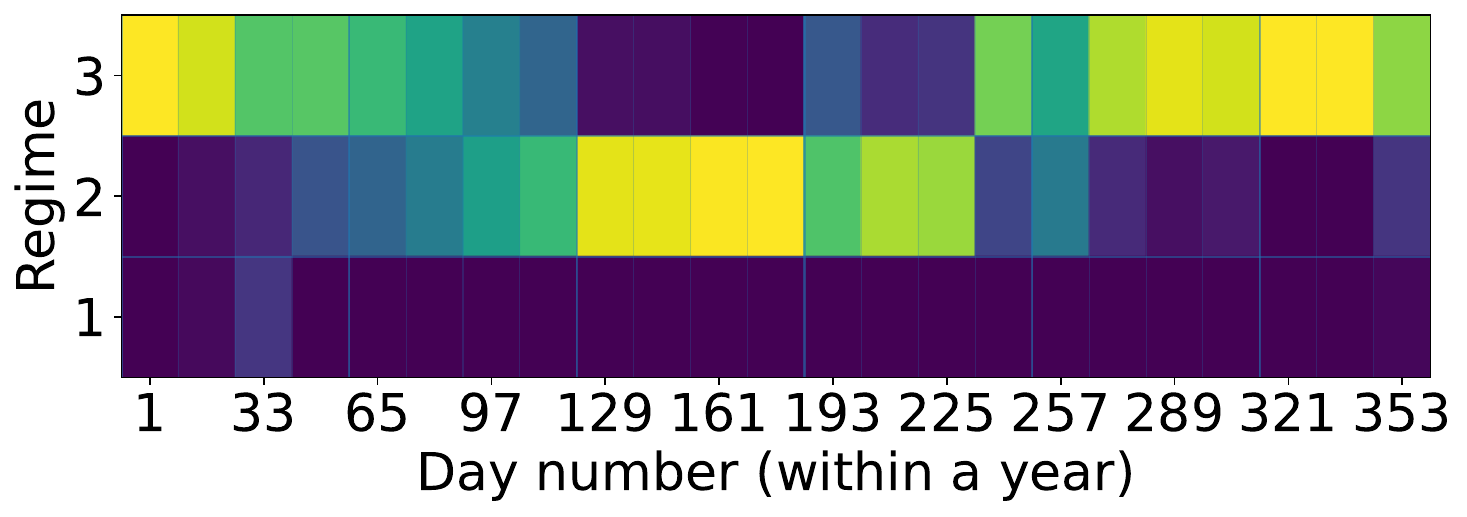}
    \vspace{-.65cm}
    \caption{Jan. 2001 -- Dec. 2004 (held-out).}
  \end{subfigure}
  \begin{subfigure}{0.525\textwidth}
    \includegraphics[width=\linewidth]{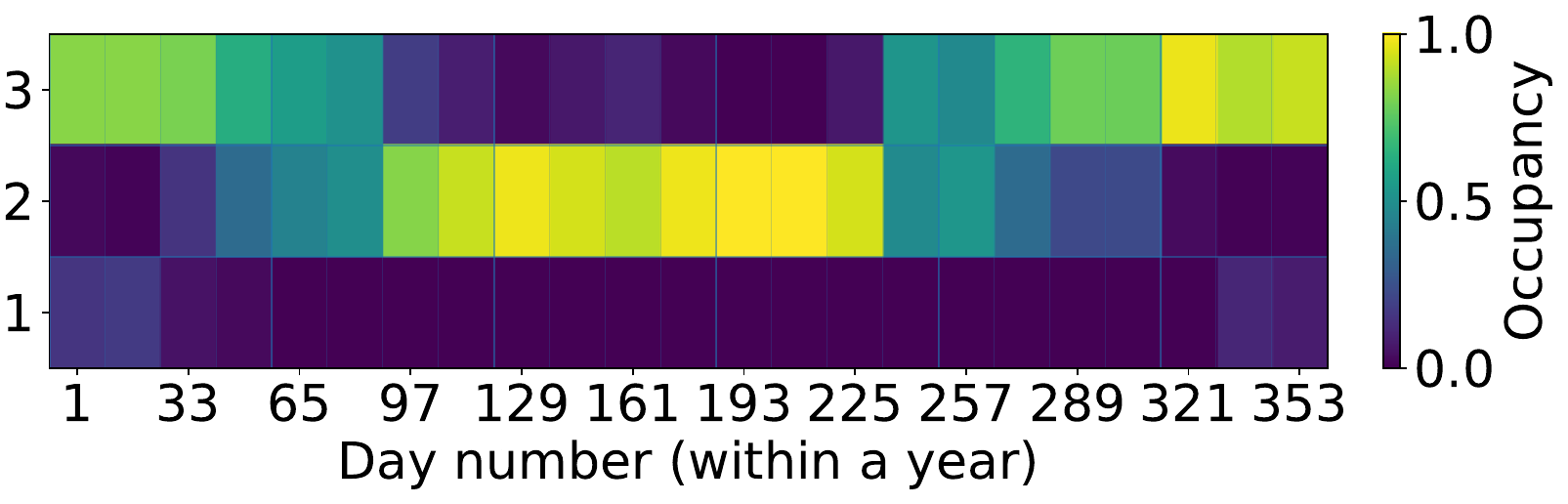}
    \vspace{-.65cm}
    \caption{Jan. 2005 -- Dec. 2020.}
  \end{subfigure}
  \vspace{-.25cm}
  \caption{Year-based structures on Sahel uncovered by iSDS.}
  \label{fig:iSDS_year_structure_Sahel}
\end{figure}
\begin{figure}
  \begin{subfigure}{0.46\textwidth}
    \centering
    \includegraphics[width=\linewidth]{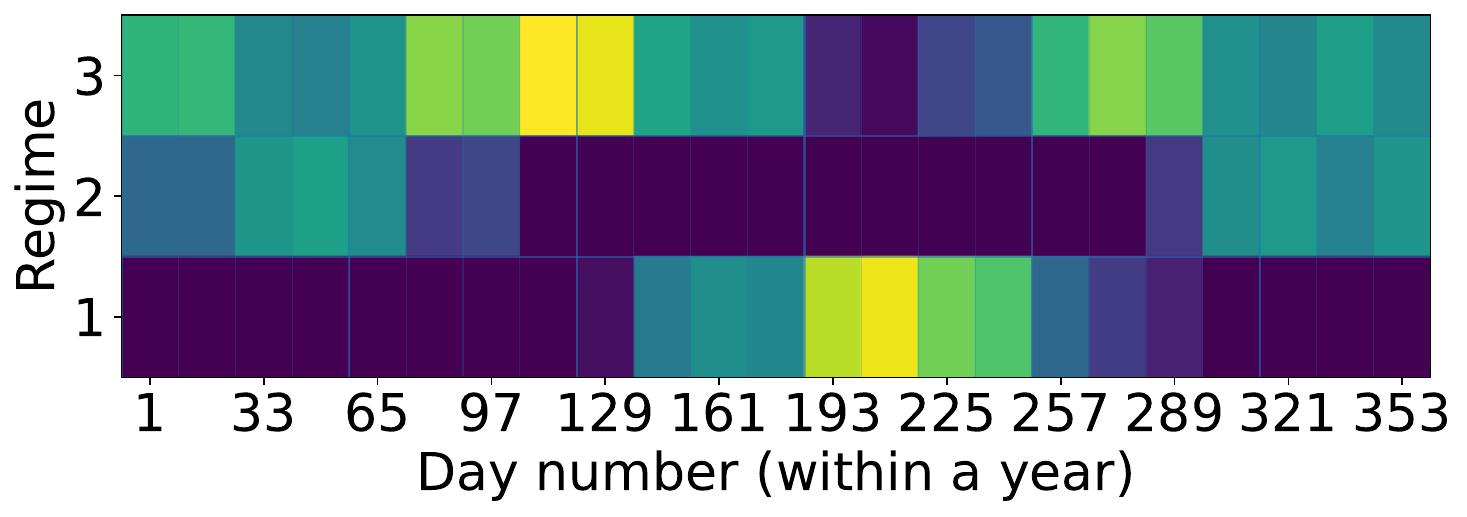}
    \vspace{-.65cm}
    \caption{Jan. 2001 -- Dec. 2004 (held-out).}
  \end{subfigure}
  \begin{subfigure}{0.525\textwidth}
    \includegraphics[width=\linewidth]{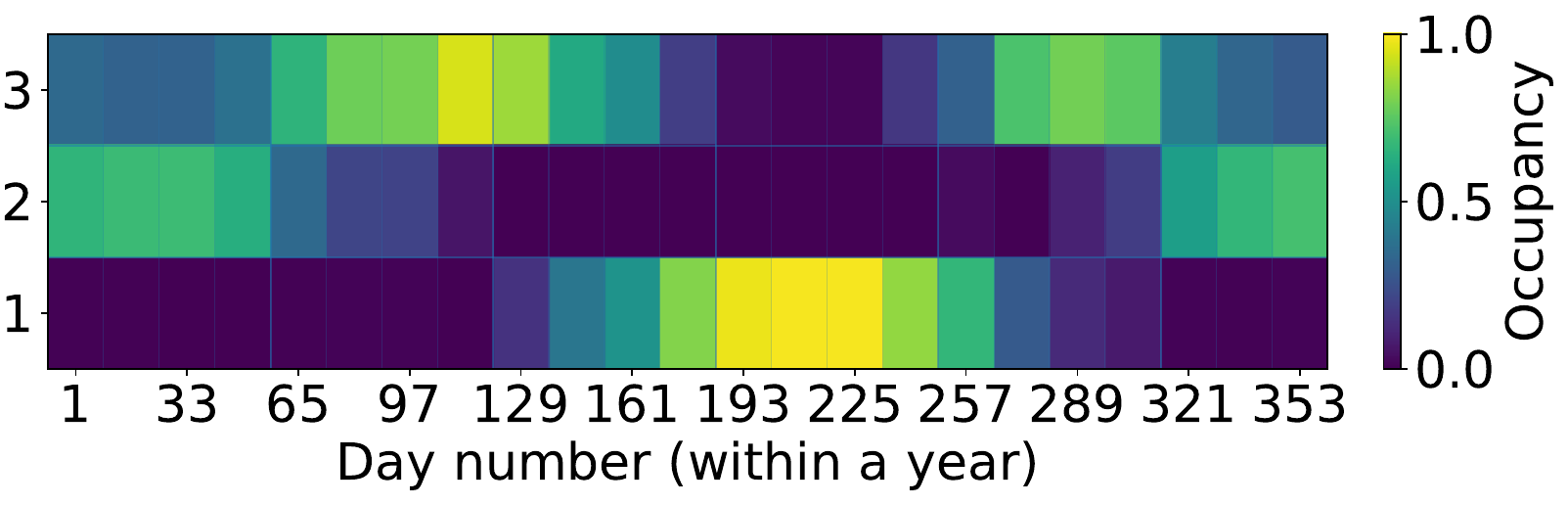}
    \vspace{-.65cm}
    \caption{Jan. 2005 -- Dec. 2020.}
  \end{subfigure}
  \vspace{-.25cm}
  \caption{Year-based structures on South Africa uncovered by iSDS.}
  \label{fig:iSDS_year_structure_SA}
\end{figure}

\begin{figure}
    \centering
    \begin{subfigure}{0.24\textwidth}
    \centering
    \includegraphics[width=\linewidth]{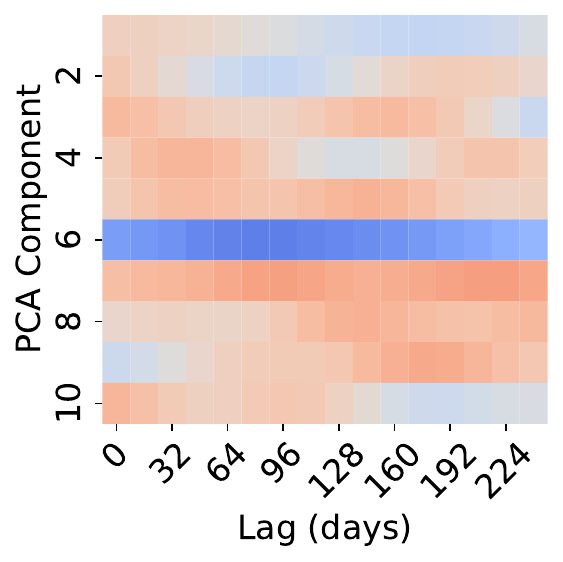}
    \vspace{-.75cm}
    \caption{All times.}
  \end{subfigure}
  \begin{subfigure}{0.22\textwidth}
    \includegraphics[width=\linewidth]{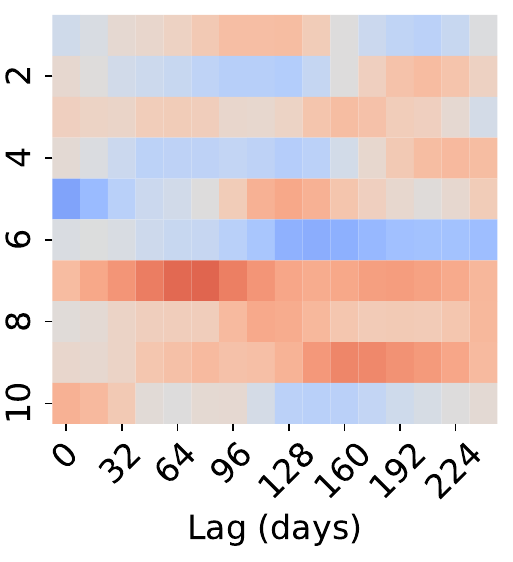}
    \vspace{-.75cm}
    \caption{Jun.--Sep.}
  \end{subfigure}
  \begin{subfigure}{0.22\textwidth}
    \centering
    \includegraphics[width=\linewidth]{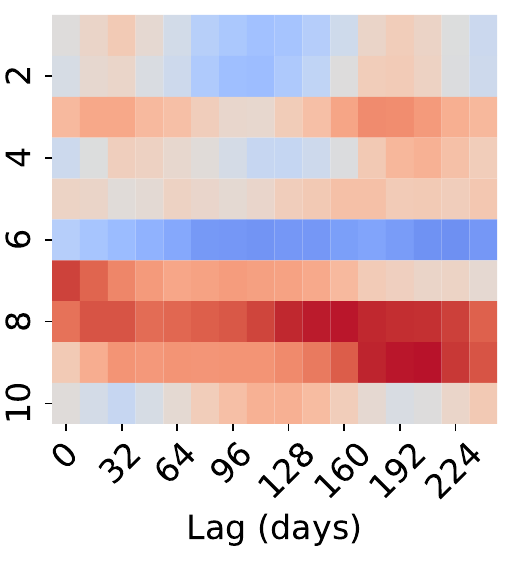}
    \vspace{-.75cm}
    \caption{Mar.--May.}
  \end{subfigure}
  \begin{subfigure}{0.2825\textwidth}
    \centering
    \includegraphics[width=\linewidth]{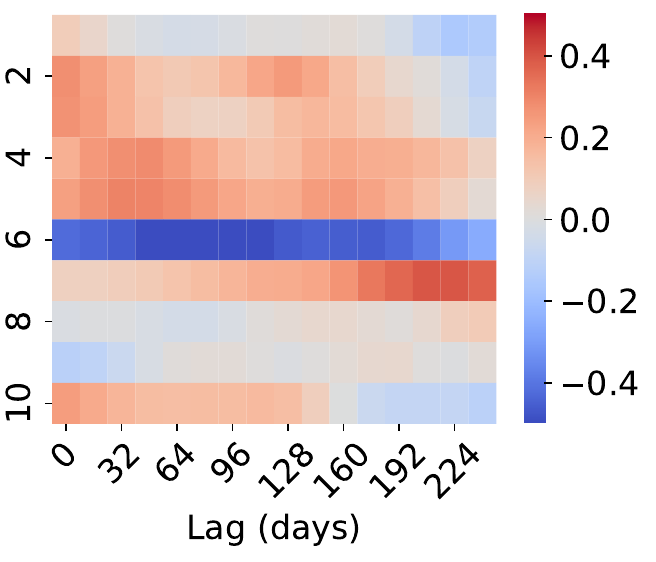}
    \vspace{-.75cm}
    \caption{Sep.--Feb.}
  \end{subfigure}
  \vspace{-.15cm}
  \caption{Correlation matrices of principal components from land pixels with respect to lagged ENSO (ENSO leads) by using (a) all available data, and (b-c) season-based data sorted according to iMSM regime posteriors.}
  \label{fig:iMSM_ndvi_enso_correlation}
\end{figure}

\paragraph{ENSO matrices.} Here we detail how Tables \ref{tab:imsm_season_table} and \ref{tab:isds_season_table} are computed. We use all available data and evaluate lagged correlations with ENSO on a $0$-$240$ day grid in $16$-day steps (ENSO leads). 
\begin{itemize}[leftmargin=1.5em]
  \setlength{\itemsep}{5pt}
  \setlength{\parskip}{0pt}
  \setlength{\parsep}{0pt}
    \item \textbf{iMSM:} For each principal component we form a correlation matrix with size $10\times16$ (components, lags) for ``all times'' and seasonal subsets obtained from iMSM regime posteriors. We visualise the iMSM correlation matrices in Figure~\ref{fig:iMSM_ndvi_enso_correlation}, and Table~\ref{tab:imsm_season_table} reports, for each slice, the peak absolute correlation and its lag (days).
    \item \textbf{iSDS:} For each region, we sample $10$ patches and sort the latent variables according to the iSDS regime posteriors. Then, for each season we compute similar per-patch correlation matrices of size $20\times16$ (latent variables, lags), and aggregate patches using Fisher $z$-averaging \citep{fisher1915frequency}. Table~\ref{tab:isds_season_table} reports peak correlations and lag (days) for each region and season, and we visualise the regional correlation matrices in Figure~\ref{fig:iSDS_ndvi_enso_correlation}.
\end{itemize}

\begin{figure}
    \centering
    \begin{subfigure}{0.2425\textwidth}
    \centering
    \includegraphics[width=\linewidth]{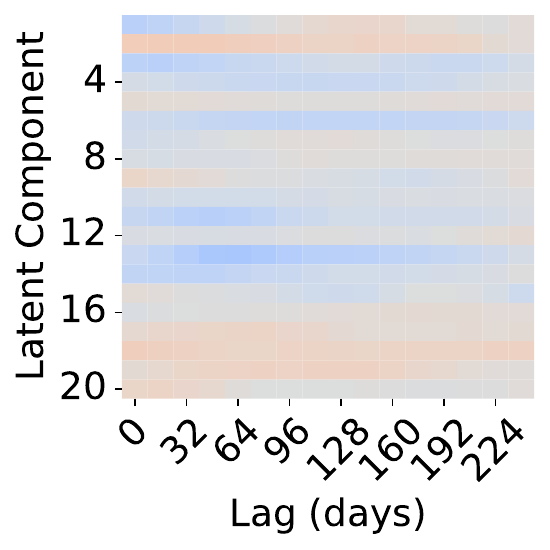}
    \vspace{-.75cm}
    \caption{Sahara, all times.}
  \end{subfigure}
  \begin{subfigure}{0.2175\textwidth}
    \includegraphics[width=\linewidth]{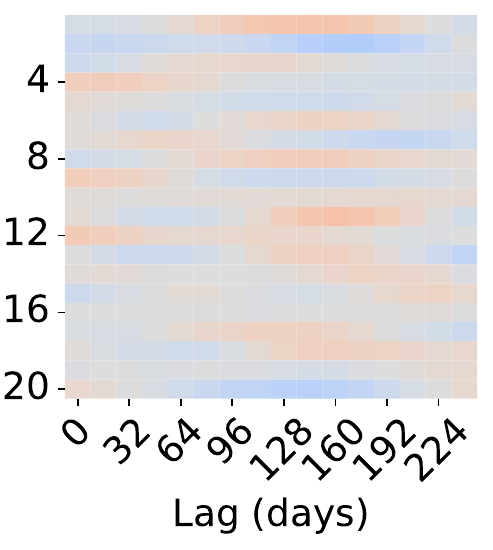}
    \vspace{-.75cm}
    \caption{Congo, all times.}
  \end{subfigure}
  \begin{subfigure}{0.2175\textwidth}
    \centering
    \includegraphics[width=\linewidth]{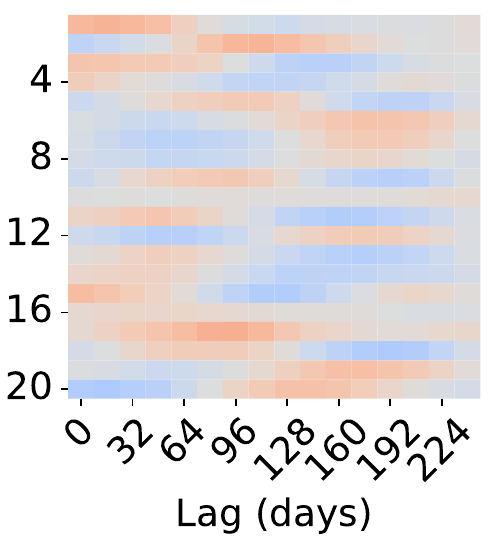}
    \vspace{-.75cm}
    \caption{Sahel, Mar--Aug.}
  \end{subfigure}
  \begin{subfigure}{0.2825\textwidth}
    \centering
    \includegraphics[width=\linewidth]{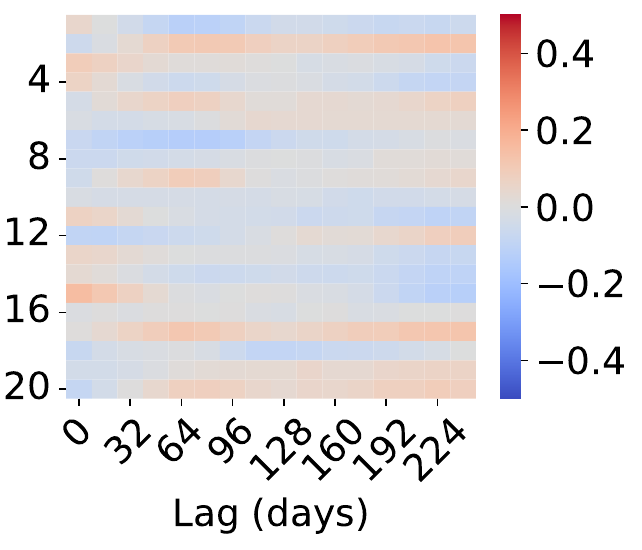}
    \vspace{-.75cm}
    \caption{Sahel, Sep--Feb.}
  \end{subfigure}

\vspace{.15cm}

  \begin{subfigure}{0.25\textwidth}
    \includegraphics[width=\linewidth]{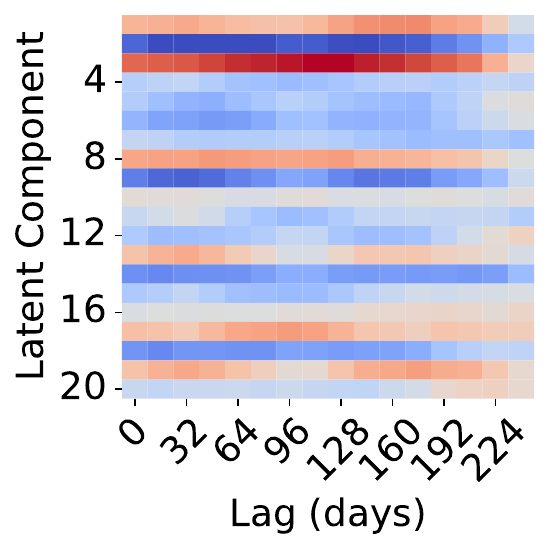}
    \vspace{-.75cm}
    \caption{Ethiopia, Jan--Mar.}
  \end{subfigure}
  \begin{subfigure}{0.225\textwidth}
    \centering
    \includegraphics[width=\linewidth]{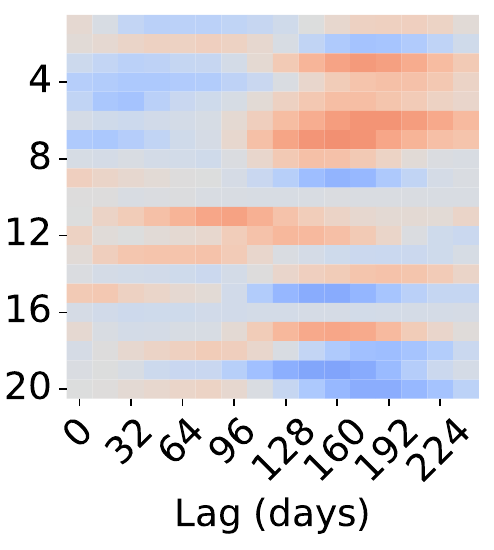}
    \vspace{-.75cm}
    \caption{Ethiopia, Apr--Aug.}
  \end{subfigure}
  \begin{subfigure}{0.29\textwidth}
    \centering
    \includegraphics[width=\linewidth]{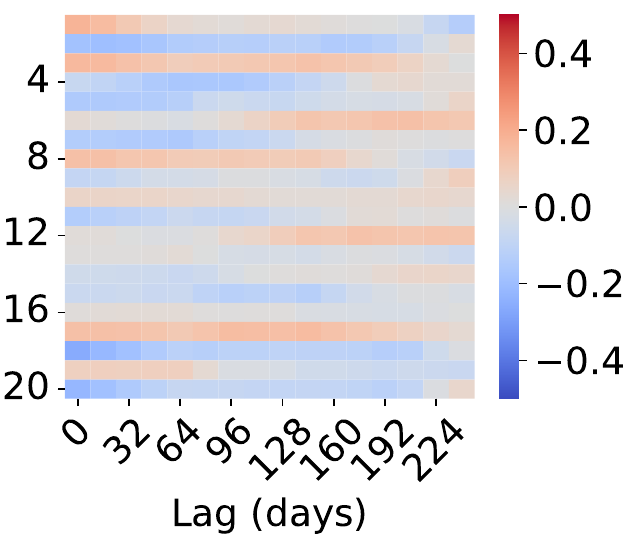}
    \vspace{-.75cm}
    \caption{Ethiopia, Aug--Dec.}
  \end{subfigure}

\vspace{.15cm}

  \begin{subfigure}{0.25\textwidth}
    \includegraphics[width=\linewidth]{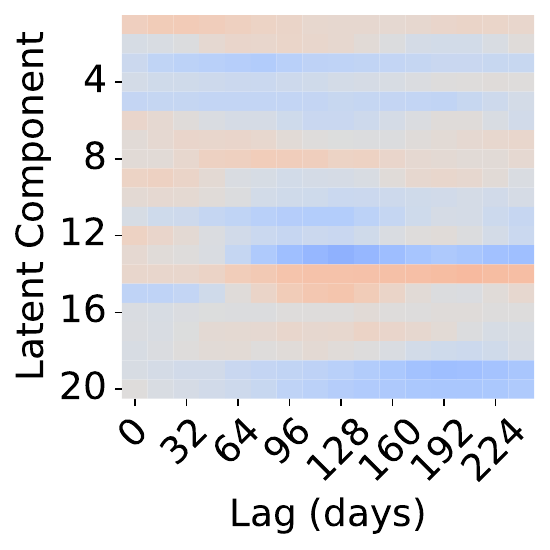}
    \vspace{-.75cm}
    \caption{S. Africa, Jun--Sep.}
  \end{subfigure}
  \begin{subfigure}{0.225\textwidth}
    \centering
    \includegraphics[width=\linewidth]{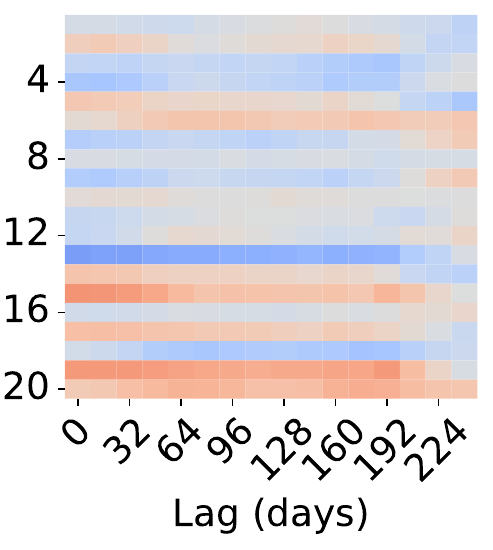}
    \vspace{-.75cm}
    \caption{S. Africa, Dec--Feb.}
  \end{subfigure}
  \begin{subfigure}{0.35\textwidth}
    \includegraphics[width=0.8285\linewidth]{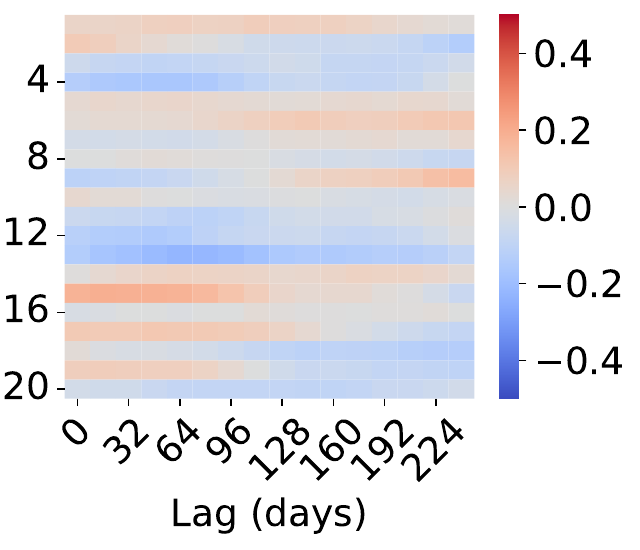}
    \vspace{-.25cm}
    \caption{S. Africa, Mar--Jun \& Sep-Nov.}
  \end{subfigure}
  \vspace{-.15cm}
  \caption{Correlation matrices of iSDS latent variables from different locations  with respect to lagged ENSO (ENSO leads) on (a) Sahara, (b) Congo, (c-d) Sahel, (e-g) Ethiopia, and (h-j) South Africa. For region-based correlation matrices, we sort the latent variables according to the corresponding regime posterior.}
  \label{fig:iSDS_ndvi_enso_correlation}
\end{figure}

\end{document}